\theoremstyle{plain}
\newtheorem{theorem}{Theorem}[section]
\newtheorem{proposition}[theorem]{Proposition}
\newtheorem{lemma}[theorem]{Lemma}
\newtheorem{corollary}[theorem]{Corollary}
\theoremstyle{definition}
\newtheorem{definition}[theorem]{Definition}
\theoremstyle{remark}
\newcommand{\indep}{\mathrel{\perp\mspace{-10mu}\perp}}
\title{Generalized Balancing Weights via Deep Neural Networks}
\author{%
  Yoshiaki Kitazawa \\
  NTT DATA Mathematical Systems Inc.\\
  Data Mining Division\\
  1F Shinanomachi Rengakan, 35, Shinanomachi\\
  Shinjuku-ku, Tokyo, 160-0016, Japan\\
  \texttt{kitazawa@msi.co.jp}
}
\begin{document}

\maketitle

\begin{abstract}
    Estimating causal effects from observational data is a central problem in many domains.
    A general approach is to balance covariates with weights such that the distribution of the data mimics randomization.
    We present generalized balancing weights, {\em Neural Balancing Weights} (NBW), to estimate the causal effects of an arbitrary mixture of discrete and continuous interventions.
    The weights were obtained through direct estimation of the density ratio between the source and balanced distributions by optimizing the variational representation of $f$-divergence.
    For this, we selected $\alpha$-divergence as it presents efficient optimization because it has an estimator whose sample complexity is independent of its ground truth value and unbiased mini-batch gradients;  moreover, it is advantageous for the vanishing-gradient problem.
    In addition, we provide the following two methods for estimating the balancing weights: improving the generalization performance of the balancing weights and checking the balance of the distribution changed by the weights.
    Finally, we discuss the sample size requirements for the weights as a general problem of a curse of dimensionality when balancing multidimensional data.
    Our study provides a basic approach for estimating the balancing weights of multidimensional data using variational $f$-divergences.
\end{abstract}

\section{Introduction}
Estimating causal effects from observational data is a central problem in many application domains,
including public health, social sciences, clinical pharmacology, and clinical decision-making.
One standard approach is balancing covariates with weights that are the same as the density ratios between the source
and  balanced distributions, such that their distribution mimics randomization.
Many methods have been developed to estimate the balancing weights, such as
inverse propensity weighting (IPW) \citet*{rosenbaum1984reducing},
augmented inverse propensity weighting (AIPW) \cite{robins1994estimation},
generalized propensity score (GPS) \cite{imai2004causal},
covariate balancing propensity score (CBPS) \cite{imai2014covariate},
overlap weighting \cite{li2018balancing}, and
entropy balancing (EB) \cite{hainmueller2012entropy,tubbicke2022entropy}.
However, these methods are limited to categorical or continuous interventions.

In this study, we propose generalized balancing weights to estimate the causal effects of an arbitrary mixture of discrete and continuous interventions.
To the best of our knowledge, no causal inference method focusing on the balancing weights exists for this problem.
We approach this problem by directly estimating the density ratio, more precisely, the Radon--Nikod\'{y}m derivatives,
between the source and balanced distributions using a neural network algorithm by optimizing a variational representation of a $f$-divergence.
$f$-divergences, whose values are greater than or equal to zero and considered zero if the two distributions are equal, are the statistics used to measure the closeness of the two distributions.
The optimal functions for the variational representations derived from $f$-divergences with the Legendre transform correspond to the density ratio between the distributions \cite{nguyen2010estimating}.
An approach to estimate the density ratio by optimizing a variational representation of a $f$-divergence was developed in the domain adaptation region \cite{sugiyama2012density}.

However, optimizing the $f$-divergences, including estimating the density ratio, is challenging.
This is due to the following reasons.
First, for KL-divergence, the dominant $f$-divergence, 
the requirements for sample size increase exponentially with the true amount of the divergence \cite{mcallester2020formal,song2019understanding}.
Second, a naive gradient estimate over mini-batch samples leads to a biased estimate of the full gradient \cite{bellemare2017cramer}.
Third, gradients of neural networks often vanish when the estimated probability ratios are close to zero \cite{arjovsky2017towards}.

To avoid the first problem, we focus on $\alpha$-divergence, which is a subgroup of $f$-divergence.
$\alpha$-divergence has an estimator whose sample complexity is independent
of its ground truth value and unbiased mini-batch gradients.
In addition, by selecting $\alpha$ from a particular interval, we avoid
vanishing gradients of neural networks when the neural networks reach extreme local minima.

In addition, we provide two techniques for estimating the balancing weights.
First, we propose a validation method using test data and an early stopping method to improve the generalization performance of balancing.
The generalization performance of the weights worsens as the dimensions of the data increase, and the sample size requirements of the weights increase exponentially with the dimensions.
Next, we present a method for measuring  the performance of balancing weights by estimating the $\alpha$-divergence information to check the balance of the distribution,

This study is divided into seven parts. First, we introduce the background of the study.
Second, we review related studies.
Third, we define the terminology and concepts for causal inferences.
Fourth, we present our novel method for estimating balancing weights.
Fifth, we provide techniques  for estimating the weights.
Sixth, we discuss the sample requirements for the weights.
Finally, we conclude this paper.
All the numerical experiments and proofs are described in the appendix.

\section{Related Work}

\paragraph{Balancing weight.}
Many methods have been proposed to estimate the balancing weights.
The following methods are proposed for binary intervention:
inverse propensity weighting (IPW) \cite{rosenbaum1984reducing},
augmented inverse propensity weighting (AIPW) \cite{robins1994estimation},
covariate balancing propensity score (CBPS) \cite{imai2014covariate}
and overlap weighting \cite{li2018balancing}.
The following methods have been proposed for continuous intervention:
generalized propensity score (GPS) \cite{imai2004causal}
and entropy balancing (EB) \cite{hainmueller2012entropy,tubbicke2022entropy}.
Recently, \citeauthor*{lee2022covariate} (\citeyear{lee2022covariate}) proposed balancing weights available for discrete or continuous intervention
but only one of them at a time.

\paragraph{Statistical divergences and density ratio estimation.}
Despite the abundance of classic studies \cite{nguyen2007estimating,sugiyama2012density},
we focused on studies that directly estimate density ratios or optimize statistical divergences using neural networks.
In this review, these studies have beenclassified into four groups.
First is the estimation of KL-divergence or mutual information
\cite{belghazi2018mutual,oord2018representation,poole2019variational};
the second is density ratio estimation \cite{kato2021non};
the third is generative adversarial networks (GANs) \cite{nowozin2016f,uehara2016generative,ganin2016domain,uppal2019nonparametric} (statistical divergences were used as discriminators for GANs); and the
fourth is domain generation \cite{si2009bregman,ganin2016domain,zhang2019bridging,acuna2021f}.
In addition to these application studies, divergences were improved \cite{birrell2022optimizing}.

\section{Terminologies and Definitions}
Here, we briefly introduce the terminology and definitions used in this study.

\paragraph{Notations and Terminologies.}\label{sec_term_def}
Random variables are denoted by capital letters; for example, $A$.
Small letters are used for the values of random variables of the corresponding capital letters;
$a$ is the value of the random variable $A$.
Bold letters $\mathbf{A}$ or $\mathbf{a}$ represent a set of variables or random variable values.
In particular, $\mathbf{V}=\{V_1,\dots,V_n\}$ are used for the observed random variables
and $\mathbf{U}=\{U_1,\dots, U_m\}$ are used as unobserved random variables.
For example, the domain of the variable $A$ is denoted by $\mathcal{X}_A$,
and $\mathcal{X}_{A_1} \times \cdots \times \mathcal{X}_{A_n}$ is denoted by $\mathcal{X}_\mathbf{A}$
for $\mathbf{A}= A_1 \times \cdots \times A_n$.
$\mathbf{V}\cup\mathbf{U}$ are assumed to be semi-Markovian models
and $G=G_{\mathbf{V}\mathbf{U}}$ denotes the causal graph for $\mathbf{V}\cup\mathbf{U}$.
$Pa(\mathbf{A})_G$, $Ch(\mathbf{A})_G$, $An(\mathbf{A})_G$, and $De(\mathbf{A})_G$
represent parents, children, ancestors, and descendants of the observed variables in $G$, respectively,
for $\mathbf{A}\subset\mathbf{V}$.
In this study, $Pa(\mathbf{A})_G$, $Ch(\mathbf{A})_G$,
$An(\mathbf{A})_G$, and $De(\mathbf{A})_G$ do not include $\mathbf{A}$.
$P$ and $Q$ are used as the probability measures on $(\mathbb{R}^{d}, \mathscr{F})$, where $\mathscr{F}$ denotes  the $\sigma$-algebra of
subsets of $\mathbb{R}^{d}$. 
$E_P[\cdot]$ and $E_P[\cdot|\cdot]$ denote expectation and conditional expectation
under the distribution $P$, respectively. For example,
$E_P[\mathbf{X}]=\int_{\mathcal{X}_{\mathbf{X}}}dP$ and
$E_P[\mathbf{Y}|\mathbf{X}]=\int_{\mathcal{X}_{\mathbf{Y}}}dP(\mathbf{Y}|\mathbf{X})$.
$\hat{E}_{P}[\cdot]$ denotes the empirical expectation under $P$; that is, the sample mean of the finite observations drawn from $P$.
$P$ is called \emph{absolute continuous} with respect to $Q$,
$P(A)=0$ whenever $Q(A)=0$ for any $A \in \mathscr{F}$, which is represented as $P \ll Q$.
$\frac{dP}{dQ}$ denotes the Radon--Nikod\'{y}m derivative of $P$ with respect to $Q$ for $P$ and $Q$ with $P \ll Q$. 
In this study, we refer to density ratios as the Radon--Nikod\'{y}m derivatives.
$\mu$ denotes a probability measure on $\mathbb{R}^{d}$ with  $P \ll \mu$ and $Q \ll \mu$.
$\mathbf{X}^{(N)} = \{\mathbf{X}^{1}, \ldots, \mathbf{X}^{N} \}$ denotes $N$ i.i.d. random variables from $\mu$.
$\mathbf{X}_{P}^{(N)} = \allowbreak \{\mathbf{X}^{1}_{\sim P}, \allowbreak \ldots, \allowbreak \mathbf{X}^{N}_{\sim P}\}$ and $\mathbf{X}_{Q}^{(N)} = \allowbreak \{\mathbf{X}^{1}_{\sim Q}, \allowbreak \ldots, \allowbreak \mathbf{X}^{N}_{\sim Q}\}$ denote variables defined as $P(\mathbf{X}^{i}_{\sim P} \le \mathbf{x}) = \allowbreak \mu(\mathbf{X}^{i} \le \mathbf{x})$ and $Q(\mathbf{X}^{i}_{\sim Q} \le \mathbf{x}) = \allowbreak \mu(\mathbf{X}^{i} \le \mathbf{x})$, $\forall \mathbf{x} \in \mathbb{R}^{d}$, for $1\le i \le N$.
We represent $f \lesssim g$ when $\lim \sup_{n \rightarrow \infty}f(n)/g(n) < \infty$ holds.
The notation $f \gtrsim g$ is defined similarly.

\subsection{Definitions}
In this study, we considered the causal effects of joint and multidimensional interventions.
For clarity, we used different notations, “$do$” and  “$\overline{do}$”,
for single-dimensional and multidimensional interventions, respectively.
\footnote{The values of the variables in the parentheses for both symbols can be dropped if not necessary in the context.
    For example, we sometimes represent $do(\mathbf{X}=\mathbf{x})$ or $\overline{do}(\mathbf{X}=\mathbf{x})$
    as $do(\mathbf{X})$ or $\overline{do}(\mathbf{X})$, respectively.}
For a single-dimensional intervention, a $do$ symbol is used, which is the same as Pearl's $do$-calculation.
\begin{definition}[$do$-calculation, \citeauthor{pearl2009causality}(\citeyear{pearl2009causality})]\label{Def_do_calculation}
    For the two given disjoint sets of variables $\mathbf{X}, \mathbf{Y} \subset \mathbf{V}$,
    the causal effect on $\mathbf{Y}$ for intervention in $\mathbf{X}$ with values $\mathbf{x}$,
    denoted by $P(\mathbf{Y}|do(\mathbf{X}=\mathbf{x}))$, is defined as the probability distribution, such that
    \begin{equation}
        P(\mathbf{Y}|do(\mathbf{X}=\mathbf{x})) =
        \sum_{\substack{\mathbf{v}' \in \mathcal{X}_{\mathbf{V}'}\\
                pa_\mathbf{x} \in  \mathcal{X}_{Pa(\mathbf{X})_G}}}
        \frac{
            P(\mathbf{Y}, \mathbf{X}=\mathbf{x},Pa(\mathbf{X})_G=pa_\mathbf{x}, \mathbf{V}'=\mathbf{v}')}{
            P(\mathbf{X}=\mathbf{x}|Pa(\mathbf{X})_G=pa_\mathbf{x})
        }, \label{def_causal_effect_no_condi_eq}
    \end{equation}
    where $\mathbf{V}'=\mathbf{V}\setminus(\mathbf{X}\cup Pa(\mathbf{X})_G \cup \mathbf{Y})$. 
    The causal effect of $\mathbf{X}$ on $\mathbf{Y}$ under the conditions $\mathbf{Z}$
    denoted by $P(\mathbf{Y}=\mathbf{y}|do(\mathbf{X}=\mathbf{x}),\mathbf{Z}=\mathbf{z})$ is defined as
    the probability distribution, such that
    \begin{equation}
        P(\mathbf{Y}=\mathbf{y}|do(\mathbf{X}=\mathbf{x}),\mathbf{Z}) =
        \frac{
            P(\mathbf{Y}=\mathbf{y},\mathbf{Z}|do(\mathbf{X}=\mathbf{x}))}{
            P(\mathbf{Z}|do(\mathbf{X}=\mathbf{x}))
        }. \label{def_causal_effect_on_condi_eq}
    \end{equation}
\end{definition}
Notably, from Definition \ref{Def_do_calculation}, a $do$-calculation for a set of variables
coincides with the simultaneous interventions for each variable:
\begin{equation}
    P(\mathbf{Y}|do(\mathbf{X})) = P(\mathbf{Y}|do(X_1), do(X_2), \ldots, do(X_n)), \label{Eq_single_dimensional_do}
\end{equation}
where $\mathbf{X}=\{X_1,X_2,\dots, X_n\}$.
Here, we refer to each intervention in Eq. (\ref{Eq_single_dimensional_do}) as a “single-dimensional intervention” .

Furthermore, we use the $\overline{do}$ symbol for multidimensional intervention.
Intuitively, a $\overline{do}$ symbol represents the intervention of the variables
that preserves the functional relationship within the variables.
\begin{definition}[$\overline{do}$ symbol]\label{Def_do_var_symbol}
    For disjoint sets of variables $\mathbf{X}_1, \mathbf{X}_2, \dots,\mathbf{X}_n, \mathbf{Y} \subset \mathbf{V}$,
    $\overline{do}$ symbol defines the following probability distribution over them:
    \begin{equation}
        P(\mathbf{Y}, \overline{do}(\mathbf{X_1}),
        \overline{do}(\mathbf{X_2}), \ldots, \overline{do}(\mathbf{X_n}))
        =P(\mathbf{Y}|do(\mathbf{X})) \times P(\mathbf{X_1}) \times
        P(\mathbf{X_2}) \times \cdots \times P(\mathbf{X_n}),
    \end{equation}
    where $\mathbf{X}=\mathbf{X}_1 \cup \mathbf{X}_2 \cup \dots \cup \mathbf{X}_n$.
\end{definition}

$\overline{do}$ symbols are useful, particularly when we consider interventions
in a multivalued discrete variable expressed using one-hot encoding.
In this case, we cannot express the causal effect effectively using $do$ symbols.
For example, let us consider the case of an intervention in the ternary variable $X$,
$\mathcal{X}_X=\{x_1, x_2, x_3\}$ and let $X$
be expressed by $\mathbf{X}'=(X'_1,X'_2,X'_3)$, such that $X'_i=1$ if $X=x_i$
otherwise $X'_i=0$ for $i=1,2,3$.
Then, $P(\mathbf{Y}|do(X=x_3))$ is the same as $P(\mathbf{Y},\overline{do}(\mathbf{X}'=(0,0,1)))$, which differs from $P(\mathbf{Y}|do(\mathbf{X}'=(0,0,1)))$.
We refer to this type of intervention as a “multidimensional intervention” .

Next, we provide  definitions of the $f$-divergence and $f$-divergence information.
\begin{definition}[$f$-divergence]
    The $f$-divergence $D_{f}$ between the two probability measures $P$ and $Q$ with $Q \ll P$  induced by a convex function $f$ satisfying $f(1) = 0$ is defined by
    $D_{f}(Q||P)=E_{P}[f(dQ/dP)]$.
\end{definition}
Many divergences are specific cases obtained by selecting a suitable generator function $f$.
For example, $f(u) = u \log u$ corresponds to the KL-divergence.
In particular, we focus on $\alpha$-divergence, which is expressed as follows:
\begin{equation}
    D_{\alpha}(Q||P)=E_{P}\left[\frac{1}{\alpha(\alpha-1)} \left\{ \left( \frac{dQ}{dP}\right)^{1 - \alpha} - 1 \right\}  \right], \label{Eq_alpha_div_def}
\end{equation}
where $\alpha \in \mathbb{R} \setminus \{0, 1\}$.
From Eq. (\ref{Eq_alpha_div_def}), Hellinger divergence is obtained as $\alpha=1/2$, and $\chi^2$ divergence by $\alpha=-1$.

From $f$-divergence, the $f$-divergence information is defined as the mutual information
if we choose the KL-divergence as the $f$-divergence.
Here, we present a definition of $f$-divergence information for multi-variables.
\begin{definition}[$f$-divergence information]\label{Def_Mutual_information}
    For disjoint sets of variables $\mathbf{X} = \{\mathbf{X}_1, \mathbf{X}_2, \ldots, \mathbf{X}_n\} \subset \mathbf{V}$,
    let $P_{\mathbf{X}}$ be the joint probability measure for $\mathbf{X}$.
    For each $i=1,2,\ldots,n$, $P_{\mathbf{X}_i}=\int_{\mathcal{X}_{\mathbf{X} \setminus \mathbf{X}_i}} dP_{\mathbf{X}}$
    is a measure of the marginal distribution of $P_{\mathbf{X}}$ for $\mathbf{X}_i$.
    The $f$-divergence information for $\mathbf{X}_1, \mathbf{X}_2, \ldots, \mathbf{X}_n$ under $P_{\mathbf{X}}$
    and a convex function $f$ satisfying $f(1) = 0$ is defined as the $f$-divergence between
    $P_{\mathbf{X}}$ and $P_{\mathbf{X}_1} \times P_{\mathbf{X}_2} \times \cdots \times P_{\mathbf{X}_n}$:
    \begin{eqnarray}
        I_{f}(\mathbf{X}_1, \mathbf{X}_2, \ldots, \mathbf{X}_n;P_{\mathbf{X}})
        &=& E_{P_{\mathbf{X}}}
        \left[
        f\left(
        \frac{dP_{\mathbf{X}_1} \times dP_{\mathbf{X}_2} \times \cdots \times dP_{\mathbf{X}_n}}{dP_{\mathbf{X}}}
        \right)
        \right]. \label{Eq_Mutual_information_Def}
    \end{eqnarray}
\end{definition}

\section{Problem Set Up}
Before describing the details of the problem, we provide a notation for the probability distribution, which is the goal of balancing.
Hereafter, $P$ denotes the probability distribution of observational data.
For the given disjoint sets $\mathbf{X}_1, \mathbf{X}_2, \dots,\mathbf{X}_n, \mathbf{Y}, \mathbf{Z} \subset \mathbf{V}$,
let $\widetilde{P}$ be a probability distribution, as follows:
\begin{eqnarray}
    \widetilde{P}
    &=& P(\mathbf{Y}, \overline{do}(\mathbf{X_1}),
    \overline{do}(\mathbf{X_2}), \ldots,
    \overline{do}(\mathbf{X_n}), \mathbf{Z}) \nonumber  \\
    &=& P(\mathbf{Y}|do(\mathbf{X}), \mathbf{Z}) \times P(\mathbf{X_1}) \times P(\mathbf{X_2}) \times  \cdots   \times P(\mathbf{X_n}) \times P(\mathbf{Z}),
    \label{Eq_def_p_tilde}
\end{eqnarray}
where $\mathbf{X}=\mathbf{X}_1 \cup \mathbf{X}_2 \cup \dots \cup \mathbf{X}_n$.
$\widetilde{P}$ is the probability distribution of the counterfactual data
from simultaneous (multidimensional) interventions in $\mathbf{X_1}, \mathbf{X_2}, \ldots, \mathbf{X_n}$
under the condition $\mathbf{Z}$.

\paragraph{Objective.}\label{problemSetUpAndRelatedWork}
The objective of this study is to obtain the balancing weights that transform $P(\mathbf{Y},\mathbf{X},\mathbf{Z})$ into $\widetilde{P}(\mathbf{Y},\mathbf{X},\mathbf{Z})$.
More precisely, given the i.i.d. observational data $\{(\mathbf{x}^i,\mathbf{z}^i)|i=1,2,\ldots,N\}$,
we aim to estimate the weights $\mathit{BW}(\mathbf{X},\mathbf{Z})$, such that
\begin{equation}
    E_{\widetilde{P}}\left[f(\mathbf{X},\mathbf{Z})\right] = E_{P}\left[f(\mathbf{X},\mathbf{Z}) \cdot \mathit{BW}(\mathbf{X},\mathbf{Z}) \right]
    \label{our_objective_weights}
\end{equation}
holds for any measurable function $f$ on $\mathbb{R}^d$.
If we obtain the weights, we estimate the conditional average causal effect (CACE) for
$P(\mathbf{Y},\overline{do}(\mathbf{X_1}), \overline{do}(\mathbf{X_2}), \ldots, \allowbreak
\overline{do}(\mathbf{X_n}), \mathbf{Z})$, that is $E_{\widetilde{P}}[\mathbf{Y}|\mathbf{X},\mathbf{Z}]$,
using state-of-the-art supervised machine learning algorithms, with the weights assigned as the individual weights for each sample.

\paragraph{Assumptions.}\label{problemSetUpAndRelatedWork}
We assumed the following to achieve our objective:
\begin{itemize}
    \item Assumption 1. The causal effect
    $P(\mathbf{Y}|do(\mathbf{X}))$ is identifiable, or equivalently, $\widetilde{P}$ from Eq. (\ref{Eq_def_p_tilde}) can be identified.
    \footnote{The simplest case that satisfies Assumption 1 is that   no confounding exists among the data (\cite{pearl2009causality}, P78, Theorem 3.2.5).}
    \footnote{If certain unobserved data are assumed to exist, the identifiability of the causal effect is determined by the structure of the causal diagram for $P$.
        One criterion for the identifiability of a causal effect is expressed by \cite{shpitser2012identification}.
        The discussion of the identifiability of the causal effect is beyond the scope of this study.}
    \item Assumption 2.
    Let  $\mathbb{P}$ = $P(\mathbf{X}_1,\mathbf{X}_2,\dots, \mathbf{X}_n, \mathbf{Z})$
    and let $\mathbb{Q}$ = $P(\mathbf{X_1}) \times P(\mathbf{X_2}) \times \cdots \times P(\mathbf{X_n}) \times P(\mathbf{Z})$.
    Subsequently, we assume that  $\mathbb{Q} \ll \mathbb{P}$.
\end{itemize}
Assumption 2 is the same as the \emph{overlap} assumption if
we consider this a single-dimensional intervention.
Here, we propose overlapped assumptions for joint and multidimensional interventions.

\section{Estimation of Balancing Weights}\label{Section_Estimation_of_Balancing_Weights}
In this section, we present the way to effectively estimate the probability density ratios by optimizing $f$-divergence.

\paragraph{Density Ratios as Balancing Weights.}
We first note that the density ratios,
which are referred to as the Radon--Nikod\'{y}m derivative in this paper,
are equal to the balancing weight of the target.
For a density ratio of $P$ to $\widetilde{P}$,
that is $\frac{d\widetilde{P}}{dP}$, it holds that
\begin{equation}
    E_{\widetilde{P}}[f] = \int f \cdot  \frac{d\widetilde{P}}{dP} \cdot dP  = E_{P}\left[f \cdot \frac{d\widetilde{P}}{dP}\right], \label{Eq_P_tilde_to_P}
\end{equation}
for any measurable function $f$ in $\mathbb{R}^d$.
Then, Eq. (\ref{our_objective_weights}) and Eq. (\ref{Eq_P_tilde_to_P}) are equivalent.
As an example of the aforementioned density ratio, let $X$ be a binary variable with
$\mathcal{X}_X=\{1, 0\}$ and let $\mathbf{Z}$ be covariates.
Using propensity score $e(\mathbf{z})=P(X=1|\mathbf{Z}=\mathbf{z})$,
we observe that $\frac{d\widetilde{P}}{dP}(X=1,\mathbf{z}) = P(X=1)/e(\mathbf{z})$
and $\frac{d\widetilde{P}}{dP}(X=0,\mathbf{z}) = P(X=0)/(1-e(\mathbf{z}))$.
That is, $\frac{d\widetilde{P}}{dP}$ is the stabilized inverse probability of the treatment weighting \cite{robins2000marginal}.

\subsection{Our Approach}\label{section_our_method} 
Our approach involves obtaining the density ratios as an optimal function for a variational representation of an $f$-divergence.
This approach is based on the fact that the optimal function is connected to density ratios \cite{nguyen2007estimating}.

\paragraph{Variational representation.} Using the Legendre transform of the convex conjugate of a twice differentiable convex function $f$,
$f^*(\psi) = \sup_{r\in \mathbb{R}}\{\psi \cdot r - f(r)\}$,  we obtain a variational representation of $f$-divergence:
\begin{equation}
    D_{f} (Q||P) = \sup_{\phi \ge 0}\{ E_Q[f'(\phi)] - E_P[f^*(f'(\phi)) ]\}, \label{Eq_Variational_representation_f_div_phi}
\end{equation}
where supremum  is considered over all  measurable functions with $ E_Q[f'(\phi)] < \infty$ and $E_P[f^* (f'(\phi))] < \infty$.
The maximum value is achieved at $\phi=dQ/dP$.
\footnote{Eq. (\ref{Eq_Variational_representation_f_div_phi}) holds only for differentiable convex functions.
	For a general statement of a variational representation of $f$-divergence, for example,
	see \citeauthor{nguyen2007estimating}(\citeyear{nguyen2007estimating}).}

We obtained the optimal function for Eq. (\ref{Eq_Variational_representation_f_div_phi})
by replacing $\phi$ in the equation with a neural network model $\phi_{\theta}$ and training it
through back-propagation with a loss function, such that
\begin{equation}
    \mathcal{L}(\theta) = - \left\{ \hat{E}_Q[f'(\phi_{\theta})] - \hat{E}_P[f^* (f'(\phi_{\theta}))] \right\}. \label{Eq_loss_func_f_div}
\end{equation}

\paragraph{Selecting $\alpha$-divergence for Optimization.}
We select $\alpha$-divergence for the following reasons.
First, the sample size requirements for $\alpha$-divergence is independent of its ground truth value;
second, it has unbiased mini-batch gradients;
third, it can avoid a vanishing gradient problem.

The variational representation of $\alpha$-divergence is as follows (Lemma \ref{lemma_alpha_div_resp} in Appendix \ref{section_appendix_poofs}):
\begin{eqnarray}
    D_{\alpha} (Q||P) = \sup_{\phi \ge 0} \left\{
    \frac{1}{\alpha(1-\alpha)}
    - \frac{1}{\alpha} E_{Q} \left[\phi^{-\alpha}  \right]  - \frac{1}{1 - \alpha} E_{P} \left[\phi^{1 - \alpha} \right] \right\}. \label{Eq_alpha_variation}
\end{eqnarray}

\paragraph{Sample size requirements for $\alpha$-divergence.}
The $\alpha$-divergence has an estimator with sample complexity $O(1)$ (Corollary 1 in \citeauthor{birrell2022optimizing}, \citeyear{birrell2022optimizing}, P19;
Corollary \ref{col_alpha_half_sample_requirement} in Appendix \ref{section_appendix_poofs}).
Conversely, the sample complexity of KL-divergence is $O(e^{KL(Q||P)})$ \cite{mcallester2020formal,song2019understanding}:

\begin{equation}
    \lim_{N\rightarrow \infty} \frac{N \cdot \mathrm{Var} \Big[  \widehat{KL^N} (Q||P) \Big]}{KL(Q||P)^2} \ge \frac{e^{KL(Q||P)}-1}{KL(Q||P)^2},
\end{equation}
where $\widehat{KL^N}(Q||P)$ is the KL-divergence estimator for sample size N using a variational representation of the divergence,
and $KL(Q||P)$ is the ground truth value.

\paragraph{Unbiasedness for mini-batch gradients.}
$\phi$ in Eq. (\ref{Eq_alpha_variation}) can be expressed in a Gibbs density form (Proposition \ref{proposition_alpha_div_resp_in_gibbs_dinsity_form} in Appendix \ref{section_appendix_poofs}).
Then, we observe that
\begin{equation}
    D_{\alpha} (Q||P) = \sup_{T} \left\{
    \frac{1}{\alpha(1-\alpha)}   - \frac{1}{\alpha} E_{Q} \left[e^{\alpha \cdot T} \right] -  \frac{1}{1- \alpha} E_{P} \left[ e^{(\alpha - 1) \cdot T} \right]  \right\}, \label{Eq_alpha_variation_in_gibbs_form}
\end{equation}
where supremum  is considered over all  measurable functions $T:\mathbb{R}^d \rightarrow \mathbb{R}$
with $E_P[e^{(\alpha - 1) \cdot T}] < \infty$ and $E_Q[e^{\alpha \cdot T}] < \infty$.

From this equation, we obtain our loss function, which has unbiasedness for mini-batch gradients (Proposition \ref{prop_gaddient_unbaised} in Appendix \ref{section_appendix_poofs}), as follows :
\begin{equation}
    \mathcal{L}_{\alpha}(\theta) = \frac{1}{\alpha} \hat{E}_Q\left[e^{\alpha \cdot T_{\theta}}\right] +
    \frac{1}{1- \alpha}  \hat{E}_P\left[e^{(\alpha - 1) \cdot T_{\theta}}\right]. \label{Eq_loss_func_alpha_div}
\end{equation}

\paragraph{Advantage in vanishing gradients problem.}
By setting $\alpha$ within $(0, 1)$, we can avoid vanishing gradients of neural networks when they reach the extreme local minima.
The vanishing-gradient problem for optimizing divergence is known in GANs \cite{arjovsky2017towards}.
Now, we consider the case where the probability ratio $e^{T_{\theta}(\mathbf{x})}$ in Eq. (\ref{Eq_loss_func_alpha_div}) is nearly zero or large for some point $\mathbf{x}$,
corresponding to cases in which the probabilities for $P$ or $Q$ at some points are much smaller than those for the other.

To show the relation between $e^{T_{\theta}(\mathbf{x})}$ and the learning of the neural networks,
we obtain gradient of Eq. (\ref{Eq_loss_func_alpha_div}):
\begin{equation}
    \nabla_{\theta}	\mathcal{L}_{\alpha} (\theta) = \hat{E}_{Q} \Big[ \nabla_{\theta} T_{\theta} \cdot e^{\alpha \cdot T_{\theta}} \Big]
    - \hat{E}_{P} \Big[ \nabla_{\theta} T_{\theta} \cdot e^{(\alpha - 1) \cdot T_{\theta}} \Big]. \label{Eq_delta_loss_alpha}
\end{equation}

The behavior of $\nabla_{\theta}\mathcal{L}_{\alpha}(\theta)$ when $E_{Q}[e^{T_{\theta}}] \rightarrow 0$
or $E_{Q}[e^{T_{\theta}}] \allowbreak \rightarrow \allowbreak \infty$, under some regular conditions for $T_{\theta}$ and an assumption that $P \ll Q$,
can be summarized as follows: Let $E[\ \cdot \ ]$ denote $E_{P}[E_{Q}[\ \cdot \ ]]$, then

$\alpha > 1$:
\hspace{1ex} $E[\nabla_{\theta} \mathcal{L}_{\alpha}(\theta)] \rightarrow  \vec{0}$ (as  \ $E_{Q}[e^{T_{\theta}}] \rightarrow 0$), \ \  and \ \
$E[\nabla_{\theta} \mathcal{L}_{\alpha}(\theta)] \rightarrow \vec{\infty} - \vec{\infty} $ (as $E_{Q}[e^{T_{\theta}}] \rightarrow \infty$).

$\alpha < 0$:
\hspace{1ex} $E[\nabla_{\theta} \mathcal{L}_{\alpha}(\theta)] \rightarrow \vec{0}$ (as \ $E_{Q}[e^{T_{\theta}}] \rightarrow \infty$), \ and \
$E[\nabla_{\theta} \mathcal{L}_{\alpha}(\theta)] \rightarrow \vec{\infty} - \vec{\infty} $ (as  $E_{Q}[e^{T_{\theta}}] \rightarrow 0$).

$0 < \alpha < 1$:
$E[\nabla_{\theta} \mathcal{L}_{\alpha}(\theta)] \rightarrow - \vec{\infty}$ (as $E_{Q}[e^{T_{\theta}}] \rightarrow 0$), \ and \
$E[\nabla_{\theta} \mathcal{L}_{\alpha}(\theta)] \rightarrow \vec{\infty}$ (as $E_{Q}[e^{T_{\theta}}]\rightarrow \infty$).

Notably, $E_{Q}[e^{T_{\theta}}] \rightarrow 0 \Leftrightarrow E_{P}[e^{T_{\theta}}] \rightarrow 0$ and
$E_{Q}[e^{T_{\theta}}] \rightarrow \infty \Leftrightarrow E_{P}[e^{T_{\theta}}] \rightarrow  \infty$, because $Q \ll P$ and $P \ll Q$.

For $\alpha > 1$ and $\alpha < 0$, cases exist where $E[\nabla_{\theta} \mathcal{L}_{\alpha}(\theta)] \allowbreak \rightarrow \vec{0}$.
This implies the possibility that the neural networks reach extreme local minima such that their estimations for density ratios are $0$ or $\infty$.
However, this problem can be avoided by selecting $\alpha$ from interval $(0, 1)$.
We note that the selecting of $\alpha$ does not cause instability in numerical calculations for cases where $E[\nabla_{\theta} \mathcal{L}_{\alpha}(\theta)] \rightarrow \vec{\infty}  - \vec{\infty} $.
In Appendix \ref{subsection_neumerical_experiments_convegence_various_alphas}, we present numerical experimental results for different values of $\alpha$.

\section{Method} \label{Section_Method}
In this section, we first present the main theorem that summarizes the new balancing weight method proposed herein.
Next, we present the balancing weight method.

\subsection{Main Theorem}

Here, we present the main theorem that summarizes the new balancing weight method proposed herein.
\begin{theorem}\label{Theorem_main_result}
    Given disjoint sets of
    $\mathbf{X} = \{\mathbf{X}_1, \allowbreak \mathbf{X}_2,\allowbreak \dots, \allowbreak\mathbf{X}_n \}, \allowbreak
    \mathbf{Y},\allowbreak \mathbf{Z} \subset\allowbreak  \mathbf{V}$  satisfying
    \begin{equation}
        \mathbf{X}=\{\mathbf{X}_1,\mathbf{X}_2,\dots, \mathbf{X}_n\} \subset An(\mathbf{Y})_G \hspace{3ex} \text{and} \hspace{3ex} \mathbf{Z} \cap De(\mathbf{Y})_G=\phi. \label{Assumpion_Variables}
    \end{equation}
    Let  $\mathbb{P} = P(\mathbf{X}_1,\mathbf{X}_2,\dots, \mathbf{X}_n, \mathbf{Z})$ and
    $\mathbb{Q} = P(\mathbf{X_1}) \times P(\mathbf{X_2}) \times \cdots \times P(\mathbf{X_n}) \times P(\mathbf{Z})$,
    and $\widetilde{P} =
    P(\mathbf{Y}|do(\mathbf{X}), \mathbf{Z}) \times P(\mathbf{X_1}) \times P(\mathbf{X_2}) \times \cdots
    \times P(\mathbf{X_n}) \times P(\mathbf{Z})$.
    We assume that $P$ satisfies Assumptions 1 and 2 in the aforementioned setting,
    and it holds that $E_{\mathbb{P}}\left[ \left( d\mathbb{Q}/d\mathbb{P}\right)^{1 - \alpha} \right] < \infty$
    for some $0 < \alpha < 1$,
    then, for the optimal function $T^*$, such that
    \begin{eqnarray}
        T^*(\mathbf{X}_1,\mathbf{X}_2,\dots, \mathbf{X}_n, \mathbf{Z}) =
        \arg \inf_{T \in \mathcal{T}^{\alpha}} \left\{ \frac{1}{\alpha} E_{\mathbb{Q}}\left[e^{\alpha \cdot T}\right]
        + \frac{1}{1 - \alpha } E_{\mathbb{P}}\left[e^{(\alpha - 1) \cdot T }\right] \right\} ,  \label{Eq_theorem_optimize}
    \end{eqnarray}
    it holds that
    \begin{equation}
        \frac{d \widetilde{P}}{dP} = e^{-T^*(\mathbf{X}_1,\mathbf{X}_2,\dots, \mathbf{X}_n, \mathbf{Z})}.
        \label{Eq_theorem_gibbs_density_estimate}
    \end{equation}
    Here, $\mathcal{T}^{\alpha}$ denotes the set of all non-constant functions $T(\mathbf{x}):\mathbb{R}^d \rightarrow \mathbb{R}$
    with $E_{\mathbb{P}}[e^{(\alpha -1)\cdot T(\mathbf{X})}] < \infty$.
\end{theorem}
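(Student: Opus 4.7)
The plan is to split the conclusion into two independent claims and then combine them. \textbf{Claim (A):} the minimizer $T^{\ast}$ in Eq.~(\ref{Eq_theorem_optimize}) satisfies $e^{-T^{\ast}} = d\mathbb{Q}/d\mathbb{P}$ as $\mathbb{P}$-a.s.\ functions of $(\mathbf{X}_1,\ldots,\mathbf{X}_n,\mathbf{Z})$. \textbf{Claim (B):} the Radon--Nikod\'{y}m derivative $d\widetilde{P}/dP$, a priori a function of $(\mathbf{Y},\mathbf{X},\mathbf{Z})$, in fact depends only on $(\mathbf{X},\mathbf{Z})$ and equals $d\mathbb{Q}/d\mathbb{P}$ under the stated hypotheses. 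Chaining the two identities yields Eq.~(\ref{Eq_theorem_gibbs_density_estimate}).

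For Claim (A), I would invoke the Gibbs-form variational representation of Eq.~(\ref{Eq_alpha_variation_in_gibbs_form}) (equivalently Proposition~\ref{proposition_alpha_div_resp_in_gibbs_dinsity_form}) with the source/target pair $(\mathbb{P},\mathbb{Q})$. The functional inside the $\inf_T$ in Eq.~(\ref{Eq_theorem_optimize}) is the negative of the supremand in Eq.~(\ref{Eq_alpha_variation_in_gibbs_form}) up to the additive constant $1/(\alpha(1-\alpha))$, so $\arg\inf$ and $\arg\sup$ agree. The integrability hypothesis $E_{\mathbb{P}}[(d\mathbb{Q}/d\mathbb{P})^{1-\alpha}]<\infty$ makes $D_{\alpha}(\mathbb{Q}\|\mathbb{P})$ finite, places the candidate $T=-\log(d\mathbb{Q}/d\mathbb{P})$ inside $\mathcal{T}^{\alpha}$ (the integrability defining $\mathcal{T}^{\alpha}$ is the condition $\phi^{1-\alpha}=e^{(\alpha-1)T}$ integrable against $\mathbb{P}$), and guarantees the extremum is attained there. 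A pointwise first-order stationarity calculation --- the same one that identifies $\phi^{\ast}=dQ/dP$ in Eq.~(\ref{Eq_alpha_variation}) --- pins the minimizer to $e^{-T^{\ast}}=d\mathbb{Q}/d\mathbb{P}$, $\mathbb{P}$-a.s.

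For Claim (B), I would substitute the defining factorization of $\widetilde{P}$ from Eq.~(\ref{Eq_def_p_tilde}) and the observational factorization $dP = P(\mathbf{Y}|\mathbf{X},\mathbf{Z})\,dP(\mathbf{X},\mathbf{Z})$ into the Radon--Nikod\'{y}m quotient, obtaining
\begin{equation*}
\frac{d\widetilde{P}}{dP}(\mathbf{Y},\mathbf{X},\mathbf{Z}) = \frac{P(\mathbf{Y}|do(\mathbf{X}),\mathbf{Z})}{P(\mathbf{Y}|\mathbf{X},\mathbf{Z})}\cdot\frac{d\mathbb{Q}}{d\mathbb{P}}(\mathbf{X},\mathbf{Z}).
\end{equation*}
It remains to show that the first factor is $P$-a.s.\ equal to $1$, i.e.\ $P(\mathbf{Y}|do(\mathbf{X}),\mathbf{Z})=P(\mathbf{Y}|\mathbf{X},\mathbf{Z})$. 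This is precisely where the graphical hypotheses $\mathbf{X}\subset An(\mathbf{Y})_G$ and $\mathbf{Z}\cap De(\mathbf{Y})_G=\phi$, together with Assumption~1, enter. I would invoke Rule~2 of Pearl's do-calculus on the appropriate mutilated graph: the non-descendance of $\mathbf{Z}$ with respect to $\mathbf{Y}$ rules out conditioning on any post-treatment mediator or collider that would obstruct the exchange of observation and intervention, while the ancestor condition on $\mathbf{X}$ together with identifiability makes the conditional $P(\mathbf{Y}|do(\mathbf{X}),\mathbf{Z})$ coincide with observational conditioning on $(\mathbf{X},\mathbf{Z})$.

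I expect Claim (B) to be the real obstacle, since it carries all of the causal-inference content and must translate structural graph hypotheses into an equality of conditional distributions through do-calculus; Claim (A) is essentially bookkeeping around the $\alpha$-divergence machinery already developed in Section~\ref{Section_Estimation_of_Balancing_Weights}. Once both are in place, the chain $e^{-T^{\ast}}=d\mathbb{Q}/d\mathbb{P}=d\widetilde{P}/dP$ closes the argument.
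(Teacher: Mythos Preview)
Your proposal is correct and follows essentially the same two-step decomposition as the paper: Claim~(A) is exactly the paper's appeal to Proposition~\ref{proposition_alpha_div_resp_in_gibbs_dinsity_form}, and Claim~(B) is the content of the paper's Theorem~\ref{theorem_intervention_eq}, which under $\mathbf{Z}\cap De(\mathbf{Y})_G=\phi$ gives $P(\mathbf{Y}\mid do(\mathbf{X}),\mathbf{Z})=P(\mathbf{Y}\mid\mathbf{X},\mathbf{Z})$ and hence $d\widetilde{P}/dP=d\mathbb{Q}/d\mathbb{P}$. The only divergence is in the execution of Claim~(B): you reach for Rule~2 of do-calculus directly, whereas the paper instead proves Theorem~\ref{theorem_intervention_eq} by partitioning $\mathbf{Z}$ into three pieces according to ancestry relations, invoking Rule~1 to strip irrelevant conditioning variables, and then computing $P(\mathbf{Y},\mathbf{Z}\mid do(\mathbf{X}))/P(\mathbf{Z}\mid do(\mathbf{X}))$ explicitly from the truncated Markov factorization---Rule~2's d-separation premise is not obviously implied by the stated graphical hypotheses plus identifiability alone, so the paper's more hands-on route is what actually closes the gap.
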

\begin{proof}
    See Appendix \ref{section_appendix_poofs}.
\end{proof}
Here, we mention that the assumption (\ref{Assumpion_Variables}) is necessary
for the Eq. (\ref{Eq_theorem_gibbs_density_estimate}) to hold, which is derived from our Theorem \ref{theorem_intervention_eq} in Appendix \ref{section_appendix_poofs}.

\subsection{Balancing Weight Method}
We present the implementation of training a neural balancing weights (NBW) model in Algorithm \ref{algo_train}.
It is important to consider the stopping time $K$ for neural network model $T_{\theta_K}$ in Algorithm \ref{algo_train}, which is discussed in the next section.
To obtain the sample mean under $\mathbb{Q}$, that is, the estimator for $ E_{\mathbb{Q}}\left[e^{\alpha \cdot T_{\theta}}\right]$ in Eq. (\ref{Eq_theorem_optimize}), a shuffling operation can be used for the samples.
Now, we define neural balancing weights (NBW).
\footnote{We distinguish the notation of $\mathit{BW}(\cdot)$ by the expression of the variables in the parentheses.
    For example, for three variables $X_1, X_2, X_3 \subset \mathbf{V}$,
    let $\mathbf{X} = \{X_1, X_2 \}$. Then, $\mathit{BW}(\mathbf{X}, X_3;T_{\theta})$ is used to indicate
    the balancing weights for $dP(X_1, X_2) \times dP(X_3) / dP(X_1, X_2, X_3)$.
    Conversely, $\mathit{BW}(X_1, X_2, X_3;T_{\theta})$ denotes the balancing weights for
    $dP(X_1) \times dP(X_2) \times dP(X_3) / dP(X_1, X_2, X_3)$.}
\footnote{However, we drop the variables in the parentheses and write
    $\mathit{BW}(\mathbf{X}_1,\mathbf{X}_2,\dots, \mathbf{X}_n, \mathbf{Z};T_{\theta})$ as $\mathit{BW}_{\theta}$
    if not necessary in the context.}
\begin{definition}[Neural Balancing Weights]
    Let $T_{\theta_K}$ be a neural networks obtained from Algorithm \ref{algo_train}.
    Then, the NBW of $T_{\theta_K}$, expressed as $\mathit{BW}(\mathbf{X}_1,\mathbf{X}_2,\dots, \mathbf{X}_n, \mathbf{Z};T_{\theta_K})$, are defined as
    \begin{equation}
        \mathit{BW}(\mathbf{X}_1,\mathbf{X}_2,\dots, \mathbf{X}_n, \mathbf{Z};T_{\theta_K}) = \frac{1}{Z}e^{-T_{\theta_K}(\mathbf{X}_1,\mathbf{X}_2,\dots, \mathbf{X}_n, \mathbf{Z})}, \label{Eq_def_neural_gibbs_density_balancing_weights}
    \end{equation}
    where $Z=\hat{E}_{\mathbb{P}}\left[e^{-T_{\theta_K}(\mathbf{X}_1,\mathbf{X}_2,\dots, \mathbf{X}_n, \mathbf{Z})} \right]$.
\end{definition}

We estimate $E_{\widetilde{P}}[\mathbf{Y}|\mathbf{X},\mathbf{Z}]$, that is
the CACE for $P(\mathbf{Y}, \overline{do}(\mathbf{X_1}),\allowbreak \overline{do}(\mathbf{X_2}),\allowbreak
\ldots, \allowbreak \overline{do}(\mathbf{X_n}), \allowbreak \mathbf{Z})$,
using $\mathit{BW}(\mathbf{X}_1,\allowbreak \mathbf{X}_2, \allowbreak \dots,
\allowbreak \mathbf{X}_n,\allowbreak \mathbf{Z};\allowbreak T_{\theta_K})$
as the sample weights of the supervised algorithm:
\begin{equation}
    \widehat{E}_{\widetilde{P}}\left[\mathbf{Y}|\mathbf{X},\mathbf{Z}\right] =
    \widehat{E}_{P}\left[\mathbf{Y} \cdot \mathit{BW}_{\theta_K} \ |\ \mathbf{X},\mathbf{Z}\right].
\end{equation}
Here, $\widehat{E}_{P}$ corresponds to the model of a supervised machine learning algorithm.
As an example, we demonstrate a back-propagation algorithm using balancing weights for the
mean squared error (MSE) loss in Algorithm \ref{algo_cbp} in Appendix \ref{Subsection_backporp_algo}.

\begin{algorithm}[tb]
    \caption{Training a Neural Balancing Weight model}\label{algo_train}
    \begin{multicols}{2}
        \begin{algorithmic}
            \Require  Train Data $(\mathbf{x}_1, \{(\mathbf{x}_1^i, \ldots,  \mathbf{x}_n^i, \allowbreak \mathbf{z}^i)\}_{i=1}^N$
            \Ensure  A Neural Balancing Weight Model $T_{\theta_K}$
            \State   $\sigma_{1}^{\mathbf{x}} \leftarrow \texttt{SHUFFLE}(\{1:N\})$
            \State   \qquad \qquad $\vdots$
            \State   $\sigma_{n}^{\mathbf{x}} \leftarrow \texttt{SHUFFLE}(\{1:N\})$
            \State   $\sigma^{\mathbf{z}} \leftarrow \texttt{SHUFFLE}(\{1:N\})$
        \end{algorithmic}
        \columnbreak
        \begin{algorithmic}
            \For{$t=1$ to $K$}
            \State
            $\hat{E}_{\mathbb{P}} \leftarrow \frac{1}{N} \Sigma_{i=1}^N e^{(\alpha - 1) \cdot T_{\theta_t}(\mathbf{x}_1^i, \ldots, \mathbf{x}_n^i, \mathbf{z}^i)}$ \label{algo_line_Ep_sum_estimate_upper}
            \State
            $\hat{E}_{\mathbb{Q}} \leftarrow \frac{1}{N}
            \Sigma_{i=1}^N e^{\alpha \cdot T_{\theta_t}(\mathbf{x}_1^{\sigma_{1}^{\mathbf{x}}(i)},
                \ldots,
                \mathbf{x}_n^{\sigma_{n}^{\mathbf{x}}(i)},
                \mathbf{z}^{\sigma^{\mathbf{z}}(i)})}$ 					  \label{algo_line_Ep_sum_estimate_lower}
            \State $\mathcal{L}_{\alpha}({\theta_t}) \leftarrow  \hat{E}_{\mathbb{Q}}/\alpha + \hat{E}_{\mathbb{P}}/(1 -\alpha)$ 
            \State $\theta_{t+1} \leftarrow  \theta_t  -  \nabla_{\theta_t} \mathcal{L}_{\alpha}({\theta_t})$ 
            \EndFor
        \end{algorithmic}
    \end{multicols}
\end{algorithm}

\section{Techniques for NBW} \label{Section_techniques_estimating_NGD_balancing_weights}
We propose two techniques for estimating balancing weights:
($\mathrm{i}$) improves generalization performance of the balancing weights.
($\mathrm{ii}$) measures the performance of the balancing weights by estimating the $\alpha$-divergence information.

\subsection{Improving the Generalization Performance of the Balancing Weights} \label{Subsetion_Improving_The_Generalization_Performance}
In this section, we first present an overfitting problem for balancing distributions.
We then present two methods for improving the generalization performance of the weights: a validation method using test data and an early stopping method.
Herein, let $T_{\theta_t}$ denote an NBW model at step $t$ in Algorithm \ref{algo_train}. Let $\hat{\mathbf{X}}_{{Q}}^{(N)}(t) = e^{-T_{\theta_t}} \cdot \mathbf{X}_{{P}}^{(N)}$,
that is, the data balanced by the weights of $e^{-T_{\theta_t}}$.
Subsequently, let  $\hat{{Q}}^{(N)}_t$ and $\hat{{Q}}^{(N)}$ denote the probability distributions of $\hat{\mathbf{X}}_{{Q}}^{(N)}(t)$ and $\hat{\mathbf{X}}_{{Q}}^{(N)}$, respectively, which correspond to the estimated and true distributions for balancing.

\paragraph{An overfitting problem for balancing distributions.}
From Corollary \ref{corollary_desitination_alpha_div_est} in Appendix \ref{section_appendix_poofs}, we observe $\hat{\mathbf{X}}_{{Q}}^{(N)}(t) \xrightarrow{\ \  \text{d} \ \ } \mathbf{X}_{{Q}}^{(N)}$ as $t \rightarrow \infty$. Then, Theorem 1 in \cite{weed2019sharp} shows that
\begin{equation}
    \lim_{t \rightarrow \infty}  W_1({Q} ,\hat{{Q}}^{(N)}_t) =  W_1({Q}, \hat{{Q}}^{(N)}) \gtrsim N^{-1/(d-\delta)} \quad (\, \forall \delta > 0 \,), \label{Eq_the_final_destination_balancing_has_COD}
\end{equation}
where $W_1$ is the Wasserstein distance of order $1$ and $d$ is the lower Wasserstein dimension defined in \cite{weed2019sharp}.
Eq. (\ref{Eq_the_final_destination_balancing_has_COD}) implies that, for balancing finite data, the destination of the balanced distribution is an empirical distribution, and the generalization performance of balancing worsens exponentially when the dimension of the data is larger. In view of optimizations of GANs, \cite{yang2022generalization} referred to this phenomenon the “momorization” and proposed an early stopping method.

\paragraph{Validation method using test data.}
We can use a validation method using test data.
Because $\hat{{Q}}^{(N)}$ and $\hat{{P}}^{(N)}$ are empirical probability distributions, 
we observe that $d\hat{{Q}}^{(N)}/d\hat{{P}}^{(N)}(x) = \allowbreak dQ/dP(x)$ if $x \in \mathbf{X}^{(N)}$, otherwise $d\hat{{Q}}^{(N)}/d\hat{{P}}^{(N)}(x) = 0$
(Proposition \ref{proposition_representation_of_density_ratios_between_emprical_distributions} in Appendix \ref{section_appendix_poofs}).
Then, the optimal function of Eq. (\ref{Eq_loss_func_alpha_div}) for both distributions, that is $T_*^{(N)} = \allowbreak - log(d\hat{{Q}}^{(N)}/d\hat{{P}}^{(N)})$, is infinite except for the observations, and the loss of the $T_*^{(N)}$ is  infinite for data independent of the observations.
This implies that the loss of $T_t^{(N)}$ for the test data turns to increase from the middle of the training period, and we can determine the training step at which the generalization performance of the weights begins to worsen. In Section \ref{subsection_neumerical_experiments_convegence_dims_data} in Appendix \ref{section_neumerical_experiments}, we provide numerical experimental results to confirm the relationship between dimensions of data ($d$) and steps in training ($K$).

\paragraph{Early stopping method.}
In addition, we present an early stopping method for estimating the balancing weights as follows, which is inspired by the method developed in \cite{yang2022generalization} (Corollary \ref{corollary_early_stopping_1} in Appendix \ref{section_appendix_poofs}): for some $\delta > 0$, let
\begin{equation}
    K_0 = C\cdot N^{2/(d+\delta)}, \label{Eq_the_early_stop_def}
\end{equation}
where $C > 0$ is constant. Then, we have $W_1({Q}, \hat{{Q}}^{(N)}_{K_0}) \lesssim N^{-1/(d+\delta)}$.
Unfortunately, the curse of dimensionality remains in the proposed method. This will be discussed in the next section.

\subsection{Measuring the Performance of the Balancing Weights}
Let us assume that we obtain an NBW model $T_{\theta_0}$ and let $\mathit{BW}_{\theta_0} = \mathit{BW}(\mathbf{X}_1,\mathbf{X}_2,\dots, \mathbf{X}_n, \mathbf{Z};T_{\theta_0})$
be the balancing weights of $T_{\theta_0}$.
If $\mathit{BW}_{\theta_0}$ successfully estimates $\frac{d{Q}}{d{P}}$, then
the $\alpha$-divergence between ${Q}$ and ${P}_0$ will be nearly zero.
Conversely, if $\mathit{BW}_{\theta_0}$ fails to estimate $\frac{d{Q}}{d{P}}$,
the $\alpha$-divergence between ${Q}$ and ${P}_0$ is significantly different from zero.
This implies that we can measure the performance of the balancing weights using the $\alpha$-divergence information for ${P}_0$.

Next, we present the definition of an $\alpha$-divergence information estimator using neural networks.
\begin{definition}[Neural $\alpha$-divergence Information Estimator]\label{Def_mutula_information_estimotor_via_NGD}
    For disjoint sets of variables $\mathbf{X}_1,  \allowbreak \mathbf{X}_2, \allowbreak \ldots,
    \allowbreak \mathbf{X}_n \allowbreak \subset \allowbreak \mathbf{V}$,
    the neural $\alpha$-divergence information estimator for ${P}$ 
    is defined as
    \begin{eqnarray}
        \widehat{I}_{\alpha} (\mathbf{X}_1, \mathbf{X}_2, \ldots, \mathbf{X}_n;T_{\theta_*}) = \frac{1}{\alpha(1-\alpha)}  - \inf_{
            \theta \in \Theta} \left\{  \frac{1}{\alpha}\hat{E}_{{Q}} \left[e^{\alpha \cdot T_{\theta}}\right]
        + \frac{1}{1 - \alpha} \hat{E}_{{P}} \left[e^{(\alpha - 1) \cdot T_{\theta}}\right] \right\}.   \label{def_MIestimate_Eq_MIestimator}
    \end{eqnarray}
\end{definition}

To measure the performance of balancing the weights from the NBW model,
we estimate the $\alpha$-divergence information for  balanced distribution from the weights.
That is, we use the sample mean under a balanced distribution,
despite the sample mean under ${P}$ for Eq. (\ref{def_MIestimate_Eq_MIestimator}).
For example, we assume that we have certain weights $\mathit{BW'} = \{bw^i:i=1,2,...,N\}$,
where $bw^i$ denotes the weight of sample $i$ of $N$.
The balanced distribution from the weights is
\begin{equation}
    d {P}' = \mathit{BW}' \cdot d{P}.
\end{equation}
The $\alpha$-divergence information for ${P}'$ is estimated by replacing ${P}$ with ${P}'$ for
Eq. (\ref{def_MIestimate_Eq_MIestimator}) in the following manner: despite the sample mean $\hat{E}_{{P}}[e^{(\alpha-1) \cdot T_{\theta}}]$ for these equations,
we use the weighted sample mean, such that
\begin{equation}
    \hat{E}_{{P}'}[e^{(\alpha-1) \cdot T_{\theta}}] = \frac{1}{N} \Sigma_{i=1}^N bw^i \cdot e^{(\alpha - 1) \cdot T_{\theta}(\mathbf{x}_1^i, \mathbf{x}_2^i, \ldots, \mathbf{x}_n^i, \mathbf{z}^i)}. \label{Eq_sample_mean_p0}
\end{equation}
Details on the implementation for measuring the performance of balancing weights from an NBW model are provided in Algorithm \ref{algo_measure_paformance},
which includes the validation method for the overfitting problem in Section \ref{Subsetion_Improving_The_Generalization_Performance}.

\begin{algorithm}[tb]
    \begin{multicols}{2}
        \caption{Algorithm for checking the balance}\label{algo_measure_paformance}
        \begin{algorithmic}
            \Require Train Data $\{(\mathbf{x}_1^i,  \ldots, \mathbf{x}_n^i, \mathbf{z}^i)\}_{i=1}^N$,
            Test Data $\{(\widetilde{\mathbf{x}}_1^i,  \ldots, \widetilde{\mathbf{x}}_n^i, \widetilde{\mathbf{z}}^i)\}_{i=1}^N$,
            A Neural Balancing Weight Model $T_{\theta}$
            \Ensure The estimated $\alpha$-divergence information $\widehat{I}_{\alpha}$ for the balanced distribution with the balancing weights from $T_{\theta}$
            \State
            \State   $\sigma_{1}^{\mathbf{x}} \leftarrow \texttt{SHUFFLE}(\{1:N\})$
            \State   \qquad \qquad \qquad $\vdots$
            \State   $\sigma_{n}^{\mathbf{x}} \leftarrow \texttt{SHUFFLE}(\{1:N\})$
            \State   $\sigma^{\mathbf{z}} \leftarrow \texttt{SHUFFLE}(\{1:N\})$
            \State $\{bw^i\}_i^N \leftarrow
            \frac{ e^{-T_{\theta}(\mathbf{x}_1^i, \mathbf{x}_2^i, \ldots, \mathbf{x}_n^i, \mathbf{z}^i)}}
            { \sum \{e^{-T_{\theta}(\mathbf{x}_1^i, \mathbf{x}_2^i, \ldots, \mathbf{x}_n^i, \mathbf{z}^i)}\} }$
            \State  $\widehat{\mathbf{I}}_{\alpha} \leftarrow \{\}$
            \columnbreak
            \For{$t=1$ to $K$}
            \State         $\hat{E}_{\mathbb{P}_0} \leftarrow \frac{1}{N}
            \Sigma_{i=1}^N e^{(\alpha - 1) \cdot T_{\psi}(\mathbf{x}_1^i, \ldots, \mathbf{x}_n^i, \mathbf{z}^i)}\cdot bw^i$
            \State       $\hat{E}_{\mathbb{Q}} \leftarrow \frac{1}{N}
            \Sigma_{i=1}^N e^{\alpha \cdot T_{\psi}(\mathbf{x}_1^{\sigma_{1}^{\mathbf{x}}(i)},
                \ldots, \mathbf{x}_n^{\sigma_{n}^{\mathbf{x}}(i)}, \mathbf{z}^{\sigma^{\mathbf{z}}(i)})}$ \label{algo_line_Ep_sum_estimate_lower}
            \State $\mathcal{L}_{\alpha}({\psi}) \leftarrow   \hat{E}_{\mathbb{Q}}/\alpha +  \hat{E}_{\mathbb{P}_0}/(1 -\alpha)$
            \State $\psi \leftarrow  \psi  -  \nabla_{\psi} \mathcal{L}_{\alpha}({\psi})$
            \State $\hat{E}_{\mathbb{P}_0}^{te}  \leftarrow \frac{1}{N}
            \Sigma_{i=1}^N e^{(\alpha - 1) \cdot T_{\psi}(\widetilde{\mathbf{x}}_1^i, \ldots,
                \widetilde{\mathbf{x}}_n^i, \widetilde{\mathbf{z}}^i)}\cdot bw^i$
            \State $\hat{E}_{\mathbb{Q}}^{te}  \leftarrow \frac{1}{N}
            \Sigma_{i=1}^N e^{\alpha \cdot T_{\psi}(\widetilde{\mathbf{x}}_1^{\sigma_{1}^{\mathbf{x}}(i)},
                \ldots,
                \widetilde{\mathbf{x}}_n^{\sigma_{n}^{\mathbf{x}}(i)},
                \widetilde{\mathbf{z}}^{\sigma^{\mathbf{z}}(i)})}$
            \State $\widehat{I}_{\alpha}^t \leftarrow 1/\{\alpha\cdot(1-\alpha)\}$ \\
            $\qquad \qquad \qquad - \ \hat{E}_{\mathbb{Q}}^{te} /\alpha \,  - \,  \hat{E}_{\mathbb{P}_0}^{te} /(1 -\alpha)$
            \State $\widehat{\mathbf{I}}_{\alpha}  \leftarrow \widehat{\mathbf{I}}_{\alpha}  \cup \{ \widehat{I}_{\alpha}^t \}$
            \EndFor
            \State $\widehat{I}_{\alpha}  \leftarrow \max_t \widehat{\mathbf{I}}_{\alpha}$
        \end{algorithmic}
    \end{multicols}
\end{algorithm}

\section{Limitations: Sample Size Requirements.} \label{Section_Sample_Size_Requirements}
In Section \ref{Subsetion_Improving_The_Generalization_Performance}, we noted that our method has a curse of dimensionality.
The sample size requirement of the proposed method is $N > \left( \frac{1}{\varepsilon}\right)^{d+\delta}$ for $W_1({Q} ,\hat{{Q}}^{(N)}_{K_0}) < \varepsilon$  (Corollary \ref{corollary_early_stopping_2} in Appendix \ref{section_appendix_poofs}).
However, the curse of dimensionality is an essential problem when balancing multivariate data owing to the following factors.
Because the optimal balancing weights defined as Eq. (\ref{our_objective_weights}) for (finite) observational data are the density ratios of the empirical distributions, the distribution of the data balanced by them is the empirical distribution.
Subsequently, owing to the balancing of the weights, the curse of dimensionality of the empirical distribution occurs, which is the same as that described in Section \ref{Subsetion_Improving_The_Generalization_Performance}.
Therefore, to achieve high generalization performance, we need to obtain weights that differ from the ideal density ratio between the source and target of the empirical distribution.
Further research is required to address this problem.
In Appendix \ref{subsection_neumerical_experiments_causal_effect}, we present the numerical examination results in which the causal effects of joint and multidimensional interventions were estimated with different sample sizes.

\section{Conclusion}
We propose generalized balancing weights to estimate the causal effects of an arbitrary mixture of discrete and continuous interventions.
Three methods for training the weights were provided: an optimization method to learn the weights, a method to improve the generalization performance of the balancing weights, and a method to measure the performance of the weights.
We showed the sample size requirements for the weights and then discussed the curse of dimensionality that occurs as a general problem when balancing multidimensional data.
Although the curse of dimensionality remains in our method, we believe that this study provides a basic approach for estimating the balancing weights of multidimensional data using variational $f$-divergence.

\bibliography{NeurIPS2023_reference_papers}

\newpage

\appendix
\section{Organization of the supplementary}
The organization of this supplementary document is as follows:
In Section \ref{section_supplementary_notations},  we lists definitions of the notations used in the proofs;
Section \ref{section_appendix_poofs} presents the theorems and propositions referenced in this study and their proofs;
In Section \ref{section_neumerical_experiments}, the numerical experimental results related to the discussion in this study are provided;
In Section \ref{Subsection_backporp_algo}, the backpropagation algorithm referred to in Section \ref{Section_Method} is presented.
In addition to this material, the code used in the numerical experiments is also submitted as supplementary material.

\section{Notations} \label{section_supplementary_notations}
Table \ref{Table_Notations_are_definitions_supplementary} lists the notations and definitions used in the proofs of Section \ref{section_appendix_poofs}.

\begin{table}[p]
    \caption{Notations and definitions used in the proofs}
    \label{Table_Notations_are_definitions_supplementary}
    \centering
    \begin{tabular}{l||l}
        \toprule            
        Notations & Definitions, Meanings\\
        \midrule              
        $\mathbf{1}(\cdot)$ & A propositional function: $\mathbf{1}(\textit{cond})=1$ if $\textit{cond}$ is true, and \\
        & $\mathbf{1}(\textit{cond})=0$ otherwise. \\
        $id_A$ & The identity function of a set $A$: $id_A(x) = 1$ if $x \in A$, and \\
        & $id_A(x) = 0$ otherwise.\\
        $\|\cdot\|$ & the Euclidean norm.  \\
        $O_{a}(x)$ & A term such that $\lim_{x\rightarrow 0} O(x)/x = C_a < \infty$, where $C_a$ is a scalar value\\
        & determined by $a$.\\
        $O(x)$ & A term such that $\lim_{x\rightarrow 0} O(x)/x  < C$, where $C$ is constant.\\
        $f \lesssim g$& A relationship between two functions $f$ and $g$ such that \\
        & $\lim \sup_{n \rightarrow \infty}f(n)/g(n) < \infty$.\\
        $P \ll Q$ &  $P$ is absolutely continuous with respect to $Q$. \\
        $P$, $Q$ &  A pair of probability measures with $P \ll Q$ and $Q \ll P$. \\
        $\mu$ & A probability measure  with $P \ll \mu$ and $Q \ll \mu$.\\
        $\frac{dP}{dQ}$ & The Radon--Nikod\'{y}m derivative of $P$ with respect to $Q$.\\
        & When $\frac{dQ}{d\mu}(\mathbf{x})=0$, this is defined as $\frac{dP}{dQ}(\mathbf{x})=0$. \\
        $\mathbf{X}$ & A random variable with a probability distribution $\mu$. \\
        $\mathbf{X}_{\sim P}$ & A random variable obtained from $\mathbf{X}$ by changing the probability \\
        & distributions from $\mu$ to $P$:
        $P(\mathbf{X}_{\sim P} \le \mathbf{x}) = \allowbreak \mu(\mathbf{X} \le \mathbf{x})$, $\forall  \mathbf{x} \in \mathbb{R}^d$. \\
        & Intuitively, an observed value of $\mathbf{X}$ in $P$.\\
        $\mathbf{X}_{\sim Q}$ & A random variable obtained from $\mathbf{X}$ by changing the probability \\
        & distributions from $\mu$ to $Q$:
        $Q(\mathbf{X}_{\sim Q} \le \mathbf{x}) = \allowbreak \mu(\mathbf{X} \le \mathbf{x})$, $\forall  \mathbf{x} \in \mathbb{R}^d$. \\
        & Intuitively, an observed value of $\mathbf{X}$ in $Q$.\\
        $\mathbf{X}^{(N)}$& $N$ i.i.d. random variables from $\mu$: $\mathbf{X}^{(N)} = \{\mathbf{X}^{1}, \mathbf{X}^{2}, \ldots, \mathbf{X}^{N}\}$,
        $\mathbf{X}^{i} \overset{\mathrm{iid}}{\sim} \mu$.  \\
        $\mathbf{X}_{P}^{(N)}$ & Random variables obtained from $\mathbf{X}^{(N)}$ by changing the probability \\
        & distributions from $\mu$ to $P$:
        $\mathbf{X}_{P}^{(N)} = \allowbreak \{\mathbf{X}^{1}_{\sim P} , \allowbreak \mathbf{X}^{2}_{\sim P}, \ldots,  \mathbf{X}^{N}_{\sim P}\}$, \\
        &$P(\mathbf{X}_{\sim P}^i \le \mathbf{x}) = \allowbreak  \mu(\mathbf{X^i} \le \mathbf{x})$, $\forall  \mathbf{x} \in \mathbb{R}^d$, ($1 \le i \le N$). \\
        &Intuitively, observed values of $\mathbf{X}^{(N)}$ in $P$.\\
        $\mathbf{X}_{Q}^{(N)}$ & Random variables obtained from $\mathbf{X}^{(N)}$ by changing the probability \\
        & distributions from $\mu$ to $Q$:
        $\mathbf{X}_{Q}^{(N)} = \allowbreak \{\mathbf{X}^{1}_{\sim Q} , \allowbreak \mathbf{X}^{2}_{\sim Q}, \ldots, \mathbf{X}^{N}_{\sim Q}\}$, \\
        &$Q(\mathbf{X}_{\sim Q}^i \le \mathbf{x}) = \allowbreak \mu(\mathbf{X^i} \le \mathbf{x})$, $\forall  \mathbf{x} \in \mathbb{R}^d$, ($1 \le i \le N$). \\
        &Intuitively, observed values of $\mathbf{X}^{(N)}$ in $Q$.\\
        $\hat{P}^{(N)}$ & The (empirical) distributions of $\mathbf{X}_{P}^{(N)}$: $\hat{P}^{(N)}(x) = \frac{1}{N} \sum_{i} \mathbf{1}(\mathbf{X}^{i}_{\sim P}=x)$. \\
        $\hat{Q}^{(N)}$ & The (empirical) distributions of $\mathbf{X}_{Q}^{(N)}$: $\hat{Q}^{(N)}(x) = \frac{1}{N} \sum_{i} \mathbf{1}(\mathbf{X}^{i}_{\sim Q}=x)$. \\
        $\mathcal{T}^{\alpha}$ & The set of functions defined in Theorem \ref{Theorem_main_result}.\\
        $\mathbf{U}=\{U_1,\dots, U_m\}$ & Unobserved random variables.\\
        $\mathbf{V}=\{V_1,\dots,V_n\}$ & Observed random variables.\\
        $\mathcal{X}_\mathbf{A}$ & The domain of variables $\mathbf{A}$. \\
        $G=G_{\mathbf{V}\mathbf{U}}$ & The causal graph for $\mathbf{V}\cup\mathbf{U}$.\\
        $Pa(\mathbf{A})_G$ & All the parents of the observed variables in $G$ for for $\mathbf{A}\subset\mathbf{V}$. \\
        $Ch(\mathbf{A})_G$ & All the children of the observed variables in $G$ for for $\mathbf{A}\subset\mathbf{V}$. \\
        $An(\mathbf{A})_G$ & All the ancestors of the observed variables in $G$ for for $\mathbf{A}\subset\mathbf{V}$. \\
        $De(\mathbf{A})_G$ & All the descendants of the observed variables in $G$ for for $\mathbf{A}\subset\mathbf{V}$. \\
        $W_p$&  The Wasserstein distance of order $p$. \\
        \bottomrule          
    \end{tabular}
\end{table}

\section{Proofs}\label{section_appendix_poofs}

\subsection{Proofs for Section \ref{Section_Estimation_of_Balancing_Weights}}
In this Section, we provide propositions and their proofs, as referred to in Section \ref{Section_Estimation_of_Balancing_Weights}.

\begin{lemma} \label{lemma_alpha_div_resp}
    A variational representation of $\alpha$-divergence is given as
    \begin{eqnarray}
        D_{\alpha} (Q||P) &=& \sup_{\phi \ge 0} \left\{
        \frac{1}{\alpha(1-\alpha)} - \frac{1}{\alpha} E_{Q} \left[\phi^{-\alpha} \right]  - \frac{1}{1 - \alpha} E_{P} \left[\phi^{1 - \alpha} \right]  \right\}, \label{lemma_Eq_alpha_variation}
    \end{eqnarray}
    where supremum  is considered over all  measurable functions with $ E_P[\phi^{1 - \alpha}] < \infty$ and $E_Q[\phi^{-\alpha}] < \infty$.
    The maximum value is achieved at $\phi=dQ/dP$.
\end{lemma}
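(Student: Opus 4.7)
The plan is to simply instantiate the general variational representation of $f$-divergence (Eq.~\eqref{Eq_Variational_representation_f_div_phi} in the paper) for the particular generator function that produces $\alpha$-divergence, and then carry out the algebraic simplification.

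First I would identify the generator. From the definition in Eq.~\eqref{Eq_alpha_div_def}, $D_\alpha(Q\|P) = E_P[f_\alpha(dQ/dP)]$ where
\[
f_\alpha(u) \;=\; \frac{1}{\alpha(\alpha-1)}\bigl(u^{1-\alpha}-1\bigr).
\]
A quick check gives $f_\alpha(1)=0$ and $f_\alpha''(u)=u^{-\alpha-1}>0$ on $u>0$, so $f_\alpha$ is strictly convex and twice differentiable, which is exactly what is needed to apply the representation stated in Eq.~\eqref{Eq_Variational_representation_f_div_phi}.

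Next I would compute the two ingredients appearing in that representation, namely $f_\alpha'(\phi)$ and $f_\alpha^*(f_\alpha'(\phi))$. Direct differentiation yields $f_\alpha'(\phi) = -\phi^{-\alpha}/\alpha$. Rather than writing down $f_\alpha^*$ explicitly, I would exploit the Fenchel--Young equality at the conjugate pair $(\phi,f_\alpha'(\phi))$, which gives
\[
f_\alpha^*\bigl(f_\alpha'(\phi)\bigr) \;=\; \phi\cdot f_\alpha'(\phi) - f_\alpha(\phi).
\]
Substituting the expressions for $f_\alpha$ and $f_\alpha'$ and collecting the $\phi^{1-\alpha}$ terms using the identity $-\tfrac{1}{\alpha} - \tfrac{1}{\alpha(\alpha-1)} = \tfrac{1}{1-\alpha}$ produces
\[
f_\alpha^*\bigl(f_\alpha'(\phi)\bigr) \;=\; \frac{1}{1-\alpha}\,\phi^{1-\alpha} \;-\; \frac{1}{\alpha(1-\alpha)}.
\]

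Plugging both quantities into Eq.~\eqref{Eq_Variational_representation_f_div_phi} and pulling the constant $\tfrac{1}{\alpha(1-\alpha)}$ outside the expectation immediately yields the claimed representation~\eqref{lemma_Eq_alpha_variation}. The assertion that the supremum is attained at $\phi = dQ/dP$ is inherited directly from the general statement cited in the paper, since $f_\alpha$ meets the hypotheses there. There is essentially no hard step: the whole proof is a one-line identification of $f_\alpha$ followed by a short algebraic manipulation; the only thing to be careful about is verifying the integrability conditions $E_P[\phi^{1-\alpha}]<\infty$ and $E_Q[\phi^{-\alpha}]<\infty$ so that the representation and the Fenchel--Young equality are valid, which at the optimum $\phi=dQ/dP$ reduce to the assumption $E_P[(dQ/dP)^{1-\alpha}]<\infty$ that is invoked elsewhere in the paper.
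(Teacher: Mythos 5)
Your proposal is correct and follows essentially the same route as the paper: both specialize the general variational representation of $f$-divergence to the $\alpha$-divergence generator and simplify. The only differences are cosmetic — you use the plain generator $\tfrac{1}{\alpha(\alpha-1)}(u^{1-\alpha}-1)$ and obtain $f_\alpha^*(f_\alpha'(\phi))$ via the Fenchel--Young identity, whereas the paper adds an affine correction to the generator and computes the conjugate $f_\alpha^*$ explicitly through Legendre-transform formulas; both yield the identical objective functional and the same conclusion.
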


\begin{proof}[proof of Lemma \ref{lemma_alpha_div_resp}]
    Let $f_{\alpha}(t) = \{t^{1-\alpha} - (1-\alpha)\cdot t - \alpha\}/\{\alpha  (\alpha-1)\}$ for $\alpha \neq 0, 1$, then
    \begin{eqnarray}
        E_{P} \left[f_{\alpha} \left( \frac{dQ}{dP} \right) \right]
        &=& 	E_{P} \left[ \frac{1}{\alpha(\alpha-1)} \left( \frac{dQ}{dP} \right)^{1-\alpha}
        + \frac{1}{\alpha} \left( \frac{dQ}{dP} \right)
        + \frac{1}{1 - \alpha} \right] \nonumber\\
        &=& \frac{1}{\alpha(\alpha-1)} E_{P} \left[  \left( \frac{dQ}{dP} \right)^{1-\alpha} \right]
        + \frac{1}{\alpha} +  \frac{1}{1 - \alpha}  \nonumber\\
        &=& 	D_{\alpha} (Q||P).
    \end{eqnarray}
    Note that, the Legendre transform for $g_{\alpha}(x) =\mathbf{x}^{1-\alpha}/(1 - \alpha)$ is obatined as
    \begin{equation}
        g_{\alpha}^* (x)=  \frac{\alpha}{\alpha - 1}  x^{1- \frac{1}{\alpha}}. \label{Eq_conjugate_alpha}
    \end{equation}
    In addition, note that, for the Legendre transforms of any fuction $h(x)$, it hold that
    \begin{equation}
        \{C \cdot h(x)\}^*= C \cdot h^*\left(\frac{x}{C}\right) \quad \text{and} \quad \{h(x) + C\cdot t + D\}^*= h^*(x - C) - D. \label{Eq_formula_conjugate}
    \end{equation}
    Here, $A^*$ denotes the the Legendre transform of $A$.

    From (\ref{Eq_conjugate_alpha}) and (\ref{Eq_formula_conjugate}), we have
    \begin{eqnarray}
        f_{\alpha}^*(t)
        &=& \left\{ \frac{1}{(-\alpha)} g_{\alpha}(t) +  \frac{1}{\alpha} t + \frac{1}{1 - \alpha}  \right\}^* \nonumber \\
        &=& \frac{1}{(-\alpha)} g_{\alpha}^*  \left( -\alpha \cdot \left\{ t- \frac{1}{\alpha}\right\} \right)		- \frac{1}{1 - \alpha} \nonumber \\
        &=& - \frac{1}{\alpha} g_{\alpha}^*  \left(1  - \alpha t\right) + \frac{1}{\alpha - 1} \nonumber \\
        &=& - \frac{1}{\alpha} \left\{ \frac{\alpha}{\alpha - 1}  \left(1  - \alpha t\right)^{1- \frac{1}{\alpha}} \right\}   + \frac{1}{\alpha - 1} \nonumber \\
        &=& \frac{1}{1 - \alpha}  \left(1  - \alpha t\right)^{1- \frac{1}{\alpha}}  + \frac{1}{\alpha - 1}. \label{Eq_cojugate_f_alpha}
    \end{eqnarray}

    By differentiating $f_{\alpha}(t)$, we obtain
    \begin{equation}
        f_{\alpha}'(t) = -  \frac{1}{\alpha} t^{-\alpha} + \frac{1}{\alpha}. \label{Eq_derivative_f_alpha}
    \end{equation}
    Thus, we have
    \begin{eqnarray}
          E_Q \left[ f_{\alpha}'(\phi)  \right] =  E_Q\left[  - \frac{1}{\alpha} \phi^{-\alpha} + \frac{1}{\alpha} \right]. \label{Eq_Exp_f_derivative}
    \end{eqnarray}

    From (\ref{Eq_cojugate_f_alpha}) and (\ref{Eq_derivative_f_alpha}), we obtain
    \begin{eqnarray}
        E_P \left[ f_{\alpha}^*(f_{\alpha}'(\phi))  \right]  &=& E_P\left[
        \frac{1}{1 - \alpha}  \left\{ 1 -  \alpha \cdot \left( - \frac{1}{\alpha} \phi^{-\alpha}
        + \frac{1}{\alpha} \right)  \right\}^{1- \frac{1}{\alpha}} + \frac{1}{\alpha - 1} \right] \nonumber \\
        &=& E_P \left[ \frac{1}{1 - \alpha}  \phi^{1 - \alpha} + \frac{1}{\alpha - 1} \right]. \label{Eq_Exp_f_asta_f_derivative}
    \end{eqnarray}
    In additionm, from (\ref{Eq_Exp_f_derivative}) and (\ref{Eq_Exp_f_asta_f_derivative}), we see
    for both $ E_P[\phi^{1 - \alpha}] < \infty$ and $E_Q[\phi^{-\alpha}] < \infty$  to hold
    is equivalent for both  $E_P \left[ f_{\alpha}^*(f_{\alpha}'(\phi))  \right] < \infty$ and  $E_Q \left[ f_{\alpha}'(\phi)  \right] < \infty$ to hold.

    Finally, substituting (\ref{Eq_Exp_f_derivative}) and (\ref{Eq_Exp_f_asta_f_derivative}) for (\ref{Eq_Variational_representation_f_div_phi}), we have
    \begin{eqnarray}
        D_{\alpha} (Q||P) &=& \sup_{\phi \ge 0}\{ E_Q[f_{\alpha}'(\phi)] - E_P[f_{\alpha}^*(f_{\alpha}'(\phi)) ]\} \nonumber \\
        &=& \sup_{\phi \ge 0}
        \left\{
        E_Q \left[  - \frac{1}{\alpha} \phi^{-\alpha} + \frac{1}{\alpha} \right]
        - E_P \left[ \frac{1}{1 - \alpha}  \phi^{1 - \alpha} + \frac{1}{\alpha - 1} \right]
        \right\} \nonumber \\
        &=& \sup_{\phi \ge 0} \left\{
        \frac{1}{\alpha(1-\alpha)} 	- \frac{1}{\alpha} E_{Q} \left[\phi^{-\alpha} \right]
        - \frac{1}{1 - \alpha} E_{P} \left[\phi^{1 - \alpha} \right ]
        \right\}.	 \nonumber
    \end{eqnarray}

    This completes the proof.
\end{proof}

\begin{proposition} \label{proposition_alpha_div_resp_in_gibbs_dinsity_form}
    $\alpha$-divergence can be written as follows:
    \begin{equation}
        D_{\alpha} (Q||P) = \sup_{T:\mathbb{R}^d \rightarrow \mathbb{R}} \left\{
        \frac{1}{\alpha(1-\alpha)} - \frac{1}{\alpha} E_{Q} \left[e^{\alpha \cdot T} \right]
        - \frac{1}{1- \alpha} E_{P} \left[ e^{(\alpha - 1) \cdot T} \right] 	\right\}, \label{Lemma_Eq_loss_func_alpha_div}
    \end{equation}
    where supremum  is considered over all  measurable functions $T:\mathbb{R}^d \rightarrow \mathbb{R}$ with
    $E_P[e^{(\alpha - 1) \cdot T}] < \infty$ and $E_{Q} [e^{\alpha \cdot T}] < \infty$.
    The equality holds for $T^*$ satisfying
    \begin{equation}
        \frac{dQ}{dP} = e^{-T^*}.  \label{restated_Eq_gibbs_densty_minus}
    \end{equation}
\end{proposition}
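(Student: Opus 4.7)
The plan is to derive Proposition \ref{proposition_alpha_div_resp_in_gibbs_dinsity_form} directly from Lemma \ref{lemma_alpha_div_resp} via the change of variables $\phi = e^{-T}$. I expect the argument to be short, with the only subtlety being that $T$ ranges over all real-valued measurable functions whereas Lemma \ref{lemma_alpha_div_resp} optimizes over all nonnegative $\phi$; the substitution parametrizes only strictly positive $\phi$, so I must check that this restriction is harmless.

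First, I would verify that the map $T \mapsto \phi = e^{-T}$ is a bijection between real-valued measurable functions $T:\mathbb{R}^d \to \mathbb{R}$ and strictly positive measurable functions $\phi:\mathbb{R}^d \to (0,\infty)$. Then I would argue that restricting the supremum in Lemma \ref{lemma_alpha_div_resp} from $\{\phi \ge 0\}$ to $\{\phi > 0\}$ does not change its value. In the regime $0 < \alpha < 1$ (the case relevant to Theorem \ref{Theorem_main_result}), any $\phi$ that equals zero on a set of positive $Q$-measure gives $E_Q[\phi^{-\alpha}] = +\infty$, so it is excluded by the finiteness constraint; for other admissible $\alpha$, one may approximate a feasible $\phi$ from above by $\phi + \varepsilon$ and let $\varepsilon \downarrow 0$, using monotone/dominated convergence on the two expectations to recover the same supremum.

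Second, I would substitute $\phi = e^{-T}$ into Lemma \ref{lemma_alpha_div_resp}, using $\phi^{-\alpha} = e^{\alpha \cdot T}$ and $\phi^{1-\alpha} = e^{(\alpha-1)\cdot T}$. The integrability side conditions $E_Q[\phi^{-\alpha}] < \infty$ and $E_P[\phi^{1-\alpha}] < \infty$ translate verbatim into $E_Q[e^{\alpha T}] < \infty$ and $E_P[e^{(\alpha-1) T}] < \infty$, so the class of admissible $T$ matches the class described in the statement. Substituting termwise yields
\[
D_{\alpha}(Q\|P) = \sup_T \left\{ \frac{1}{\alpha(1-\alpha)} - \frac{1}{\alpha} E_Q[e^{\alpha T}] - \frac{1}{1-\alpha} E_P[e^{(\alpha-1)T}] \right\},
\]
which is (\ref{Lemma_Eq_loss_func_alpha_div}).

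Third, for the attainment claim, Lemma \ref{lemma_alpha_div_resp} gives that the supremum in the $\phi$-formulation is achieved at $\phi^* = dQ/dP$. Pulling back through $\phi = e^{-T}$ we obtain $T^* = -\log(dQ/dP)$, equivalently $dQ/dP = e^{-T^*}$, which is (\ref{restated_Eq_gibbs_densty_minus}). The only place I need to be slightly careful is where $dQ/dP$ vanishes: there $T^*$ would be $+\infty$ and lies outside the parametrized class, but since $P \ll Q$ is assumed elsewhere in the paper (or one works modulo $P$-null sets where $dQ/dP$ is $P$-a.s. strictly positive), this is not an obstruction. The main, and really only, obstacle is thus the harmlessness of restricting to strictly positive $\phi$, which is handled by the integrability observation above.
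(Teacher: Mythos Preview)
Your proposal is correct and follows essentially the same route as the paper: the paper's proof is precisely the substitution $\phi = e^{-T}$ into Lemma \ref{lemma_alpha_div_resp}, followed by reading off the optimizer from that lemma. If anything, you are more careful than the paper, which does not address the $\phi \ge 0$ versus $\phi > 0$ discrepancy or the possibility that $dQ/dP$ vanishes.
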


\begin{proof}[proof of Proposition \ref{proposition_alpha_div_resp_in_gibbs_dinsity_form}]
    Substituting $e^{- T}$ for $\phi$ in (\ref{lemma_Eq_alpha_variation}), we have
    \begin{eqnarray}
        D_{\alpha} (Q||P) &=& \sup_{\phi \ge 0} \left\{
        \frac{1}{\alpha(1-\alpha)} - \frac{1}{\alpha} E_{Q} \left[\phi^{-\alpha} \right]  - \frac{1}{1 - \alpha} E_{P} \left[\phi^{1 - \alpha} \right]  \right\} \nonumber \\
        &=& \sup_{T:\mathbb{R}^d \rightarrow \mathbb{R}} \left\{
        \frac{1}{\alpha(1-\alpha)}  - \frac{1}{\alpha} E_{Q} \left[\left\{ e^{-T} \right\}^{-\alpha} \right]
        - \frac{1}{1 - \alpha} E_{P} \left[\left\{ e^{-T}\right\}^{1 - \alpha} \right]
        \right\} \nonumber \\
        &=& \sup_{T:\mathbb{R}^d \rightarrow \mathbb{R}} \left\{
        \frac{1}{\alpha(1-\alpha)} 	- \frac{1}{\alpha} E_{Q} \left[e^{\alpha \cdot T} \right]
        - \frac{1}{1- \alpha} E_{P} \left[ e^{(\alpha - 1) \cdot T} \right]
        \right\}. \label{Proof_lemma_opti_in_gibbs_Eq_state}
    \end{eqnarray}
    Finally, from Lemma \ref{lemma_alpha_div_resp}, the equality for (\ref{Proof_lemma_opti_in_gibbs_Eq_state}) holds if and only if
    \begin{equation}
        \frac{dQ}{dP} = e^{-T^*}.
    \end{equation}

    This completes the proof.
\end{proof}

\begin{proposition} \label{proposition_loss_optimal_T_is_log_density_ratio}
    For $T \in \mathcal{T}^{\alpha}$, let
    \begin{eqnarray}
         l_{\alpha} \left(\mathbf{X}_{\sim Q}, \mathbf{X}_{\sim P} ; T\right) &=&  \frac{1}{\alpha}
             e^{\alpha \cdot T(\mathbf{X}_{\sim Q} )} +  \frac{1}{1- \alpha} e^{(\alpha - 1) \cdot T(\mathbf{X}_{\sim P})}. \label{Eq_proposition_loss_optimal_T_is_log_density_ratio}
    \end{eqnarray}
    Then the optimal function $T^*$ for $\inf_{T:\mathbb{R}^d \rightarrow \mathbb{R} } l_{\alpha} \left( \mathbf{X}_{\sim Q}, \mathbf{X}_{\sim P} ; T \right)$ is obtained as $T^*= - \log  dQ/dP$, $\mu$-almost everywhere.
\end{proposition}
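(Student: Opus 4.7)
The plan is to reduce this directly to Proposition \ref{proposition_alpha_div_resp_in_gibbs_dinsity_form}, since the statement is essentially that proposition rewritten at the sample (pointwise) level rather than inside the expectation. First I would fix the interpretation of the infimum: because $l_{\alpha}$ is a random variable, the natural optimization objective is its expectation,
\begin{equation*}
J(T) \;:=\; E\bigl[l_{\alpha}(\mathbf{X}_{\sim Q}, \mathbf{X}_{\sim P}; T)\bigr] \;=\; \frac{1}{\alpha} E_{Q}\bigl[e^{\alpha \cdot T}\bigr] + \frac{1}{1-\alpha} E_{P}\bigl[e^{(\alpha-1) \cdot T}\bigr],
\end{equation*}
where the two terms decouple because $T$ is applied separately to $\mathbf{X}_{\sim Q}$ and $\mathbf{X}_{\sim P}$, and these have marginals $Q$ and $P$ by construction. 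One then observes $J(T) = 1/\{\alpha(1-\alpha)\}$ minus the bracketed functional in Proposition \ref{proposition_alpha_div_resp_in_gibbs_dinsity_form}, so $\inf_T J(T)$ and $\sup_T \{\text{variational bound for } D_{\alpha}(Q\|P)\}$ are attained at the same $T^*$. Proposition \ref{proposition_alpha_div_resp_in_gibbs_dinsity_form} identifies that $T^*$ via $dQ/dP = e^{-T^*}$, i.e.\ $T^* = -\log(dQ/dP)$.

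As a self-contained alternative I would argue by pointwise minimization. Pushing the expectations into a single integral against the dominating measure $\mu$ gives
\begin{equation*}
J(T) \;=\; \int_{\mathbb{R}^d} h_x\bigl(T(x)\bigr)\, d\mu(x), \qquad h_x(t) \;:=\; \frac{1}{\alpha} e^{\alpha t}\frac{dQ}{d\mu}(x) + \frac{1}{1-\alpha} e^{(\alpha-1) t}\frac{dP}{d\mu}(x).
\end{equation*}
For $0 < \alpha < 1$ both coefficients in $h_x$ are positive, so $h_x''(t) = \alpha e^{\alpha t}(dQ/d\mu)(x) + (1-\alpha) e^{(\alpha-1)t}(dP/d\mu)(x) > 0$ wherever either density is positive, making $h_x$ strictly convex with a unique interior minimizer. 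Solving $h_x'(t)=0$ gives $e^{t} = (dP/d\mu)(x)/(dQ/d\mu)(x)$, equivalently $t = -\log (dQ/dP)(x)$. Since the pointwise minimizer is independent across $x$, setting $T^*(x) = -\log(dQ/dP)(x)$ simultaneously minimizes the integrand everywhere and hence minimizes $J$.

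I do not anticipate a serious obstacle here; the only care needed is in the handling of the $\mu$-a.e.\ qualifier. Because $P$ and $Q$ are mutually absolutely continuous (as assumed in the notation of the excerpt), the set on which $dP/d\mu$ or $dQ/d\mu$ vanishes carries no mass under either measure, so the integrand in the pointwise argument is zero on this set and $T^*$ may be redefined arbitrarily there without affecting $J(T)$. One should also check that the minimizer lies in $\mathcal{T}^{\alpha}$, which follows from $E_{P}[e^{(\alpha-1)T^*}] = E_{P}[(dQ/dP)^{\alpha-1}]$, finite under the same integrability assumption invoked in Theorem \ref{Theorem_main_result}.
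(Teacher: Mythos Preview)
Your proposal is correct, and your self-contained alternative is essentially the paper's argument: both minimize the integrand pointwise in $t = T(x)$, treating $h_x(t)$ as a strictly convex function of a single real variable. The only cosmetic difference is that the paper identifies the minimizer via the equality case of Jensen's inequality applied to $\log\bigl((1-\alpha)X + \alpha Y\bigr) \ge (1-\alpha)\log X + \alpha\log Y$ with $X = e^{\alpha t}\,dQ/d\mu$ and $Y = e^{(\alpha-1)t}\,dP/d\mu$, whereas you solve $h_x'(t)=0$ directly; both routes give $e^{-T^*} = dQ/dP$ immediately.
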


\begin{proof}[proof of Proposition \ref{proposition_loss_optimal_T_is_log_density_ratio}]
From the definition for $\mathbf{X}_{\sim Q}$ and $\mathbf{X}_{\sim P}$, we see
\begin{equation}
e^{\alpha \cdot T(\mathbf{X}_{\sim Q}=\mathbf{x})} = e^{\alpha \cdot T(\mathbf{X}=\mathbf{x})}  \cdot  \frac{dQ}{d\mu}(\mathbf{x}), \nonumber
\end{equation}
and
\begin{equation}
e^{(\alpha - 1) \cdot T(\mathbf{X}_{\sim P}=\mathbf{x})} =  e^{(\alpha - 1) \cdot T(\mathbf{X}=\mathbf{x})} \cdot  \frac{dP}{d\mu}(\mathbf{x}). \nonumber
\end{equation}
Subsequently, we obtain
\begin{equation}
    l_{\alpha}(\mathbf{X}_{\sim Q}=\mathbf{x}, \mathbf{X}_{\sim P}=\mathbf{x}; T) =  \frac{1}{\alpha}
    \cdot e^{\alpha \cdot T(\mathbf{X}=\mathbf{x})}  \cdot \frac{dQ}{d\mu} (\mathbf{x})
    +  \frac{1}{1 - \alpha} \cdot e^{(\alpha - 1) \cdot T(\mathbf{X}=\mathbf{x})} \cdot \frac{dP}{d\mu} (\mathbf{x}). \nonumber
\end{equation}

Note that, from Jensen's inequality, it holds that
\begin{equation}
   \log(p \cdot  X + q \cdot Y) \ge  p \cdot \log (X) + p \cdot \log (Y), \label{Eq_Jensen_s_inequality}
\end{equation}
for $X, Y > 0$ and $p, q>0$ with $p+q=1$, and the equality holds when $X=Y$.

From this equation, by letting $X= e^{\alpha \cdot T(\mathbf{X}=\mathbf{x})}  \cdot  \frac{dQ}{d\mu} (\mathbf{x})$,
$Y =  e^{(\alpha - 1) \cdot T(\mathbf{X}=\mathbf{x})} \cdot \frac{dP}{d\mu} (\mathbf{x})$,
$p=1 - \alpha$ and $q=\alpha$, we observe that
\begin{equation}
    \log (p \cdot  X + q \cdot Y) = \log \left( \frac{1}{\alpha  \cdot (1- \alpha)}  \cdot l_{\alpha}(\mathbf{X}_{\sim Q}=\mathbf{x}, \mathbf{X}_{\sim P}=\mathbf{x}; T) \right), \nonumber
\end{equation}

and $ \log \left( \frac{1}{\alpha  \cdot (1- \alpha)}  \cdot l_{\alpha}(\mathbf{X}_{\sim Q}=\mathbf{x}, \mathbf{X}_{\sim P}=\mathbf{x} ; T) \right)$ is minimized when
$e^{\alpha \cdot T(\mathbf{X}=\mathbf{x})}  \cdot  \frac{dQ}{d\mu} (\mathbf{x}) =  e^{(\alpha - 1) \cdot T(\mathbf{X}=\mathbf{x})} \cdot \frac{dP}{d\mu} (\mathbf{x})$,
$\mu$-almost everywhere.

Then, we see that $\inf_{T:\mathbb{R}^d \rightarrow \mathbb{R} } l_{\alpha}(\mathbf{X}_{\sim Q}= \mathbf{x}, \mathbf{X}_{\sim P}= \mathbf{x} ; T)$ is achieved at
$e^{- T^{*}(\mathbf{x})} = \frac{dQ}{dP}$, $\mu$-almost everywhere.
Hence, we have $T^*= - \log dQ/dP $, $\mu$-almost everywhere.

\end{proof}

\begin{proposition} \label{proposition_loss_T_is_optimal_when_constant_is_zero}
    For $T \in \mathcal{T}^{\alpha}$, let  $T^{+k} = T + k$.
    Then the optimal function  $T^*$ for $\inf_{k \in \mathbb{R}} l_{\alpha} \left( \mathbf{X}_{\sim Q}, \mathbf{X}_{\sim P} ; T^{+k} \right)$ is satisfying that $E_{P} \left[ e^{- T^*} \right] = 1$,
    where $l_{\alpha} \left(\mathbf{X}_{\sim Q}, \mathbf{X}_{\sim P} ; T\right)$ is defined as (\ref{Eq_proposition_loss_optimal_T_is_log_density_ratio}).
\end{proposition}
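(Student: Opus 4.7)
My plan is to treat the integrated objective $\mathcal{L}(k) := \frac{1}{\alpha} E_Q[e^{\alpha(T+k)}] + \frac{1}{1-\alpha} E_P[e^{(\alpha-1)(T+k)}]$ (i.e., the expectation of $l_{\alpha}$ averaged over its $\mathbf{X}_{\sim Q}$ and $\mathbf{X}_{\sim P}$ arguments) as a smooth convex function of the scalar $k$, and to extract the first-order optimality condition. Factoring the $k$-dependence out of each exponential, $\mathcal{L}(k) = \frac{A}{\alpha} e^{\alpha k} + \frac{B}{1-\alpha} e^{(\alpha-1)k}$ with $A := E_Q[e^{\alpha T}]$ and $B := E_P[e^{(\alpha-1)T}]$, both finite and strictly positive because $T \in \mathcal{T}^{\alpha}$ and $P \ll Q$, $Q \ll P$. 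The second derivative $\alpha A e^{\alpha k} + (1-\alpha) B e^{(\alpha-1)k}$ is strictly positive on $(0,1)\ni\alpha$, so $\mathcal{L}$ is strictly convex in $k$ and the minimizer $k^*$ is unique.

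Next, I would solve $\mathcal{L}'(k^*) = A e^{\alpha k^*} - B e^{(\alpha-1) k^*} = 0$, obtaining $e^{k^*} = B/A$, and recast this in terms of $T^* := T + k^*$ as the stationarity condition $E_Q[e^{\alpha T^*}] = E_P[e^{(\alpha-1) T^*}]$. Expanding $e^{(\alpha-1) T^*} = e^{\alpha T^*}\cdot e^{-T^*}$ on the right and applying the change-of-measure identity $E_Q[e^{\alpha T^*}] = E_P\!\left[e^{\alpha T^*}\,dQ/dP\right]$ on the left, the stationarity condition rewrites as the weighted orthogonality
\begin{equation}
    E_P\!\left[e^{\alpha T^*}\left(\tfrac{dQ}{dP} - e^{-T^*}\right)\right] = 0.
\end{equation}

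The delicate step, and what I expect to be the main obstacle, is promoting this $e^{\alpha T^*}$-weighted identity to the plain normalization $E_P[e^{-T^*}] = E_P[dQ/dP] = 1$, because a scalar shift $k$ alone clearly cannot reshape an arbitrary $T\in\mathcal{T}^{\alpha}$ into $-\log(dQ/dP)$. My route is to appeal to the pointwise Jensen-inequality characterization of Proposition \ref{proposition_loss_optimal_T_is_log_density_ratio}: at any minimizer of $l_{\alpha}$ within the admissible shift family, the integrand is pointwise minimized, and the equality case of Jensen forces $e^{\alpha T^*}\,dQ/d\mu = e^{(\alpha-1) T^*}\,dP/d\mu$, i.e.\ $e^{-T^*} = dQ/dP$, $\mu$-almost everywhere. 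Once this identification is in hand, $E_P[e^{-T^*}] = \int dQ = 1$ is immediate, and this furthermore clarifies why the $\inf_{k}$ construction picks exactly the shift that makes the normalization constant $Z$ in the definition of $\mathit{BW}_{\theta}$ equal to one.
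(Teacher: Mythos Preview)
Your convexity-and-first-order approach on the integrated objective $\mathcal{L}(k)$ is clean and correctly delivers the stationarity condition $E_P\!\left[e^{\alpha T^*}\bigl(\tfrac{dQ}{dP} - e^{-T^*}\bigr)\right] = 0$. The gap is exactly where you flag it, and your proposed resolution does not close it. Proposition~\ref{proposition_loss_optimal_T_is_log_density_ratio} characterizes the minimizer over the \emph{full} function class, not over the one-parameter shift family $\{T+k:k\in\mathbb{R}\}$; the Jensen equality case forces $e^{-T^*}=dQ/dP$ pointwise, which---as you yourself observe---a scalar shift of a generic $T$ cannot achieve. Worse, under your integrated-objective reading the conclusion is in fact \emph{false}: take $P=Q$ uniform on $\{0,1\}$, $\alpha=\tfrac12$, $T(0)=0$, $T(1)=1$; then $e^{k^*}=B/A=(1+e^{-1/2})/(1+e^{1/2})$ and $E_P[e^{-T^*}]=e^{-k^*}E_P[e^{-T}]=\tfrac{1+e^{1/2}}{1+e^{-1/2}}\cdot\tfrac{1+e^{-1}}{2}\approx 1.13\neq 1$. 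So no additional argument can rescue this route.

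The paper does \emph{not} integrate first. It applies Jensen's inequality to the pointwise quantity $l_\alpha(\mathbf{X}_{\sim Q}=\mathbf{x},\mathbf{X}_{\sim P}=\mathbf{x};T^{+k})$ at each fixed $\mathbf{x}$, extracts the equality condition
\[
e^{\alpha k_*}\,e^{\alpha T(\mathbf{x})}\,\tfrac{dQ}{d\mu}(\mathbf{x})
= e^{(\alpha-1)k_*}\,e^{(\alpha-1)T(\mathbf{x})}\,\tfrac{dP}{d\mu}(\mathbf{x})
\quad\text{$\mu$-a.e.,}
\]
and only \emph{then} integrates over $\mu$ to isolate $k_*$ and deduce the normalization. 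The substantive difference from your proposal is that the paper reads $\inf_{k} l_\alpha$ as a pointwise minimization, so that the Jensen equality case \emph{is} available at every $\mathbf{x}$; by averaging first you discard precisely the pointwise structure needed to reach $E_P[e^{-T^*}]=1$.
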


\begin{proof}[proof of Proposition \ref{proposition_loss_T_is_optimal_when_constant_is_zero}]
    From the definition for $\mathbf{X}_{\sim Q}$ and $\mathbf{X}_{\sim P}$, we see
    \begin{equation}
        e^{\alpha \cdot T^{+k}(\mathbf{X}_{\sim Q}=\mathbf{x})} = e^{\alpha \cdot T^{+k}(\mathbf{X}=\mathbf{x})}  \cdot  \frac{dQ}{d\mu}(\mathbf{x})
         = e^{\alpha \cdot k} \cdot e^{\alpha \cdot T(\mathbf{X}=\mathbf{x})} \cdot  \frac{dQ}{d\mu}(\mathbf{x}), \nonumber
    \end{equation}
    and
    \begin{equation}
        e^{(\alpha - 1) \cdot T(\mathbf{X}_{\sim P}=\mathbf{x})} =  e^{(\alpha - 1) \cdot T^{+k}(\mathbf{X}=\mathbf{x})} \cdot \frac{dP}{d\mu}(\mathbf{x})
        = e^{(\alpha - 1) \cdot k} \cdot e^{(\alpha - 1)\cdot T(\mathbf{X}=\mathbf{x})} \cdot \frac{dP}{d\mu}(\mathbf{x}). \nonumber
    \end{equation}
    Subsequently, we obtain
    \begin{equation}
        l_{\alpha}(\mathbf{X}_{\sim Q}=\mathbf{x}, \mathbf{X}_{\sim P}=\mathbf{x}; T) =
        e^{\alpha \cdot k} \cdot e^{\alpha \cdot T(\mathbf{X}=\mathbf{x})} \cdot \frac{dQ}{d\mu}(\mathbf{x})
        + e^{\alpha \cdot k} \cdot e^{\alpha \cdot T(\mathbf{X}=\mathbf{x})} \cdot \frac{dQ}{d\mu}(\mathbf{x}). \nonumber
    \end{equation}

    For Jensen's inequality (\ref{Eq_Jensen_s_inequality}), let
    $X = e^{\alpha \cdot k} \cdot e^{\alpha \cdot T(\mathbf{X}=\mathbf{x})} \cdot  \frac{dQ}{d\mu}(\mathbf{x})$,
    $Y = e^{(\alpha - 1) \cdot k} \cdot e^{(\alpha - 1)\cdot T(\mathbf{X}=\mathbf{x})} \cdot \frac{dP}{d\mu}(\mathbf{x})$,
    $p=1 - \alpha$ and $q=\alpha$.
    Then, $l_{\alpha} \left(\mathbf{X}_{\sim Q}, \mathbf{X}_{\sim P} ;T^{+k}\right)$ is minimized when
    $e^{\alpha \cdot k_*} \cdot e^{\alpha \cdot T(\mathbf{X}=\mathbf{x})} \cdot  \frac{dQ}{d\mu}(\mathbf{x}) = e^{(\alpha - 1) \cdot k_*} \cdot e^{(\alpha - 1)\cdot T(\mathbf{X}=\mathbf{x})} \cdot \frac{dP}{d\mu}(\mathbf{x})$,
    $\mu$-almost everywhere.

    Then, we see
    \begin{equation}
         e^{- k_*} \cdot \frac{dQ}{d\mu}(\mathbf{x}) =  e^{- T(\mathbf{x})} \cdot \frac{dP}{d\mu}(\mathbf{x}). \nonumber
    \end{equation}
    By integrating both sides of the above equality over $\mathbb{R}^d$ with $\mu$, we obtain
    \begin{equation}
        e^{- k_*} =  E_P\left[ e^{- T}  \right]. \nonumber
    \end{equation}
    From this, we have
    \begin{equation}
     e^{k_*} \cdot E_P\left[ e^{- T}  \right] = 1. \nonumber
    \end{equation}

    Now, since $T_* = T + k_*$, we see
    \begin{equation}
         E_P\left[ e^{- T_*}  \right] = E_P\left[ e^{- T + k_*} \right] = e^{k_*} \cdot E_P\left[ e^{- T} \right] = 1. \nonumber
    \end{equation}

    This completes the proof.
\end{proof}

\begin{proposition}\label{proposition_loss_is_mu_storongly_convex}
    For a fixed point $\mathbf{x}_0 \in \mathbb{R}^d$, suppose that $\frac{dQ}{d\mu}(\mathbf{x}_0) > 0$ and $\frac{dP}{d\mu}(\mathbf{x}_0) > 0$.
    For a constant $L > 0$, let $I_{L}$ denote an interval $[-L - \log \frac{dQ}{dP}(\mathbf{x}_0), L - \log \frac{dQ}{dP}(\mathbf{x}_0)]$.
    Subsequently, let $f: I_{L} \rightarrow \mathbb{R}$ be a function as follows:
    \begin{equation}
        f(t)  =  \frac{1}{\alpha} \cdot e^{\alpha \cdot t} \cdot \frac{dQ}{d\mu}(\mathbf{x}_0) +  \frac{1}{1- \alpha} \cdot e^{(\alpha - 1) \cdot t} \cdot \frac{dP}{d\mu}(\mathbf{x}_0).
    \end{equation}
    Let $\frac{\lambda}{2} = e^{-  L} \cdot \left\{\frac{dQ}{d\mu}(\mathbf{x}_0) \right\}^{1 - \alpha} \left\{\frac{dP}{d\mu}(\mathbf{x}_0) \right\}^{\alpha}$.
    Then, $f(t)$ satisfies $f''(t) \cdot t^2 \geq \frac{\lambda}{2} \cdot t^2 $ for all $t \in I_{L}$, that is, $f(t)$ is $\lambda$-strongly convex.
    In addition, $\left|f'(t) \right| \leq 2 \cdot e^{L} \cdot \left\{\frac{dQ}{d\mu}(\mathbf{x}_0) \right\}^{1 - \alpha} \left\{\frac{dP}{d\mu}(\mathbf{x}_0) \right\}^{\alpha}$ holds for all $t \in I_{L}$,
    and $f(t)$ is minimized at $t_*= - \log \frac{dQ}{dP}(\mathbf{x}_0)$.
\end{proposition}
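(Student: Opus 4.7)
The plan is to reduce all three claims to one short computation by shifting the origin to the minimizer. Write $q_0 = dQ/d\mu(\mathbf{x}_0)$, $p_0 = dP/d\mu(\mathbf{x}_0)$, and $r_0 = q_0/p_0 = (dQ/dP)(\mathbf{x}_0)$, and set $M = q_0^{1-\alpha} p_0^{\alpha}$, so that the stated constant is $\lambda/2 = e^{-L} M$. First I will differentiate directly:
\begin{equation*}
f'(t) = e^{\alpha t} q_0 - e^{(\alpha - 1)t} p_0, \qquad f''(t) = \alpha \, e^{\alpha t} q_0 + (1 - \alpha) \, e^{(\alpha - 1) t} p_0.
\end{equation*}
Because $\alpha \in (0,1)$ and $q_0, p_0 > 0$, $f''$ is strictly positive on all of $\mathbb{R}$, so $f$ is strictly convex with a unique stationary point. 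Solving $f'(t) = 0$ gives $e^t = p_0/q_0$, hence $t_\ast = -\log r_0$ is the unique minimizer and is precisely the midpoint of $I_L$.

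The key reparametrization is $t = t_\ast + s$ with $s \in [-L, L]$. Using the identity $e^{\alpha t_\ast} q_0 = e^{(\alpha - 1) t_\ast} p_0 = M$ collapses the asymmetric pair of exponentials to a common scale, yielding
\begin{equation*}
f'(t_\ast + s) = M \bigl( e^{\alpha s} - e^{(\alpha - 1)s} \bigr), \qquad f''(t_\ast + s) = M \left[ \alpha \, e^{\alpha s} + (1 - \alpha) \, e^{(\alpha - 1) s} \right].
\end{equation*}
For the strong-convexity bound, apply Jensen's inequality to the convex function $x \mapsto e^x$ with weights $(\alpha, 1 - \alpha)$ at the points $\alpha s$ and $(\alpha - 1)s$:
\begin{equation*}
\alpha \, e^{\alpha s} + (1-\alpha) \, e^{(\alpha - 1)s} \ \geq \ e^{\alpha \cdot \alpha s + (1-\alpha)(\alpha - 1)s} \ = \ e^{(2\alpha - 1)s} \ \geq \ e^{-|2\alpha - 1| L} \ \geq \ e^{-L},
\end{equation*}
where the last inequality uses $|2\alpha - 1| \leq 1$ for $\alpha \in (0,1)$. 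This gives $f''(t) \geq e^{-L} M = \lambda/2$ throughout $I_L$, i.e. $f''(t) \cdot t^2 \geq (\lambda/2) \cdot t^2$. For the gradient bound, the triangle inequality together with $|\alpha s|, |(\alpha - 1)s| \leq L$ yields $|f'(t)| \leq M(e^{\alpha L} + e^{(1-\alpha)L}) \leq 2 e^L M$, which is the stated inequality.

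There is no real obstacle here; the only subtlety is noticing that shifting by $t_\ast$ eliminates all dependence on $r_0$ and turns the asymmetric exponents $\alpha$ and $\alpha - 1$ into a symmetric pair modulated by the common factor $M$. Once this factorization is in place, a single application of Jensen delivers the strong-convexity constant in the exact form $e^{-L} q_0^{1-\alpha} p_0^{\alpha}$, and the gradient estimate is just the triangle inequality.
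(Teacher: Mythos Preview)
Your proof is correct and follows the same overall outline as the paper's: differentiate, locate the minimizer $t_*$, and bound $f'$ and $f''$ on $I_L$. The execution differs in one useful way. For the lower bound on $f''$, the paper first minimizes each exponential $e^{\alpha t}$ and $e^{(\alpha-1)t}$ separately over $I_L$, pulls out a common factor $e^{-L}$, and then applies the weighted AM--GM inequality $\alpha X + (1-\alpha)Y \geq X^\alpha Y^{1-\alpha}$ at $t_*$ (where in fact the two terms are already equal to $M$, so AM--GM is an equality there). You instead shift to $s = t - t_*$, factor out $M = q_0^{1-\alpha} p_0^{\alpha}$ up front, and apply Jensen to the convex function $x \mapsto e^x$ pointwise in $s$ to obtain $\alpha e^{\alpha s} + (1-\alpha)e^{(\alpha-1)s} \geq e^{(2\alpha-1)s}$ before bounding on $[-L,L]$. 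Your route is a little cleaner and even yields the sharper intermediate constant $e^{-|2\alpha-1|L}M$ before you relax it to $e^{-L}M$ to match the stated $\lambda/2$. The gradient bound is handled essentially the same way in both proofs once the shift is made.
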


\begin{proof}[proof of Proposition \ref{proposition_loss_is_mu_storongly_convex}]
    By repeating the derivative of $f(t)$, we obtain
    \begin{equation}
        f'(t)  =  e^{\alpha \cdot t} \cdot \frac{dQ}{d\mu}(\mathbf{x}) - e^{(\alpha - 1) \cdot t} \cdot \frac{dP}{d\mu}(\mathbf{x}), \label{proof_proposition_loss_is_mu_storongly_convex_eq_0}
    \end{equation}
    and
    \begin{equation}
    f''(t)  = \alpha \cdot e^{\alpha \cdot t} \cdot \frac{dQ}{d\mu}(\mathbf{x}) + (1-\alpha) \cdot e^{(\alpha - 1) \cdot t} \cdot \frac{dP}{d\mu}(\mathbf{x}).
    \label{proof_proposition_loss_is_mu_storongly_convex_eq_4}
    \end{equation}

    First, we see that $f''(t)\geq \frac{\lambda}{2}$ holds for all $t \in I_{L}$.
    From (\ref{proof_proposition_loss_is_mu_storongly_convex_eq_4}), we have
    \begin{eqnarray}
        f''(t) &\geq& \alpha \cdot e^{\alpha \cdot (t_* - L)} \cdot \frac{dQ}{d\mu}(\mathbf{x}) + (1-\alpha) \cdot e^{(\alpha - 1) \cdot (t_* + L)} \cdot \frac{dP}{d\mu}(\mathbf{x}) \nonumber\\
        &\geq& \alpha \cdot e^{- \alpha \cdot L} \cdot e^{\alpha \cdot t_*} \cdot \frac{dQ}{d\mu}(\mathbf{x})
               + (1-\alpha) \cdot e^{(\alpha - 1) \cdot L} \cdot e^{(\alpha - 1) \cdot t_*}\cdot \frac{dP}{d\mu}(\mathbf{x}) \nonumber\\
        &\geq& \alpha \cdot e^{- L} \cdot e^{\alpha \cdot t_*} \cdot \frac{dQ}{d\mu}(\mathbf{x})
             + (1-\alpha)\cdot e^{- L} \cdot e^{(\alpha - 1) \cdot t_*}\cdot \frac{dP}{d\mu}(\mathbf{x}) \nonumber\\
        &=& e^{- L} \cdot \left\{ \alpha \cdot e^{\alpha \cdot t_*} \cdot \frac{dQ}{d\mu}(\mathbf{x})
                     + (1-\alpha) \cdot e^{(\alpha - 1) \cdot t_*}\cdot \frac{dP}{d\mu}(\mathbf{x})  \right\}. \label{proof_proposition_loss_is_mu_storongly_convex_eq_1}
    \end{eqnarray}
    Note that, from  (\ref{Eq_Jensen_s_inequality}), we see
    \begin{equation}
        p \cdot  X + q \cdot Y  \ge  X^p \cdot Y^q,
    \end{equation}
    for $X, Y > 0$ and $p, q>0$ with $p+q=1$, and the equality holds when $X=Y$.
    By letting  $X= e^{ \alpha \cdot t_*} \cdot \frac{dQ}{d\mu}(\mathbf{x})$, $Y = e^{(\alpha - 1) \cdot t_*}\cdot \frac{dP}{d\mu}(\mathbf{x})$,
    $p=\alpha$, and $q=1 - \alpha$ in the above equality, we obtain
    \begin{eqnarray}
        \alpha \cdot e^{\alpha \cdot t_*} \cdot \frac{dQ}{d\mu}(\mathbf{x})
        + (1-\alpha) \cdot e^{(\alpha - 1) \cdot t_*}\cdot \frac{dP}{d\mu}(\mathbf{x})  &\geq&
        \left\{e^{\alpha \cdot t_*} \cdot \frac{dQ}{d\mu}(\mathbf{x}) \right\}^{\alpha}
        \left\{ e^{(\alpha - 1) \cdot t_*}\cdot \frac{dP}{d\mu}(\mathbf{x}) \right\}^{1-\alpha} \nonumber \\
        &=& e^{ \left\{ \alpha^2 - (1 -\alpha)^2  \right\} \cdot t_*}  \left\{\frac{dQ}{d\mu}(\mathbf{x}) \right\}^{\alpha}  \left\{  \frac{dP}{d\mu}(\mathbf{x})  \right\}^{1-\alpha} \nonumber\\
        &=& \left\{\frac{dQ}{dP}(\mathbf{x}) \right\}^{1 - 2 \alpha}  \left\{\frac{dQ}{d\mu}(\mathbf{x}) \right\}^{\alpha}  \left\{  \frac{dP}{d\mu}(\mathbf{x})  \right\}^{1-\alpha} \nonumber\\
        &=& \left\{\frac{dQ}{d\mu}(\mathbf{x}) \right\}^{1 - \alpha}  \left\{  \frac{dP}{d\mu}(\mathbf{x})  \right\}^{\alpha}. \label{proof_proposition_loss_is_mu_storongly_convex_eq_2}
    \end{eqnarray}
    Thus, from (\ref{proof_proposition_loss_is_mu_storongly_convex_eq_1}) and (\ref{proof_proposition_loss_is_mu_storongly_convex_eq_2}), we see
    $f''(t)\geq \frac{\lambda}{2}$ holds for all $t \in I_{L}$.

    Next, we obtain $\left|f'(t) \right| \leq 2  e^{L}  \left\{\frac{dQ}{d\mu}(\mathbf{x}_0) \right\}^{1 - \alpha} \left\{\frac{dP}{d\mu}(\mathbf{x}_0) \right\}^{\alpha}$.
    From (\ref{proof_proposition_loss_is_mu_storongly_convex_eq_0}), we have
    \begin{eqnarray}
        \left|f'(t) \right| &\leq& e^{\alpha \cdot (t_* + L)} \cdot \frac{dQ}{d\mu}(\mathbf{x}) + e^{(\alpha - 1) \cdot (t_* - L)} \cdot \frac{dP}{d\mu}(\mathbf{x}) \nonumber\\
        &=& e^{\alpha \cdot t_*} \cdot e^{\alpha \cdot L} \cdot \frac{dQ}{d\mu}(\mathbf{x})
            + e^{(\alpha - 1) \cdot t_*} \cdot e^{(1 - \alpha) \cdot L} \cdot \frac{dP}{d\mu}(\mathbf{x}) \nonumber\\
        &\leq&
        e^{L} \left\{  e^{\alpha \cdot t_*}  \cdot \frac{dQ}{d\mu}(\mathbf{x}) +  e^{(\alpha - 1) \cdot t_*} \cdot \frac{dP}{d\mu}(\mathbf{x}) \right\}   \nonumber\\
        &=&  e^{L} \left[ \left\{ \frac{dQ}{dP}(\mathbf{x}) \right\}^{- \alpha} \frac{dQ}{d\mu}(\mathbf{x})
             + \left\{\frac{dQ}{dP}(\mathbf{x}) \right\}^{1 - \alpha} \frac{dP}{d\mu}(\mathbf{x})   \right] \nonumber\\
        &=& e^{L}
             \left[  \left\{ \frac{dQ}{d\mu}(\mathbf{x}) \right\}^{1 -\alpha}  \left\{\frac{dP}{d\mu}(\mathbf{x}) \right\}^{\alpha}
                    +  \left\{ \frac{dQ}{d\mu}(\mathbf{x}) \right\}^{1 -\alpha}  \left\{\frac{dP}{d\mu}(\mathbf{x}) \right\}^{\alpha} \right].   \label{proof_proposition_loss_is_mu_storongly_convex_eq_3}
    \end{eqnarray}
    Here, we see $f'(t) \leq 2 \cdot e^{L}  \left\{\frac{dQ}{d\mu}(\mathbf{x}_0) \right\}^{1 - \alpha}   \left\{\frac{dP}{d\mu}(\mathbf{x}_0) \right\}^{\alpha}$.

    The rest of the proposition statement follows from Lemma \ref{proposition_loss_optimal_T_is_log_density_ratio}.

    This completes the proof.
\end{proof}

\begin{corollary}\label{corollary_loss_is_mu_storongly_convex}
    For $N$ fixed points $\left\{\mathbf{x}_i \right\}_{i=1}^N \subset \mathbb{R}^d$, suppose that $\frac{dQ}{d\mu}(\mathbf{x}_i) > 0$ and $\frac{dP}{d\mu}(\mathbf{x}_i) > 0$ ($1\leq \forall \leq N$).
    For a constant $L > 0$, let $I_{L}^i$ denote an interval $[-L - \log \frac{dQ}{dP}(\mathbf{x}_i), L - \log \frac{dQ}{dP}(\mathbf{x}_i)]$.
    Subsequently, let $f^{(N)}: I_{L}^1\times I_{L}^2\times \cdots\times I_{L}^N \rightarrow \mathbb{R}$ be a function as follows:
    \begin{equation}
        f^{(N)}(\mathbf{t}) = f^{(N)}(t_1, t_2, \ldots, t_N) =  \frac{1}{\alpha}  \frac{1}{N} \sum_{i=1}^N e^{\alpha \cdot t_i} \cdot \frac{dQ}{d\mu}(\mathbf{x}_i)
        +  \frac{1}{1- \alpha} \frac{1}{N} \sum_{i=1}^N   e^{(\alpha - 1) \cdot t_i} \cdot \frac{dP}{d\mu}(\mathbf{x}_i),
    \end{equation}
    and let
    \begin{equation}
       \frac{\lambda}{2} = \frac{1}{N} \cdot \min_{1 \leq i \leq N} \left\{  e^{-  L} \cdot \left\{\frac{dQ}{d\mu}(\mathbf{x}_i) \right\}^{1 - \alpha}
          \left\{\frac{dP}{d\mu}(\mathbf{x}_i) \right\}^{\alpha} \right\}.
    \end{equation}
    Then, $f^{(N)}(\mathbf{t})$ satisfies
    \begin{equation}
       \mathbf{t}^T \cdot \nabla^2 f^{(N)}(\mathbf{t}) \cdot \mathbf{t} = \sum_{1\leq i,j \leq N} t_i\cdot t_j \cdot \frac{\partial^2 f^{(N)}}{\partial t_i t_j} \geq \frac{\lambda}{2} \cdot \|\mathbf{t}\|^2,
    \end{equation}
    that is, $f^{(N)}(\mathbf{t})$ is $\lambda$-strongly convex.

    In addition, let $D_i =  \allowbreak 2 \cdot e^{L} \cdot \allowbreak \left\{\frac{dQ}{d\mu}(\mathbf{x}_i) \right\}^{1 - \alpha} \left\{\frac{dP}{d\mu}(\mathbf{x}_i) \right\}^{\alpha}$, and let
    $D = \max\left\{D_1, \allowbreak D_2,  \allowbreak \ldots, \allowbreak D_N \right\}^2$.
    Then,
    \begin{equation}
        \left\|\nabla f^{(N)}(\mathbf{t}) \right\|^2 = \left\| \left(\frac{\partial}{\partial t_1}f^{(N)}(\mathbf{t}),
                                      \frac{\partial}{\partial t_2}f^{(N)}(\mathbf{t}), \ldots,
                                      \frac{\partial}{\partial t_N}f^{(N)}(\mathbf{t}) \right) \right\|^2 \leq D^2,
    \end{equation}
    for all $\mathbf{t} \allowbreak \in \allowbreak I_{L}^1\allowbreak \times\allowbreak I_{L}^2\allowbreak \times\allowbreak \cdots\allowbreak \times\allowbreak I_{L}^N$,
    and $f^{(N)}(\mathbf{t})$ is minimized at
    $\mathbf{t}_* = \allowbreak (t_*^1, \allowbreak t_*^2, \allowbreak \ldots, \allowbreak t_*^N) = \allowbreak
    (- \log \frac{dQ}{dP}(\mathbf{x}_1), \allowbreak - \log \frac{dQ}{dP}(\mathbf{x}_2), \allowbreak \ldots, \allowbreak - \log \frac{dQ}{dP}(\mathbf{x}_N))$.
\end{corollary}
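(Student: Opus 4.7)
The plan is to exploit the fact that $f^{(N)}$ is \emph{separable} across the coordinates. Writing $f_i(t) = \frac{1}{\alpha} e^{\alpha t}\,\frac{dQ}{d\mu}(\mathbf{x}_i) + \frac{1}{1-\alpha}e^{(\alpha-1)t}\,\frac{dP}{d\mu}(\mathbf{x}_i)$, we have $f^{(N)}(\mathbf{t}) = \frac{1}{N}\sum_{i=1}^N f_i(t_i)$, so each $f_i$ is exactly the single-point function from Proposition \ref{proposition_loss_is_mu_storongly_convex}, applied at $\mathbf{x}_i$ with interval $I_L^i$. The proposition immediately gives me, for every $t_i \in I_L^i$,
\begin{equation}
f_i''(t_i) \;\ge\; e^{-L}\Bigl\{\tfrac{dQ}{d\mu}(\mathbf{x}_i)\Bigr\}^{1-\alpha}\Bigl\{\tfrac{dP}{d\mu}(\mathbf{x}_i)\Bigr\}^{\alpha},
\qquad
|f_i'(t_i)| \;\le\; D_i,
\qquad
\arg\min f_i = -\log\tfrac{dQ}{dP}(\mathbf{x}_i).
\end{equation}

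First I would compute the Hessian of $f^{(N)}$. Because of separability it is diagonal with entries $\frac{\partial^2 f^{(N)}}{\partial t_i^2} = \frac{1}{N} f_i''(t_i)$ and $\frac{\partial^2 f^{(N)}}{\partial t_i \partial t_j}=0$ for $i\ne j$. Therefore
\begin{equation}
\mathbf{t}^T \nabla^2 f^{(N)}(\mathbf{t})\, \mathbf{t} \;=\; \frac{1}{N}\sum_{i=1}^N f_i''(t_i)\, t_i^2
\;\ge\; \min_{1\le i\le N}\!\Bigl\{\tfrac{1}{N}\,e^{-L}\{\tfrac{dQ}{d\mu}(\mathbf{x}_i)\}^{1-\alpha}\{\tfrac{dP}{d\mu}(\mathbf{x}_i)\}^{\alpha}\Bigr\}\sum_{i=1}^N t_i^2 \;=\; \tfrac{\lambda}{2}\|\mathbf{t}\|^2,
\end{equation}
which is exactly the claimed $\lambda$-strong convexity with the $\lambda$ defined in the statement.

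Next I would bound the gradient. By separability $\nabla f^{(N)}(\mathbf{t}) = \frac{1}{N}(f_1'(t_1),\ldots,f_N'(t_N))$, so applying the per-coordinate estimate from Proposition \ref{proposition_loss_is_mu_storongly_convex},
\begin{equation}
\|\nabla f^{(N)}(\mathbf{t})\|^2 \;=\; \frac{1}{N^2}\sum_{i=1}^N |f_i'(t_i)|^2 \;\le\; \frac{1}{N^2}\sum_{i=1}^N D_i^2 \;\le\; \frac{1}{N}\max_i D_i^2 \;\le\; \max_i D_i^2 \;\le\; D^2,
\end{equation}
using the definition $D=\max_i\{D_i\}^2$ in the last step (bounding $\max_i D_i^2 = \max_i\{D_i\}^2 = D \le D^2$ under the tacit normalization that $D\ge 1$, which will follow from the setting in the applications).

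Finally, since $f^{(N)}$ is a positive combination of functions depending on disjoint coordinates, its minimizer on the product box $I_L^1\times\cdots\times I_L^N$ is the product of the coordinate-wise minimizers, giving $t_*^i = -\log\frac{dQ}{dP}(\mathbf{x}_i)$. The main obstacle is really just bookkeeping: keeping track of the $1/N$ factors so that the min/max over $i$ produces precisely the constants $\lambda$ and $D$ displayed in the statement; once the separability observation is in place, each claim is a direct application of the corresponding bound from Proposition \ref{proposition_loss_is_mu_storongly_convex}.
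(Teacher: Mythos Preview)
Your approach is essentially identical to the paper's: both observe that $f^{(N)}(\mathbf{t})=\frac{1}{N}\sum_i f_i(t_i)$ is separable, apply Proposition~\ref{proposition_loss_is_mu_storongly_convex} to each $f_i$, read off the diagonal Hessian to get $\mathbf{t}^T\nabla^2 f^{(N)}\mathbf{t}=\frac{1}{N}\sum_i f_i''(t_i)\,t_i^2\ge\frac{\lambda}{2}\|\mathbf{t}\|^2$, bound $\|\nabla f^{(N)}\|^2$ coordinate-wise via $|f_i'|\le D_i$, and obtain the minimizer as the product of the coordinate minimizers. The only cosmetic difference is in the final gradient-norm chain: the paper writes $\|\nabla f^{(N)}\|^2\le\|(\tfrac{1}{N}D_1,\ldots,\tfrac{1}{N}D_N)\|^2\le\|(\tfrac{1}{N}D,\ldots,\tfrac{1}{N}D)\|^2=D^2$ (with the same ambiguity about the squared $\max$ in the definition of $D$ that you flagged), whereas you pass through $\frac{1}{N}\max_i D_i^2$; both routes share the same harmless bookkeeping wrinkle and neither affects the substance of the argument.
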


\begin{proof}[proof of Corollary \ref{corollary_loss_is_mu_storongly_convex}]
    Let
    \begin{equation}
        f_i(t)  =  \frac{1}{\alpha}  e^{\alpha \cdot t} \cdot \frac{dQ}{d\mu}(\mathbf{x}_i) +  \frac{1}{1- \alpha} e^{(\alpha - 1) \cdot t} \cdot \frac{dP}{d\mu}(\mathbf{x}_i),
    \end{equation}
    and let $\frac{\lambda_i}{2} = e^{-  L} \cdot \left\{\frac{dQ}{d\mu}(\mathbf{x}_i) \right\}^{1 - \alpha} \left\{\frac{dP}{d\mu}(\mathbf{x}_i) \right\}^{\alpha}$.
    From Proposition \ref{proposition_loss_is_mu_storongly_convex}, for each $1 \le i \le N$,
    $f_i(t)$ satisfies that $f''_i(t) \geq \frac{\lambda_i}{2} \cdot t$  and
    $f'_i(t) \leq D_i$ holds for all $t \in I_{L}^i$, and $f_i(t)$ is minimized at $t_*^i= - \log \frac{dQ}{dP}(\mathbf{x}_i)$.

    Note that,
    \begin{eqnarray}
        \nabla^2 f^{(N)}(\mathbf{t})  &=&
       \begin{pmatrix}
            \frac{\partial^2}{\partial t_1 t_1} f^{(N)}(\mathbf{t}) & \frac{\partial^2}{\partial t_1 t_2} f^{(N)}(\mathbf{t}) & \cdots & \frac{\partial^2}{\partial t_1 t_N} f^{(N)}(\mathbf{t}) \\
            \frac{\partial^2}{\partial t_2 t_1} f^{(N)}(\mathbf{t}) & \frac{\partial^2}{\partial t_2 t_2} f^{(N)}(\mathbf{t}) & & \\
            \vdots &   & \ddots &  &\\
            \frac{\partial^2}{\partial t_N t_j} f^{(N)}(\mathbf{t}) & \frac{\partial^2}{\partial t_i t_j} f^{(N)}(\mathbf{t}) & \cdots & \frac{\partial^2}{\partial t_N t_N} f^{(N)}(\mathbf{t})
        \end{pmatrix} \nonumber\\
    &=&
    \begin{pmatrix}
        \frac{1}{N} \cdot f''_1(t_1)                                              \\
        & \frac{1}{N} \cdot f''_2(t_2)         &        & \text{\huge{0}}   \\
        &                 & \ddots                     \\
        & \text{\huge{0}} &        & \ddots            \\
        &                 &        &           & \frac{1}{N} \cdot f''_N(t_N)
    \end{pmatrix}.
    \end{eqnarray}
    From this, we see
    \begin{eqnarray}
        \mathbf{t}^T \cdot \nabla^2 f^{(N)}(\mathbf{t}) \cdot \mathbf{t}
        &=& \frac{1}{N} \sum_{i=1}^N f''_i(t_i) \cdot t_i^2  \nonumber\\
        &\geq& \frac{1}{N} \sum_{i=1}^N \frac{\lambda_i}{2} \cdot t_i^2 \nonumber\\
         &=&  \sum_{i=1}^N \frac{1}{N} \cdot \frac{\lambda_i}{2} \cdot t_i^2 \nonumber\\
        &\geq& \sum_{i=1}^N \frac{\lambda}{2} \cdot t_i^2 \nonumber\\
        &=&  \frac{\lambda}{2} \cdot \|\mathbf{t}\|^2.
    \end{eqnarray}

    In addition, since $f^{(N)}(t) = \frac{1}{N} \sum_{i=1}^N f_i(t_i)$, we have
    \begin{eqnarray}
        \left\| \nabla f^{(N)}(\mathbf{t}) \right\|^2 &=& \left\| \left(\frac{\partial}{\partial t_1} f^{(N)}(\mathbf{t}),
        \frac{\partial}{\partial t_2} f^{(N)}(\mathbf{t}), \ldots, \frac{\partial}{\partial t_N}f^{(N)}(\mathbf{t}) \right) \right\|^2 , \nonumber\\
        &=& \left\| \left(\frac{1}{N} \cdot f_1'(t_1), \frac{1}{N} \cdot  f_2'(t_2), \ldots, \frac{1}{N} \cdot  f_N' (t_N)\right) \right\|^2  \nonumber\\
        &\le& \left\| \left(\frac{1}{N} \cdot D_1, \frac{1}{N} \cdot D_2, \ldots, \frac{1}{N} \cdot D_N\right) \right\|^2 \nonumber\\
        &\le&  \left\| \left(\frac{1}{N} \cdot D, \frac{1}{N} \cdot D, \ldots, \frac{1}{N} \cdot D\right) \right\|^2 \nonumber\\
        &=&  D^2,
    \end{eqnarray}
    and $f^{(N)}(\mathbf{t})$ is minimized at $\mathbf{t}_* = \allowbreak (t_*^1, \allowbreak t_*^2, \allowbreak \ldots, \allowbreak t_*^N) \allowbreak = \allowbreak (- \log \frac{dQ}{dP}(\mathbf{x}_1), \allowbreak - \log \frac{dQ}{dP}(\mathbf{x}_2), \allowbreak \ldots, \allowbreak - \log \frac{dQ}{dP}(\mathbf{x}_N))$.

    This completes the proof.
\end{proof}

\begin{lemma} \label{lemma_loss_not_biased_lemma}
    For $T \in \mathcal{T}^{\alpha}$, let
    \begin{eqnarray}
        l_{\alpha}(\mathbf{X}^{i}_{\sim Q}, \mathbf{X}^{i}_{\sim P} ; T) &=&  \frac{1}{\alpha} \cdot e^{\alpha \cdot T(\mathbf{X}^{i}_{\sim Q} )} +  \frac{1}{1- \alpha} \cdot e^{(\alpha - 1) \cdot T(\mathbf{X}^{i}_{\sim P})}, \\
        L_{\alpha}(Q,P;T) &=&  \frac{1}{\alpha} \cdot E_{Q} \left[ e^{\alpha \cdot T(\mathbf{X})} \right] +  \frac{1}{1- \alpha} \cdot E_{P} \left[ e^{(\alpha - 1) \cdot T(\mathbf{X})} \right] ,   \label{Eq_lemma_loss_not_biased_lemma_loss_T}
    \end{eqnarray}
    and let
    \begin{eqnarray}
       l_{\alpha}(\mathbf{X}^{i}_{\sim Q}, \mathbf{X}^{i}_{\sim P}) &=& \inf_{T:\mathbb{R}^d \rightarrow \mathbb{R} } 	l_{\alpha}(\mathbf{X}^{i}_{\sim Q}, \mathbf{X}^{i}_{\sim P} ; T),
       \label{eq_lemma_loss_not_biased_lemma_inf_l} \\
       L_{\alpha}(Q,P) &=&  \inf_{T:\mathbb{R}^d \rightarrow \mathbb{R} } L_{\alpha}(Q,P;T), \label{eq_lemma_loss_not_biased_lemma_inf_L}
    \end{eqnarray}
    where the infimums of (\ref{eq_lemma_loss_not_biased_lemma_inf_l}) and (\ref{eq_lemma_loss_not_biased_lemma_inf_L}) are considered over all measurable functions with $T:\mathbb{R}^d \rightarrow \mathbb{R}$ with  $E_P[e^{(\alpha - 1) \cdot T}] < \infty$ and $E_Q[e^{\alpha \cdot T}] < \infty$.

    In addition, let
    \begin{eqnarray}
       \hat{L}_{\alpha}^{(N)}(Q, P;T)  &=&  \frac{1}{\alpha}  \frac{1}{N} \sum_{i=1}^N e^{\alpha \cdot T(\mathbf{X}^{i}_{\sim Q} )} +  \frac{1}{1- \alpha} \frac{1}{N} \sum_{i=1}^N   e^{(\alpha - 1) \cdot T(\mathbf{X}^{i}_{\sim P})},  \label{Eq_lemma_loss_not_biased_empirical_lemma_loss_T}  \\
       \hat{L}_{\alpha}^{(N)}(Q, P)  &=&   \inf_{T:\mathbb{R}^d \rightarrow \mathbb{R} }\hat{L}_{\alpha}^{(N)}(Q, P;T).
    \end{eqnarray}

    Then, it holds that
    \begin{equation}
        E_{\mu} \left[ \hat{L}_{\alpha}^{(N)}(Q,P;T) \right]  = E_{\mu} \left[l_{\alpha}(\mathbf{X}^{i}_{\sim Q}, \mathbf{X}^{i}_{\sim P} ; T)\right] = L_{\alpha}(Q,P;T), \label{Lemma_Eq_state_l_t}
    \end{equation}
    \begin{equation}
        E_{\mu} \left[ \hat{L}_{\alpha}^{(N)}(Q,P) \right] = E_{\mu} \left[ l_{\alpha}(\mathbf{X}^{i}_{\sim Q}, \mathbf{X}^{i}_{\sim P}) \right] = L_{\alpha}(Q, P). \label{Lemma_Eq_state_inf_t_l_t}
    \end{equation}
\end{lemma}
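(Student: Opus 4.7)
The plan is to break the proof into two parts, corresponding to the two equalities (\ref{Lemma_Eq_state_l_t}) and (\ref{Lemma_Eq_state_inf_t_l_t}). Part one handles the identities for a fixed $T$ and is essentially a linearity-of-expectation computation. Part two handles the infimum case and exploits the fact that a single deterministic minimizer, $T^{*} = -\log (dQ/dP)$, works for all three optimization problems simultaneously.

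For part one, fix $T \in \mathcal{T}^{\alpha}$. By the defining property of $\mathbf{X}^{i}_{\sim Q}$ and $\mathbf{X}^{i}_{\sim P}$ (namely that $\mathbf{X}^{i}_{\sim Q}$ is distributed as $Q$ and $\mathbf{X}^{i}_{\sim P}$ as $P$ with respect to the underlying reference measure $\mu$), I would first show $E_{\mu}[e^{\alpha T(\mathbf{X}^{i}_{\sim Q})}] = E_{Q}[e^{\alpha T}]$ and $E_{\mu}[e^{(\alpha-1) T(\mathbf{X}^{i}_{\sim P})}] = E_{P}[e^{(\alpha-1) T}]$. Linearity of expectation then directly yields $E_{\mu}[l_{\alpha}(\mathbf{X}^{i}_{\sim Q}, \mathbf{X}^{i}_{\sim P}; T)] = L_{\alpha}(Q, P; T)$. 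Applying the same identity termwise to the empirical average and using $\frac{1}{N}\sum_{i=1}^{N} L_{\alpha}(Q,P;T) = L_{\alpha}(Q,P;T)$ gives $E_{\mu}[\hat{L}_{\alpha}^{(N)}(Q,P;T)] = L_{\alpha}(Q,P;T)$, completing (\ref{Lemma_Eq_state_l_t}).

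For part two, the key observation is that Proposition \ref{proposition_loss_optimal_T_is_log_density_ratio} identifies $T^{*} = -\log(dQ/dP)$ as the $\mu$-a.e.\ pointwise minimizer of the integrand, and Proposition \ref{proposition_alpha_div_resp_in_gibbs_dinsity_form} identifies the same $T^{*}$ as the minimizer of the population functional $L_{\alpha}(Q,P;T)$. Because $T^{*}$ is a single deterministic function that minimizes the loss at every sample point, it simultaneously minimizes each summand in $\hat{L}_{\alpha}^{(N)}(Q,P;T)$ and therefore minimizes the sum itself; similarly it achieves the infimum of the one-sample functional $l_{\alpha}(\mathbf{X}^{i}_{\sim Q}, \mathbf{X}^{i}_{\sim P}; T)$ for $\mu$-almost every realization. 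Consequently
\begin{equation}
l_{\alpha}(\mathbf{X}^{i}_{\sim Q}, \mathbf{X}^{i}_{\sim P}) = l_{\alpha}(\mathbf{X}^{i}_{\sim Q}, \mathbf{X}^{i}_{\sim P}; T^{*}), \quad \hat{L}_{\alpha}^{(N)}(Q,P) = \hat{L}_{\alpha}^{(N)}(Q,P; T^{*}), \quad L_{\alpha}(Q,P) = L_{\alpha}(Q,P; T^{*}),
\end{equation}
holding $\mu$-a.e. Taking $\mu$-expectations in the first two identities and invoking Part one with $T = T^{*}$ gives $E_{\mu}[l_{\alpha}(\mathbf{X}^{i}_{\sim Q}, \mathbf{X}^{i}_{\sim P})] = L_{\alpha}(Q,P;T^{*}) = L_{\alpha}(Q,P)$ and $E_{\mu}[\hat{L}_{\alpha}^{(N)}(Q,P)] = L_{\alpha}(Q,P)$, establishing (\ref{Lemma_Eq_state_inf_t_l_t}).

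The main obstacle I anticipate is the step asserting that the \emph{same} deterministic $T^{*}$ achieves the infimum of the empirical loss $\hat{L}_{\alpha}^{(N)}$. The subtlety is that the infimum of a sum over $T$ is generally not the sum of the infima, but here it is because the per-sample pointwise optimizer supplied by Proposition \ref{proposition_loss_optimal_T_is_log_density_ratio} depends only on the ratio $dQ/dP$ and not on the sample index; thus a single function $T^{*}$ jointly achieves every summand's minimum, which would need to be stated carefully to avoid measurability pitfalls (one may restrict to $\mathbf{x}$ in the intersection of the supports of $P$ and $Q$, where the convention $\frac{dP}{dQ} = 0$ off the support makes the Jensen equality case well-defined).
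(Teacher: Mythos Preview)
Your proposal is correct. For (\ref{Lemma_Eq_state_l_t}) and for the $\hat{L}_{\alpha}^{(N)}$ piece of (\ref{Lemma_Eq_state_inf_t_l_t}) your argument coincides with the paper's: linearity of expectation for the fixed-$T$ identities, and the squeeze
\[
\frac{1}{N}\sum_{i}\inf_{T_i} l_{\alpha}(\,\cdot\,;T_i)\;\le\;\inf_{T}\hat{L}_{\alpha}^{(N)}(\,\cdot\,;T)\;\le\;\hat{L}_{\alpha}^{(N)}(\,\cdot\,;T^{*})
\]
together with the observation (via Proposition~\ref{proposition_loss_optimal_T_is_log_density_ratio}) that each per-sample infimum is attained at the common $T^{*}=-\log(dQ/dP)$.

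Where you differ is in the identity $E_{\mu}[l_{\alpha}(\mathbf{X}^{i}_{\sim Q},\mathbf{X}^{i}_{\sim P})]=L_{\alpha}(Q,P)$. You argue directly: since the pointwise infimum is attained at the deterministic $T^{*}$, one has $l_{\alpha}=l_{\alpha}(\,\cdot\,;T^{*})$ $\mu$-a.e., and Part one with $T=T^{*}$ finishes. The paper instead writes $E_{\mu}[l_{\alpha}]$ as $\int\sup_{T}\{\cdots\}\,d\mu$, introduces an approximating sequence $T_k=-\log(dQ/dP)+1/k$, verifies uniform integrability via an explicit dominating function, and invokes a Lebesgue-type convergence theorem to exchange $\lim_k$ and $\int$. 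Your route is shorter and avoids this analytic machinery; the paper's approach would be the natural one if the infimum were merely approached and not attained, but since Proposition~\ref{proposition_loss_optimal_T_is_log_density_ratio} gives attainment at an admissible $T^{*}$ (the integrability conditions reduce to $E_{P}[(dQ/dP)^{1-\alpha}]<\infty$), the extra work is not needed. Your caveat about restricting to the common support is the right caution, and is handled by the convention $\frac{dP}{dQ}=0$ where $\frac{dQ}{d\mu}=0$ already adopted in the paper's notation.
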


\begin{proof}[proof of Lemma \ref{lemma_loss_not_biased_lemma}]
    We first show the last equality in (\ref{Lemma_Eq_state_inf_t_l_t}) holds.
    Now, we consider the following integral:
    \begin{eqnarray}
        E\left[  \frac{1}{\alpha(1-\alpha)} -  l_{\alpha}(\mathbf{X}^{i}_{\sim Q}, \mathbf{X}^{i}_{\sim P}) \right]
        &=& \int \left\{ \frac{1}{\alpha(1-\alpha)} -  l_{\alpha}(\mathbf{X}^{i}_{\sim Q}, \mathbf{X}^{i}_{\sim P}) \right\} d\mu \nonumber\\
        &=&  \int
        \left\{ \frac{1}{\alpha(1-\alpha)} - \inf_{T:\mathbb{R}^d \rightarrow \mathbb{R} }  l_{\alpha}(\mathbf{X}^{i}_{\sim Q}, \mathbf{X}^{i}_{\sim P} ; T) \right\} d\mu \nonumber\\
        &=& \int \sup_{T:\mathbb{R}^d \rightarrow \mathbb{R} }
        \left\{ \frac{1}{\alpha(1-\alpha)} -  l_{\alpha}(\mathbf{X}^{i}_{\sim Q}, \mathbf{X}^{i}_{\sim P} ; T) \right\} d\mu. \nonumber\\
        \label{proof_lemmma_loss_not_biased_lemma_to_get_upper_bound}
    \end{eqnarray}

    Let $T^*$ be the opimal function for (\ref{Lemma_Eq_loss_func_alpha_div}) in Lemma \ref{proposition_alpha_div_resp_in_gibbs_dinsity_form}.
    Let $T_k = - \log dQ/dP + 1/k $, then from Proposition \ref{proposition_loss_optimal_T_is_log_density_ratio}, we have
    \begin{equation}
        \lim_{k \rightarrow \infty} l_{\alpha}(\mathbf{X}^{i}_{\sim Q}, \mathbf{X}^{i}_{\sim P} ; T_k)
        = \inf_{T:\mathbb{R}^d \rightarrow \mathbb{R}} l_{\alpha}(\mathbf{X}^{i}_{\sim Q}, \mathbf{X}^{i}_{\sim P} ; T).
    \end{equation}
    From this, we obtain
    \begin{eqnarray}
        \lim_{k \rightarrow \infty}  E \left[  \frac{1}{\alpha(1-\alpha)} -  l_{\alpha}(\mathbf{X}^{i}_{\sim Q}, \mathbf{X}^{i}_{\sim P} ; T_k) \right]
        &=& \frac{1}{\alpha(1-\alpha)} - \lim_{k \rightarrow \infty}    E \left[ l_{\alpha}(\mathbf{X}^{i}_{\sim Q}, \mathbf{X}^{i}_{\sim P} ; T_k) \right] \nonumber\\
        &=& \frac{1}{\alpha(1-\alpha)} - \inf_{T:\mathbb{R}^d \rightarrow \mathbb{R}} l_{\alpha}(\mathbf{X}^{i}_{\sim Q}, \mathbf{X}^{i}_{\sim P} ; T) \nonumber\\
        &=&  \sup_{T:\mathbb{R}^d \rightarrow \mathbb{R} }\Big\{ E \left[ \frac{1}{\alpha(1-\alpha)}  -   l_{\alpha}(\mathbf{X}^{i}_{\sim Q}, \mathbf{X}^{i}_{\sim P} ; T) \ \right]  \Big\}. \nonumber\\
         \label{Eq_Lemma_loss_not_biased_lemma_N_left_1}
    \end{eqnarray}
    Now, from Lemma \ref{proposition_alpha_div_resp_in_gibbs_dinsity_form}, we see
    \begin{eqnarray}
        \left| \frac{1}{\alpha(1-\alpha)} - l_{\alpha}(\mathbf{X}^{i}_{\sim Q}, \mathbf{X}^{i}_{\sim P}) \right|
        &=& \left| \sup_{T:\mathbb{R}^d \rightarrow \mathbb{R} }
        \left\{ \frac{1}{\alpha(1-\alpha)} -  l_{\alpha}(\mathbf{X}^{i}_{\sim Q}, \mathbf{X}^{i}_{\sim P} ; T) \right\}
        \right|  \label{Lemma_loss_not_biased_lemma_to_exchenge} \nonumber \\
        &=& \frac{1}{\alpha(1-\alpha)} - \frac{1}{\alpha}  e^{\alpha \cdot T^*(\mathbf{X}^{i}_{\sim Q} )}
        -  \frac{1}{1- \alpha} e^{(\alpha - 1) \cdot T^*(\mathbf{X}^{i}_{\sim P})}   \nonumber \\
        &=& \frac{1}{\alpha(1-\alpha)} - \frac{1}{\alpha}  \left(\frac{dQ}{dP}(\mathbf{X}^{i}_{\sim Q} ) \right)^{\alpha}
        -\frac{1}{1- \alpha} \left(\frac{dQ}{dP} (\mathbf{X}^{i}_{\sim P}) \right)^{\alpha - 1}. \nonumber \\
        &&  \label{Lemma_loss_not_biased_lemma_integral_is_finte}
    \end{eqnarray}
    Let $\phi(\mathbf{X})$ denote the term on the right hand side of (\ref{Lemma_loss_not_biased_lemma_integral_is_finte}).
    Then, we observe that
    \begin{equation}
        \left|\frac{1}{\alpha(1-\alpha)} - l_{\alpha}(\mathbf{X}^{i}_{\sim Q}, \mathbf{X}^{i}_{\sim P}; T) \right| \leq \phi(\mathbf{X})\quad \text{and}  \quad E[\phi(\mathbf{X})] < \infty.
        \nonumber
    \end{equation}
    That is, we see that the following sequence is uniformaly integrable for $\mu$:
    \begin{equation}
        \left\{ \frac{1}{\alpha(1-\alpha)} -  l_{\alpha}(\mathbf{X}^{i}_{\sim Q}, \mathbf{X}^{i}_{\sim P} ; T_k) \right\}_{k=1}^N.
        \nonumber
    \end{equation}
    Thus, from the property of the Lebesgue integral (\citealp{shiryaev1995probability}, P188, Theorem 4), we obtain
    \begin{equation}
        E \left[   \lim_{k \rightarrow \infty} \left\{ \frac{1}{\alpha(1-\alpha)} -  l_{\alpha}(\mathbf{X}^{i}_{\sim Q}, \mathbf{X}^{i}_{\sim P} ; T_k) \right\} \right]
        =  \lim_{k \rightarrow \infty} E \left[  \frac{1}{\alpha(1-\alpha)} -  l_{\alpha}(\mathbf{X}^{i}_{\sim Q}, \mathbf{X}^{i}_{\sim P} ; T_k) \right]. \label{exchange_integral_sup}
    \end{equation}

    Finaly, from (\ref{Eq_Lemma_loss_not_biased_lemma_N_left_1}) and (\ref{exchange_integral_sup}), we have
    \begin{eqnarray}
        \frac{1}{\alpha(1-\alpha)} - E\left[ l_{\alpha}(\mathbf{X}^{i}_{\sim Q}, \mathbf{X}^{i}_{\sim P}) \right]
        &=&  E\left[  \frac{1}{\alpha(1-\alpha)} -  l_{\alpha}(\mathbf{X}^{i}_{\sim Q}, \mathbf{X}^{i}_{\sim P}) \right] \nonumber  \\
        &=&  E \left[ \sup_{T:\mathbb{R}^d \rightarrow \mathbb{R} }
        \Big\{ \frac{1}{\alpha(1-\alpha)}  -  l_{\alpha}(\mathbf{X}^{i}_{\sim Q}, \mathbf{X}^{i}_{\sim P} ; T)
        \Big\}  \ \right] \nonumber  \\
        &=& E \left[   \lim_{k \rightarrow \infty} \left\{ \frac{1}{\alpha(1-\alpha)} -  l_{\alpha}(\mathbf{X}^{i}_{\sim Q}, \mathbf{X}^{i}_{\sim P} ; T_k) \right\} \right] \nonumber  \\
        &=& \lim_{k \rightarrow \infty} E \left[  \frac{1}{\alpha(1-\alpha)} -  l_{\alpha}(\mathbf{X}^{i}_{\sim Q}, \mathbf{X}^{i}_{\sim P} ; T_k) \right]  \qquad \therefore (\ref{exchange_integral_sup})
        \nonumber \\
        &=& \sup_{T:\mathbb{R}^d \rightarrow \mathbb{R} }\Big\{ E \left[ \frac{1}{\alpha(1-\alpha)}  -   l_{\alpha}(\mathbf{X}^{i}_{\sim Q}, \mathbf{X}^{i}_{\sim P} ; T) \ \right]  \Big\}
         \qquad \therefore (\ref{Eq_Lemma_loss_not_biased_lemma_N_left_1}) \nonumber \\
        &=& \frac{1}{\alpha(1-\alpha)}  -   \inf_{T:\mathbb{R}^d \rightarrow \mathbb{R} } 	l_{\alpha}(\mathbf{X}^{i}_{\sim Q}, \mathbf{X}^{i}_{\sim P} ; T)
        \qquad \nonumber \\
        &=& \frac{1}{\alpha(1-\alpha)}  -  L_{\alpha}(Q,P).
    \end{eqnarray}
    Here, we see
    \begin{equation}
        E\left[ l_{\alpha}(\mathbf{X}^{i}_{\sim Q}, \mathbf{X}^{i}_{\sim P}) \right] = L_{\alpha}(Q, P). \label{Lemma_loss_not_biased_lemma_N_for_final_state_left_1}
    \end{equation}

    Next, we show the first equality in (\ref{Lemma_Eq_state_inf_t_l_t}) holds.
    Note that, it holds that
    \begin{equation}
        \frac{1}{N}  \sum_{i=1}^N 	\inf_{T_i:\mathbb{R}^d \rightarrow \mathbb{R} }  l_{\alpha}(\mathbf{X}^{i}_{\sim Q}, \mathbf{X}^{i}_{\sim P} ; T_i)
        \le \inf_{T:\mathbb{R}^d \rightarrow \mathbb{R} }  \frac{1}{N}  \sum_{i=1}^N  l_{\alpha}(\mathbf{X}^{i}_{\sim Q}, \mathbf{X}^{i}_{\sim P} ; T)
        \le  \frac{1}{N}  \sum_{i=1}^N  l_{\alpha}(\mathbf{X}^{i}_{\sim Q}, \mathbf{X}^{i}_{\sim P} ; T^*).
    \end{equation}
    Since $	\inf_{T_i:\mathbb{R}^d \rightarrow \mathbb{R} }  l_{\alpha}(\mathbf{X}^{i}_{\sim Q}, \mathbf{X}^{i}_{\sim P} ; T_i) = l_{\alpha}(\mathbf{X}^{i}_{\sim Q}, \mathbf{X}^{i}_{\sim P} ; T_*)$ from Proposition \ref{proposition_loss_optimal_T_is_log_density_ratio}, we have
    \begin{equation}
        \frac{1}{N}  \sum_{i=1}^N  l_{\alpha}(\mathbf{X}^{i}_{\sim Q}, \mathbf{X}^{i}_{\sim P} ; T^*)
        \le \inf_{T:\mathbb{R}^d \rightarrow \mathbb{R} }  \frac{1}{N}  \sum_{i=1}^N  l_{\alpha}(\mathbf{X}^{i}_{\sim Q}, \mathbf{X}^{i}_{\sim P} ; T)
        \le  \frac{1}{N}  \sum_{i=1}^N   l_{\alpha}(\mathbf{X}^{i}_{\sim Q}, \mathbf{X}^{i}_{\sim P} ; T^*).
    \end{equation}
    Therefore,
    \begin{equation}
        \inf_{T:\mathbb{R}^d \rightarrow \mathbb{R} }  \frac{1}{N}  \sum_{i=1}^N  l_{\alpha}(\mathbf{X}^{i}_{\sim Q}, \mathbf{X}^{i}_{\sim P} ; T)
        =  \frac{1}{N}  \sum_{i=1}^N   l_{\alpha}(\mathbf{X}^{i}_{\sim Q}, \mathbf{X}^{i}_{\sim P} ; T^*).
    \end{equation}
    From this, we see
    \begin{eqnarray}
        \hat{L}_{\alpha}^{(N)}(Q, P)
        &=& \inf_{T:\mathbb{R}^d \rightarrow \mathbb{R} }  \frac{1}{N}  \sum_{i=1}^N  l_{\alpha}(\mathbf{X}^{i}_{\sim Q}, \mathbf{X}^{i}_{\sim P} ; T) \nonumber \\
        &=&  \frac{1}{N}  \sum_{i=1}^N  l_{\alpha}(\mathbf{X}^{i}_{\sim Q}, \mathbf{X}^{i}_{\sim P}; T^*) \nonumber\\
        &=&  \frac{1}{N}  \sum_{i=1}^N  l_{\alpha}(\mathbf{X}^{i}_{\sim Q}, \mathbf{X}^{i}_{\sim P})     \nonumber\\
        &=&  l_{\alpha}(\mathbf{X}^{i}_{\sim Q}, \mathbf{X}^{i}_{\sim P}).
    \end{eqnarray}
    Subsequently, by integrating  both sides of the above equation, we have
    \begin{equation}
        E \Big[ \hat{L}_{\alpha}^{(N)}(Q, P) \Big]   = E \Big[ l_{\alpha}(\mathbf{X}^{i}_{\sim Q}, \mathbf{X}^{i}_{\sim P}) \Big]. \label{Lemma_loss_not_biased_lemma_N_for_final_state_left_2}
    \end{equation}
    Here, we have (\ref{Lemma_Eq_state_inf_t_l_t}) from (\ref{Lemma_loss_not_biased_lemma_N_for_final_state_left_1}) and (\ref{Lemma_loss_not_biased_lemma_N_for_final_state_left_2}).

    To see (\ref{Lemma_Eq_state_l_t}), note that, it holds that
    \begin{equation}
        \hat{L}_{\alpha}^{(N)}(Q, P;T) 
        = \frac{1}{N}  \sum_{i=1}^N  l_{\alpha}(\mathbf{X}^{i}_{\sim Q}, \mathbf{X}^{i}_{\sim P} ; T).
    \end{equation}

    By integrating both sides of the above equation, we have
    \begin{eqnarray}
        E \Big[ \hat{L}_{\alpha}^{(N)}(Q, P;T) \Big]   &=& E \Big[\frac{1}{N}  \sum_{i=1}^N  l_{\alpha}(\mathbf{X}^{i}_{\sim Q}, \mathbf{X}^{i}_{\sim P} ; T) \Big] \nonumber\\
        &=& \frac{1}{N}  \sum_{i=1}^N   E \Big[l_{\alpha}(\mathbf{X}^{i}_{\sim Q}, \mathbf{X}^{i}_{\sim P} ; T) \Big] \nonumber\\
        &=&  E \Big[l_{\alpha}(\mathbf{X}^{i}_{\sim Q}, \mathbf{X}^{i}_{\sim P} ; T) \Big] \nonumber\\
        &=& L_{\alpha}(Q, P ; T).
    \end{eqnarray}
    Here, we see that (\ref{Lemma_Eq_state_l_t}) holds.

    This completes the proof.
\end{proof}

\begin{proposition} \label{prop_gaddient_unbaised}
    Let $T_{\theta}(\mathbf{x}):\mathbb{R}^d \rightarrow \mathbb{R}$ be a function such that
    the map $\theta = (\theta_1,\theta_2, \ldots, \theta_p) \in \Theta \mapsto T_{\theta}(\mathbf{x})$ is differentiable for all $\theta$ and
    $\mu$-almost every $\mathbf{x} \in \mathbb{R}^d$.
    Assume that, for a point $\bar{\theta} \in \Theta$, it holds that
    $E_{P}[e^{(\alpha -1) \cdot T_{\bar{\theta}}(\mathbf{X})}] < \infty$ and $E_{Q}[e^{\alpha \cdot T_{\bar{\theta}}(\mathbf{X})}] < \infty$,
    and there exist a compact neighborhood of the $\bar{\theta}$,  which is denoted by $B_{\bar{\theta}}$,
    and a constant value $L$, such that $|T_{\psi}(\mathbf{x}) - T_{\bar{\theta}}(\mathbf{x})| < L \|\psi - \bar{\theta}\|$ holds.
    Then, for $l_{\alpha}(\mathbf{X}^{i}_{\sim Q}$, $\mathbf{X}^{i}_{\sim P} ; T)$ and  $\hat{L}_{\alpha}^{(N)}(Q, P;T), L_{\alpha}(Q,P;T)$ in Proposition \ref{corollary_desitination_alpha_div_est}, it holds that
    \begin{equation}
        E\left[ \nabla_{\theta} \,L_{\alpha}(\hat{Q}^{(N)},\hat{P}^{(N)};T_{\theta}) |_{\theta = \bar{\theta}} \right]
        = E\Big[ \nabla_{\theta} \, l_{\alpha}(\mathbf{X}^{i}_{\sim Q}, \mathbf{X}^{i}_{\sim P} ; T_{\theta}) |_{\theta = \bar{\theta}}\Big]
        =  \nabla_{\theta} \, L_{\alpha}(Q,P;T_{\theta})|_{\theta = \bar{\theta}}. \label{Prop_Eq_state}
    \end{equation}
    Here, $E[\ \cdot \ ]$ denotes $E_{P}[E_{Q}[\ \cdot \ ]]$.
\end{proposition}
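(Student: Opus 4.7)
The plan is to establish each of the three equalities in (\ref{Prop_Eq_state}) by justifying the interchange of $\nabla_{\theta}$ with the relevant expectation. The second equality, $E[\nabla_{\theta}\, l_{\alpha}(\mathbf{X}^i_{\sim Q},\mathbf{X}^i_{\sim P}; T_{\theta})|_{\bar\theta}] = \nabla_{\theta}\, L_{\alpha}(Q,P;T_{\theta})|_{\bar\theta}$, is the heart of the matter: it is exactly the statement that one may differentiate under the integral sign for $L_{\alpha}(Q,P;T_{\theta}) = \frac{1}{\alpha}E_{Q}[e^{\alpha T_{\theta}}] + \frac{1}{1-\alpha}E_{P}[e^{(\alpha-1) T_{\theta}}]$ at the point $\bar\theta$. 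Once this is granted, Lemma~\ref{lemma_loss_not_biased_lemma} (equation (\ref{Lemma_Eq_state_l_t})) applied pointwise to $T = T_{\bar\theta + h e_k}$ and $T = T_{\bar\theta}$ and then to the difference quotient along each coordinate direction yields the first equality automatically.

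The main work is therefore to construct an integrable dominating function for the difference quotients so that the dominated convergence theorem applies. Fix $\bar\theta$ and pick a radius $r>0$ so that the closed ball $B_{\bar\theta}$ of radius $r$ around $\bar\theta$ lies in the set on which the Lipschitz bound $|T_{\psi}(\mathbf{x}) - T_{\bar\theta}(\mathbf{x})| \le L\|\psi - \bar\theta\|$ holds uniformly in $\mathbf{x}$. For any $\psi \in B_{\bar\theta}$ this gives
\begin{equation}
e^{\alpha T_{\psi}(\mathbf{x})} \le e^{\alpha L r}\, e^{\alpha T_{\bar\theta}(\mathbf{x})}, \qquad e^{(\alpha-1) T_{\psi}(\mathbf{x})} \le e^{|\alpha-1| L r}\, e^{(\alpha-1) T_{\bar\theta}(\mathbf{x})},
\end{equation}
so by the mean value theorem applied to $t \mapsto e^{\alpha t}$ along the segment from $T_{\bar\theta}(\mathbf{x})$ to $T_{\psi}(\mathbf{x})$,
\begin{equation}
\left| \frac{e^{\alpha T_{\psi}(\mathbf{x})} - e^{\alpha T_{\bar\theta}(\mathbf{x})}}{\|\psi-\bar\theta\|} \right| \le |\alpha|\, L\, e^{|\alpha| L r}\, e^{\alpha T_{\bar\theta}(\mathbf{x})},
\end{equation}
and analogously with $\alpha$ replaced by $\alpha-1$. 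Since $E_{Q}[e^{\alpha T_{\bar\theta}}]<\infty$ and $E_{P}[e^{(\alpha-1)T_{\bar\theta}}]<\infty$ by hypothesis, these are integrable majorants.

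With the majorants in hand I would then invoke dominated convergence along any sequence $\psi_n \to \bar\theta$ in $B_{\bar\theta}$ to conclude
\begin{equation}
\nabla_{\theta} E_{Q}\!\left[e^{\alpha T_{\theta}}\right]\!\bigg|_{\bar\theta} = E_{Q}\!\left[\nabla_{\theta} e^{\alpha T_{\theta}}\big|_{\bar\theta}\right], \qquad \nabla_{\theta} E_{P}\!\left[e^{(\alpha-1) T_{\theta}}\right]\!\bigg|_{\bar\theta} = E_{P}\!\left[\nabla_{\theta} e^{(\alpha-1) T_{\theta}}\big|_{\bar\theta}\right],
\end{equation}
whence $\nabla_{\theta} L_{\alpha}(Q,P;T_{\theta})|_{\bar\theta} = E[\nabla_{\theta}\, l_{\alpha}(\mathbf{X}^i_{\sim Q},\mathbf{X}^i_{\sim P}; T_{\theta})|_{\bar\theta}]$, since $\nabla_{\theta} l_{\alpha}$ decomposes into the corresponding integrands under $Q$ and $P$ (recall $E[\cdot] = E_P E_Q[\cdot]$ and each integrand depends on only one of the two samples). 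Finally, averaging the preceding identity over the $N$ i.i.d.\ pairs gives $\nabla_{\theta} L_{\alpha}(Q,P;T_{\theta})|_{\bar\theta} = E[\nabla_{\theta} \hat L_{\alpha}^{(N)}(Q,P;T_{\theta})|_{\bar\theta}]$, and since the same dominating bounds let us also push $\nabla_{\theta}$ inside the empirical sum that defines $L_{\alpha}(\hat Q^{(N)},\hat P^{(N)};T_{\theta})$, we recover the first equality in (\ref{Prop_Eq_state}). The chief obstacle is exactly the uniform integrable domination on a neighborhood of $\bar\theta$; the Lipschitz assumption on $T_{\psi}$ is precisely what is needed to reduce this to the already-assumed finiteness of $E_{Q}[e^{\alpha T_{\bar\theta}}]$ and $E_{P}[e^{(\alpha-1) T_{\bar\theta}}]$.
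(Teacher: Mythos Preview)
Your proposal is correct and follows essentially the same route as the paper: both use the Lipschitz hypothesis together with the mean value theorem to produce an integrable majorant of the difference quotients (a constant multiple of $e^{\alpha T_{\bar\theta}}$ under $Q$ and of $e^{(\alpha-1)T_{\bar\theta}}$ under $P$), then interchange limit and integral to pass $\nabla_\theta$ inside, and finally obtain the empirical version by linearity over the $N$ i.i.d.\ summands. The only cosmetic difference is that the paper phrases the interchange via uniform integrability (citing Shiryaev) rather than dominated convergence, but the dominating function is the same and the two justifications are equivalent here.
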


\begin{proof}[proof of Proposition \ref{prop_gaddient_unbaised}]
    We now consider the values, as $\psi \rightarrow \bar{\theta}$, of the following two integrals:
    \begin{equation}
        \int  \frac{1}{\|\psi - \bar{\theta}\|} \left\{ \frac{1}{\alpha}  e^{\alpha \cdot T_{\psi}} - \frac{1}{\alpha} e^{\alpha \cdot T_{\bar{\theta}}}  \right\}  dQ,
        \label{proof_propos_grad_non_bias_q}
    \end{equation}
    and
    \begin{equation}
        \int  \frac{1}{\|\psi - \bar{\theta}\|} \left\{ \frac{1}{1-\alpha}  e^{(\alpha-1) \cdot T_{\psi}} - \frac{1}{1-\alpha} e^{(\alpha-1) \cdot T_{\bar{\theta}}}  \right\} dP.
        \label{proof_propos_grad_non_bias_p}
    \end{equation}

    Note that, it follows from the intermediate value theorem that
    \begin{equation}
        \left|  \frac{1}{\alpha}  e^{\alpha \cdot x} - \frac{1}{\alpha} e^{\alpha \cdot y} \right|
        = |x-y| \cdot  e^{\alpha \cdot \{ y + \gamma \cdot (x-y)\} }  \quad (\, \exists \gamma \in [0, 1] \,). \label{proof_propos_grad_non_bias_eq_intermediate_theorem}
    \end{equation}

    By using the above equation as $x=T_{\psi}(\mathbf{x})$ and $y=T_{\bar{\theta}}(\mathbf{x})$ for the integrand of (\ref{proof_propos_grad_non_bias_q}), we see
    \begin{eqnarray}
        \lefteqn{\left|  \frac{1}{\|\psi - \bar{\theta}\|} \left\{ \frac{1}{\alpha}  e^{\alpha \cdot T_{\psi}(\mathbf{x})}
            - \frac{1}{\alpha} e^{\alpha \cdot T_{\bar{\theta}}(\mathbf{x})}  \right\} \right|} \nonumber\\
        &=&   \frac{1}{\|\psi - \bar{\theta}\|}  |T_{\psi}(\mathbf{x}) - T_{\bar{\theta}} (\mathbf{x})|
        \cdot  e^{\alpha \cdot \{ T_{\bar{\theta}}(\mathbf{x}) +  \gamma_{\mathbf{x}}  \cdot (T_{\psi}(\mathbf{x}) - T_{\bar{\theta}}(\mathbf{x}))  \}}
        \quad \qquad (\, \gamma_{\mathbf{x}} \in [0, 1] \,)  \nonumber\\
        &=&   \frac{1}{\|\psi - \bar{\theta}\|}  |T_{\psi}(\mathbf{x}) - T_{\bar{\theta}}(\mathbf{x}) |
        \cdot e^{\alpha  \cdot  \gamma_{\mathbf{x}}  \cdot(T_{\psi}(\mathbf{x}) - T_{\bar{\theta}}(\mathbf{x}))}
        \cdot  e^{\alpha \cdot T_{\bar{\theta}}(\mathbf{x})}   \nonumber\\
        &\leq&   \frac{1}{\|\psi - \bar{\theta}\|}  |T_{\psi}(\mathbf{x}) - T_{\bar{\theta}}(\mathbf{x}) |
        \cdot e^{\alpha    \gamma_{\mathbf{x}}  |T_{\psi}(\mathbf{x}) - T_{\bar{\theta}}(\mathbf{x})|}
        \cdot  e^{\alpha \cdot T_{\bar{\theta}}(\mathbf{x})}   \nonumber\\
        &\leq&   L  \cdot e^{\alpha  L \cdot \| \psi - \bar{\theta} \|} \cdot  e^{\alpha \cdot T_{\bar{\theta}}(\mathbf{x})},  \label{proof_propos_grad_non_bias_to_get_upper_bound_q_1}
    \end{eqnarray}
    for all $\psi \in B_{\bar{\theta}}$.

    Thus, integrating the term on the left hand side of (\ref{proof_propos_grad_non_bias_to_get_upper_bound_q_1}) by $Q$, we see
    \begin{eqnarray}
        \lefteqn{\int \left|  \frac{1}{\|\psi - \bar{\theta}\|}
            \left\{ \frac{1}{\alpha}  e^{\alpha \cdot T_{\psi}(\mathbf{X})} - \frac{1}{\alpha}
            e^{\alpha \cdot T_{\bar{\theta}}(\mathbf{X})}  \right\} \right| dQ} \nonumber\\
        &\leq&  \int   L  \cdot e^{\alpha  L \cdot \| \psi - \bar{\theta} \|} \cdot  e^{\alpha \cdot T_{\bar{\theta}}(\mathbf{X})} dQ \nonumber\\
        &=&   L  \cdot e^{\alpha  L \cdot \| \psi - \bar{\theta} \|} E_Q\left[e^{\alpha \cdot T_{\bar{\theta}}} \right].    \label{proof_propos_grad_non_bias_to_get_upper_bound_q_2}
    \end{eqnarray}
    Considering the supremum for $\psi \in B_{\bar{\theta}}$ of (\ref{proof_propos_grad_non_bias_to_get_upper_bound_q_2}), it holds that
    \begin{eqnarray}
        \lefteqn{\sup_{\psi \in B_{\bar{\theta}}} \Bigg\{   \int \left|  \frac{1}{\|\psi - \bar{\theta}\|}
            \left\{ \frac{1}{\alpha}  e^{\alpha \cdot T_{\psi}} - \frac{1}{\alpha} e^{\alpha \cdot T_{\bar{\theta}}}  \right\} \right| dQ \Bigg\}} \nonumber\\
        &\leq& \sup_{\psi \in B_{\bar{\theta}}} \Big\{  L  \cdot e^{\alpha  L \cdot \| \psi - \bar{\theta} \|} \, E_Q\left[e^{\alpha \cdot T_{\bar{\theta}}} \right]  \Big\} \nonumber\\
        &=&  E_Q\left[e^{\alpha \cdot T_{\bar{\theta}}} \right]  \cdot \sup_{\psi \in B_{\bar{\theta}}} L  \cdot e^{\alpha  L \cdot \| \psi - \bar{\theta} \|} < \infty, \label{proof_propos_grad_non_bias_uniformaly_int_q}
    \end{eqnarray}
    since $ B_{\bar{\theta}}$ is compact.

    Similarly, as for (\ref{proof_propos_grad_non_bias_p}), we see
    \begin{eqnarray}
        \sup_{\psi \in B_{\bar{\theta}}} \lefteqn{\int \left|  \frac{1}{\|\psi - \bar{\theta}\|} \left\{ \frac{1}{1- \alpha}
            e^{(\alpha-1) \cdot T_{\psi}}(\mathbf{X}) - \frac{1}{1- \alpha} e^{(\alpha-1) \cdot T_{\bar{\theta}}(\mathbf{X})}  \right\} \right| dP} \nonumber\\
        &\leq& \sup_{\psi \in B_{\bar{\theta}}} \Big\{  L  \cdot e^{(1-\alpha)  L \cdot \| \psi - \bar{\theta} \|} \, E_P \left[e^{(1-\alpha) \cdot T_{\bar{\theta}}} \right]  \Big\} \nonumber\\
        &=&  E_P \left[e^{(1-\alpha) \cdot T_{\bar{\theta}}} \right]  \cdot \sup_{\psi \in B_{\bar{\theta}}} L \cdot e^{(1-\alpha)  L \cdot \| \psi - \bar{\theta} \|} < \infty.
        \label{proof_propos_grad_non_bias_uniformaly_int_p}
    \end{eqnarray}
    From (\ref{proof_propos_grad_non_bias_uniformaly_int_q}) and (\ref{proof_propos_grad_non_bias_uniformaly_int_p}), we obtain
    \begin{eqnarray}
        \sup_{\psi \in B_{\bar{\theta}}} \lefteqn{\int \int \left|  \frac{1}{\|\psi - \bar{\theta}\|} \Big\{
            l_{\alpha}(\mathbf{X}^{i}_{\sim Q}, \mathbf{X}^{i}_{\sim P} ; T_{\psi}) - l_{\alpha}(\mathbf{X}^{i}_{\sim Q}, \mathbf{X}^{i}_{\sim P} ; T_{\bar{\theta}})
            \Big\} \right| dP\,  dQ } \nonumber\\
        &=& \sup_{\psi \in B_{\bar{\theta}}}  \int  \int \left|
        \frac{1}{ \|\psi - \bar{\theta}\|} \left\{ \frac{1}{\alpha}
        e^{\alpha \cdot T_{\psi}(\mathbf{X}^{i}_{\sim Q} )}
        - \frac{1}{\alpha} e^{\alpha \cdot T_{\bar{\theta}}(\mathbf{X}^{i}_{\sim Q} )}  \right\} \right. \nonumber\\
        &&  \qquad \qquad \quad + \left. \frac{1}{\|\psi - \bar{\theta}\|} \left\{ \frac{1}{1- \alpha}
        e^{(\alpha-1) \cdot T_{\psi}(\mathbf{X}^{i}_{\sim P})} - \frac{1}{1- \alpha} e^{(\alpha-1) \cdot T_{\bar{\theta}}(\mathbf{X}^{i}_{\sim P})}  \right\}
        \right| dP\, dQ \nonumber\\
        &\leq& \sup_{\psi \in B_{\bar{\theta}}}  \int  \int \left|
        \frac{1}{ \|\psi - \bar{\theta}\|} \left\{ \frac{1}{\alpha}
        e^{\alpha \cdot T_{\psi}(\mathbf{X}^{i}_{\sim Q} )}
        - \frac{1}{\alpha} e^{\alpha \cdot T_{\bar{\theta}}(\mathbf{X}^{i}_{\sim Q} )}  \right\} \right| \nonumber\\
        &&  \qquad \quad \quad + \left|  \frac{1}{\|\psi - \bar{\theta}\|} \left\{ \frac{1}{1- \alpha}
        e^{(\alpha-1) \cdot T_{\psi}(\mathbf{X}^{i}_{\sim P})} - \frac{1}{1- \alpha} e^{(\alpha-1) \cdot T_{\bar{\theta}}(\mathbf{X}^{i}_{\sim P})}  \right\}
        \right| dP\, dQ \nonumber\\
        &=& \sup_{\psi \in B_{\bar{\theta}}} \left\{ \int   \left|
        \frac{1}{ \|\psi - \bar{\theta}\|} \left\{ \frac{1}{\alpha}
        e^{\alpha \cdot T_{\psi}(\mathbf{X} )}
        - \frac{1}{\alpha} e^{\alpha \cdot T_{\bar{\theta}}(\mathbf{X} )}  \right\} \right|  dQ \right. \nonumber\\
        &&  \qquad \quad \quad + \left. \int \left|  \frac{1}{\|\psi - \bar{\theta}\|} \left\{ \frac{1}{1- \alpha}
        e^{(\alpha-1) \cdot T_{\psi}(\mathbf{X})} - \frac{1}{1- \alpha} e^{(\alpha-1) \cdot T_{\bar{\theta}}(\mathbf{X})}  \right\}
        \right| dP \right\}  \nonumber\\
        &\leq& \sup_{\psi \in B_{\bar{\theta}}} \left\{ \int   \left|
        \frac{1}{ \|\psi - \bar{\theta}\|} \left\{ \frac{1}{\alpha}
        e^{\alpha \cdot T_{\psi}(\mathbf{X} )}
        - \frac{1}{\alpha} e^{\alpha \cdot T_{\bar{\theta}}(\mathbf{X} )}  \right\} \right|  dQ \right\}  \nonumber\\
        &&  \qquad   +  \sup_{\psi \in B_{\bar{\theta}}} \left\{ \int \left|  \frac{1}{\|\psi - \bar{\theta}\|} \left\{ \frac{1}{1- \alpha}
        e^{(\alpha-1) \cdot T_{\psi}(\mathbf{X})} - \frac{1}{1- \alpha} e^{(\alpha-1) \cdot T_{\bar{\theta}}(\mathbf{X})}  \right\}
        \right| dP \right\}  \nonumber\\
        &<& \infty. \label{proof_propos_grad_non_bias_uniformaly_all}
    \end{eqnarray}

    Therefore, the following set is uniformaly integrable for $\mu$:
    \begin{equation}
        \left\{ \frac{1}{\|\psi - \bar{\theta}\|} \Big\{
        l_{\alpha}(\mathbf{X}^{i}_{\sim Q}, \mathbf{X}^{i}_{\sim P} ; T_{\psi}) - l_{\alpha}(\mathbf{X}^{i}_{\sim Q}, \mathbf{X}^{i}_{\sim P} ; T_{\bar{\theta}})
        \Big\} \, : \, \psi \in B_{\bar{\theta}} \right\}.
    \end{equation}
    Then, from the property of the Lebesgue integral (\citealp{shiryaev1995probability}, P188, Theorem 4),
    the integral $\int\int (\cdot) \, dPdQ$ and the limitation $\lim_{\psi \rightarrow \bar{\theta}}\,$ for the above term are exchangeable.

    Hence, we have
    \begin{eqnarray}
        \lim_{\psi \rightarrow \bar{\theta}} \lefteqn{\int \int   \frac{1}{\|\psi - \bar{\theta}\|} \Big\{
            l_{\alpha}(\mathbf{X}^{i}_{\sim Q}, \mathbf{X}^{i}_{\sim P} ; T_{\psi}) - l_{\alpha}(\mathbf{X}^{i}_{\sim Q}, \mathbf{X}^{i}_{\sim P} ; T_{\bar{\theta}})
            \Big\}  dP\,  dQ } \nonumber\\
        &=&  \int  \int \lim_{\psi \rightarrow \bar{\theta}} \left[
        \frac{1}{\|\psi - \bar{\theta}\|} \Big\{
        l_{\alpha}(\mathbf{X}^{i}_{\sim Q}, \mathbf{X}^{i}_{\sim P} ; T_{\psi})
        - l_{\alpha}(\mathbf{X}^{i}_{\sim Q}, \mathbf{X}^{i}_{\sim P} ; T_{\bar{\theta}}) \Big\} \right]
        dP\, dQ  \nonumber\\
        &=&  \int  \int\nabla_{\theta} \, l_{\alpha}(\mathbf{X}^{i}_{\sim Q}, \mathbf{X}^{i}_{\sim P} ; T_{\theta})  |_{\theta = \bar{\theta}} \, dP\, dQ \nonumber\\
        &=&  E\Big[ \nabla_{\theta}  \, l_{\alpha}(\mathbf{X}^{i}_{\sim Q}, \mathbf{X}^{i}_{\sim P} ; T_{\theta}) |_{\theta = \bar{\theta}}
        \Big].  \label{proof_propos_grad_non_bias_excahgeable_lim_int_1}
    \end{eqnarray}
    On the other hand, for the term on the left hand side of (\ref{proof_propos_grad_non_bias_excahgeable_lim_int_1}), we obtain
    \begin{eqnarray}
        \lim_{\psi \rightarrow \bar{\theta}} \lefteqn{\int \int  \frac{1}{\|\psi - \bar{\theta}\|} \Big\{
            l_{\alpha}(\mathbf{X}^{i}_{\sim Q}, \mathbf{X}^{i}_{\sim P} ; T_{\psi}) - l_{\alpha}(\mathbf{X}^{i}_{\sim Q}, \mathbf{X}^{i}_{\sim P} ; T_{\bar{\theta}})
            \Big\}  dP\,  dQ }  \nonumber\\
        &=&  \lim_{\psi \rightarrow \bar{\theta}} \frac{1}{\|\psi - \bar{\theta}\|}  \lefteqn{\int \int   \Big\{
            l_{\alpha}(\mathbf{X}^{i}_{\sim Q}, \mathbf{X}^{i}_{\sim P} ; T_{\psi}) - l_{\alpha}(\mathbf{X}^{i}_{\sim Q}, \mathbf{X}^{i}_{\sim P} ; T_{\bar{\theta}})
            \Big\}  dP\,  dQ }  \nonumber\\ \nonumber \\
        &=& \lim_{\psi \rightarrow \bar{\theta}} \frac{1}{\|\psi - \bar{\theta}\|} \Big\{L_{\alpha}(Q,P;T_{\psi}) - L_{\alpha}(Q,P;T_{\bar{\theta}}) \Big\} \nonumber \\
        &=& \nabla_{\theta}  \,L_{\alpha}(Q,P;T_{\theta})|_{\theta = \bar{\theta}}.    \label{proof_propos_grad_non_bias_excahgeable_lim_int_2}
    \end{eqnarray}
    From (\ref{proof_propos_grad_non_bias_excahgeable_lim_int_1}) and (\ref{proof_propos_grad_non_bias_excahgeable_lim_int_2}), we obtain
    \begin{equation}
        E\left[\nabla_{\theta} \, l_{\alpha}(\mathbf{X}^{i}_{\sim Q}, \mathbf{X}^{i}_{\sim P} ; T_{\theta}) |_{\theta = \bar{\theta}} \right]
        = \nabla_{\theta} \,L_{\alpha}(Q,P;T_{\theta}) |_{\theta = \bar{\theta}} . \label{proof_propos_grad_non_bias_final_eq_1}
    \end{equation}
    From this, we also have
    \begin{eqnarray}
        E\left[\nabla_{\theta} \, l_{\alpha}(\hat{Q}^{(N)},\hat{P}^{(N)};T_{\theta}) |_{\theta = \bar{\theta}} \right]
        &=&  E\left[\nabla_{\theta}|_{\theta = \bar{\theta}}\, \frac{1}{N}  \sum_{i=1}^N  l_{\alpha}(\mathbf{X}^{i}_{\sim Q}, \mathbf{X}^{i}_{\sim P} ; T_{\theta}) \right] \nonumber \\
        &=&  E\left[ \frac{1}{N}  \sum_{i=1}^N \nabla_{\theta} \, l_{\alpha}(\mathbf{X}^{i}_{\sim Q}, \mathbf{X}^{i}_{\sim P} ; T_{\theta}) |_{\theta = \bar{\theta}} \right] \nonumber \\
        &=& \frac{1}{N}  \sum_{i=1}^N   E\Big[ \nabla_{\theta} \, l_{\alpha}(\mathbf{X}^{i}_{\sim Q}, \mathbf{X}^{i}_{\sim P} ; T_{\theta}) |_{\theta = \bar{\theta}} \Big] \nonumber \\
        &=& \frac{1}{N}  \sum_{i=1}^N \nabla_{\theta} \,L_{\alpha}(Q,P;T_{\theta}) |_{\theta = \bar{\theta}} \nonumber \\
        &=&\nabla_{\theta} \,L_{\alpha}(Q,P;T_{\theta}) |_{\theta = \bar{\theta}}. \label{proof_propos_grad_non_bias_final_eq_2}
    \end{eqnarray}
    Here, we see (\ref{Prop_Eq_state}) from (\ref{proof_propos_grad_non_bias_final_eq_1}) and (\ref{proof_propos_grad_non_bias_final_eq_2}).

    This completes the proof.
\end{proof}

\begin{proposition}\label{prop_consitency_alpha_div_est}
    Let
    \begin{equation}
        \hat{D}_{\alpha}^{(N)} (Q||P) = \sup_{T:\mathbb{R}^d \rightarrow \mathbb{R}} \left[
        \frac{1}{\alpha(1-\alpha)} 	- \frac{1}{\alpha} \left\{ \frac{1}{N} \sum_{i=1}^N e^{\alpha \cdot T(\mathbf{X}^{i}_{\sim Q} )}  \right\}
        -  \frac{1}{1- \alpha} \left\{  \frac{1}{N}
        \sum_{i=1}^N  e^{(\alpha - 1) \cdot T(\mathbf{X}^{i}_{\sim P})} \right\}
        \right], \label{lemma_def_alpha_div_est}
    \end{equation}
    where supremum  is considered over all  measurable functions $T:\mathbb{R}^d \rightarrow \mathbb{R}$ with
    $E_P[e^{(\alpha - 1) \cdot T}] < \infty$ and $E_Q[e^{\alpha \cdot T}] < \infty$.

    Then, it holds that
    if $\alpha\neq 1/2$,
    \begin{eqnarray}
        \lefteqn{\sqrt{N} \left\{ \hat{D}_{\alpha}^{(N)} (Q||P) - D_{\alpha} (Q||P) \right\}} \nonumber\\
        &\xrightarrow{\ \  d \ \ }& \mathcal{N}   \Big(\ 0,\  C^1_{\alpha} \cdot D_{2 \alpha -1}(Q||P)
        +  C^2_{\alpha} \cdot  D_{\alpha}(Q||P) +  C^3_{\alpha} \cdot  D_{\alpha}(Q||P)^2 \ \Big),
    \end{eqnarray}
    where
    \begin{eqnarray}
        C^1_{\alpha} &=& \left( \frac{1}{\alpha^2} + \frac{1}{(1-\alpha)^2} \right)\cdot(2\alpha-1)\cdot(2\alpha-2), \\
        C^2_{\alpha} &=&  \frac{2\{ \alpha^2 + (1-\alpha)^2 \}}{\alpha \cdot (1-\alpha)} \quad \text{and} \quad  C^3_{\alpha} = - \alpha^2 - (1 - \alpha)^2,
    \end{eqnarray}
    and if $\alpha = 1/2$,
    \begin{eqnarray}
        \lefteqn{\sqrt{N} \left\{ \hat{D}_{\alpha}^{(N)} (Q||P) - D_{\alpha} (Q||P) \right\}} \nonumber\\
        &\xrightarrow{\ \  d \ \ }& \mathcal{N}   \Big(\ 0,  \  4\, D_{\alpha}(Q||P) - \frac{1}{2} \, D_{\alpha}(Q||P)^2 \ \Big). \label{prop_consitency_alpha_div_alpha_half}
    \end{eqnarray}
\end{proposition}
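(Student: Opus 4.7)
The plan is to treat $\hat{D}_{\alpha}^{(N)}(Q||P)$ as a plug-in M-estimator built from the variational representation and reduce its fluctuation to that of two ordinary empirical means evaluated at the true maximizer. By Proposition~\ref{proposition_alpha_div_resp_in_gibbs_dinsity_form} the sup in (\ref{lemma_def_alpha_div_est}) is attained at $T^{\ast}=-\log(dQ/dP)$, so $T^{\ast}$ is an interior optimum of the smooth inner functional and its first-order variation in $T$ vanishes. Exploiting this envelope property, the key first-order expansion I want to establish is
\begin{equation}
\hat{D}_{\alpha}^{(N)}(Q||P) - D_{\alpha}(Q||P) = -\tfrac{1}{\alpha}\bigl(\hat{E}_{Q}-E_{Q}\bigr)\bigl[e^{\alpha T^{\ast}}\bigr] - \tfrac{1}{1-\alpha}\bigl(\hat{E}_{P}-E_{P}\bigr)\bigl[e^{(\alpha-1) T^{\ast}}\bigr] + o_{P}(N^{-1/2}). \nonumber
\end{equation}

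Given this expansion, asymptotic normality follows from the multivariate CLT applied to the two independent i.i.d.\ samples, producing a Gaussian limit with variance $\sigma^{2} = \alpha^{-2}\mathrm{Var}_{Q}[e^{\alpha T^{\ast}}] + (1-\alpha)^{-2}\mathrm{Var}_{P}[e^{(\alpha-1) T^{\ast}}]$. To match the stated form I substitute $e^{-T^{\ast}} = dQ/dP$ so that each second moment becomes an integral against $P$ of a power of $dQ/dP$; the change of measure $E_{Q}[(dQ/dP)^{k}] = E_{P}[(dQ/dP)^{k+1}]$ together with the algebraic identity $E_{P}[(dQ/dP)^{1-\beta}] = 1 + \beta(\beta-1)\, D_{\beta}(Q||P)$ applied at the relevant values of $\beta$ lets the variance be rewritten as a linear combination of $D_{2\alpha-1}(Q||P)$, $D_{\alpha}(Q||P)$, and $D_{\alpha}(Q||P)^{2}$ with the coefficients $C^{1}_{\alpha}, C^{2}_{\alpha}, C^{3}_{\alpha}$. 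The special case $\alpha=1/2$ is a limit: $C^{1}_{1/2}$ carries the factor $(2\alpha-1)(2\alpha-2)$ which vanishes and removes the $D_{2\alpha-1}$ contribution, after which the remaining constants collapse to $C^{2}_{1/2}=4$ and $C^{3}_{1/2}=-1/2$, which is exactly (\ref{prop_consitency_alpha_div_alpha_half}).

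The main obstacle is the rigorous control of the $o_{P}(N^{-1/2})$ remainder in the expansion, since the supremum in (\ref{lemma_def_alpha_div_est}) ranges over an infinite-dimensional class of measurable functions. The plan here is to exploit the fact that, for any fixed finite sample, the empirical sup reduces to a finite-dimensional convex program in the $2N$ values of $T$ at the observed points, and by Corollary~\ref{corollary_loss_is_mu_storongly_convex} this program is $\lambda$-strongly convex on a suitable compact neighborhood of the true optimum. Strong convexity yields a quadratic suboptimality bound of the form $\hat{D}_{\alpha}^{(N)}(Q||P) - [\text{value at } T^{\ast}] \lesssim \lambda^{-1} \|\nabla f^{(N)}(\mathbf{t}_{\ast})\|^{2}$, whose right-hand side is $O_{P}(N^{-1})$ by the ordinary CLT applied coordinate-wise to the empirical gradient at $T^{\ast}$; this is negligible at the target scale $N^{-1/2}$ and completes the reduction. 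The finite-moment conditions needed for the CLT on $e^{\alpha T^{\ast}}$ and $e^{(\alpha-1)T^{\ast}}$ translate into integrability of powers of $dQ/dP$ against $P$, which is supplied by the hypotheses invoked earlier in the paper (in particular $E_{P}[(dQ/dP)^{1-\alpha}]<\infty$) augmented by an analogous second-moment requirement for the range of $\alpha$ under consideration.
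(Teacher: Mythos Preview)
Your linearization at $T^{\ast}=-\log(dQ/dP)$ and the subsequent variance algebra match the paper, but you are working much harder than necessary and the hard part of your argument has a gap.

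The paper does \emph{not} need any first-order expansion or remainder control. Its key observation (Proposition~\ref{proposition_loss_optimal_T_is_log_density_ratio} together with Lemma~\ref{lemma_loss_not_biased_lemma}) is that the \emph{empirical} supremum in \eqref{lemma_def_alpha_div_est} is attained \emph{exactly} at $T^{\ast}$, not merely approximately. In the paper's coupling, $\mathbf{X}^{i}_{\sim Q}$ and $\mathbf{X}^{i}_{\sim P}$ are both tied to the same underlying $\mathbf{X}^{i}\sim\mu$, so each summand $l_{\alpha}(\mathbf{X}^{i}_{\sim Q},\mathbf{X}^{i}_{\sim P};T)$ is a genuine function of $T(\mathbf{X}^{i})$ alone whose pointwise minimizer is $T^{\ast}(\mathbf{X}^{i})$. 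Hence $\hat{D}_{\alpha}^{(N)}(Q\|P)-D_{\alpha}(Q\|P)$ is \emph{identically} a centered i.i.d.\ average of $l_{\alpha}(\mathbf{X}^{i}_{\sim Q},\mathbf{X}^{i}_{\sim P};T^{\ast})$; the CLT applies directly and the asserted variance is computed by expanding $\mathrm{Var}[l_{\alpha}(\cdot;T^{\ast})]$ in terms of $E_{P}[(dQ/dP)^{1-\alpha}]$ and $E_{P}[(dQ/dP)^{2(1-\alpha)}]$.

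Your remainder argument, by contrast, treats the empirical objective as a convex program in the ``$2N$ values of $T$ at the observed points'' and invokes Corollary~\ref{corollary_loss_is_mu_storongly_convex} for strong convexity. This is where the gap lies. Corollary~\ref{corollary_loss_is_mu_storongly_convex} concerns $N$ coupled points $\mathbf{x}_{i}$ at which \emph{both} $dQ/d\mu(\mathbf{x}_{i})$ and $dP/d\mu(\mathbf{x}_{i})$ appear in the same summand; that is what produces a finite interior minimizer and strong convexity. In a genuine two-independent-sample picture with $2N$ distinct observed points, each coordinate of your finite-dimensional program carries only one of the two exponential terms, the per-coordinate objective is monotone rather than strongly convex, and the empirical infimum is $0$ (i.e.\ $\hat{D}_{\alpha}^{(N)}\equiv 1/\{\alpha(1-\alpha)\}$), so neither Corollary~\ref{corollary_loss_is_mu_storongly_convex} nor your $O_{P}(N^{-1})$ bound applies. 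Once you adopt the paper's coupled reading, the ``remainder'' is exactly zero and the whole M-estimation machinery is unnecessary.
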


\begin{proof}[proof of Proposition \ref{prop_consitency_alpha_div_est}]
    First, we note that
    \begin{eqnarray}
        \hat{D}_{\alpha}^{(N)} (Q||P) &=& \sup_{T:\mathbb{R}^d \rightarrow \mathbb{R}} \left[
        \frac{1}{\alpha(1-\alpha)} 	- \frac{1}{\alpha} \left\{ \frac{1}{N} \sum_{i=1}^N e^{\alpha \cdot T(\mathbf{X}^{i}_{\sim Q} )}  \right\}
        -  \frac{1}{1- \alpha} \left\{  \frac{1}{N}
        \sum_{i=1}^N  e^{(\alpha - 1) \cdot T(\mathbf{X}^{i}_{\sim P})} \right\}
        \right] \nonumber\\
        &=&
        \frac{1}{\alpha(1-\alpha)}
        - \inf_{T:\mathbb{R}^d \rightarrow \mathbb{R}} \left[ \frac{1}{\alpha} \left\{ \frac{1}{N} \sum_{i=1}^N e^{\alpha \cdot T(\mathbf{X}^{i}_{\sim Q} )}  \right\}
        +  \frac{1}{1- \alpha} \left\{  \frac{1}{N}
        \sum_{i=1}^N  e^{(\alpha - 1) \cdot T(\mathbf{X}^{i}_{\sim P})} \right\} \right] \nonumber \\
        &=&
        \frac{1}{\alpha(1-\alpha)}
        - \inf_{T:\mathbb{R}^d \rightarrow \mathbb{R}} \frac{1}{N}  \sum_{i=1}^N
        \left[ \frac{1}{\alpha}  e^{\alpha \cdot T(\mathbf{X}^{i}_{\sim Q} )}
        +  \frac{1}{1- \alpha}     e^{(\alpha - 1) \cdot T(\mathbf{X}^{i}_{\sim P})}  \right] \nonumber\\
        &=&  \frac{1}{\alpha(1-\alpha)}
        - \inf_{T:\mathbb{R}^d \rightarrow \mathbb{R}} \frac{1}{N}  \sum_{i=1}^N
        \left[ l_{\alpha}(\mathbf{X}^{i}_{\sim Q}, \mathbf{X}^{i}_{\sim P} ; T) \right] \nonumber\\
        &=&  \frac{1}{\alpha(1-\alpha)}
        - \frac{1}{N} \sum_{i=1}^N  l_{\alpha}(\mathbf{X}^{i}_{\sim Q}, \mathbf{X}^{i}_{\sim P}). \label{D_alpha_hat_represented_as_sum_iids}
    \end{eqnarray}

    On the other hand, from Lemma \ref{lemma_loss_not_biased_lemma}, it holds that
    \begin{eqnarray}
        D_{\alpha} (Q||P) &=&  \sup_{T:\mathbb{R}^d \rightarrow \mathbb{R}} \left\{
        \frac{1}{\alpha(1-\alpha)} - \frac{1}{\alpha} E_{Q} \left[e^{\alpha \cdot T} \right]
        - \frac{1}{1- \alpha} E_{P} \left[ e^{(\alpha - 1) \cdot T} \right] 	\right\} \nonumber\\
        &=&  \frac{1}{\alpha(1-\alpha)} -
        \inf_{T:\mathbb{R}^d \rightarrow \mathbb{R}} \left\{ \frac{1}{\alpha} E_{Q} \left[e^{\alpha \cdot T} \right]
        - \frac{1}{1- \alpha} E_{P} \left[ e^{(\alpha - 1) \cdot T} \right] \right\} \nonumber\\
        &=& \frac{1}{\alpha(1-\alpha)}  - \frac{1}{N} \sum_{i=1}^N L_{\alpha}(Q,P) \nonumber\\
        &=& \frac{1}{\alpha(1-\alpha)}  - \frac{1}{N} \sum_{i=1}^N  E\left[ l_{\alpha}(\mathbf{X}^{i}_{\sim Q}, \mathbf{X}^{i}_{\sim P}) \right]. \label{D_alpha_represented_as_exp}
    \end{eqnarray}

    Subtracting (\ref{D_alpha_represented_as_exp}) from (\ref{D_alpha_hat_represented_as_sum_iids}), we have
    \begin{equation}
        \hat{D}_{\alpha}^{(N)} (Q||P) -  D_{\alpha} (Q||P) = \frac{1}{N} \sum_{i=1}^N
        \left\{ l_{\alpha}(\mathbf{X}^{i}_{\sim Q}, \mathbf{X}^{i}_{\sim P}) -   E\left[ l_{\alpha}(\mathbf{X}^{i}_{\sim Q}, \mathbf{X}^{i}_{\sim P}) \right] \right\}.
    \end{equation}
    Let $L_i=l_{\alpha}(\mathbf{X}^{i}_{\sim Q}, \mathbf{X}^{i}_{\sim P}) -   E\left[ l_{\alpha}(\mathbf{X}^{i}_{\sim Q}, \mathbf{X}^{i}_{\sim P}) \right]$.
    Then $\{L_i\}_{i=1}^N$ are independently identically distributed variables whose means and variances are as follows:
    \begin{eqnarray}
        E\left[ L_i \right] &=& 0, \\
        \mathrm{Var} \left[L_i  \right]
        &=& E \left[
        \left\{
        l_{\alpha}(\mathbf{X}^{i}_{\sim Q}, \mathbf{X}^{i}_{\sim P})
        -   E\left[ l_{\alpha}(\mathbf{X}^{i}_{\sim Q}, \mathbf{X}^{i}_{\sim P}) \right]
        \right\}^2
        \right] \nonumber\\
        &=& E_{P} \left[ \ E_{Q} \left[\
        \frac{1}{\alpha} \left\{ \left( \frac{dQ}{dP} \right)^{-\alpha} (\mathbf{X}^{i}_{\sim Q} )
        -   E_{Q} \left[ \left( \frac{dQ}{dP} \right)^{-\alpha} \right]  \right\} \right. \right. \nonumber \\
        && \qquad
        + \left. \left.   \frac{1}{1- \alpha} \left\{\left( \frac{dQ}{dP} \right)^{1 - \alpha} (\mathbf{X}^{i}_{\sim P})
        - E_{P} \left[ \left( \frac{dQ}{dP} \right)^{1 - \alpha} \  \right]    \right\}
        \ \right]^2  \right] \nonumber \\
        &=& \frac{1}{\alpha^2} \cdot E_{Q} \left\{ \left( \frac{dQ}{dP} \right)^{-\alpha} (\mathbf{X}^{i} )
        -   E_{Q} \left[ \left( \frac{dQ}{dP} \right)^{-\alpha} \ \right] \right\}^2 \nonumber \\
        && \qquad  + \frac{1}{(1- \alpha)^2} \cdot E_{P} \left\{ \left( \frac{dQ}{dP} \right)^{1 - \alpha} (\mathbf{X}^{i})
        - E_{P} \left[  \left( \frac{dQ}{dP} \right)^{1 - \alpha}  \right] \right\}^2 \nonumber \\
        &=& \frac{1}{\alpha^2} \cdot E_{P} \left\{ \frac{dQ}{dP} \cdot  \left( \frac{dQ}{dP} \right)^{-\alpha} (\mathbf{X}^{i} )
        -   E_{P} \left[  \frac{dQ}{dP} \cdot \left( \frac{dQ}{dP} \right)^{-\alpha} \ \right] \right\}^2 \nonumber \\
        && \qquad  + \frac{1}{(1- \alpha)^2} \cdot E_{P} \left\{ \left( \frac{dQ}{dP} \right)^{1 - \alpha} (\mathbf{X}^{i})
        - E_{P} \left[  \left( \frac{dQ}{dP} \right)^{1 - \alpha}  \right] \right\}^2 \nonumber \\
        &=& \left\{ \frac{1}{\alpha^2} +  \frac{1}{(1- \alpha)^2}   \right\}
        \cdot E_{P}\left\{\left(\frac{dQ}{dP} \right)^{1 - \alpha} - E_{P} \left[\left(\frac{dQ}{dP} \right)^{1 - \alpha} \right] \right\}^2 \nonumber \\
        &=&  
        \left\{ \frac{1}{\alpha^2} +  \frac{1}{(1- \alpha)^2} \right\}
        \Biggr\{  E_{P} \left[ \left(\frac{dQ}{dP} \right)^{2\cdot(1 - \alpha)}  \right]
        - \left\{  E_{P} \left[\left(\frac{dQ}{dP} \right)^{1 - \alpha} \right]  \right\}^2 \Biggr\}  \nonumber\\
        &=& \left\{ \frac{1}{\alpha^2} + \frac{1}{(1- \alpha)^2} \right\}\nonumber\\
        &&  \  \times   \Bigg[ \  (2\alpha-1)(2\alpha -2) \left\{ \frac{1}{ (2\alpha-1)(2\alpha -2)}
        E_{P} \left[ \left(\frac{dQ}{dP} \right)^{1 - (2\alpha-1)}  - 1 \right]\right\}  \ + \  1 \nonumber\\
        && \quad \quad - \alpha^2(1 - \alpha)^2 \left\{
        \frac{1}{\alpha\cdot (\alpha - 1)}
        E_{P} \left[ \left(\frac{dQ}{dP} \right)^{1 - \alpha}  - 1 \  \right]
        \right\}^2 \nonumber\\
        && \quad \quad + \alpha^2(1 - \alpha)^2 \left\{
        \frac{2}{\alpha\cdot (\alpha - 1)}
        E_{P} \left[ \left(\frac{dQ}{dP} \right)^{1 - \alpha}  - 1 \  \right]
        \right\}  \quad - \  1 \  \Bigg].
    \end{eqnarray}
    From this,  if $\alpha\neq 1/2$, we have
    \begin{eqnarray}
        \mathrm{Var} \left[L_i  \right]  &=&
        C^1_{\alpha} \cdot D_{2 \alpha -1}(Q||P)
        +  C^2_{\alpha} \cdot  D_{\alpha}(Q||P) +  C^3_{\alpha} \cdot  D_{\alpha}(Q||P)^2, \label{eq_variance_alpha_not_half}
    \end{eqnarray}
    where
    \begin{eqnarray}
        C^1_{\alpha} &=& \left( \frac{1}{\alpha^2} + \frac{1}{(1-\alpha)^2} \right)\cdot(2\alpha-1)\cdot(2\alpha-2), \nonumber \\
        C^2_{\alpha} &=&  \frac{2\{ \alpha^2 + (1-\alpha)^2 \}}{\alpha \cdot (1-\alpha)} \quad \text{and} \quad  C^3_{\alpha} = - \alpha^2 - (1 - \alpha)^2, \nonumber
    \end{eqnarray}
    and if $\alpha = 1/2$, we obtain
    \begin{eqnarray}
        \mathrm{Var} \left[L_i  \right]   &=& 4\, D_{\alpha}(Q||P) - \frac{1}{2} \, D_{\alpha}(Q||P)^2.
        \label{eq_variance_alpha_is_half}
    \end{eqnarray}

    Therefore, by the central limit theorem, we see
    \begin{equation}
        \frac{1}{N} \sum_{i=1}^N
        \left\{ l_{\alpha}(\mathbf{X}^{i}_{\sim Q}, \mathbf{X}^{i}_{\sim P}) -   E\left[ l_{\alpha}(\mathbf{X}^{i}_{\sim Q}, \mathbf{X}^{i}_{\sim P}) \right] \right\}
        \xrightarrow{\ \  d \ \ } \mathcal{N} \left(0, \ast \right).
    \end{equation}
    Here the “$\ast$” is (\ref{eq_variance_alpha_not_half}) or (\ref{eq_variance_alpha_is_half}),  which corresponds to the cases that $\alpha\neq 1/2$ or $\alpha=1/2$, respectively.

    This completes the proof.
\end{proof}

We mention that the statement of the following corollary is the same as Corollary 1 in \citeauthor{birrell2022optimizing}(\citeyear{birrell2022optimizing}).
\begin{corollary}[\citeauthor{birrell2022optimizing}(\citeyear{birrell2022optimizing}), P19, Corollary 1]\label{col_alpha_half_sample_requirement}
    For $\alpha=1/2$, it holds that
    \begin{equation}
        \lim_{N \rightarrow \infty} \frac{N \cdot  \mathrm{Var} \Big[\  \hat{D}_{1/2}^{(N)} (Q||P)     \Big]  }{D_{1/2} (Q||P)^2} = \frac{8\, D_{1/2}(Q||P) - D_{1/2} (Q||P)^2}{2\, D_{1/2} (Q||P)^2}.
    \end{equation}
    Thus, the sample complexity of $D_{\alpha}$ for $\alpha=1/2$ is $O(1)$.
\end{corollary}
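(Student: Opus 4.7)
The plan is to invoke Proposition \ref{prop_consitency_alpha_div_est} specialized to $\alpha = 1/2$ and then read off the variance. The proof of that proposition already establishes the key algebraic identity
\begin{equation}
\hat{D}_{\alpha}^{(N)}(Q\|P) - D_{\alpha}(Q\|P) = \frac{1}{N} \sum_{i=1}^{N} L_i,
\end{equation}
where $L_i = l_{\alpha}(\mathbf{X}^{i}_{\sim Q}, \mathbf{X}^{i}_{\sim P}) - E[l_{\alpha}(\mathbf{X}^{i}_{\sim Q}, \mathbf{X}^{i}_{\sim P})]$ are i.i.d.\ with mean zero and with variance computed exactly in that proof. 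Hence no new probabilistic machinery is needed; the only work is an exact variance computation followed by division by $D_{1/2}(Q\|P)^2$.

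The first step I would carry out is to restate the exact variance formula from the proof of Proposition \ref{prop_consitency_alpha_div_est} at $\alpha = 1/2$, namely $\mathrm{Var}[L_i] = 4\, D_{1/2}(Q\|P) - \tfrac{1}{2}\, D_{1/2}(Q\|P)^2$. Since $L_1,\dots,L_N$ are i.i.d., one has exactly
\begin{equation}
N \cdot \mathrm{Var}\!\left[\hat{D}_{1/2}^{(N)}(Q\|P)\right] = \mathrm{Var}[L_1] = 4\, D_{1/2}(Q\|P) - \tfrac{1}{2}\, D_{1/2}(Q\|P)^2,
\end{equation}
for every $N$, so the limit as $N\to\infty$ is immediate (there is no need to pass through the CLT or to argue uniform integrability, which would otherwise be the subtle point).

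The second step is a one-line algebraic simplification: divide both sides by $D_{1/2}(Q\|P)^2$ and multiply numerator and denominator by $2$ to obtain
\begin{equation}
\lim_{N \to \infty} \frac{N \cdot \mathrm{Var}[\hat{D}_{1/2}^{(N)}(Q\|P)]}{D_{1/2}(Q\|P)^2} = \frac{4\, D_{1/2}(Q\|P) - \tfrac{1}{2} D_{1/2}(Q\|P)^2}{D_{1/2}(Q\|P)^2} = \frac{8\, D_{1/2}(Q\|P) - D_{1/2}(Q\|P)^2}{2\, D_{1/2}(Q\|P)^2},
\end{equation}
which is exactly the claimed identity. Finally, the $O(1)$ sample-complexity statement follows by observing that for any fixed ground-truth value $D_{1/2}(Q\|P) > 0$ the right-hand side is a finite constant, so $\mathrm{Var}[\hat{D}_{1/2}^{(N)}] = O(1/N)$ with a constant that does not blow up exponentially in $D_{1/2}(Q\|P)$, in contrast to the KL case.

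Because the variance identity is exact at finite $N$, there is essentially no obstacle; the only thing to watch for is that the computation in Proposition \ref{prop_consitency_alpha_div_est} implicitly requires $E_P[(dQ/dP)^{1-2\alpha}] < \infty$, which at $\alpha=1/2$ reduces to $E_P[1] = 1 < \infty$, so the formula is valid without any additional integrability assumption beyond $P \ll Q$ and $Q \ll P$.
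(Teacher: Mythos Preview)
Your proposal is correct and follows essentially the same approach as the paper's proof, which simply cites the $\alpha=1/2$ case of Proposition~\ref{prop_consitency_alpha_div_est}. Your exposition is in fact more detailed than the paper's one-line deduction, and your observation that $N\cdot \mathrm{Var}[\hat{D}_{1/2}^{(N)}]=\mathrm{Var}[L_1]$ holds exactly for every $N$ (not just asymptotically via the CLT) is a nice sharpening.
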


\begin{proof}[proof of Corollary \ref{col_alpha_half_sample_requirement}]
    The statement of the corollary follows from (\ref{prop_consitency_alpha_div_alpha_half}) in Proposition \ref{prop_consitency_alpha_div_est}.
\end{proof}

\begin{proposition}\label{proposition_desitination_alpha_div_est}
    Let $\{T_k\}_{k=1}^{\infty}$ be a sequence of functions in $\mathcal{T}^{\alpha}$ with $E_{P}[e^{-T_k(\mathbf{X})}] = 1$ such that $\lim_{k \rightarrow \infty} T_k= - \log dQ/dP$, $P$-almost everywhere.
    Subsequently, let $\{\mathbf{X}^{Q}_k \}_{k=1}^{\infty}$ be a sequence of variables on $\mathbb{R}^d$ defiened as follows:
    \begin{equation}
        \mathbf{X}^{Q}_k = e^{-T_k(\mathbf{X}^{P})} \cdot \mathbf{X}^{P}. \nonumber
    \end{equation}
    Then, it holds that
    \begin{equation}
        \mathbf{X}^{Q}_k \xrightarrow{\ \  \text{d} \ \ } \mathbf{X}^{Q}, \quad \text{as} \quad k \longrightarrow \infty.  \label{Eq_proof_of_Proposition_desitination_alpha_div_est_valanced_val_converges_in_dist}
    \end{equation}
\end{proposition}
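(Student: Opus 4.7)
The plan is to interpret $\mathbf{X}^{Q}_k$ consistently with the earlier usage $\hat{\mathbf{X}}^{(N)}_{Q}(t) = e^{-T_{\theta_t}} \cdot \mathbf{X}^{(N)}_{P}$, namely as a random variable whose law is the probability measure $P_k$ defined by $dP_k = e^{-T_k}\, dP$. The normalization $E_P[e^{-T_k(\mathbf{X})}] = 1$ guarantees that $P_k$ is a bona fide probability measure. Correspondingly, $\mathbf{X}^{Q}$ has law $Q$, and since $Q \ll P$ we can write $dQ/dP = e^{-T^*}$ with $T^* = -\log(dQ/dP)$, which we have just verified in Proposition \ref{proposition_loss_optimal_T_is_log_density_ratio}.

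First, I would observe that the pointwise convergence $T_k \to T^*$ $P$-a.e., combined with the continuity of $x\mapsto e^{-x}$, gives $e^{-T_k}\to dQ/dP$ $P$-almost everywhere. Both sides are probability densities with respect to $P$. Second, I would invoke Scheff\'e's lemma: pointwise a.e.\ convergence of probability densities automatically upgrades to $L^1$ convergence, yielding
\begin{equation*}
\bigl\| e^{-T_k} - dQ/dP \bigr\|_{L^1(P)} \longrightarrow 0,
\end{equation*}
which is equivalent to $P_k \to Q$ in total variation.

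Third, for any bounded continuous test function $f:\mathbb{R}^d\to\mathbb{R}$,
\begin{equation*}
\left| \int f\, dP_k - \int f\, dQ \right|
= \left| \int f \cdot \bigl( e^{-T_k} - dQ/dP \bigr)\, dP \right|
\le \|f\|_\infty \cdot \bigl\| e^{-T_k} - dQ/dP \bigr\|_{L^1(P)} \longrightarrow 0,
\end{equation*}
which is the Portmanteau characterization of $\mathbf{X}^{Q}_k \xrightarrow{\ \text{d}\ } \mathbf{X}^{Q}$ (in fact, the strictly stronger total-variation convergence).

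The main obstacle is conceptual rather than technical: making precise the notation $\mathbf{X}^{Q}_k = e^{-T_k(\mathbf{X}^{P})} \cdot \mathbf{X}^{P}$, which must be understood as producing a reweighted law rather than a scaled random vector. Once this is settled, the result follows by Scheff\'e's lemma in two lines, with no uniform integrability condition needed beyond the normalization $E_P[e^{-T_k}]=1$ that is already built into the hypothesis—which is precisely why Scheff\'e's theorem is the natural tool here rather than dominated convergence.
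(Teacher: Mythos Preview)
Your proof is correct and takes a genuinely different, more direct route than the paper's argument. The paper proceeds by bounding the squared total variation distance $\tfrac{1}{2}\bigl(E_\mu|dQ_k/d\mu - dQ/d\mu|\bigr)^2$ above by the $\alpha$-divergence $D_\alpha(Q_k\|Q)$ via a Pinsker-type inequality (Corollary~6 in \cite{gilardoni2010pinsker}), then computes $D_\alpha(Q_k\|Q)$ explicitly as an integral involving $(dQ/dP)^\alpha e^{(\alpha-1)T_k}$, establishes uniform integrability of that sequence via H\"older's inequality (using the normalization $E_P[e^{-T_k}]=1$), and finally exchanges limit and integral to drive $D_\alpha(Q_k\|Q)\to 0$. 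You bypass all of this by recognizing that $e^{-T_k}$ and $dQ/dP$ are both probability densities with respect to $P$ converging $P$-a.e., so Scheff\'e's lemma gives $L^1(P)$ convergence---hence total variation convergence---in one step. Your approach is more elementary and isolates exactly which hypothesis does the work (the normalization $E_P[e^{-T_k}]=1$); the paper's approach, while longer, keeps the argument within the $\alpha$-divergence framework used throughout and exercises the Pinsker-type bound that is reused in Proposition~\ref{proposition_variational_upper_bound_by_alpha_div}.
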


\begin{proof}[proof of Proposition \ref{proposition_desitination_alpha_div_est}]
    Let $Q_k$ denote the probability distribution of $\mathbf{X}^{Q}_k$: $Q_k(A) = P(\mathbf{X}^{Q}_k \in A)$ for all $A \in \mathscr{F}$.
    Then,  since  $\frac{dQ_k}{dP} =  e^{-T_k(\mathbf{X})}$, we see
    \begin{equation}
        \frac{dQ_k}{dQ} =  e^{-T_k(\mathbf{X})} \cdot \frac{dP}{dQ}.
    \end{equation}

    Now, from Corollary 6 in \cite{gilardoni2010pinsker}, for probalility measures $A$ and $B$ with $A \ll \mu$ and $B \ll \mu$,  it holds that
    \begin{equation}
        \frac{1}{2} \left\{ E_{\mu}\left|\frac{dA}{d\mu} - \frac{dB}{d\mu} \right| \right\}^2 \le  D_{\alpha} (A||B). \label{Eq_proof_of_Proposition_desitination_alpha_div_est_total_variation_and_alpha_div}
    \end{equation}

    By substituting $A=Q_k$ and $B=Q$ into (\ref{Eq_proof_of_Proposition_desitination_alpha_div_est_total_variation_and_alpha_div}), we have
    \begin{eqnarray}
        \frac{1}{2} \left\{ E_{\mu}\left|\frac{dQ_k}{d\mu} - \frac{dQ}{d\mu} \right| \right\}^2 &\le&  D_{\alpha} (Q_k||Q) \nonumber \\
        &=& \int \frac{1}{\alpha(\alpha-1)} \left\{ \left( \frac{dQ_k}{dQ}\right)^{1 - \alpha} - 1 \right\} dQ  \nonumber \\
        &=& \int \frac{1}{\alpha(\alpha-1)} \int \left( \frac{dQ_k}{dQ}\right)^{1 - \alpha} dQ - \frac{1}{\alpha(\alpha - 1)}   \nonumber \\
        &=& \frac{1}{\alpha(\alpha - 1)} \int \left( \frac{dP}{dQ}\right)^{1 - \alpha} \cdot e^{(\alpha - 1) \cdot T_k} \frac{dQ}{dP} dP   - \frac{1}{\alpha(\alpha - 1)} \nonumber\\
        &=& \frac{1}{\alpha(\alpha - 1)} \int \left( \frac{dQ}{dP}\right)^{\alpha}  \cdot e^{(\alpha - 1) \cdot T_k} dP -  \frac{1}{\alpha(\alpha - 1)}. \label{Eq_proof_of_Proposition_desitination_alpha_div_est_total_upper_ineq_variation}
    \end{eqnarray}

    Now, from H\"{o}lder's inequality, we have
    \begin{eqnarray}
        \int \left| \left( \frac{dQ}{dP}\right)^{\alpha}  \cdot e^{(\alpha - 1) \cdot T_k} \right|  dP
        &\le&  \left\{\int \left( \frac{dP}{dQ}\right)^{\frac{- \alpha}{\alpha} } dP  \right\}^{\alpha}  \nonumber \\
        && \quad  \times \  \left\{ \int \left( e^{(\alpha - 1) \cdot T_k} \right)^{\frac{1 }{1 - \alpha} }  dP \right\}^{1 - \alpha}  \nonumber \\
        &=& \left( E_P \left[ e^{- T_k}  \right]  \right)^{1 - \alpha} \nonumber\\
        &=& 1 < \infty.
    \end{eqnarray}
    Hence, we see the following sequence is uniformaly integrable for $P$:
    \begin{equation}
        \left\{  \left( \frac{dQ}{dP}\right)^{\alpha}  \cdot e^{(\alpha - 1) \cdot T_k} \right\}_{k=1}^N.
    \end{equation}

    Then, for (\ref{Eq_proof_of_Proposition_desitination_alpha_div_est_total_upper_ineq_variation}) as $k \rightarrow \infty$, we see
    \begin{eqnarray}
       \lim_{k \rightarrow \infty}  \frac{1}{2} \left\{ E_{\mu}\left|\frac{dQ_k}{d\mu} - \frac{dQ}{d\mu} \right| \right\}^2 &\le& \lim_{k \rightarrow \infty}  \frac{1}{\alpha(\alpha - 1)} \int \left( \frac{dQ}{dP}\right)^{\alpha}  \cdot e^{(\alpha - 1) \cdot T_k} dP -  \frac{1}{\alpha(\alpha - 1)} \nonumber\\
       &=& \frac{1}{\alpha(\alpha - 1)} \int \left( \frac{dQ}{dP}\right)^{\alpha}  \cdot \left( \frac{dQ}{dP}\right)^{1 - \alpha} dP -  \frac{1}{\alpha(\alpha - 1)}  \nonumber\\
       &=& \frac{1}{\alpha(\alpha - 1)} \int \frac{dQ}{dP} dP -  \frac{1}{\alpha(\alpha - 1)}  \nonumber\\
       &=& 0.  \nonumber
    \end{eqnarray}
    Thus, $Q_k$ converges to $Q$ in total variation.

    The statement (\ref{Eq_proof_of_Proposition_desitination_alpha_div_est_valanced_val_converges_in_dist}), the convergence of $\hat{\mathbf{X}}^{Q}_k$ to $\mathbf{X}^{Q}$ in distribution, is derived from the convergence of $Q_k$ to $Q$ in total variation.

    This completes the proof.
\end{proof}

\begin{corollary}\label{corollary_desitination_alpha_div_est}
    Let $\{T_k\}_{k=1}^{\infty}$ be a sequence of functions in $\mathcal{T}^{\alpha}$
    such that
    \begin{eqnarray}
        \lefteqn{D_{\alpha}(\hat{Q}^{(N)}||\hat{P}^{(N)})} \nonumber \\
        &=& \lim_{k \rightarrow \infty}  \left\{ \frac{1}{\alpha(1-\alpha)} - \frac{1}{\alpha} \frac{1}{N} \sum_{i=1}^N e^{\alpha \cdot T_k(\mathbf{X}^{i}_{\sim Q} )} +  \frac{1}{1- \alpha} \frac{1}{N} \sum_{i=1}^N   e^{(\alpha - 1) \cdot T_k(\mathbf{X}^{i}_{\sim P})} \right\}.
    \end{eqnarray}
    Subsequently, let $\{\hat{\mathbf{X}}_{\mathbb{Q}}^{(N)}(k) \}_{k=1}^{\infty}$ be a sequence of variables on $\mathbb{R}^d$ defiened as follows:
    \begin{equation}
        \hat{\mathbf{X}}_{\mathbb{Q}}^{(N)}(k) = e^{-T_{k}} \cdot \mathbf{X}_{\mathbb{P}}^{(N)}
    \end{equation}
    Then, it holds that, as $k \longrightarrow \infty$,
    \begin{equation}
        \hat{\mathbf{X}}_{\mathbb{Q}}^{(N)}(k) \xrightarrow{\ \  \text{d} \ \ } \hat{\mathbf{X}}_{\mathbb{Q}}^{(N)}.
    \end{equation}
\end{corollary}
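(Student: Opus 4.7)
The plan is to reduce the statement directly to Proposition~\ref{proposition_desitination_alpha_div_est} by specializing it to the pair of empirical measures $(P, Q) := (\hat{P}^{(N)}, \hat{Q}^{(N)})$. Once the hypotheses of that proposition are verified for the given sequence $\{T_k\}$ (after an innocuous constant shift to enforce the normalization required there), the convergence in distribution $\hat{\mathbf{X}}_{\mathbb{Q}}^{(N)}(k) \xrightarrow{\ \ \text{d}\ \ } \hat{\mathbf{X}}_{\mathbb{Q}}^{(N)}$ is exactly~(\ref{Eq_proof_of_Proposition_desitination_alpha_div_est_valanced_val_converges_in_dist}) applied in this setting.

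First I would replace $T_k$ by $\tilde{T}_k := T_k + c_k$, where $c_k := \log \hat{E}_{\hat{P}^{(N)}}[e^{-T_k}]$, so that $E_{\hat{P}^{(N)}}[e^{-\tilde{T}_k}] = 1$. By Proposition~\ref{proposition_loss_T_is_optimal_when_constant_is_zero} the optimal constant shift of any $T$ is precisely the one that yields this normalization, so $\{\tilde{T}_k\}$ remains a maximizing sequence for the variational representation of $D_\alpha(\hat{Q}^{(N)} \| \hat{P}^{(N)})$. Since adding a constant to $T_k$ only rescales the weights $e^{-T_k}$ by a global factor $Z_k$ (with $Z_k \to 1$ as the argument below will show), the weighted empirical distributions induced by $T_k$ and $\tilde{T}_k$ agree in the limit, so it suffices to prove the convergence for the normalized sequence $\tilde{T}_k$.

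Next, the core step is to verify the pointwise hypothesis $\tilde{T}_k \to -\log d\hat{Q}^{(N)}/d\hat{P}^{(N)}$ $\hat{P}^{(N)}$-almost everywhere. Because $\hat{P}^{(N)}$ and $\hat{Q}^{(N)}$ are supported on the finite samples $\{\mathbf{X}^i_{\sim P}\}_{i=1}^N$ and $\{\mathbf{X}^i_{\sim Q}\}_{i=1}^N$, the empirical loss $\hat{L}_\alpha^{(N)}(\hat{Q}^{(N)}, \hat{P}^{(N)}; T)$ depends on $T$ only through the finite vector $\mathbf{t} := \bigl(T(\mathbf{X}^i_{\sim P}), T(\mathbf{X}^i_{\sim Q})\bigr)_{i=1}^N$. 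By Corollary~\ref{corollary_loss_is_mu_storongly_convex}, this finite-dimensional function is $\lambda$-strongly convex on every compact box around its unique minimizer $\mathbf{t}_*$, whose coordinates are precisely the log density ratios supplied by Proposition~\ref{proposition_loss_optimal_T_is_log_density_ratio}. Strong convexity then converts convergence of the loss values, which is the hypothesis of the corollary, into coordinatewise convergence $\mathbf{t}_k \to \mathbf{t}_*$, i.e.\ exactly the required pointwise convergence on the support.

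The main obstacle will be showing that the iterates $\mathbf{t}_k$ are eventually trapped in such a compact box so that Corollary~\ref{corollary_loss_is_mu_storongly_convex} applies uniformly. For $0 < \alpha < 1$ the two exponential terms $e^{\alpha T}$ and $e^{(\alpha - 1) T}$ in the loss blow up in opposite directions, so if any coordinate of $\mathbf{t}_k$ drifted to $\pm \infty$ the loss would diverge, contradicting convergence to the finite limit $\frac{1}{\alpha(1-\alpha)} - D_\alpha(\hat{Q}^{(N)} \| \hat{P}^{(N)})$. With $\{\mathbf{t}_k\}$ confined to a compact box, strong convexity gives the quantitative bound $\|\mathbf{t}_k - \mathbf{t}_*\|^2 \le (2/\lambda)\bigl\{\hat{L}_\alpha^{(N)}(\hat{Q}^{(N)}, \hat{P}^{(N)}; \tilde{T}_k) - \hat{L}_\alpha^{(N)}(\hat{Q}^{(N)}, \hat{P}^{(N)})\bigr\}$, and the hypothesis drives the right-hand side to zero. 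Invoking Proposition~\ref{proposition_desitination_alpha_div_est} with $\hat{P}^{(N)}$, $\hat{Q}^{(N)}$, and $\tilde{T}_k$ then closes the argument.
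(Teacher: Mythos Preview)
Your approach is correct and follows the same high-level strategy as the paper: reduce to Proposition~\ref{proposition_desitination_alpha_div_est} by specializing to the empirical pair $(\hat P^{(N)},\hat Q^{(N)})$. The paper's own proof is a two-line affair: it introduces the counting measure $\nu$ on the sample points, observes $\hat P^{(N)}\ll\nu$ and $\hat Q^{(N)}\ll\nu$, and simply declares that substituting $\hat P^{(N)},\hat Q^{(N)},\nu$ for $P,Q,\mu$ in Proposition~\ref{proposition_desitination_alpha_div_est} (and its proof) yields the result.

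The difference is that you actually do the work the paper elides. The corollary's hypothesis is convergence of the variational \emph{value}, whereas Proposition~\ref{proposition_desitination_alpha_div_est} requires the stronger hypotheses $E_{P}[e^{-T_k}]=1$ and pointwise convergence $T_k\to -\log dQ/dP$. Your use of Corollary~\ref{corollary_loss_is_mu_storongly_convex} to pass from loss convergence to coordinatewise convergence on the finite support, together with the coercivity argument for $0<\alpha<1$ to trap the iterates in a compact box, is exactly what is needed to bridge that gap; the constant shift via Proposition~\ref{proposition_loss_T_is_optimal_when_constant_is_zero} then restores the normalization. One small simplification: you can apply strong convexity directly to the unnormalized $T_k$ (since by hypothesis its loss already converges to the infimum), deduce pointwise convergence to $T_*$, and from that read off $c_k\to 0$; this avoids the slight circularity you flag with ``as the argument below will show.'' Either ordering works, and your version is more careful than the paper's.
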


\begin{proof}[proof of Corollary \ref{corollary_desitination_alpha_div_est}]
    Let $\nu$ be the countable measure on $\left\{\mathbf{X}_1, \mathbf{X}_2, \ldots, \mathbf{X}_N\right\}$:
    \begin{equation}
        \nu(\mathbf{x}) = \left\{
        \begin{array}{ll}
            1  &  \text{if} \ \   1 \le \exists i \le N \ \text{s.t.} \  \mathbf{X}_i = \mathbf{x}, \\
            0 &\text{otherwise.}
        \end{array}
        \right.
    \end{equation}
    Then, $\hat{P}^{(N)} \ll \nu$ and $\hat{Q}^{(N)} \ll \nu$.

    For Proposition \ref{proposition_desitination_alpha_div_est} and its proof, substituting $\hat{P}^{(N)}$ for $P$, $\hat{Q}^{(N)}$ for $Q$, and $\nu$ for $\mu$,
    we see that the statement of the corollary holds.

    This completes the proof.
\end{proof}

\begin{proposition}\label{proposition_variational_upper_bound_by_alpha_div}
    For $\hat{T} \in \mathcal{T}^{\alpha}$, let $\hat{Q}$ and $\hat{P}$ be two probalities defined as
    \begin{equation}
        d\hat{Q} =  e^{-\hat{T}} \cdot dP \quad \text{and} \quad d\hat{P} = e^{\hat{T}} \cdot dQ, \nonumber
    \end{equation}
    and let $T_* = - \log dQ/dP$.

    Then, it hols that
    \begin{eqnarray}
        \lefteqn{\frac{\alpha}{2} \left\{ E_{\mu}\left|\hat{Q} - Q \right| \right\}^2  + \frac{1- \alpha}{2} \left\{ E_{\mu}\left|\hat{P} - P \right| \right\}^2} \nonumber \\
        &\le&  \frac{1}{\alpha(1-\alpha)}
        - \frac{1}{ \alpha} E_Q\left[ e^{\alpha \cdot(T_* - \hat{T})} \right]
        - \frac{1}{1 - \alpha} E_P\left[ e^{(\alpha - 1) \cdot (T_* - \hat{T})} \right]  \label{Eq_lemma_variational_upper_bound_by_alpha_div_final_state} \\
        &=& L_{\alpha}(Q, P; \hat{T}) -  L_{\alpha}(Q, P;  T_*). \label{Eq_lemma_variational_upper_bound_by_alpha_div_final_state_2}
    \end{eqnarray}
    Here, $L_{\alpha}(Q, P; \cdot)$ in (\ref{Eq_lemma_variational_upper_bound_by_alpha_div_final_state_2}) is defined as (\ref{Eq_lemma_loss_not_biased_lemma_loss_T}) in Lemma \ref{lemma_loss_not_biased_lemma}
\end{proposition}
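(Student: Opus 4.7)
The plan is to recast the right-hand side of~(\ref{Eq_lemma_variational_upper_bound_by_alpha_div_final_state}) as a positive combination of two $\alpha$-divergences between the balanced and target measures, and then apply the Pinsker-type bound already used in the proof of Proposition~\ref{proposition_desitination_alpha_div_est}.

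First, I would use the definitions to rewrite the exponents. From $d\hat{Q} = e^{-\hat{T}}\,dP$ and $e^{-T_*} = dQ/dP$ one gets $e^{T_* - \hat{T}} = d\hat{Q}/dQ$; dually, from $d\hat{P} = e^{\hat{T}}\,dQ$, one has $e^{\hat{T} - T_*} = d\hat{P}/dP$. Substituting these into~(\ref{Eq_lemma_variational_upper_bound_by_alpha_div_final_state}) rewrites the right-hand side as
\begin{equation*}
\frac{1}{\alpha(1-\alpha)} - \frac{1}{\alpha}\,E_Q\!\bigl[(d\hat{Q}/dQ)^{\alpha}\bigr] - \frac{1}{1-\alpha}\,E_P\!\bigl[(d\hat{P}/dP)^{1-\alpha}\bigr].
\end{equation*}

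Next, the change-of-measure identity $E_Q[(d\hat{Q}/dQ)^{\alpha}] = E_{\hat{Q}}[(dQ/d\hat{Q})^{1-\alpha}]$ combined with the definition of $\alpha$-divergence yields $E_Q[(d\hat{Q}/dQ)^{\alpha}] = 1 + \alpha(\alpha-1)\,D_{\alpha}(Q||\hat{Q})$, and the analogous identity gives $E_P[(d\hat{P}/dP)^{1-\alpha}] = 1 + \alpha(\alpha-1)\,D_{\alpha}(\hat{P}||P)$. Since $\tfrac{1}{\alpha(1-\alpha)} - \tfrac{1}{\alpha} - \tfrac{1}{1-\alpha} = 0$, substitution collapses the display to $(1-\alpha)\,D_{\alpha}(Q||\hat{Q}) + \alpha\,D_{\alpha}(\hat{P}||P)$. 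I would then apply the Pinsker-type inequality $\tfrac{1}{2}\{E_\mu|dA/d\mu - dB/d\mu|\}^{2} \le D_{\alpha}(A||B)$ from Corollary~6 of~\cite{gilardoni2010pinsker} separately to the two summands, with $(A,B) = (Q,\hat{Q})$ and $(A,B) = (\hat{P},P)$; each $\alpha$-divergence is thereby lower bounded by the squared $L^{1}$-distance of the corresponding pair, and the required quadratic bound follows.

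The remaining equality (\ref{Eq_lemma_variational_upper_bound_by_alpha_div_final_state})~$=$~(\ref{Eq_lemma_variational_upper_bound_by_alpha_div_final_state_2}) follows from a direct manipulation of $L_{\alpha}(Q,P;\hat{T}) - L_{\alpha}(Q,P;T_*)$ using $e^{\hat{T}} = d\hat{P}/dQ$ and $e^{T_*} = dP/dQ$ to expand each expectation, and then matching the resulting terms against the display above. The chief bookkeeping obstacle is that $\hat{P}$ and $\hat{Q}$ enter the two expressions in different positions relative to $P$ and $Q$, so closing the identity requires invoking the product relation $(d\hat{P}/d\mu)(d\hat{Q}/d\mu) = (dP/d\mu)(dQ/d\mu)$ implicit in the paired definitions of $\hat{Q}$ and $\hat{P}$.
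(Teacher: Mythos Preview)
Your proposal is essentially the same as the paper's proof: both recast the right-hand side as a positive combination of two $\alpha$-divergences and then invoke the Gilardoni Pinsker-type bound, and both handle the equality~(\ref{Eq_lemma_variational_upper_bound_by_alpha_div_final_state_2}) by a direct change-of-measure computation. The only cosmetic difference is that the paper writes the first divergence as $(1-\alpha)\,D_{1-\alpha}(\hat{Q}\,\|\,Q)$ whereas you write it as $(1-\alpha)\,D_{\alpha}(Q\,\|\,\hat{Q})$; these coincide by the symmetry $D_{\alpha}(A\,\|\,B)=D_{1-\alpha}(B\,\|\,A)$, so the two arguments are identical in substance.
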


\begin{proof}[proof of Proposition \ref{proposition_variational_upper_bound_by_alpha_div}]
    First, we see (\ref{Eq_lemma_variational_upper_bound_by_alpha_div_final_state}).
    Note that, it holds that
    \begin{equation}
        \frac{\hat{dQ}}{dQ} =  e^{-\hat{T}} \cdot \frac{dP}{dQ}     \qquad \text{and} \qquad \frac{d\hat{P}}{dP} = e^{\hat{T}} \cdot \frac{dQ}{dP}.
    \end{equation}
    By using (\ref{Eq_proof_of_Proposition_desitination_alpha_div_est_total_variation_and_alpha_div}), we have
    \begin{equation}
        \frac{\alpha}{2} \left\{ E_{\mu}\left|\hat{Q} - Q \right| \right\}^2  + \frac{1- \alpha}{2} \left\{ E_{\mu}\left|\hat{P} - P \right| \right\}^2
        \le  (1 - \alpha) \cdot D_{1 - \alpha}(\hat{Q}|| Q) +  \alpha \cdot D_{\alpha}(\hat{P}|| P). \nonumber
    \end{equation}
    Thus, we obtain
    \begin{eqnarray}
        \lefteqn{\frac{\alpha}{2} \left\{ E_{\mu}\left|\hat{Q} - Q \right| \right\}^2  + \frac{1- \alpha}{2} \left\{ E_{\mu}\left|\hat{P} - P \right| \right\}^2} \nonumber \\
        &\le&  (1 - \alpha) \cdot D_{1 - \alpha}(\hat{Q}|| Q) +  \alpha \cdot D_{\alpha}(\hat{P}|| P) \nonumber\\
        &=& \frac{1 - \alpha}{\alpha(\alpha - 1)} \left\{  \int \left( \frac{d\hat{Q}}{dQ} \right)^{ \alpha}   dQ - 1\right\} \nonumber\\
        &&  \qquad + \ \frac{\alpha}{\alpha(\alpha - 1)} \left\{  \int \left( \frac{d\hat{P}}{dP} \right)^{1 - \alpha}   dP - 1\right\} \nonumber\\
        &=& -  \ \frac{1}{\alpha} \left\{  \int  e^{- \alpha \cdot \hat{T}}  \left( \frac{dP}{dQ} \right)^{\alpha} dQ - 1\right\} \nonumber\\
        &&  \qquad - \  \frac{1}{1 - \alpha} \left\{  \int e^{(1-\alpha) \cdot  \hat{T}} \left( \frac{dQ}{dP} \right)^{1 - \alpha}  dP - 1\right\} \nonumber\\
        &=& - \ \frac{1}{ \alpha} \left\{  \int  e^{- \alpha \cdot \hat{T}} e^{\alpha\cdot T_* }  dQ - 1\right\} \nonumber\\
        &&  \qquad -  \ \frac{1}{1 - \alpha} \left\{  \int e^{ - (\alpha - 1) \cdot  \hat{T}} e^{(\alpha - 1) \cdot T_* }  dP - 1\right\}  \nonumber\\
        &=& \frac{1}{\alpha(1 - \alpha)} - \frac{1}{\alpha} \int  e^{\alpha \cdot (T_* - \hat{T})}  dQ -   \frac{1}{1 - \alpha}  \int e^{(\alpha - 1) \cdot  (T_* - \hat{T})}  dP.
    \end{eqnarray}
    Here, we see (\ref{Eq_lemma_variational_upper_bound_by_alpha_div_final_state}).

    To obtain (\ref{Eq_lemma_variational_upper_bound_by_alpha_div_final_state_2}), we have
    \begin{eqnarray}
        \lefteqn{\frac{1}{\alpha(1 - \alpha)} - \frac{1}{\alpha} \int  e^{\alpha \cdot (T_* - \hat{T})}  dQ -   \frac{1}{1 - \alpha}  \int e^{(\alpha - 1) \cdot  (T_* - \hat{T})}  dP}  \nonumber\\
        &=& \frac{1}{\alpha} +  \frac{1}{1 - \alpha}   - \frac{1}{\alpha} \int  e^{\alpha \cdot (T_* - \hat{T})}  dQ -   \frac{1}{1 - \alpha}  \int e^{(\alpha - 1) \cdot  (T_* - \hat{T})}  dP \nonumber\\
        &=&   \int \left\{   \frac{1}{\alpha} \cdot \frac{dQ}{d\mu} - \frac{1}{\alpha} \cdot e^{\alpha \cdot (T_* - \hat{T})}   \frac{dQ}{d\mu}  \right\}  d\mu   \nonumber\\
        &&  \qquad \qquad +  \int  \left\{\frac{1}{1 - \alpha} \cdot \frac{dP}{d\mu}
        - \frac{1}{1 - \alpha} \cdot e^{(\alpha - 1) \cdot  (T_* - \hat{T})} \frac{dP}{d\mu}  \right\}  d\mu. \label{proof_corollary_desitination_alpha_div_est_to_change_measure}
    \end{eqnarray}
    By replacing the measures $\mu$ of the two integrals in (\ref{proof_corollary_desitination_alpha_div_est_to_change_measure}) with
    \begin{equation}
        \nu  =   e^{ - \alpha \cdot \hat{T}} d\mu    \quad \text{and} \quad \tau  =   e^{ - (\alpha-1) \cdot \hat{T}} d\mu,
    \end{equation}
    respectively, we obtain
    \begin{eqnarray}
        && \int \left\{   \frac{1}{\alpha} \cdot \frac{dQ}{d\nu } - \frac{1}{\alpha} \cdot e^{\alpha \cdot (T_* - \hat{T})}   \frac{dQ}{d\nu}  \right\} e^{\alpha \cdot \hat{T}}  \cdot d\nu  \nonumber\\
        && \qquad \qquad +  \int  \left\{\frac{1}{1 - \alpha} \cdot \frac{dP}{d\tau}
        - \frac{1}{1 - \alpha} \cdot e^{(\alpha - 1) \cdot  (T_* - \hat{T})} \frac{dP}{d\tau}  \right\}  e^{ (\alpha-1) \cdot \hat{T}}\cdot  d\tau \nonumber\\
        &=& \int \left\{   \frac{1}{\alpha} \cdot e^{\alpha \cdot \hat{T}} \frac{dQ}{d\nu } - \frac{1}{\alpha} \cdot e^{\alpha \cdot T_*}   \frac{dQ}{d\nu}  \right\} d\nu  \nonumber\\
        && \qquad \qquad +  \int  \left\{\frac{1}{1 - \alpha}  \cdot e^{ (\alpha-1) \cdot  \hat{T}} \frac{dP}{d\tau}
        - \frac{1}{1 - \alpha} \cdot e^{(\alpha - 1) \cdot T_* } \frac{dP}{d\tau}  \right\}  d\tau \nonumber\\
        &=& \left\{  \frac{1}{\alpha} \int  e^{ \alpha \cdot \hat{T}} dQ  + \frac{1}{1 - \alpha} \int  e^{(\alpha - 1) \cdot \hat{T} } dP \right\}   \nonumber\\
        &&  \qquad \qquad \qquad  - \  \left\{   \frac{1}{\alpha} \int  e^{\alpha \cdot T_*} dQ + \frac{1}{1 - \alpha} \int e^{(\alpha - 1) \cdot T_*} dP  \right\} \nonumber\\
        &=& L_{\alpha}(Q, P; \hat{T}) -  L_{\alpha}(Q, P;  T_*).
    \end{eqnarray}
    This completes the proof.
\end{proof}

\subsection{Proofs for Section \ref{Section_Method}}
In this Section, we present two theorems for the proposed method in Section \ref{Section_Method}.
Before presenting the first theorem, we briefly review Pearl's $do$-calculus (\citeauthor{pearl1995causal}(\citeyear{pearl1995causal})) used in the proof of the first theorem.

\begin{theorem}[$do$-calculus, \citeauthor{pearl1995causal}(\citeyear{pearl1995causal})] \label{Theorem_do_calus}
    Causal effects can be transformed  by following rules R1-R3:
    \begin{enumerate}
        \item[R1. ]  $P(\mathbf{Y}|do(\mathbf{X}),\mathbf{Z},\mathbf{W})
        = P(\mathbf{Y}|do(\mathbf{X}), \mathbf{W})$,
        \ if \
        $(\mathbf{Y} \indep \mathbf{Z}|\mathbf{X},\mathbf{W})_{
            \overline{G}(\mathbf{X})}$.
        \item[R2. ] $P(\mathbf{Y}|do(\mathbf{X}),do(\mathbf{Z}),\mathbf{W})
        = P(\mathbf{Y}|do(\mathbf{X}), \mathbf{Z},\mathbf{W})$,
        \ if \  $(\mathbf{Y} \indep \mathbf{Z}|\mathbf{X},\mathbf{W})_{
            \overline{\underline{G}}(\mathbf{X}, \mathbf{Z})}$.
        \item[R3. ]  $P(\mathbf{Y}|do(\mathbf{X}),do(\mathbf{Z}),\mathbf{W})
        = P(\mathbf{Y}|do(\mathbf{X}), \mathbf{W})$,
        \ if \ $(\mathbf{Y} \indep \mathbf{Z}|\mathbf{X},\mathbf{W})_{
            \overline{G}(\mathbf{X},\mathbf{Z}^*)}$, \ where \  $\mathbf{Z}^* = \mathbf{Z} \setminus An(\mathbf{W})_{\overline{G}(\mathbf{X})}$.
    \end{enumerate}

    Here, $\overline{G}(\mathbf{A})$ denotes a graph obtained from $G$ by deleting
    all arrows emerging from variables to $\mathbf{A}$, and
    $\underline{\overline{G}}(\mathbf{A}, \mathbf{B})$ denotes a graph obtained from $G$
    by deleting both of all arrows emerging from any variables to $\mathbf{A}$
    and all arrows emerging from $\mathbf{B}$ to any variables, and
    $(\mathbf{A} \indep \mathbf{B})_G$ represents that there is no path between $\mathbf{A}$ and  $\mathbf{B}$ in $G$.
\end{theorem}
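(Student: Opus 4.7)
The plan is to derive all three rules from a single semantic fact: after an intervention $do(\mathbf{X}=\mathbf{x})$, the post-interventional joint $P(\mathbf{v}\setminus\mathbf{x}\mid do(\mathbf{x}))$ is exactly the Bayesian-network distribution over the mutilated graph $\overline{G}(\mathbf{X})$, obtained by deleting all arrows into $\mathbf{X}$ and fixing $\mathbf{X}=\mathbf{x}$ in the remaining conditional factors. Under the semi-Markovian assumption, unobserved confounders $\mathbf{U}$ enter as bi-directed edges, and the same truncation applies to the latent-variable DAG so that all d-separation statements are evaluated in the usual way. From this truncated factorization, the soundness of d-separation (standard Markov result for DAGs/ADMGs) gives: for any probability measure induced by a $do$-operation, a d-separation in the mutilated graph implies the corresponding conditional independence in that measure.

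For R1, I would observe that $P(\mathbf{Y}\mid do(\mathbf{X}),\mathbf{Z},\mathbf{W})$ is a conditional of the Markovian distribution on $\overline{G}(\mathbf{X})$. The hypothesis $(\mathbf{Y}\indep \mathbf{Z}\mid \mathbf{X},\mathbf{W})_{\overline{G}(\mathbf{X})}$ then translates directly into $\mathbf{Y}\indep \mathbf{Z}\mid \mathbf{X},\mathbf{W}$ under $P(\cdot\mid do(\mathbf{X}))$, which is precisely the claim. For R2, I would compare the two mutilated graphs: $\overline{G}(\mathbf{X}\cup\mathbf{Z})$ (used to define $do(\mathbf{X}),do(\mathbf{Z})$) and $\overline{G}(\mathbf{X})$ (used to define $do(\mathbf{X})$ with $\mathbf{Z}$ observed). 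The key step is to show that under $(\mathbf{Y}\indep \mathbf{Z}\mid \mathbf{X},\mathbf{W})_{\overline{\underline{G}}(\mathbf{X},\mathbf{Z})}$—i.e., every remaining back-door path from $\mathbf{Z}$ to $\mathbf{Y}$ (through the parents of $\mathbf{Z}$) is blocked by $\mathbf{X}\cup\mathbf{W}$—the two conditional factorizations coincide. Concretely, write both sides via truncated factorization, isolate the $P(z\mid pa_z)$ factors, and use the independence to show they can be absorbed into or released from the conditioning on $\mathbf{Z}=\mathbf{z}$ without changing $P(\mathbf{Y}\mid\cdot)$.

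For R3, I would argue by marginalization in the $do(\mathbf{X})$-mutilated world. Partition $\mathbf{Z}$ into $\mathbf{Z}\cap An(\mathbf{W})_{\overline{G}(\mathbf{X})}$ and $\mathbf{Z}^*$. For variables in the former, intervening has no effect on $\mathbf{Y}\mid \mathbf{W}$ because they are (by construction) d-separated from $\mathbf{Y}$ given $\mathbf{X},\mathbf{W}$ in $\overline{G}(\mathbf{X},\mathbf{Z}^*)$, while for $\mathbf{Z}^*$, the variables are non-ancestors of $\mathbf{W}$ in the mutilated graph, so their intervention-induced truncation leaves the marginal $P(\mathbf{Y},\mathbf{W}\mid do(\mathbf{X}))$ unchanged (the truncated factors involving $\mathbf{Z}^*$ integrate out). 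Combining gives $P(\mathbf{Y}\mid do(\mathbf{X}),do(\mathbf{Z}),\mathbf{W}) = P(\mathbf{Y}\mid do(\mathbf{X}),\mathbf{W})$.

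The main obstacle is R2, where the delicate point is that the graph $\overline{\underline{G}}(\mathbf{X},\mathbf{Z})$ involves \emph{removing} outgoing arrows from $\mathbf{Z}$ as well as incoming arrows to $\mathbf{X}$, whereas the distributions being compared live on different mutilated graphs. I expect the cleanest route is to rewrite $P(\mathbf{Y}\mid do(\mathbf{X}),\mathbf{Z},\mathbf{W})$ by Bayes' rule as a ratio of $do(\mathbf{X})$-truncated joints, rewrite $P(\mathbf{Y}\mid do(\mathbf{X}),do(\mathbf{Z}),\mathbf{W})$ similarly via the $do(\mathbf{X}\cup\mathbf{Z})$-truncation, and then use the d-separation hypothesis in $\overline{\underline{G}}(\mathbf{X},\mathbf{Z})$ to show the factors $\prod_{Z_i\in\mathbf{Z}}P(z_i\mid pa_{z_i})$ cancel between numerator and denominator. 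Rule R3 needs extra care in the semi-Markovian setting to handle bi-directed confounding arcs, but the ancestor condition is precisely designed to rule out the problematic active paths.
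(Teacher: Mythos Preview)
The paper does not prove this theorem: it is stated as background, attributed to \citeauthor{pearl1995causal}~(\citeyear{pearl1995causal}), and then invoked (only R1, in fact) inside the proof of Theorem~\ref{theorem_intervention_eq}. So there is no ``paper's own proof'' to compare against.

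Your sketch follows the standard route from Pearl's original derivation: identify the post-interventional distribution with the truncated factorization over the mutilated graph, and read each rule off as a consequence of d-separation soundness in the appropriate graph. The R1 argument is complete as stated. For R2 you have correctly isolated the delicate point (the outgoing-edge deletion in $\overline{\underline{G}}(\mathbf{X},\mathbf{Z})$ versus the two different incoming-edge mutilations defining the two sides), and the cancellation-of-$P(z_i\mid pa_{z_i})$ strategy is the right one; it goes through once you note that the hypothesis blocks all paths from $\mathbf{Z}$ to $\mathbf{Y}$ that enter $\mathbf{Z}$ through its parents, which is exactly what prevents those factors from mattering for $P(\mathbf{Y}\mid\cdot)$. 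For R3 your partition into $\mathbf{Z}\cap An(\mathbf{W})_{\overline{G}(\mathbf{X})}$ and $\mathbf{Z}^*$ is the correct device; the one place to be careful is the claim that ``the truncated factors involving $\mathbf{Z}^*$ integrate out''---this needs the d-separation hypothesis in $\overline{G}(\mathbf{X},\mathbf{Z}^*)$, not merely non-ancestorship, since in the semi-Markovian case a non-ancestor of $\mathbf{W}$ can still be connected to $\mathbf{Y}$ via a bidirected arc. You flag this at the end, so the plan is sound; just make sure the actual argument invokes the hypothesis rather than only the ancestor condition.
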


We now provide the first theorem, which presents a sufficient condition for explanatory variables to be available for estimating causal effects.
\begin{theorem}\label{theorem_intervention_eq}
    Let $G$ be a DAG for $\mathbf{V}$ and  $\mathbf{U}$.
    For disjoint sets $\mathbf{X}, \mathbf{Y}, \mathbf{Z} \subset \mathbf{V}$,
    suppose that $P(\mathbf{Y}|do(\mathbf{X}),\mathbf{Z})$ is identifiable in $G$,
    and $\mathbf{X} \subset An(\mathbf{Y})_G$. Let $\mathbf{Z}_{De} = \mathbf{Z} \cap De(\mathbf{Y})_G$.
    Then,
    \begin{eqnarray}
        \lefteqn{P(\mathbf{Y} |do(\mathbf{X}),\mathbf{Z})} \quad \nonumber\\
        &=&
        \begin{cases}
            P(\mathbf{Y} |\mathbf{X},\mathbf{Z}) & \text{if  $\mathbf{Z}_{De}=\phi$}, \\
            \frac{P(\mathbf{Y} |\mathbf{X}, \mathbf{Z}\setminus \mathbf{Z}_{De})
                P(\mathbf{Z}_{De}|\mathbf{Y},\mathbf{X},\mathbf{Z}\setminus\mathbf{Z}_{De})}{P(\mathbf{Z}_{De}|\mathbf{X}, \mathbf{Z}\setminus\mathbf{Z}_{De})} & \text{if  $\mathbf{Z}_{De}\neq\phi$}.
        \end{cases} \label{eq_prob_formula_intervent_first_lemma}
    \end{eqnarray}
\end{theorem}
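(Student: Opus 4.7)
The plan is to apply Pearl's do-calculus (Theorem \ref{Theorem_do_calus}) to rewrite the interventional distribution $P(\mathbf{Y}|do(\mathbf{X}),\mathbf{Z})$ as an observational quantity, handling the non-descendant case first and then reducing the descendant case to it via the chain rule.

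For the case $\mathbf{Z}_{De}=\phi$, the goal is to replace $do(\mathbf{X})$ by $\mathbf{X}$ using Rule~2, which requires the d-separation $(\mathbf{Y}\indep \mathbf{X}\mid\mathbf{Z})_{\underline{G}(\mathbf{X})}$. In the mutilated graph $\underline{G}(\mathbf{X})$ every outgoing arrow from $\mathbf{X}$ is deleted, so all remaining paths from $\mathbf{X}$ to $\mathbf{Y}$ are back-door paths. Because $\mathbf{Z}$ contains no descendants of $\mathbf{Y}$, conditioning on $\mathbf{Z}$ cannot open collider segments that route through $De(\mathbf{Y})_G$; combined with the identifiability assumption (which ensures $\mathbf{Z}$ handles the relevant back-door paths), the required d-separation follows.

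For the case $\mathbf{Z}_{De}\neq\phi$, I would apply the chain rule to factor
\begin{equation*}
P(\mathbf{Y}\mid do(\mathbf{X}),\mathbf{Z}) = \frac{P(\mathbf{Y}\mid do(\mathbf{X}),\mathbf{Z}\setminus\mathbf{Z}_{De})\,P(\mathbf{Z}_{De}\mid \mathbf{Y},do(\mathbf{X}),\mathbf{Z}\setminus\mathbf{Z}_{De})}{P(\mathbf{Z}_{De}\mid do(\mathbf{X}),\mathbf{Z}\setminus\mathbf{Z}_{De})}.
\end{equation*}
The first numerator factor is exactly the Case~1 expression with conditioning set $\mathbf{Z}\setminus\mathbf{Z}_{De}$, which by construction contains no descendants of $\mathbf{Y}$, so it equals $P(\mathbf{Y}\mid\mathbf{X},\mathbf{Z}\setminus\mathbf{Z}_{De})$. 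For the second numerator factor, I would apply Rule~2 once more: since $\mathbf{Z}_{De}\subset De(\mathbf{Y})_G$ and $\mathbf{X}\subset An(\mathbf{Y})_G$, every directed path from $\mathbf{X}$ to $\mathbf{Z}_{De}$ passes through $\mathbf{Y}$, so conditioning on $\mathbf{Y}$ blocks the front-door effect and $do(\mathbf{X})$ may be replaced by $\mathbf{X}$. An analogous argument, now without conditioning on $\mathbf{Y}$, would be used for the denominator.

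The main obstacle will be turning ``identifiability'' into the precise d-separation statements that Rule~2 demands. In particular, for the denominator $P(\mathbf{Z}_{De}\mid do(\mathbf{X}),\mathbf{Z}\setminus\mathbf{Z}_{De})$ the conditioning set does not include $\mathbf{Y}$, so the front-door-blocking argument used for the numerator does not apply directly; one must rely on the ancestor condition $\mathbf{X}\subset An(\mathbf{Y})_G$ together with the back-door-blocking power of $\mathbf{Z}\setminus\mathbf{Z}_{De}$ (inherited from identifiability) to conclude the required d-separation in $\underline{G}(\mathbf{X})$. This structural bookkeeping, rather than any calculation, is the delicate part of the argument.
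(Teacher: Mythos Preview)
Your approach differs substantively from the paper's. The paper never applies Rule~2 to swap $do(\mathbf{X})$ for $\mathbf{X}$. Instead it (i) uses Rule~1 to discard any $Z_i\in\mathbf{Z}$ that is neither an ancestor nor a descendant of $\mathbf{Y}$, (ii) partitions the remaining $\mathbf{Z}$ into three blocks $\mathbf{Z}_1,\mathbf{Z}_2,\mathbf{Z}_3$ according to their ancestor/descendant position relative to $\mathbf{X}$ and $\mathbf{Y}$, (iii) writes the interventional joint $P(\mathbf{Y},\mathbf{Z}\mid do(\mathbf{X}))$ directly from the truncated-factorization Definition~\ref{Def_do_calculation}, and then (iv) marginalizes out $\mathbf{Y}$ to obtain $P(\mathbf{Z}\mid do(\mathbf{X}))$ and divides via~\eqref{def_causal_effect_on_condi_eq}. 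No back-door d-separation condition is ever verified.

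Your route has a genuine gap precisely where you label it ``delicate.'' The claim that ``the identifiability assumption \dots ensures $\mathbf{Z}$ handles the relevant back-door paths'' is not a theorem: identifiability of $P(\mathbf{Y}\mid do(\mathbf{X}),\mathbf{Z})$ does \emph{not} in general imply the d-separation $(\mathbf{Y}\indep\mathbf{X}\mid\mathbf{Z})_{\underline{G}(\mathbf{X})}$ that Rule~2 requires. Causal effects can be identifiable through mechanisms other than back-door adjustment (front-door, more elaborate do-calculus derivations, etc.), and in such graphs the back-door paths from $\mathbf{X}$ to $\mathbf{Y}$ need not be blocked by $\mathbf{Z}$ at all. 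So your Case~1 step does not follow from the stated hypotheses, and the same unproven ``inheritance'' of back-door blocking is what you lean on for the denominator in Case~2. The paper's factorize-then-divide route sidesteps this entirely: by computing the full interventional joint first and only then conditioning, it never needs to certify any back-door criterion, and all the structural bookkeeping is done on the partition $\mathbf{Z}_1,\mathbf{Z}_2,\mathbf{Z}_3$ and the path constraints P1--P3 among them, not on Rule~2 d-separation statements.
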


\begin{proof}[proof of Theorem \ref{theorem_intervention_eq}]
    We note that each $Z_i \in \mathbf{Z}$ can be assumed to be that either $Z_i \in An(\mathbf{Y})_{G}$ or $Z_i \in De(\mathbf{Y})_{G}$.
    To see this, suppose that there exist some $Z_i \in \mathbf{Z}$ such that $Z_i \notin An(\mathbf{Y})_{G}$ and $Z_i \notin De(\mathbf{Y})_{G}$.
    Let $\mathbf{V}'=\mathbf{V}\setminus(\mathbf{Y}\cup\mathbf{X}\cup\mathbf{Z})$.
    Since $(\mathbf{Y} \indep Z_i|\mathbf{X},\mathbf{V}', \mathbf{U})_{\overline{G}(\mathbf{X})}$ holds for the $Z_i$,
    by applying $do$-calculus R1 in Theorem \ref{Theorem_do_calus}, we have
    \begin{equation}
        P(\mathbf{Y}|do(\mathbf{X}),\mathbf{Z} \setminus \{Z_i\}, Z_i, \mathbf{V}', \mathbf{U})
        = P(\mathbf{Y}|do(\mathbf{X}), \mathbf{Z} \setminus \{Z_i\}, \mathbf{V}', \mathbf{U}). \label{proof_theorem_intervention_eq_Eq_obtained_from_do_cal_R1}
    \end{equation}
    By  marginalizing both sides of (\ref{proof_theorem_intervention_eq_Eq_obtained_from_do_cal_R1})
    for $\mathcal{X}_{\mathbf{V}' \cup \mathbf{U}}$,
    we obtain
    \begin{equation}
        P(\mathbf{Y}|do(\mathbf{X}),\mathbf{Z} \setminus \{Z_i\}, Z_i)
        = P(\mathbf{Y}|do(\mathbf{X}), \mathbf{Z} \setminus \{Z_i\}).
    \end{equation}
    Thus, after repeating the above calculation, $P(\mathbf{Y}|do(\mathbf{X}),\mathbf{Z})$
    finally includes only $Z_i \in \mathbf{Z}$
    such that $Z_i \in An(\mathbf{Y})_{G}$ or $Z_i \in De(\mathbf{Y})_{G}$.

    Therefore, in this proof, we assume that
    \begin{equation}
        \mathbf{Z}=An(\mathbf{Y})_{G}\cup De(\mathbf{Y})_{G}. \label{proof_lemma_do_condi_Eq_assume}
    \end{equation}

    Next, we note that $\mathbf{Z}\cap An(\mathbf{X})_{G}\cap De(\mathbf{Y})_{G}=\phi$.
    To see this, suppose $\mathbf{Z}\cap An(\mathbf{X})_{G}\cap De(\mathbf{Y})_{G} \neq \phi$.
    Let $\dotarrow{\mathbf{V}'}$ denote a path through only variables of $\mathbf{V}'$.
    Then there exists a directed path such that $\mathbf{Y} \dotarrow{\mathbf{V}'} \mathbf{Z} \dotarrow{\mathbf{V}'}  \mathbf{X}$,
    which contradicts the assumption $\mathbf{X} \subset An(\mathbf{Y})_G$.

    From the above discussion, $\mathbf{Z}$ can be divided into the three disjoint sets as follows:
    \begin{eqnarray}
        \mathbf{Z} &=& \mathbf{Z}_1\cup\mathbf{Z}_2\cup\mathbf{Z}_3,  \nonumber \\
        \mathbf{Z}_1 &=& (\mathbf{Z} \setminus De(\mathbf{X})_G) \cap An(\mathbf{Y})_G, \nonumber \\
        \mathbf{Z}_2 &=& \mathbf{Z} \cap De(\mathbf{X})_G \cap An(\mathbf{Y})_G, \nonumber \\
        \mathbf{Z}_3 &=& (\mathbf{Z} \setminus An(\mathbf{X})_{G}) \cap De(\mathbf{Y})_G. \nonumber
    \end{eqnarray}
    Then, each of the paths between $\mathbf{Z}_1$, $\mathbf{Z}_2$ and $\mathbf{Z}_3$ is one
    of the following P1, P2 and P3:
    \begin{enumerate}
        \item[P1. ] $\mathbf{Z}_1 \dotarrow{\mathbf{V}'} \mathbf{Z}_2$,
        \item[P2. ] $\mathbf{Z}_2 \dotarrow{\mathbf{V}'} \mathbf{Z}_3$,
        \item[P3. ] $\mathbf{Z}_1\dotarrow{\mathbf{V}'} \mathbf{Z}_3$.
    \end{enumerate}

    In fact, if there exists a directed path in the opposite direction of P1,
    that is	$\mathbf{Z}_2 \dotarrow{\mathbf{V}'} \mathbf{Z}_1$,
    then there exists a path such that
    $X_i \dotarrow{\mathbf{V}'}  \mathbf{Z}_2 \dotarrow{\mathbf{V}'}  \mathbf{Z}_1$.
    This contradicts the assumption $\mathbf{Z}_1 \subset \mathbf{Z} \setminus De(\mathbf{X})_G$.
    Similarly, if there exists a directed path in the opposite direction of P2,
    that is	$\mathbf{Z}_3 \dotarrow{\mathbf{V}'} \mathbf{Z}_2$,
    then there exists a path such that
    $Y_i \dotarrow{\mathbf{V}'}  \mathbf{Z}_3 \dotarrow{\mathbf{V}'}  \mathbf{Z}_2$, which
    contradicts the assumption $\mathbf{Z}_2 \subset An(\mathbf{Y})_G$.
    In addition, if there exists a directed path in the opposite direction of P3,
    that is	$\mathbf{Z}_3 \dotarrow{\mathbf{V}'} \mathbf{Z}_1$, then there exists a path such that
    $Y_i \dotarrow{\mathbf{V}'}  \mathbf{Z}_3 \dotarrow{\mathbf{V}'}  \mathbf{Z}_1$, which
    contradicts the assumption $\mathbf{Z}_1 \subset An(\mathbf{Y})_G$.
    Therefore, all paths expect P1, P2 and P3 are denied.

    Hence, by marginalizing $P(\mathbf{V})$ for $\mathcal{X}_{\mathbf{V}'}$, we obtain
    \begin{eqnarray}
        P(\mathbf{Y}, \mathbf{X}, \mathbf{Z}) &=&  \sum_{\mathcal{X}_{\mathbf{V}'}} P(\mathbf{V}) \nonumber\\
        &=&  P(\mathbf{Y}|\mathbf{X}, \mathbf{Z}_1, \mathbf{Z}_2)
        \cdot P(\mathbf{X}|\mathbf{Z}_1) \cdot P(\mathbf{Z}_1) \nonumber\\
        && \times P(\mathbf{Z}_2|\mathbf{X}, \mathbf{Z}_1)
        \cdot P(\mathbf{Z}_3|\mathbf{Y}, \mathbf{X},\mathbf{Z}_1 ,\mathbf{Z}_2). \nonumber
    \end{eqnarray}
    In additon, from (\ref{def_causal_effect_no_condi_eq}), we have
    \begin{eqnarray}
        P(\mathbf{Y}, \mathbf{Z}|do(\mathbf{X})) &=& P(\mathbf{Y}|\mathbf{X}, \mathbf{Z}_1, \mathbf{Z}_2)
        \cdot  P(\mathbf{Z}_1) \cdot P(\mathbf{Z}_2|\mathbf{X}, \mathbf{Z}_1) \nonumber\\
        && \times P(\mathbf{Z}_3|\mathbf{Y}, \mathbf{X},\mathbf{Z}_1 ,\mathbf{Z}_2). \label{eq_do_x_p_y_z}
    \end{eqnarray}
    In the case that $\mathbf{Z}_3 = \phi$, by marginalizing out $\mathbf{Y}$ of (\ref{eq_do_x_p_y_z}), we have
    \begin{eqnarray}
        P(\mathbf{Z}|do(\mathbf{X})) &=& \sum_{\mathbf{y} \in \mathcal{X}_{\mathbf{Y}}}
        P(\mathbf{Y}=\mathbf{y}, \mathbf{Z}|do(\mathbf{X})) \nonumber \\
        &=& \sum_{\mathbf{y} \in \mathcal{X}_{\mathbf{Y}}} P(\mathbf{Y}=\mathbf{y}|\mathbf{X}, \mathbf{Z}_1, \mathbf{Z}_2) \cdot  P(\mathbf{Z}_1) \cdot P(\mathbf{Z}_2|\mathbf{X}, \mathbf{Z}_1) \nonumber\\
        &=& P(\mathbf{Z}_1) \cdot P(\mathbf{Z}_2|\mathbf{X}, \mathbf{Z}_1)
        \sum_{\mathbf{y} \in \mathcal{X}_{\mathbf{Y}}} P(\mathbf{Y}=\mathbf{y}|\mathbf{X}, \mathbf{Z}_1, \mathbf{Z}_2) \nonumber\\
        &=& P(\mathbf{Z}_1) \cdot P(\mathbf{Z}_2|\mathbf{X}, \mathbf{Z}_1). \nonumber
    \end{eqnarray}
    On the other hand, in the case that $\mathbf{Z}_3 \neq \phi$, we obtain
    \begin{eqnarray}
        P(\mathbf{Z}|do(\mathbf{X})) &=& \sum_{\mathbf{y} \in \mathcal{X}_{\mathbf{Y}}}
        P(\mathbf{Y}=\mathbf{y}, \mathbf{Z}|do(\mathbf{X})) \nonumber\\
        &=& \sum_{\mathbf{y} \in \mathcal{X}_{\mathbf{Y}}} P(\mathbf{Y}=\mathbf{y}|\mathbf{X}, \mathbf{Z}_1, \mathbf{Z}_2) \cdot  P(\mathbf{Z}_1) \nonumber\\
        && \times P(\mathbf{Z}_2|\mathbf{X}, \mathbf{Z}_1) \cdot P(\mathbf{Z}_3|\mathbf{Y}=\mathbf{y}, \mathbf{X},\mathbf{Z}_1 ,\mathbf{Z}_2) \nonumber\\
        &=& P(\mathbf{Z}_1) \cdot P(\mathbf{Z}_2|\mathbf{X}, \mathbf{Z}_1) \nonumber\\
        && \qquad \times \sum_{\mathbf{y} \in \mathcal{X}_{\mathbf{Y}}}  P(\mathbf{Z}_3|\mathbf{Y}=\mathbf{y}, \mathbf{X},\mathbf{Z}_1 ,\mathbf{Z}_2) \cdot P(\mathbf{Y}=\mathbf{y}|\mathbf{X}, \mathbf{Z}_1, \mathbf{Z}_2) \nonumber\\
        &=& P(\mathbf{Z}_1) \cdot P(\mathbf{Z}_2|\mathbf{X}, \mathbf{Z}_1) \cdot P(\mathbf{Z}_3|\mathbf{X},\mathbf{Z}_1 ,\mathbf{Z}_2). \nonumber
    \end{eqnarray}
    Summarizing the above results, we have
    \begin{equation}
        P(\mathbf{Z}|do(\mathbf{X})) =
        \begin{cases}
            P(\mathbf{Z}_1) \cdot P(\mathbf{Z}_2|\mathbf{X}, \mathbf{Z}_1),  & \text{if  $\mathbf{Z}_3=\phi$}, \\
            P(\mathbf{Z}_1) \cdot P(\mathbf{Z}_2|\mathbf{X}, \mathbf{Z}_1) \cdot P(\mathbf{Z}_3|\mathbf{X},\mathbf{Z}_1 ,\mathbf{Z}_2) & \text{if  $\mathbf{Z}_3 \neq \phi$}.
        \end{cases} \label{eq_do_x_p_z}
    \end{equation}
    Inserting (\ref{eq_do_x_p_y_z}) and (\ref{eq_do_x_p_z}) into  (\ref{def_causal_effect_on_condi_eq}), we see
    \begin{eqnarray}
        P(\mathbf{Y}|do(\mathbf{X}), \mathbf{Z}) &=& \frac{P(\mathbf{Y}, \mathbf{Z}|do(\mathbf{X}))}{P(\mathbf{Z}|do(\mathbf{X}))} \nonumber \\
        &=&
        \begin{cases}
            P(\mathbf{Y}|\mathbf{X}, \mathbf{Z}_1, \mathbf{Z}_2) & \text{if  $\mathbf{Z}_3=\phi$}, \\
            \frac{P(\mathbf{Y}|\mathbf{X}, \mathbf{Z}_1, \mathbf{Z}_2) P(\mathbf{Z}_3|\mathbf{Y}, \mathbf{X},\mathbf{Z}_1 ,\mathbf{Z}_2)}
            {P(\mathbf{Z}_3|\mathbf{X},\mathbf{Z}_1 ,\mathbf{Z}_2)} & \text{if  $\mathbf{Z}_3 \neq \phi$} \label{proof_lemma_do_z_condi}.
        \end{cases}
    \end{eqnarray}
    Note that, $\mathbf{Z}_3 = \mathbf{Z} \cap De(\mathbf{Y})_{G}$, since $\mathbf{Z}\cap An(\mathbf{X})_{G}\cap De(\mathbf{Y})_{G}=\phi$.

    Therefore, by rewriting $\mathbf{Z}_3$ as $\mathbf{Z}_{De}$ and $\mathbf{Z}_1 \cup \mathbf{Z}_2$ as $\mathbf{Z} \setminus \mathbf{Z}_{De}$ for (\ref{proof_lemma_do_z_condi}),
    we obtain (\ref{eq_prob_formula_intervent_first_lemma}).

    This completes the proof.
\end{proof}

Next, we provide the main theorem presented in Section \ref{Section_Method}.
\begin{theorem}[Theorem \ref{Theorem_main_result} restated] \label{restated_Theorem_main_result}
    Given disjoint sets of
    $\mathbf{X} = \{\mathbf{X}_1, \allowbreak \mathbf{X}_2,\allowbreak \dots, \allowbreak\mathbf{X}_n \}, \allowbreak
    \mathbf{Y},\allowbreak \mathbf{Z} \subset\allowbreak  \mathbf{V}$  satisfying
    \begin{equation}
        \mathbf{X}=\{\mathbf{X}_1,\mathbf{X}_2,\dots, \mathbf{X}_n\} \subset An(\mathbf{Y})_G, \label{restate_theorem_Assumpion_Variables1}
    \end{equation}
    and
    \begin{equation}
        \mathbf{Z} \cap De(\mathbf{Y})_G=\phi. \label{restate_theorem_Assumpion_Variables2}
    \end{equation}
    Let  $\mathbb{P} = P(\mathbf{X}_1,\mathbf{X}_2,\dots, \mathbf{X}_n, \mathbf{Z})$ and
    $\mathbb{Q} = P(\mathbf{X_1}) \times P(\mathbf{X_2}) \times \cdots \times P(\mathbf{X_n}) \times P(\mathbf{Z})$,
    and $\widetilde{P} =
    P(\mathbf{Y}|do(\mathbf{X}), \mathbf{Z}) \times P(\mathbf{X_1}) \times P(\mathbf{X_2}) \times \cdots
    \times P(\mathbf{X_n}) \times P(\mathbf{Z})$.

    Suppose $P$ satisfies Assumptions 1 and 2 in the above setting,
    and it holds that $E_{\mathbb{P}}\left[ \left( d\mathbb{Q}/d\mathbb{P}\right)^{1 - \alpha} \right] < \infty$
    for some $0 < \alpha < 1$, then, for the optimal function $T^*$, such that
    \begin{eqnarray}
        \lefteqn{ T^*(\mathbf{X}_1,\mathbf{X}_2,\dots, \mathbf{X}_n, \mathbf{Z})} \nonumber\\
        &=& \arg \inf_{T \in \mathcal{T}^{\alpha}} \left\{ \frac{1}{\alpha} E_{\mathbb{Q}}\left[e^{\alpha \cdot T}\right] \right.  \nonumber \\
        && \qquad \qquad \quad \left.  + \frac{1}{1 - \alpha } E_{\mathbb{P}}\left[e^{(\alpha - 1) \cdot T }\right] \right\} ,  \label{Eq_restate_theorem_optimize}
    \end{eqnarray}
    it holds that
    \begin{equation}
        \frac{d \widetilde{P}}{dP} = e^{-T^*(\mathbf{X}_1,\mathbf{X}_2,\dots, \mathbf{X}_n, \mathbf{Z})}. \label{Eq_restate_theorem_gibbs_density_estimate}
    \end{equation}
    Here, $\mathcal{T}^{\alpha}$ denotes the set of all non-constant functions $T(\mathbf{x}):\mathbb{R}^d \rightarrow \mathbb{R}$
    with $E_{\mathbb{P}}[e^{(\alpha -1) \cdot T(\mathbf{X})}] < \infty$.
\end{theorem}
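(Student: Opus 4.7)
The plan is to reduce the theorem to two independent ingredients that are already available: the variational identification of the density ratio coming from Proposition \ref{proposition_alpha_div_resp_in_gibbs_dinsity_form}, and the graphical simplification of $P(\mathbf{Y}\mid do(\mathbf{X}),\mathbf{Z})$ coming from Theorem \ref{theorem_intervention_eq}. The only substantive task is to show that the two density ratios $d\widetilde{P}/dP$ and $d\mathbb{Q}/d\mathbb{P}$ coincide; once that is done, the characterization $e^{-T^*} = d\mathbb{Q}/d\mathbb{P}$ gives the claim immediately.

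First, I would apply Proposition \ref{proposition_alpha_div_resp_in_gibbs_dinsity_form} to the pair $(\mathbb{Q}, \mathbb{P})$. The integrability hypothesis $E_{\mathbb{P}}[(d\mathbb{Q}/d\mathbb{P})^{1-\alpha}]<\infty$ together with $\mathbb{Q}\ll\mathbb{P}$ from Assumption~2 guarantees that the variational representation (\ref{Lemma_Eq_loss_func_alpha_div}) applies on $\mathcal{T}^{\alpha}$, so the minimizer $T^*$ in (\ref{Eq_restate_theorem_optimize}) satisfies $e^{-T^*(\mathbf{X}_1,\dots,\mathbf{X}_n,\mathbf{Z})} = d\mathbb{Q}/d\mathbb{P}$, $\mathbb{P}$-a.e.

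Next, I would reduce $d\widetilde{P}/dP$ to $d\mathbb{Q}/d\mathbb{P}$ using the graphical assumptions (\ref{restate_theorem_Assumpion_Variables1}) and (\ref{restate_theorem_Assumpion_Variables2}). Since $\mathbf{X}\subset An(\mathbf{Y})_G$ and $\mathbf{Z}\cap De(\mathbf{Y})_G=\phi$, Theorem \ref{theorem_intervention_eq} applies (identifiability is Assumption~1) and yields
\begin{equation}
P(\mathbf{Y}\mid do(\mathbf{X}),\mathbf{Z}) \;=\; P(\mathbf{Y}\mid \mathbf{X},\mathbf{Z}).
\end{equation}
Factoring the joint $P(\mathbf{Y},\mathbf{X},\mathbf{Z}) = P(\mathbf{Y}\mid\mathbf{X},\mathbf{Z})\cdot P(\mathbf{X}_1,\dots,\mathbf{X}_n,\mathbf{Z})$ and using the definition of $\widetilde{P}$ in (\ref{Eq_def_p_tilde}), the conditional factor $P(\mathbf{Y}\mid\mathbf{X},\mathbf{Z})$ cancels in the Radon--Nikod\'ym derivative, leaving
\begin{equation}
\frac{d\widetilde{P}}{dP}
\;=\;\frac{P(\mathbf{X}_1)\times\cdots\times P(\mathbf{X}_n)\times P(\mathbf{Z})}{P(\mathbf{X}_1,\dots,\mathbf{X}_n,\mathbf{Z})}
\;=\;\frac{d\mathbb{Q}}{d\mathbb{P}},
\end{equation}
where $\mathbb{Q}\ll\mathbb{P}$ (Assumption~2) guarantees the derivative exists. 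Combining this with the variational identification from the first step gives (\ref{Eq_restate_theorem_gibbs_density_estimate}).

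The main obstacle, such as it is, lies in the bookkeeping of the second step: one must be careful that the cancellation is performed at the level of Radon--Nikod\'ym derivatives on the full joint space $\mathcal{X}_{\mathbf{Y}}\times\mathcal{X}_{\mathbf{X}}\times\mathcal{X}_{\mathbf{Z}}$, and that the hypothesis $\mathbf{Z}\cap De(\mathbf{Y})_G=\phi$ (not merely $\mathbf{Z}\cap An(\mathbf{Y})_G$) is what allows Theorem \ref{theorem_intervention_eq} to collapse the post-interventional conditional into the observational one without an auxiliary ratio in $\mathbf{Z}_{De}$. Everything else is a direct application of previously established results.
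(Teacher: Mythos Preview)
Your proposal is correct and follows essentially the same approach as the paper's own proof: apply Theorem \ref{theorem_intervention_eq} under the hypothesis $\mathbf{Z}\cap De(\mathbf{Y})_G=\phi$ to replace $P(\mathbf{Y}\mid do(\mathbf{X}),\mathbf{Z})$ by $P(\mathbf{Y}\mid\mathbf{X},\mathbf{Z})$, observe that this makes $d\widetilde{P}/dP=d\mathbb{Q}/d\mathbb{P}$, and then invoke Proposition \ref{proposition_alpha_div_resp_in_gibbs_dinsity_form} to identify the latter with $e^{-T^*}$. The only difference is that you reverse the order of the two ingredients and spell out the cancellation more carefully than the paper does.
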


\begin{proof}[proof of Theorem \ref{restated_Theorem_main_result}]
    From Theorem \ref{theorem_intervention_eq} and the assumption (\ref{restate_theorem_Assumpion_Variables2}), we have
    \begin{eqnarray}
        \widetilde{P} &=& P(\mathbf{Y}|do(\mathbf{X}), \mathbf{Z}) \times P(\mathbf{X_1}) \times P(\mathbf{X_2}) \times \cdots \times P(\mathbf{X_n}) \times P(\mathbf{Z}) \nonumber \\
        &=& P(\mathbf{Y}|\mathbf{X}, \mathbf{Z}) \times P(\mathbf{X_1}) \times P(\mathbf{X_2}) \times \cdots \times P(\mathbf{X_n}) \times P(\mathbf{Z}). \nonumber
    \end{eqnarray}
    Thus, from Lemma \ref{proposition_alpha_div_resp_in_gibbs_dinsity_form}, we obtain
    \begin{equation}
        e^{-T^*(\mathbf{X}_1,\mathbf{X}_2,\dots, \mathbf{X}_n, \mathbf{Z})} = \frac{d\mathbb{Q}}{d\mathbb{P}}  =  \frac{d \widetilde{P}}{dP}.  \label{proof_Eq_restate_theorem_gibbs_density_estimate}
    \end{equation}

    This completes the proof.
\end{proof}

\subsection{Proofs for Section \ref{Section_techniques_estimating_NGD_balancing_weights}}
In this section, we first present a proposition for obtaining the density ratio between empirical distributions of the source and target distributions.
Next, we present a proposition and lemmas for the early stopping method proposed in this study.
\\

\begin{proposition}\label{proposition_representation_of_density_ratios_between_emprical_distributions}
    It holds that
   \begin{equation}
       \frac{d \hat{Q}^{(N)}}{d \hat{P}^{(N)}}(\mathbf{x}) = \left\{
       \begin{array}{ll}
           dQ/dP(\mathbf{x})  &  \text{if} \ \   1 \le \exists i \le N \ \text{s.t.} \  \mathbf{X}_i = \mathbf{x}, \\
           0 &\text{otherwise.}
       \end{array}
       \right.
   \end{equation}
\end{proposition}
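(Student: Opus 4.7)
The plan is to reduce the computation of $d\hat{Q}^{(N)}/d\hat{P}^{(N)}$ to a ratio of densities against a common discrete reference measure. First, I would introduce the countable reference measure $\nu$ on the sample set $\{\mathbf{X}^1, \ldots, \mathbf{X}^N\}$ (the same $\nu$ used in the proof of Corollary~\ref{corollary_desitination_alpha_div_est}), and note that both $\hat{P}^{(N)}$ and $\hat{Q}^{(N)}$ are supported on (a subset of) the sample points, so $\hat{P}^{(N)} \ll \nu$ and $\hat{Q}^{(N)} \ll \nu$. This puts both empirical distributions in a common framework where the chain rule for Radon--Nikod\'ym derivatives is applicable.

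Next, I would compute the densities of $\hat{P}^{(N)}$ and $\hat{Q}^{(N)}$ relative to $\nu$ at each sample point. Unfolding the defining change-of-measure relation $P(\mathbf{X}^i_{\sim P} \le \mathbf{x}) = \mu(\mathbf{X}^i \le \mathbf{x})$ (and its analogue for $Q$) reweights the unit mass at $\mathbf{X}^i$ by $(dP/d\mu)(\mathbf{X}^i)$, yielding
\begin{equation}
\frac{d\hat{P}^{(N)}}{d\nu}(\mathbf{X}^i) = \frac{1}{N}\frac{dP}{d\mu}(\mathbf{X}^i), \qquad \frac{d\hat{Q}^{(N)}}{d\nu}(\mathbf{X}^i) = \frac{1}{N}\frac{dQ}{d\mu}(\mathbf{X}^i).
\end{equation}
Applying the chain rule then gives $\frac{d\hat{Q}^{(N)}}{d\hat{P}^{(N)}}(\mathbf{X}^i) = (dQ/d\mu)(\mathbf{X}^i)/(dP/d\mu)(\mathbf{X}^i) = dQ/dP(\mathbf{X}^i)$ at every sample point. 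For $\mathbf{x} \notin \{\mathbf{X}^i\}_{i=1}^N$, both $\nu$-densities vanish, and the convention $dP/dQ(\mathbf{x}) = 0$ when $dQ/d\mu(\mathbf{x}) = 0$ (from the notation table) supplies the value $0$.

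The main obstacle I anticipate is justifying the reweighting step rigorously, that is, deriving $d\hat{P}^{(N)}/d\nu(\mathbf{X}^i) = (1/N)(dP/d\mu)(\mathbf{X}^i)$ from the somewhat indirect definition $P(\mathbf{X}^i_{\sim P} \le \mathbf{x}) = \mu(\mathbf{X}^i \le \mathbf{x})$ together with the fact that $\mathbf{X}^{(N)}$ are i.i.d.\ from $\mu$. This is essentially the content of the proposition; once that density identification is in place, the remaining chain-rule algebra is immediate.
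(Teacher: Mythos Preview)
Your proposal is correct and matches the paper's proof essentially step for step: the paper also introduces the counting measure $\nu$ on the sample points, computes $d\hat{P}^{(N)}/d\nu(\mathbf{x}) = \tfrac{1}{N}(dP/d\mu)(\mathbf{x})$ and $d\hat{Q}^{(N)}/d\nu(\mathbf{x}) = \tfrac{1}{N}(dQ/d\mu)(\mathbf{x})$ at each sample point via the reweighting identity $\mathbf{1}(\mathbf{X}^i_{\sim P}=\mathbf{x}) = \mathbf{1}(\mathbf{X}^i=\mathbf{x})\cdot (dP/d\mu)(\mathbf{x})$, takes the ratio, and invokes the convention for the off-sample case. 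The step you flagged as the main obstacle is exactly the one the paper spells out, so your plan is complete.
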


\begin{proof}[proof of Proposition \ref{corollary_early_stopping_1}]
    Let $\nu$ be the countable measure on $\left\{\mathbf{X}_1, \mathbf{X}_2, \ldots, \mathbf{X}_N\right\}$:
    \begin{equation}
        \nu(\mathbf{x}) = \left\{
         \begin{array}{ll}
            1  &  \text{if} \ \   1 \le \exists i \le N \ \text{s.t.} \  \mathbf{X}_i = \mathbf{x}, \\
            0 &\text{otherwise.}
        \end{array}
        \right. \label{Eq_proof_proposition_representation_of_density_ratios_between_emprical_distributions_countable_measure_df}
    \end{equation}
    Then, $\hat{P}^{(N)} \ll \nu$ and $\hat{Q}^{(N)} \ll \nu$.

    Note that, from the definitions of $\hat{P}^{(N)}(\mathbf{x})$ and $\hat{Q}^{(N)}(\mathbf{x})$, we have
      \begin{equation}
          \hat{P}^{(N)}(\mathbf{x}) = \frac{1}{N} \sum_{i} \mathbf{1}(\mathbf{X}^{i}_{\sim P}=\mathbf{x}) =  \frac{1}{N} \sum_{i} \mathbf{1}(\mathbf{X}^{i}=\mathbf{x})\cdot \frac{dP}{d\mu} (\mathbf{x}),
           \label{Eq_def_enprical_dist_PN}
      \end{equation}
      and
      \begin{equation}
          \hat{Q}^{(N)}(\mathbf{x}) = \frac{1}{N} \sum_{i} \mathbf{1}(\mathbf{X}^{i}_{\sim Q}=\mathbf{x}) = \frac{1}{N} \sum_{i} \mathbf{1}(\mathbf{X}^{i}=\mathbf{x})\cdot \frac{dQ}{d\mu} (\mathbf{x}),
          \label{Eq_def_enprical_dist_QN}
      \end{equation}
      where $\mathbf{1}(\cdot)$ equals one if the statement in parentheses is true and zero otherwise.

    From (\ref{Eq_def_enprical_dist_PN}) and (\ref{Eq_def_enprical_dist_QN}), if $\mathbf{X}_i = \mathbf{x}$, we see
    \begin{equation}
        \frac{d \hat{P}^{(N)}}{d \nu}(\mathbf{x}) = \hat{P}^{(N)}(\mathbf{x}) = \frac{1}{N} \frac{dP}{d\mu} (\mathbf{x}),
    \end{equation}
    and
    \begin{equation}
        \frac{d \hat{Q}^{(N)}}{d \nu}(\mathbf{x}) = \hat{Q}^{(N)}(\mathbf{x}) = \frac{1}{N} \frac{dQ}{d\mu} (\mathbf{x}).
    \end{equation}
    Then, we have
    \begin{equation}
        \frac{d \hat{Q}^{(N)}}{d \hat{P}^{(N)}}(\mathbf{x}) = \frac{\frac{d \hat{P}^{(N)}}{d \nu}(\mathbf{x})}{\frac{d \hat{Q}^{(N)}}{d \nu}(\mathbf{x})} = \frac{dQ}{dP}(\mathbf{x}).
    \end{equation}

    For $\mathbf{x} \notin \left\{\mathbf{X}_1, \mathbf{X}_2, \ldots, \mathbf{X}_N\right\}$, we observe $d\hat{Q}^{(N)}/d\nu (\mathbf{x}) = 0$.
    Note that, $d \hat{Q}^{(N)}/d \hat{P}^{(N)}(\mathbf{x})$ is defined as zero for $\mathbf{x} \in \Omega$ such that $d\hat{Q}^{(N)}/d\nu (\mathbf{x}) = 0$.
    Subsequently, we see  $d \hat{Q}^{(N)}/d \hat{P}^{(N)}(\mathbf{x}) = 0$.
\end{proof}

Next, we present a proposition for the early stopping method proposed in Section \ref{Subsetion_Improving_The_Generalization_Performance}.
We obtain an early stopping step as the step that minimizes the $W_1$ distance of the balanced distribution and target distribution, $\hat{Q}^{(N)}_k$ and $Q$ in (\ref{Eq_the_final_destination_balancing_has_COD}).
To obtain the early stopping step, we assume that the two distributions differ the worst outside the neighborhood of the observations because we cannot know the closeness of the two distributions, $\hat{Q}^{(N)}_k$ and $Q$ in (\ref{Eq_the_final_destination_balancing_has_COD}) except in the neighborhood of the observations.

We now provide a note on the convergence rate for optimizing the loss function (\ref{Eq_delta_loss_alpha}). Let
\begin{equation}
    f^{(N)}(\mathbf{t}) = f^{(N)}(t_1, t_2, \ldots, t_N) =  \frac{1}{\alpha}  \frac{1}{N} \sum_{i=1}^N e^{\alpha \cdot t_i} \cdot \frac{dQ}{d\mu}(\mathbf{x}_i)
    +  \frac{1}{1- \alpha} \frac{1}{N} \sum_{i=1}^N   e^{(\alpha - 1) \cdot t_i} \cdot \frac{dP}{d\mu}(\mathbf{x}_i).
    \label{Eq_convergence_rate_discription_loss_func_to_optim}
\end{equation}
Subsequently, let $\mathbf{t}_{K}$ denote a model at step $K$ when optimizing (\ref{Eq_convergence_rate_discription_loss_func_to_optim}) with a Stochastic Gradient Desent (SGD) algorithm.
Because, from Corollary \ref{corollary_loss_is_mu_storongly_convex}, $f^{(N)}(\mathbf{t})$ is strongly convex with $\left\| \nabla f^{(N)}(\mathbf{t}) \right\|^2 \ \le D^2$ ($\exists D \in \mathbb{R}$) around the optimal point $\mathbf{t}_* = \allowbreak (t_*^1, \allowbreak t_*^2, \allowbreak \ldots, \allowbreak t_*^N) = \allowbreak
(- \log \frac{dQ}{dP}(\mathbf{x}_1), \allowbreak - \log \frac{dQ}{dP}(\mathbf{x}_2), \allowbreak \ldots, \allowbreak - \log \frac{dQ}{dP}(\mathbf{x}_N))$, an $O(1/K)$ convergence rate can be achieved at step $K$ when optimizing (\ref{Eq_convergence_rate_discription_loss_func_to_optim}) with SGD algorithms under regular conditions for $\mathbf{t}$:
\begin{equation}
    E\left[f_{N}(\bar{\mathbf{t}}_K) \right] - f_{N}(\mathbf{t}_*) \le  \frac{C}{K+1}, \label{Eq_convergence_rate_discription_convergence_rate_discription}
\end{equation}
where $\bar{\mathbf{t}}_K$ is a weighted averaging such that $\bar{\mathbf{t}}_K = \frac{1}{(K+1)\cdot (K+2)} \sum_k (k+1) \cdot \mathbf{t}_k$ and $C > 0$ is constant.
Here $E\left[\cdot \right]$ denotes the expectation for the randomness of batch sampling of SGD.\footnote{For the convergence rate of SGD algorithms, for example, readers can refer to \cite{lacoste2012simpler}.}
As assumptions close to (\ref{Eq_convergence_rate_discription_convergence_rate_discription}), we briefly assume (\ref{Eq_proposition_early_stopping_assm_emprical_cov_rate}) in Assumption E1 and (\ref{Eq_proposition_early_stopping_assm_exp_cov_rate}) in Assumption E2 to obtain an early stopping step,
which are simpler and more relaxed than  (\ref{Eq_convergence_rate_discription_convergence_rate_discription}).

Herein, we make the following assumptions for the early stopping method presented in Section \ref{Section_techniques_estimating_NGD_balancing_weights}.
\begin{itemize}
    \item Assumption E1. Let $\{T_k^{(N)}\}_{k=1}^{\infty}$ be a sequence of functions in $\mathcal{T}^{\alpha}$ 
    such that \\
    $\lim_{k \rightarrow \infty} \allowbreak  T_k^{(N)}(\mathbf{X}_i) \allowbreak = \allowbreak - \log dQ/dP(\mathbf{X}_i)$,
    for $1 \allowbreak \le \allowbreak \forall \allowbreak  i \allowbreak \le \allowbreak N$. Suppose that
    \begin{equation}
        \hat{L}_{\alpha}^{(N)}(Q, P;T_k^{(N)}) - \hat{L}_{\alpha}^{(N)}(Q, P;T_*) \le \frac{C_0}{K}, \label{Eq_proposition_early_stopping_assm_emprical_cov_rate}
    \end{equation}
    where $\hat{L}_{\alpha}^{(N)}(Q, P;\cdot)$ is defined as (\ref{Eq_lemma_loss_not_biased_empirical_lemma_loss_T}) in Lemma \ref{lemma_loss_not_biased_lemma} and $C_0 > 0$ is constant.

    \item Assumption E2. Let $\{T_k\}_{k=1}^{\infty}$ be a sequence of functions in $\mathcal{T}^{\alpha}$ with $E_{P}[e^{-T_k(\mathbf{X})}] = 1$
    such that $\lim_{k \rightarrow \infty} T_k= - \log dQ/dP$, $P$-almost everywhere. Suppose that
    \begin{equation}
        L_{\alpha}(Q, P;T_k) -L_{\alpha}(Q, P;T_*) \le  \frac{C_1}{K}, \label{Eq_proposition_early_stopping_assm_exp_cov_rate}
    \end{equation}
    where  $L_{\alpha}(Q, P; \cdot)$ is defined as (\ref{Eq_lemma_loss_not_biased_lemma_loss_T}) in Lemma \ref{lemma_loss_not_biased_lemma} and $C_1 > 0$ is constant.
\end{itemize}

In addition, we make the following assumptions to simplify the discussion in the proofs.
\begin{itemize}
    \item Assumption E3. Let $\Omega$ be a compact set in $\mathbb{R}^{d}$ with $\mathrm{diam}(\Omega)=1$, where $\mathrm{diam}(\Omega)$ denotes the diameter of $\Omega$.
                         Then $\lambda$ denotes the Lebesgue measure on $\mathbb{R}^{d}$.

    \item Assumption E4. Let $Q$ and $P$ be two probabilities on $\Omega$ with continuous probability densities $p(\mathbf{x})$ and $q(\mathbf{x})$, respectively.
    Assume $0 < p_{min} \le p(\mathbf{x}) \le p_{max}$ and $0< q_{min} \le q(\mathbf{x}) \le  q_{max}$ for all $\mathbf{x} \in \Omega$.

    \item Assumption E5. For $\{T_k^{(N)}\}_{k=1}^{\infty}$ in Assumption E1,
    assume that each function of $T_k^{(N)}(\mathbf{X})$ is Lipschitz continuous: for $1 \le k \le \infty$,
    \begin{equation}
        |T_k^{(N)}(\mathbf{x}) - T_k^{(N)}(\mathbf{y})| \le \rho_k \cdot \|\mathbf{x} - \mathbf{y} \|.
    \end{equation}

    \item Assumption E6. For $\{T_k^{(N)}\}_{k=1}^{\infty}$ in Assumption E2,
    assume that each function of $T_k(\mathbf{X})$ is Lipschitz continuous: for $1 \le k \le \infty$,
    \begin{equation}
        |T_k(\mathbf{x}) - T_k(\mathbf{y})| \le \widetilde{\rho}_k \cdot \|\mathbf{x} - \mathbf{y} \|.
    \end{equation}
\end{itemize}
Note that, the Lipschitz coefficient in Assumption E5 does not depend on the sample size $N$.
\\
\\

\begin{lemma}\label{lemma_e_T_K_N_upper_bound}
    For $\{T_k^{(N)}\}_{k=1}^{\infty}$ in Assumption E5, it holds that
    for $\mathbf{x} \in \Omega$ and $\|\mathbf{y} -  \mathbf{x}\| < D$,
    \begin{equation}
       e^{-T_k^{(N)}(\mathbf{y})} = e^{-T_k^{(N)}(\mathbf{x})} + e^{-T_k^{(N)}(\mathbf{x})} \cdot \left\{ O \left(D\right) + O \left(D^2\right) \right\} +  O_{\mathbf{x}} \left(D\right).
    \end{equation}
\end{lemma}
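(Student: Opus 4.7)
The argument is a one-term Taylor expansion of the exponential, with the Lipschitz bound from Assumption E5 controlling the increment $T_k^{(N)}(\mathbf{y}) - T_k^{(N)}(\mathbf{x})$. The target identity is really a Peano-style expansion in disguise, so no serious estimates beyond controlling the remainder are required.

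First I would isolate the value at $\mathbf{x}$ by factoring
\[
e^{-T_k^{(N)}(\mathbf{y})} \;=\; e^{-T_k^{(N)}(\mathbf{x})}\cdot e^{-\Delta}, \qquad \Delta := T_k^{(N)}(\mathbf{y}) - T_k^{(N)}(\mathbf{x}),
\]
and use Assumption E5 to get $|\Delta|\le \rho_k\|\mathbf{y}-\mathbf{x}\|<\rho_k D$. Since $\rho_k$ is independent of $\mathbf{x}$, this is a genuine $O(D)$ bound in the paper's sense (the hidden constant being $\rho_k$).

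Next I would substitute the Taylor expansion $e^{-\Delta} = 1 - \Delta + \tfrac{1}{2}\Delta^{2} + R(\Delta)$ with Lagrange remainder $|R(\Delta)|\le e^{\rho_k D}\,|\Delta|^{3}/6$. The linear term $-\Delta$ is $O(D)$, while $\tfrac{1}{2}\Delta^{2}+R(\Delta)$ is $O(D^{2})$, both with constants depending only on $\rho_k$. Multiplying through by $e^{-T_k^{(N)}(\mathbf{x})}$ yields
\[
e^{-T_k^{(N)}(\mathbf{y})} \;=\; e^{-T_k^{(N)}(\mathbf{x})} \;+\; e^{-T_k^{(N)}(\mathbf{x})}\bigl\{O(D) + O(D^{2})\bigr\},
\]
which matches the statement with the trailing $O_{\mathbf{x}}(D)$ taken to be zero. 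More generally, that final slot can absorb any part of $-e^{-T_k^{(N)}(\mathbf{x})}\cdot\Delta$ that one prefers to display with an $\mathbf{x}$-dependent constant, which is presumably why the author reserved the $O_{\mathbf{x}}(D)$ term in the statement.

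The only mild obstacle is making sure the notation is used consistently: the constants hidden in $O(D)$ and $O(D^{2})$ must be independent of $\mathbf{x}$. This is precisely what Assumption E5 provides, since $\rho_k$ is uniform over $\mathbf{x}\in\Omega$, and the Taylor remainder over the compact interval $[-\rho_k D,\rho_k D]$ is bounded by a quantity depending only on $\rho_k$ and $D$. With this verified, the lemma follows directly.
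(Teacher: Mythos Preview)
Your argument is correct and is essentially the same as the paper's: both use a Taylor expansion of the exponential together with the Lipschitz bound from Assumption E5. The paper writes the expansion as $e^{-y}=e^{-x}-e^{-x}(y-x)+\tfrac{1}{2}e^{-x+\theta(x-y)}(y-x)^2$ and substitutes $x=T_k^{(N)}(\mathbf{x})$, $y=T_k^{(N)}(\mathbf{y})$, whereas you factor first and expand $e^{-\Delta}$; these are the same computation up to how the remainder is packaged.
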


\begin{proof}[proof of Lemma \ref{lemma_e_T_K_N_upper_bound}]
    From the intermediate value theorem for the second derivative of $e^{-x}$, we have
    \begin{equation}
        e^{-y} = e^{-x} - e^{-x}\cdot(y-x) + \frac{e^{-x + \theta \cdot (x - y)}}{2} \cdot(y-x)^2,
    \end{equation}
    where $0 < \theta < 1$.

    By substituting $y=T_k^{(N)}(\mathbf{y})$ and $x=T_k^{(N)}(\mathbf{x})$ into the above formula, we obtain
    \begin{eqnarray}
        e^{-T_k^{(N)}(\mathbf{y})} &=& e^{-T_k^{(N)}(\mathbf{x})}
        - e^{-T_k^{(N)}(\mathbf{x})} \left(T_k^{(N)}(\mathbf{y}) - T_k^{(N)}(\mathbf{x}) \right) \nonumber \\
        &&  \  + \  \frac{e^{- T_k^{(N)}(\mathbf{x}) +  \theta(\mathbf{x}, \mathbf{y})\cdot \left( T_k^{(N)}(\mathbf{y}) - T_k^{(N)}(\mathbf{x})\right) }}{2}
            \left(T_k^{(N)}(\mathbf{y}) - T_k^{(N)}(\mathbf{x}) \right)^2, \label{Eq_proof_lemma_e_T_K_N_upper_bound_interm_th_for_T_k}
    \end{eqnarray}
    where $0< \theta(\mathbf{x}, \mathbf{y}) < 1$.

    Now, note that, from Assumption E5,
    \begin{equation}
        T_k^{(N)}(\mathbf{y}) - T_k^{(N)}(\mathbf{x}) \le \rho_k \cdot \|\mathbf{x} - \mathbf{y} \| \le \rho_k \cdot D. \label{Eq_proof_lemma_e_T_K_N_upper_bound_upper_bound_for_T_k}
    \end{equation}

    From (\ref{Eq_proof_lemma_e_T_K_N_upper_bound_interm_th_for_T_k}) and (\ref{Eq_proof_lemma_e_T_K_N_upper_bound_upper_bound_for_T_k}), we see
    \begin{eqnarray}
        \left|e^{-T_k^{(N)}(\mathbf{y})} - e^{-T_k^{(N)}(\mathbf{x})} \right| &=& \left| e^{-T_k^{(N)}(\mathbf{x})} \cdot \left(T_k^{(N)}(\mathbf{y}) - T_k^{(N)}(\mathbf{x}) \right) \right.\nonumber \\
         &&  \left. \ \ + \ \frac{e^{- T_k^{(N)}(\mathbf{x}) + \theta(\mathbf{x}, \mathbf{y}) \cdot(T_k^{(N)} \left(\mathbf{y}) - T_k^{(N)}(\mathbf{x})\right)}}{2}
                    \cdot \left(T_k^{(N)}(\mathbf{y}) - T_k^{(N)}(\mathbf{x}) \right)^2  \right|  \nonumber\\
    &\le& e^{-T_k^{(N)}(\mathbf{x})} \cdot \rho_k \cdot D + \frac{e^{- T_k^{(N)}(\mathbf{x}) + D }}{2}\cdot D^2 \nonumber\\
    &=& e^{-T_k^{(N)}(\mathbf{x})} \cdot \left\{ O\left(D\right) + O\left(D^2\right) \right\} +  O_{\mathbf{x}}\left(D\right).
    \end{eqnarray}

    Therefore, we have
    \begin{equation}
    e^{-T_k^{(N)}(\mathbf{y})} = e^{-T_k^{(N)}(\mathbf{x})} + e^{-T_k^{(N)}(\mathbf{x})} \cdot \left\{ O\left(D\right) + O\left(D^2\right) \right\} + O_{\mathbf{x}}\left(D\right).
    \end{equation}

    This completes the proof.
\end{proof}

\begin{lemma}\label{lemma_e_T_K_upper_bound}
    For $\{T_k\}_{k=1}^{\infty}$ in Assumption E6, it holds that
    for $\mathbf{x} \in \Omega$ and $\|\mathbf{y} -  \mathbf{x}\| < D$,
    \begin{equation}
        e^{-T_k(\mathbf{y})} = e^{-T_k}(\mathbf{x}) + e^{-T_k(\mathbf{x})} \cdot\left\{ O\left(D\right) + O\left(D^2\right) \right\}  + O_{\mathbf{x}}\left(D\right). \label{Eq_lemma_e_T_K_upper_bound}
    \end{equation}
\end{lemma}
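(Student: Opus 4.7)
The plan is to mirror the proof of Lemma \ref{lemma_e_T_K_N_upper_bound} verbatim, since Assumption E6 is structurally identical to Assumption E5 with $\rho_k$ replaced by $\widetilde{\rho}_k$. The analytic ingredient is a second-order Taylor expansion of $e^{-x}$ combined with the Lipschitz bound on $T_k$. No probabilistic content enters; the lemma is essentially a pointwise regularity statement.

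First, I would invoke the intermediate value form of Taylor's theorem applied to the map $x \mapsto e^{-x}$, writing
\begin{equation}
    e^{-y} = e^{-x} - e^{-x}(y-x) + \tfrac{1}{2}\, e^{-x + \theta(x-y)}\,(y-x)^2
\end{equation}
for some $\theta \in (0,1)$. I would then substitute $x = T_k(\mathbf{x})$ and $y = T_k(\mathbf{y})$ to obtain an exact expression for $e^{-T_k(\mathbf{y})} - e^{-T_k(\mathbf{x})}$ in terms of the increment $T_k(\mathbf{y}) - T_k(\mathbf{x})$ and its square.

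Second, I would apply the Lipschitz bound from Assumption E6, namely $|T_k(\mathbf{y}) - T_k(\mathbf{x})| \le \widetilde{\rho}_k \|\mathbf{y}-\mathbf{x}\| \le \widetilde{\rho}_k D$, to control both the first-order term and the quadratic remainder. The linear term contributes $e^{-T_k(\mathbf{x})} \cdot O(D)$, the quadratic term contributes $e^{-T_k(\mathbf{x})} \cdot O(D^2)$ multiplied by the bounded factor $e^{\theta \cdot (T_k(\mathbf{x}) - T_k(\mathbf{y}))} \le e^{\widetilde{\rho}_k D}$, and any residual $\mathbf{x}$-dependent multiplicative constant (such as $e^{\widetilde{\rho}_k D}/2$) is absorbed into the $O_{\mathbf{x}}(D)$ symbol as defined in Table \ref{Table_Notations_are_definitions_supplementary}. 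Assembling these pieces yields \eqref{Eq_lemma_e_T_K_upper_bound}.

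I do not anticipate a genuine obstacle: the only subtlety is bookkeeping which remainder term is absorbed into $O(D)$ versus $O(D^2)$ versus $O_{\mathbf{x}}(D)$, which hinges on whether the Lipschitz constant $\widetilde{\rho}_k$ or the pointwise value $e^{-T_k(\mathbf{x})}$ is retained as the scalar multiplier. Since $\widetilde{\rho}_k$ depends on $k$ but not on $\mathbf{x}$, it is naturally absorbed into the constants defining $O(\cdot)$, while the exponential $e^{-T_k(\mathbf{x})}$ is kept explicit as in the statement. The argument is then a direct transcription of the proof of Lemma \ref{lemma_e_T_K_N_upper_bound}.
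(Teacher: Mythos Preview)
Your proposal is correct and matches the paper's approach exactly: the paper's proof simply observes that only Lipschitz continuity of $T_k^{(N)}$ was used in Lemma \ref{lemma_e_T_K_N_upper_bound}, and since Assumption E6 gives Lipschitz continuity of $T_k$, the argument carries over verbatim. You have in fact written out more detail than the paper itself provides.
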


\begin{proof}[proof of Lemma \ref{lemma_e_T_K_upper_bound}]
    Note that, we use only the Lipschitz continuity of $T_k^{(N)}(\mathbf{x})$ to prove Lemma \ref{lemma_e_T_K_N_upper_bound}.
    From Assumption E6, $T_k(\mathbf{x})$ is Lipschitz continuous. Then, (\ref{Eq_lemma_e_T_K_upper_bound}) can be proven in a manner similar to Lemma \ref{lemma_e_T_K_N_upper_bound}.

    This completes the proof.
\end{proof}

\begin{lemma}\label{lemma_e_T_aster_upper_bound}
    Let $T_* = - \log dQ/dP$.
    Under Assumption E3 and E4, it holds that
    for $\mathbf{x} \in \Omega$ and $\|\mathbf{y} -  \mathbf{x}\| < D$,
    \begin{equation}
        e^{-T_*}(\mathbf{y}) = e^{-T_*(\mathbf{x})} - e^{-T_*(\mathbf{x})} \cdot \left\{  O \left(D\right) + O \left(D^2\right) \right\}  +  O_{\mathbf{x}} \left(D\right). \label{Eq_lemma_e_T_aster_upper_bound}
    \end{equation}
\end{lemma}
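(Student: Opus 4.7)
The plan is to mirror the proof of Lemma \ref{lemma_e_T_K_N_upper_bound} almost verbatim, substituting $T_*$ in place of $T_k^{(N)}$. First I would apply the intermediate value theorem to the one-variable function $e^{-x}$ to obtain the second-order expansion
\begin{equation}
e^{-y} = e^{-x} - e^{-x}(y-x) + \tfrac{1}{2}\, e^{-x+\theta(x-y)}(y-x)^2, \qquad 0<\theta<1,
\end{equation}
and then plug in $x = T_*(\mathbf{x})$ and $y = T_*(\mathbf{y})$, exactly as done for $T_k^{(N)}$ in equation (\ref{Eq_proof_lemma_e_T_K_N_upper_bound_interm_th_for_T_k}). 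The sign pattern in the stated conclusion follows from the $-e^{-x}(y-x)$ term, with the fluctuation $|T_*(\mathbf{y}) - T_*(\mathbf{x})|$ providing the $O(D)$ contribution and its square providing the $O(D^2)$ contribution.

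Next, I would control $|T_*(\mathbf{y}) - T_*(\mathbf{x})| = |\log p(\mathbf{y}) - \log q(\mathbf{y}) - \log p(\mathbf{x}) + \log q(\mathbf{x})|$ in terms of $D = \|\mathbf{y}-\mathbf{x}\|$. Under Assumption E3 and E4 the densities $p$ and $q$ are continuous on the compact set $\Omega$ and bounded strictly away from $0$ by $p_{min}$ and $q_{min}$; hence $\log p$ and $\log q$ are continuous, and on $\Omega$ one obtains an estimate of the form $|T_*(\mathbf{y})-T_*(\mathbf{x})| \le \kappa\, D$ for a constant $\kappa$ depending on $p_{min}, q_{min}$ and on a Lipschitz constant of $p, q$. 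This gives the first-order contribution $-e^{-T_*(\mathbf{x})} \cdot O(D)$ and the second-order contribution $\tfrac{1}{2} e^{-T_*(\mathbf{x})+\theta \kappa D} \cdot O(D^2) = e^{-T_*(\mathbf{x})} \cdot O(D^2) + O_{\mathbf{x}}(D)$ once the bounded exponential factor is expanded (its $\mathbf{x}$-dependent prefactor is absorbed into $O_{\mathbf{x}}(\cdot)$, matching the convention used in the previous two lemmas).

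The main obstacle is that Assumption E4 as stated only requires continuity of $p$ and $q$, not Lipschitz continuity, yet the conclusion has explicit powers $O(D)$ and $O(D^2)$ (not a general modulus of continuity). The cleanest resolution is to observe that the previous two lemmas implicitly use the Lipschitz bounds $\rho_k$ and $\widetilde{\rho}_k$ from Assumptions E5 and E6 respectively, and that the natural analogue here is a Lipschitz constant $\rho_*$ for $T_*$, inherited from a Lipschitz regularity of $p$ and $q$ (combined with the lower bounds $p_{min}, q_{min}$). I would therefore either strengthen Assumption E4 to include Lipschitz continuity of $p$ and $q$, or equivalently add an assumption that $T_*$ is Lipschitz; the rest of the argument then goes through by direct substitution, parallel to the calculation in equations (\ref{Eq_proof_lemma_e_T_K_N_upper_bound_interm_th_for_T_k})--(\ref{Eq_proof_lemma_e_T_K_N_upper_bound_upper_bound_for_T_k}).
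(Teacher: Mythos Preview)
Your approach is essentially the same as the paper's: reduce to the argument of Lemma \ref{lemma_e_T_K_N_upper_bound} after establishing that $T_*$ is Lipschitz on $\Omega$. The paper's proof is a three-line reduction that asserts ``From Assumption E3 and Assumption E4, $T_*(\mathbf{x})$ is a bounded continuous function on $\Omega$. Since bounded continuous functions are Lipschitz continuous, $T_*(\mathbf{x})$ is Lipschitz continuous,'' and then defers to Lemma \ref{lemma_e_T_K_N_upper_bound}.

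Your caution about the Lipschitz step is well placed: the paper's claim that bounded continuous functions are Lipschitz is false in general (e.g.\ $\sqrt{x}$ on $[0,1]$), and Assumption E4 as written only gives continuity and two-sided bounds on $p,q$. What is actually needed, as you say, is Lipschitz continuity of $p$ and $q$ (together with the positive lower bounds $p_{min},q_{min}$), from which Lipschitz continuity of $\log p$, $\log q$, and hence $T_*=-\log(q/p)$ follows. So your proposal both matches the paper's intended argument and correctly identifies a gap the paper glosses over; strengthening E4 to Lipschitz densities (or directly assuming $T_*$ Lipschitz) is the right fix.
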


\begin{proof}[proof of Lemma \ref{lemma_e_T_aster_upper_bound}]
    Note that, we use only the Lipschitz continuity of $T_k^{(N)}(\mathbf{x})$ to prove Lemma \ref{lemma_e_T_K_N_upper_bound}.
    From Assumption E3 and Assumption E4,  $T_*(\mathbf{x})$ is a bounded continuous function on $\Omega$.
    Since bounded continuous functions are Lipschitz continuous, $T_*(\mathbf{x})$ is Lipschitz continuous.
    Thus, (\ref{Eq_lemma_e_T_aster_upper_bound}) can be proven in a manner similar to Lemma \ref{lemma_e_T_K_N_upper_bound}.

    This completes the proof.
\end{proof}

\begin{lemma}\label{Lemma_aprox_neighbor_of_x}
   Let $B(\mathbf{x}_0, D)=\{\mathbf{y}: \|\mathbf{y}- \mathbf{x}_0\| < D\}$.
   Then,
   \begin{eqnarray}
       p_{min} \cdot D^d \le P(B(\mathbf{x}_0, D)) \le p_{max} \cdot D^d.
   \end{eqnarray}
\end{lemma}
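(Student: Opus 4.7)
The plan is a direct integration estimate against Lebesgue measure. By Assumption E4, $P$ admits a continuous density $p$ on $\Omega$ with $p_{\min} \le p(\mathbf{y}) \le p_{\max}$ for every $\mathbf{y} \in \Omega$, and by Assumption E3 the support of $P$ is contained in the compact set $\Omega$. Hence
\begin{equation*}
P(B(\mathbf{x}_0, D)) \;=\; \int_{B(\mathbf{x}_0, D) \cap \Omega} p(\mathbf{y}) \, d\lambda(\mathbf{y}),
\end{equation*}
and pulling the two pointwise density bounds outside the integral yields at once
\begin{equation*}
p_{\min}\, \lambda\bigl(B(\mathbf{x}_0, D) \cap \Omega\bigr) \;\le\; P(B(\mathbf{x}_0, D)) \;\le\; p_{\max}\, \lambda\bigl(B(\mathbf{x}_0, D) \cap \Omega\bigr).
\end{equation*}

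The remaining task is therefore purely geometric: to sandwich $\lambda(B(\mathbf{x}_0, D) \cap \Omega)$ between constant multiples of $D^d$. The upper bound is trivial, since $\lambda(B(\mathbf{x}_0, D) \cap \Omega) \le \lambda(B(\mathbf{x}_0, D)) = V_d D^d$, where $V_d$ is the volume of the Euclidean unit ball in $\mathbb{R}^d$. For the lower bound I would use compactness of $\Omega$ (with $\mathrm{diam}(\Omega) = 1$) together with $\mathbf{x}_0 \in \Omega$ to guarantee that, for $D$ below some fixed threshold, a uniform positive fraction of $B(\mathbf{x}_0, D)$ stays inside $\Omega$, so that $\lambda(B(\mathbf{x}_0, D) \cap \Omega) \ge c_{\Omega} \cdot D^d$ for some $c_{\Omega} > 0$ depending only on $\Omega$. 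The dimension-dependent constants $V_d$ and $c_{\Omega}$ can then be absorbed into the constants written as $p_{\min}$ and $p_{\max}$ in the statement.

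The only non-automatic step is the lower bound when $\mathbf{x}_0$ lies very close to $\partial \Omega$: in that situation $B(\mathbf{x}_0, D)$ may protrude largely outside $\Omega$, and a naive argument does not yield a uniform constant. This is the main obstacle, and it is handled by invoking a mild regularity assumption on $\partial \Omega$ (for example, an interior cone condition or Lipschitz boundary), implicit in the setup adopted here; under such a hypothesis the uniform inner-volume estimate holds and the proof is complete.
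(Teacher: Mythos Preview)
Your approach is exactly the paper's: integrate the pointwise bounds $p_{\min}\le p(\mathbf{y})\le p_{\max}$ over the ball and read off the result. The paper's proof is in fact less careful than yours---it simply writes $\int_{B(\mathbf{x}_0,D)} d\lambda = D^d$, silently absorbing the unit-ball volume constant $V_d$ and ignoring the intersection with $\Omega$ altogether; since the lemma is only used inside big-$O$ estimates downstream, those constants are immaterial, and your extra caution about the boundary is more rigor than the paper itself supplies.
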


\begin{proof}[proof of Lemma \ref{Lemma_aprox_neighbor_of_x}]
   From Assumption E4, $p_{min} \le p(\mathbf{x}) \le p_{max}$ holds, and by integrating over $B(\mathbf{x}_0, D)$ with $\lambda$, we obtain
   \begin{eqnarray}
        \int_{B(\mathbf{x}_0, D)}  p_{min} \ d\lambda \le &P(B(\mathbf{x}_0, D))& \le \int_{B(\mathbf{x}_0, D)} p_{max}, \   d\lambda \nonumber \\
       \therefore \ p_{min} \cdot D^d \le   &P(B(\mathbf{x}_0, D))& \le p_{max} \cdot D^d.
   \end{eqnarray}

    This completes the proof.
\end{proof}

\begin{lemma}\label{Lemma_upper_bd_neighbor_of_x}
    Let $B(\mathbf{x}_0, D)=\{\mathbf{y}: \|\mathbf{y}- \mathbf{x}_0\| < D\}$.
    Then,
    \begin{eqnarray}
        P(B(\mathbf{x}_0, D)) = C \cdot p(\mathbf{x}_0) \cdot D^d,
    \end{eqnarray}
    where $C$ is constant.
\end{lemma}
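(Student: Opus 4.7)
The plan is to evaluate $P(B(\mathbf{x}_0,D))$ directly as a Lebesgue integral of the density and then use the continuity of $p$ guaranteed by Assumption E4 to extract the leading-order term. Concretely, writing
\begin{equation}
P(B(\mathbf{x}_0,D)) = \int_{B(\mathbf{x}_0,D)} p(\mathbf{y})\, d\lambda(\mathbf{y}) = p(\mathbf{x}_0)\cdot \lambda(B(\mathbf{x}_0,D)) + \int_{B(\mathbf{x}_0,D)}\bigl(p(\mathbf{y}) - p(\mathbf{x}_0)\bigr)\, d\lambda(\mathbf{y}),
\end{equation}
I can handle each term separately.

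For the first term I use the standard formula $\lambda(B(\mathbf{x}_0,D)) = V_d\cdot D^d$, where $V_d = \pi^{d/2}/\Gamma(d/2+1)$ is the volume of the Euclidean unit ball in $\mathbb{R}^d$; this is a dimension-only constant that plays the role of $C$. For the second term, continuity of $p$ (from Assumption E4, together with the boundedness $0<p_{min}\le p\le p_{max}$) gives that, for every $\varepsilon>0$, there exists $\eta>0$ such that $|p(\mathbf{y})-p(\mathbf{x}_0)|<\varepsilon$ whenever $\|\mathbf{y}-\mathbf{x}_0\|<\eta$. Hence for $D<\eta$ the remainder integral is bounded in absolute value by $\varepsilon\cdot V_d\cdot D^d$, which is $o(D^d)$ as $D\to 0$.

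Combining the two pieces yields $P(B(\mathbf{x}_0,D)) = V_d\cdot p(\mathbf{x}_0)\cdot D^d + o(D^d)$, which is the intended asymptotic identity with $C=V_d$. Equivalently, one can interpret the statement as saying that $C(\mathbf{x}_0,D) := P(B(\mathbf{x}_0,D))/\bigl(p(\mathbf{x}_0)\cdot D^d\bigr)$ is bounded between $p_{min}\cdot V_d/p_{max}$ and $p_{max}\cdot V_d/p_{min}$ by Lemma \ref{Lemma_aprox_neighbor_of_x} (which gives the two-sided bound $p_{min}\,D^d\le P(B(\mathbf{x}_0,D))\le p_{max}\,D^d$), and converges to $V_d$ as $D\to 0$; in either reading the conclusion follows immediately.

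There is no substantive obstacle here; the only point that needs care is clarifying that $C$ should be read as the dimensional constant $V_d$ up to a negligible error, since a literal equality with a single constant $C$ independent of both $\mathbf{x}_0$ and $D$ holds only in the asymptotic sense. Once that convention is fixed, the proof reduces to the elementary continuity argument outlined above, parallel to the proof of Lemma \ref{Lemma_aprox_neighbor_of_x} but with the sharper leading-order constant extracted.
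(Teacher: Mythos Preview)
Your proof is correct and in some ways cleaner than the paper's. The paper argues that $p$, being bounded and continuous on the compact set $\Omega$, is Lipschitz continuous (a claim that is not literally true in general, though harmless here), then integrates the pointwise bound $p(\mathbf{y})\le p(\mathbf{x}_0)+C\|\mathbf{y}-\mathbf{x}_0\|$ over $B(\mathbf{x}_0,D)$ to obtain the one-sided estimate $P(B(\mathbf{x}_0,D))\le C\,p(\mathbf{x}_0)\,D^d$; so the lemma is really used as an upper bound, despite the ``$=$'' in the statement. Your decomposition $P(B)=p(\mathbf{x}_0)\,\lambda(B)+\int_B\bigl(p-p(\mathbf{x}_0)\bigr)\,d\lambda$ uses only continuity of $p$ and yields the sharper asymptotic $P(B)=V_d\,p(\mathbf{x}_0)\,D^d+o(D^d)$ with the explicit constant $C=V_d$; together with the two-sided bound from Lemma~\ref{Lemma_aprox_neighbor_of_x} it also delivers the uniform control the paper needs. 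Either route suffices for the subsequent application in Proposition~\ref{proposition_early_stopping}, where only the order $D^d$ with a constant multiple of $p(\mathbf{x}_0)$ is required.
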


\begin{proof}[proof of Lemma \ref{Lemma_upper_bd_neighbor_of_x}]
    From Assumption E4, $p$ is a bounded continuous function on $\Omega$.
    Since bounded continuous functions are Lipschitz continuous, $p(\mathbf{x})$ is Lipschitz continuous.

    Then, there exist a constant $C$ such that
    \begin{eqnarray}
        p(\mathbf{x}) \le p(\mathbf{\mathbf{x}_0}) + C\cdot \| \mathbf{x} - \mathbf{x}_0\|,
    \end{eqnarray}
    and by integrating over $B(\mathbf{x}_0, D)$ with $\lambda$, we obtain
    \begin{eqnarray}
        P(B(\mathbf{x}_0, D)) \le  C \cdot p({\mathbf{x}_0}) \cdot D^d.
    \end{eqnarray}

    This completes the proof.
\end{proof}

\begin{proposition}\label{proposition_early_stopping}
    For $\{T_k^{(N)}\}_{k=1}^{\infty}$ in Assumption E1, let $\hat{Q}^{(N)}_k$ be a probability defined as
    \begin{equation}
        d\hat{Q}^{(N)}_k =  e^{ - T_k^{(N)}} \cdot dP. \nonumber
    \end{equation}

    Then, under Assumpution E1-E6, for a sufficiently large $K > 0$, it holds that
    \begin{equation}
        E_{\mathbf{X}_P^{(N)}}[W_1(Q, \hat{Q}^{(N)}_{K})] \le 2 - N \cdot K^{- \frac{d}{2}} +  K^{-\frac{1}{2}}. \label{Eq_proposition_early_stopping_statement1}
    \end{equation}
\end{proposition}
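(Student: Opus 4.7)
The plan is to bound $W_1(Q,\hat{Q}^{(N)}_K)$ via a spatial partition argument, splitting $\Omega$ into a union of small balls around the training samples plus the complement, handling the former with Lipschitz control and the latter with a trivial total-variation bound.

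First I would use the Kantorovich--Rubinstein duality together with Assumption E3, which gives $\mathrm{diam}(\Omega)=1$, to obtain
\begin{equation}
W_1(Q,\hat{Q}^{(N)}_K)\le \int_{\Omega}\bigl|\,e^{-T_*(\mathbf{x})}-e^{-T_K^{(N)}(\mathbf{x})}\bigr|\,dP(\mathbf{x}),
\end{equation}
where $T_*=-\log dQ/dP$. I would then partition $\Omega=B\cup B^{c}$ with $B=\bigcup_{i=1}^{N} B(\mathbf{X}^{i}_{\sim P},D)$ and the choice $D=K^{-1/2}$, and estimate the two integrals separately.

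Inside $B$, I would invoke Lemmas \ref{lemma_e_T_K_N_upper_bound} and \ref{lemma_e_T_aster_upper_bound} to Taylor-expand $e^{-T_K^{(N)}}$ and $e^{-T_*}$ about each center $\mathbf{X}^{i}_{\sim P}$, so that the dominant contribution reduces to $\bigl|e^{-T_K^{(N)}(\mathbf{X}^{i}_{\sim P})}-e^{-T_*(\mathbf{X}^{i}_{\sim P})}\bigr|$ plus $O(D)+O(D^{2})$ remainders. To control this dominant term I would apply Corollary \ref{corollary_loss_is_mu_storongly_convex}, which says that the empirical loss is $\lambda$-strongly convex in the vector $\mathbf{t}=(T(\mathbf{X}^{i}_{\sim P}))_{i=1}^{N}$; combined with Assumption E1 giving $\hat L_{\alpha}^{(N)}(Q,P;T_K^{(N)})-\hat L_{\alpha}^{(N)}(Q,P;T_*)\le C_0/K$, strong convexity yields $\|\mathbf{t}_K-\mathbf{t}_*\|\lesssim\sqrt{2C_0/(\lambda K)}$. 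Hence $\max_{i}\bigl|T_K^{(N)}(\mathbf{X}^{i}_{\sim P})-T_*(\mathbf{X}^{i}_{\sim P})\bigr|=O(K^{-1/2})$, and by continuity of the exponential so is $\max_{i}\bigl|e^{-T_K^{(N)}(\mathbf{X}^{i}_{\sim P})}-e^{-T_*(\mathbf{X}^{i}_{\sim P})}\bigr|$. This is the source of the $K^{-1/2}$ term in the final bound.

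Outside $B$, I would use the crude bound $\bigl|e^{-T_*}-e^{-T_K^{(N)}}\bigr|\le e^{-T_*}+e^{-T_K^{(N)}}$, so that $\int_{B^{c}}\bigl|e^{-T_*}-e^{-T_K^{(N)}}\bigr|\,dP\le Q(B^{c})+\hat Q^{(N)}_{K}(B^{c})\le 2-Q(B)-\hat Q^{(N)}_{K}(B)$. Using the density lower bound $q\ge q_{\min}$ from Assumption E4 and Lemma \ref{Lemma_aprox_neighbor_of_x} applied to $Q$, each ball contributes at least $q_{\min}\,c_d D^{d}$ in measure; after taking expectation over $\mathbf{X}_{P}^{(N)}$ and absorbing $q_{\min}\,c_d$ into the constant, the aggregated lower bound on the ball measure is $\gtrsim N\cdot D^{d}=N\cdot K^{-d/2}$, producing the $-N\cdot K^{-d/2}$ contribution. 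Summing the two pieces gives $E_{\mathbf{X}_{P}^{(N)}}[W_1(Q,\hat Q^{(N)}_{K})]\le 2-N\cdot K^{-d/2}+K^{-1/2}$.

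The main obstacle is the pointwise step inside $B$: translating the empirical loss gap of Assumption E1 into uniform pointwise control at every sample. Corollary \ref{corollary_loss_is_mu_storongly_convex} only provides strong convexity on the compact interval $I_{L}^{i}$ around $-\log dQ/dP(\mathbf{X}^{i}_{\sim P})$, so the iterates $\mathbf{t}_{K}$ must be kept within this interval for the inequality $\|\mathbf{t}_K-\mathbf{t}_*\|^{2}\le(2/\lambda)(\hat L_{\alpha}^{(N)}(T_K^{(N)})-\hat L_{\alpha}^{(N)}(T_*))$ to apply; moreover the constant $\lambda$ depends on the densities $dQ/d\mu, dP/d\mu$ at the samples, which needs a uniform lower bound from the bounded-density Assumption E4. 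A secondary technical point is controlling overlap among the balls $B(\mathbf{X}^{i}_{\sim P},D)$ when samples cluster, which I would either absorb into constants using $p_{\max}$ (through Lemma \ref{Lemma_upper_bd_neighbor_of_x}) or handle by passing to an effective number of disjoint balls after taking the expectation over $\mathbf{X}_{P}^{(N)}$.
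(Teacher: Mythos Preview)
Your spatial partition into balls of radius $K^{-1/2}$ around the samples and the crude outside bound are exactly what the paper does. The genuine gap is in how you extract the $K^{-1/2}$ term. You compare $T_K^{(N)}$ directly to $T_*=-\log dQ/dP$ and try to obtain $\max_i\bigl|T_K^{(N)}(\mathbf{X}_i)-T_*(\mathbf{X}_i)\bigr|=O(K^{-1/2})$ from Assumption~E1 via Corollary~\ref{corollary_loss_is_mu_storongly_convex}. But that corollary gives $\lambda/2=\tfrac{1}{N}\min_i\{\cdots\}$, so the strong-convexity constant scales like $1/N$; the inequality $\tfrac{\lambda}{2}\|\mathbf t_K-\mathbf t_*\|^2\le \hat L_\alpha^{(N)}(T_K^{(N)})-\hat L_\alpha^{(N)}(T_*)\le C_0/K$ therefore yields $\|\mathbf t_K-\mathbf t_*\|\lesssim\sqrt{N/K}$, not $K^{-1/2}$. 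Your inside-$B$ contribution then carries an unwanted $\sqrt N$ factor, and the stated bound does not follow. You flag density dependence of $\lambda$ as an obstacle but miss this $1/N$, which is the real issue.

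The paper sidesteps this by inserting an intermediate measure $dQ_K=e^{-T_K}dP$ built from the \emph{population} sequence $\{T_k\}$ of Assumption~E2 and using the triangle inequality $E_\mu|\hat Q^{(N)}_K-Q|\le E_\mu|\hat Q^{(N)}_K-Q_K|+E_\mu|Q_K-Q|$. The second piece is $N$-free: Proposition~\ref{proposition_variational_upper_bound_by_alpha_div} bounds the total variation by $\sqrt{L_\alpha(Q,P;T_K)-L_\alpha(Q,P;T_*)}\le\sqrt{C_1/K}$ via Assumption~E2, producing exactly the $K^{-1/2}$. The first piece is handled by your inside/outside split, but comparing $e^{-T_K^{(N)}}$ to $e^{-T_K}$ rather than to $e^{-T_*}$; on the balls, a four-term telescoping through $e^{-T_K^{(N)}(\mathbf X_i)}$, $e^{-T_*(\mathbf X_i)}$, $e^{-T_*(\mathbf x)}$ is used, and the sample-point discrepancy $\sum_i P(B_i)\,|e^{-T_K^{(N)}(\mathbf X_i)}-e^{-T_*(\mathbf X_i)}|$ is controlled by Proposition~\ref{proposition_variational_upper_bound_by_alpha_div} applied to the empirical measures (not strong convexity), giving $N\cdot O(K^{-(d+1)/2})$, which is lower order. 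In short, you are missing both the intermediary $Q_K$ and the role of Assumption~E2; without them your route cannot deliver an $N$-independent $K^{-1/2}$.
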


\begin{corollary}\label{corollary_early_stopping_1}
    Let $K_0 =  \allowbreak N^{\frac{2}{d+\delta}}$ with $\delta > 0$.
    Then, under Assumpution E1-E6, for a sufficiently large $N$, it holds that
    \begin{equation}
        E_{\mathbf{X}_P^{(N)}}[W_1(Q, \hat{Q}^{(N)}_{K_0})] \le 2 - K_0^{\frac{\delta}{2}} + K_0^{-\frac{1}{2}}.
    \end{equation}
\end{corollary}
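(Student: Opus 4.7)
The proof is essentially a mechanical substitution into Proposition \ref{proposition_early_stopping}; there is no genuinely new content beyond specifying the tuning parameter. The plan is to set $K = K_0 = N^{2/(d+\delta)}$ in the bound
\[
E_{\mathbf{X}_P^{(N)}}[W_1(Q, \hat{Q}^{(N)}_{K})] \le 2 - N \cdot K^{-d/2} + K^{-1/2},
\]
which holds (under Assumptions E1--E6) for all sufficiently large $K$, and then simplify the middle term using the defining relation between $N$ and $K_0$.

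The key algebraic identity is that $K_0 = N^{2/(d+\delta)}$ is equivalent to $N = K_0^{(d+\delta)/2}$, from which
\[
N \cdot K_0^{-d/2} \;=\; K_0^{(d+\delta)/2} \cdot K_0^{-d/2} \;=\; K_0^{\delta/2}.
\]
Substituting this into the bound above immediately yields
\[
E_{\mathbf{X}_P^{(N)}}[W_1(Q, \hat{Q}^{(N)}_{K_0})] \le 2 - K_0^{\delta/2} + K_0^{-1/2},
\]
which is exactly the claim.

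The only remaining point to verify is that the phrase ``sufficiently large $K$'' in the hypothesis of Proposition \ref{proposition_early_stopping} is delivered by the phrase ``sufficiently large $N$'' in the corollary. Because $K_0 = N^{2/(d+\delta)}$ is strictly increasing in $N$ and diverges as $N \to \infty$, any threshold $K^{\star}$ required by the proposition corresponds to the threshold $N^{\star} = (K^{\star})^{(d+\delta)/2}$, so the translation is automatic. There is no real obstacle here: the substantive content (the strong-convexity argument, the balls-and-covering estimate through Lemmas \ref{lemma_e_T_K_N_upper_bound}--\ref{Lemma_upper_bd_neighbor_of_x}, and the SGD convergence rate imported via Assumptions E1 and E2) lies entirely in Proposition \ref{proposition_early_stopping}; the corollary merely records the rate obtained by balancing the two competing terms $N\cdot K^{-d/2}$ and $K^{-1/2}$ against each other, which is precisely the origin of the exponent $2/(d+\delta)$.
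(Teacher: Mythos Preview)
Your proof is correct and matches the paper's approach exactly: substitute $K_0 = N^{2/(d+\delta)}$ (equivalently $N = K_0^{(d+\delta)/2}$) into the bound of Proposition~\ref{proposition_early_stopping} and simplify $N\cdot K_0^{-d/2} = K_0^{\delta/2}$. Your additional remark justifying that ``sufficiently large $N$'' ensures ``sufficiently large $K$'' is a helpful clarification that the paper omits.
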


\begin{corollary}\label{corollary_early_stopping_2}
   In Corollary \ref{corollary_early_stopping_1}, let $\delta' > 0$ such that $N^{\frac{\delta'}{d+\delta'}} \allowbreak = \allowbreak 2$, and let $K_0  \allowbreak = \allowbreak N^{\frac{2}{d+\delta'}}$.
   Then, under Assumpution E1-E6, for a sufficiently large $N$, it holds that
    \begin{equation}
        E_{\mathbf{X}_P^{(N)}}[W_1(Q, \hat{Q}^{(N)}_{K_0})] \le K_0^{-\frac{1}{2}}.
    \end{equation}
   Thus, if $N > \left( \frac{1}{\varepsilon}\right)^{d+\delta'}$ then $E_{\mathbf{X}_P^{(N)}}[W_1({Q} ,\hat{{Q}}^{(N)}_{K_0})] < \varepsilon$.
\end{corollary}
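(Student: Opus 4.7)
The plan is to deduce the statement directly from Corollary \ref{corollary_early_stopping_1} by exploiting the particular choice of $\delta'$. First I would verify that $\delta'$ is well-defined for large $N$: the equation $N^{\delta'/(d+\delta')}=2$ can be rewritten as $\delta'=d\log 2/(\log N-\log 2)$, which yields a positive $\delta'$ whenever $N>2$. With this $\delta'$ in hand, the definition $K_{0}=N^{2/(d+\delta')}$ gives
\begin{equation}
K_{0}^{\delta'/2}=N^{\delta'/(d+\delta')}=2,
\end{equation}
so the two leading terms in the bound of Corollary \ref{corollary_early_stopping_1} cancel exactly.

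Next I would invoke Corollary \ref{corollary_early_stopping_1} with $\delta=\delta'$, which is applicable because Assumptions E1--E6 are inherited from the hypotheses. Substituting the identity above into the displayed inequality of that corollary,
\begin{equation}
E_{\mathbf{X}_{P}^{(N)}}[W_{1}(Q,\hat{Q}^{(N)}_{K_{0}})]\le 2-K_{0}^{\delta'/2}+K_{0}^{-1/2}=2-2+K_{0}^{-1/2}=K_{0}^{-1/2},
\end{equation}
which is the first claim.

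Finally, the sample-size statement is a short algebraic manipulation. Suppose $N>(1/\varepsilon)^{d+\delta'}$. Raising both sides to the power $2/(d+\delta')$ gives $K_{0}=N^{2/(d+\delta')}>\varepsilon^{-2}$, hence $K_{0}^{-1/2}<\varepsilon$, and combining this with the inequality already obtained yields $E_{\mathbf{X}_{P}^{(N)}}[W_{1}(Q,\hat{Q}^{(N)}_{K_{0}})]<\varepsilon$. I do not anticipate a real obstacle here; the only subtlety is to ensure that ``sufficiently large $N$'' is interpreted to mean $N$ large enough both for the existence of $\delta'$ (i.e.\ $N>2$) and for the hypotheses of Corollary \ref{corollary_early_stopping_1} to apply, but these are already part of the corollary being cited.
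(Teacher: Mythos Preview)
Your proposal is correct and follows essentially the same approach as the paper: both compute $K_{0}^{\delta'/2}=N^{\delta'/(d+\delta')}=2$ and substitute into the bound of Corollary \ref{corollary_early_stopping_1} so that the leading terms cancel, leaving $K_{0}^{-1/2}$. Your version is slightly more detailed in that you verify $\delta'>0$ exists for $N>2$ and spell out the final $\varepsilon$ implication, but these are minor elaborations of the same argument.
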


\begin{proof}[proof of Proposition \ref{proposition_early_stopping}]
    Let $Q_K$ be a probability defined as
    \begin{equation}
        dQ_K =  e^{ - T_K} \cdot dP.
    \end{equation}
    Intuitively, $Q_K$ is the true balanced probability distribution at a step $K$.

    First, from the triangle inequality for the $L_{1}$ norm, we have
    \begin{equation}
        E_{\mu}\left|\hat{Q}^{(N)}_{K} - Q\right| \le E_{\mu}\Big| \hat{Q}^{(N)}_{K} - Q_K \Big| +  E_{\mu}\Big|Q_K - Q\Big|.  \label{Eq_proof_proposition_early_stopping_final_upper_0}
    \end{equation}
    Considering the expectation $E_{\mathbf{X}_P^{(N)}} [\cdot]$ for the both sides of the above equation, we see
    \begin{equation}
         E_{\mathbf{X}_P^{(N)}} \left[ E_{\mu}\left|\hat{Q}^{(N)}_{K} - Q\right| \right]  \le
              E_{\mathbf{X}_P^{(N)}} \left[ E_{\mu}\Big| \hat{Q}^{(N)}_{K} - Q_K \Big| \right]
              + E_{\mathbf{X}_P^{(N)}} \left[  E_{\mu}\Big|Q_K - Q\Big|\right].  \label{Eq_proof_proposition_early_stopping_L1_divided}
    \end{equation}

    Next, we obtain the upper bound of the first term in (\ref{Eq_proof_proposition_early_stopping_L1_divided}).

    Let $\Delta_i = B(\mathbf{X}_i, 1/\sqrt{K})$. Subsequently, let $\Delta = \bigcup_{i=1}^N \Delta_i$.
    Then, we have
    \begin{eqnarray}
        && E_{\mu}\left|  \hat{Q}^{(N)}_{K} - Q_K \right| \nonumber\\
        &=& \int \left| e^{ - T_k^{(N)}} \cdot \frac{dP}{d\mu} -  e^{-T_k}\cdot \frac{dP}{d\mu} \right|  d\mu \nonumber\\
        &=& \int \left| e^{ - T_k^{(N)}}  - e^{-T_k} \right| \frac{dP}{d\mu} d\mu \nonumber\\
        &=& E_{P}\left| e^{ - T_k^{(N)}}  - e^{-T_k} \right| \nonumber\\
        &=& E_{P}\left[ id_{\Delta } \left| e^{ - T_k^{(N)}}  - e^{-T_k} \right|  \right] +  E_{P}\left[ id_{\Omega \setminus \Delta } \left| e^{ - T_k^{(N)}}  - e^{-T_k} \right|  \right] \nonumber\\
        &\le& E_{P}\left[ id_{\Delta } \left| e^{ - T_k^{(N)}}  - e^{-T_k} \right|  \right]
              + E_{P}\left[ id_{\Omega \setminus \Delta } \left| e^{ - T_k^{(N)}} \right|  \right]
              + E_{P}\left[ id_{\Omega \setminus \Delta } \left| e^{-T_k} \right|  \right]. \nonumber
    \end{eqnarray}
    Considering the expectation $E_{\mathbf{X}_P^{(N)}} [\cdot]$ for the both sides of the above equation, we see
    \begin{eqnarray}
      &&  E_{\mathbf{X}_P^{(N)}} \left[ E_{\mu}\left|  \hat{Q}^{(N)}_{K} - Q_K \right|  \right] \nonumber\\
      &\le& E_{\mathbf{X}_P^{(N)}} \left[E_{P}\left[ id_{\Delta } \left| e^{ - T_k^{(N)}}  - e^{-T_k} \right|  \right] \right] \nonumber\\
      &&  \qquad + \  E_{\mathbf{X}_P^{(N)}} \left[E_{P}\left[ id_{\Omega \setminus \Delta } \left| e^{ - T_k^{(N)}} \right|  \right] \right]
          + E_{\mathbf{X}_P^{(N)}} \left[E_{P}\left[ id_{\Omega \setminus \Delta } \left| e^{-T_k} \right|  \right] \right].
          \label{Eq_proof_proposition_early_stopping_neghboor_divided}
    \end{eqnarray}

    To obtain the upper bound of the first term in (\ref{Eq_proof_proposition_early_stopping_neghboor_divided}), we see
    \begin{eqnarray}
       && E_{P}\left[ id_{\Delta } \left| e^{ - T_k^{(N)}}  - e^{-T_k} \right|  \right]  \nonumber\\
       &=& E_{P}\left[ \sum_{i=1}^N id_{\Delta_i} \left| e^{ - T_k^{(N)}}  - e^{-T_k} \right|  \right]  \nonumber\\
       &=&  \sum_{i=1}^N E_{P} \left[id_{\Delta_i} \left| e^{ - T_k^{(N)}}  -  e^{-T_k} \right|  \right]  \nonumber\\
       &=&  \sum_{i=1}^N E_{P} \left[  id_{B(\mathbf{X}_i, 1/\sqrt{K})} \left| e^{ - T_k^{(N)}}  -  e^{-T_k} \right|  \right] \nonumber \\
       &=&  E_{P}\left[\left|  \sum_{i=1}^N  id_{B(\mathbf{X}_i, 1/\sqrt{K})} \cdot e^{ - T_k^{(N)}}
            -  \sum_{i=1}^N  id_{B(\mathbf{X}_i, 1/\sqrt{K})} \cdot  e^{-T_k(\mathbf{X}_i)} \right|  \right]. \label{Eq_proof_proposition_early_stopping_local_i_divided_1}
    \end{eqnarray}
    Subsequently, we have
    \begin{eqnarray}
       &&  E_{P}\left[\left|  \sum_{i=1}^N  id_{B(\mathbf{X}_i, 1/\sqrt{K})} \cdot e^{ - T_k^{(N)}}
               -  \sum_{i=1}^N  id_{B(\mathbf{X}_i, 1/\sqrt{K})} \cdot  e^{-T_k(\mathbf{X}_i)} \right|  \right] \nonumber\\
       &=&   E_{P}\left[\left| \sum_{i=1}^N  id_{B(\mathbf{X}_i, 1/\sqrt{K})}(\mathbf{x})  \cdot e^{ - T_k^{(N)}(\mathbf{x})}
             -  \sum_{i=1}^N  id_{B(\mathbf{X}_i, 1/\sqrt{K})}(\mathbf{x})  \cdot e^{- T_k^{(N)}(\mathbf{X}_i)} \right. \right. \nonumber\\
       &&   \qquad \qquad + \ \sum_{i=1}^N  id_{B(\mathbf{X}_i, 1/\sqrt{K})}(\mathbf{x})\cdot e^{- T_k^{(N)}(\mathbf{X}_i)}
             -   \sum_{i=1}^N  id_{B(\mathbf{X}_i, 1/\sqrt{K})} (\mathbf{x}) \cdot e^{- T_*(\mathbf{X}_i)}  \nonumber\\
       &&   \qquad \qquad + \ \sum_{i=1}^N  id_{B(\mathbf{X}_i, 1/\sqrt{K})} (\mathbf{x}) \cdot e^{- T_*(\mathbf{X}_i)}
               - \sum_{i=1}^N  id_{B(\mathbf{X}_i, 1/\sqrt{K})} (\mathbf{x}) \cdot e^{-T_*(\mathbf{x}) }  \nonumber\\
       &&   \qquad \qquad + \left. \left. \sum_{i=1}^N  id_{B(\mathbf{X}_i, 1/\sqrt{K})} (\mathbf{x})  \cdot e^{- T_*(\mathbf{x})}
               - \sum_{i=1}^N  id_{B(\mathbf{X}_i, 1/\sqrt{K})}(\mathbf{x}) \cdot  e^{-T_k(\mathbf{x})} \right| \right] \nonumber\\
       &=&  E_{P}\left[\left|  \sum_{i=1}^N  id_{B(\mathbf{X}_i, 1/\sqrt{K})}(\mathbf{x}) \left\{  e^{ - T_k^{(N)}(\mathbf{x})}  -  e^{-T_k^{(N)}(\mathbf{X}_i)} \right\}  \right.  \right. \nonumber\\
       && \qquad \qquad + \ \sum_{i=1}^N  id_{B(\mathbf{X}_i, 1/\sqrt{K})} \left\{ e^{-T_k^{(N)}(\mathbf{X}_i)} -  e^{-T_*(\mathbf{X}_i)} \right\} \nonumber\\
       && \qquad \qquad + \ \sum_{i=1}^N  id_{B(\mathbf{X}_i, 1/\sqrt{K})} \left\{  e^{- T_*(\mathbf{X}_i)} -  e^{-T_*(\mathbf{x})} \right\} \nonumber\\
       && \qquad \qquad + \ \left. \left. \sum_{i=1}^N  id_{B(\mathbf{X}_i, 1/\sqrt{K})} (\mathbf{x})
           \left\{e^{- T_*(\mathbf{x})} - e^{-T_k(\mathbf{x})} \right\}  \right| \right]  \nonumber\\
       &\le&  E_{P}\left[ \sum_{i=1}^N  id_{B(\mathbf{X}_i, 1/\sqrt{K})}(\mathbf{x}) \left|   e^{ - T_k^{(N)}(\mathbf{x})}  -  e^{-T_k^{(N)}(\mathbf{X}_i)}   \right| \right] \nonumber\\
       && \qquad \qquad + \ E_{P}\left[ \sum_{i=1}^N  id_{B(\mathbf{X}_i, 1/\sqrt{K})}(\mathbf{x}) \left| e^{-T_k^{(N)}(\mathbf{X}_i)} -  e^{-T_*(\mathbf{X}_i)}   \right| \right] \nonumber\\
       && \qquad \qquad + \ E_{P}\left[\sum_{i=1}^N  id_{B(\mathbf{X}_i, 1/\sqrt{K})}(\mathbf{x}) \left| e^{- T_*(\mathbf{X}_i)} -  e^{-T_*(\mathbf{x})} \right| \right]  \nonumber\\
       && \qquad \qquad + E_{P}\left[ \sum_{i=1}^N  id_{B(\mathbf{X}_i, 1/\sqrt{K})} (\mathbf{x})
       \left| e^{- T_*(\mathbf{x})} - e^{-T_k(\mathbf{x})}  \right| \right]. \nonumber
    \end{eqnarray}

    Considering the expectation $E_{\mathbf{X}_P^{(N)}} [\cdot]$ for the both sides of the above equation, we obtain
    \begin{eqnarray}
        && E_{\mathbf{X}_P^{(N)}} \left[  E_{P}\left[ id_{\Delta } \left| e^{ - T_k^{(N)}}  - e^{-T_k} \right|  \right]  \right]   \nonumber\\
        &\le& E_{\mathbf{X}_P^{(N)}} \left[  E_{P}\left[ \sum_{i=1}^N  id_{B(\mathbf{X}_i, 1/\sqrt{K})}(\mathbf{x}) \left|   e^{ - T_k^{(N)}(\mathbf{x})}  -  e^{-T_k^{(N)}(\mathbf{X}_i)}   \right| \right] \right]  \nonumber\\
        && \qquad \qquad
            + \ E_{\mathbf{X}_P^{(N)}} \left[  \ E_{P}\left[ \sum_{i=1}^N  id_{B(\mathbf{X}_i, 1/\sqrt{K})}(\mathbf{x}) \left| e^{-T_k^{(N)}(\mathbf{X}_i)} -  e^{-T_*(\mathbf{X}_i)}   \right| \right]  \right] \nonumber\\
        && \qquad \qquad
            + \ E_{\mathbf{X}_P^{(N)}} \left[  \ E_{P}\left[\sum_{i=1}^N  id_{B(\mathbf{X}_i, 1/\sqrt{K})}(\mathbf{x}) \left| e^{- T_*(\mathbf{X}_i)} -  e^{-T_*(\mathbf{x})} \right| \right]   \right]  \nonumber\\
        && \qquad \qquad
            + \ E_{\mathbf{X}_P^{(N)}} \left[  E_{P}\left[ \sum_{i=1}^N  id_{B(\mathbf{X}_i, 1/\sqrt{K})} (\mathbf{x})
        \left| e^{- T_*(\mathbf{x})} - e^{-T_k(\mathbf{x})}  \right| \right]   \right].   \label{Eq_proof_proposition_early_stopping_local_i_divided_2}
    \end{eqnarray}

    Now, from Lemma \ref{lemma_e_T_K_upper_bound}, we have, for $\mathbf{x} \in id_{B(\mathbf{X}_i, 1/\sqrt{K})}$
    \begin{equation}
        \left|e^{- T_k^{(N)}(\mathbf{x})} - e^{-T_k^{(N)}(\mathbf{X}_i)} \right| =  e^{-T_k^{(N)}(\mathbf{X}_i)}  \left\{ O\left(\frac{1}{\sqrt{K}}\right) + O\left(\frac{1}{K}\right) \right\} + O_{\mathbf{X}_i}\left(\frac{1}{K}\right). \nonumber\\
    \end{equation}
    Then, we see
     \begin{eqnarray}
        &&  E_{P}\left[ \sum_{i=1}^N  id_{B(\mathbf{X}_i, 1/\sqrt{K})}(\mathbf{x}) \left|   e^{ - T_k^{(N)}(\mathbf{x})}  -  e^{-T_k^{(N)}(\mathbf{X}_i)}   \right| \right]  \nonumber\\
        &=& \sum_{i=1}^N  E_{P}\left[   id_{B(\mathbf{X}_i, 1/\sqrt{K})}(\mathbf{x})
                 \left\{  e^{-T_k^{(N)}(\mathbf{X}_i)}  \left\{ O\left(\frac{1}{\sqrt{K}}\right) + O\left(\frac{1}{K}\right) \right\} + O_{\mathbf{X}_i}\left(\frac{1}{\sqrt{K}} \right) \right\}\right] \nonumber\\
        &=& \sum_{i=1}^N  E_{P}\left[id_{B(\mathbf{X}_i, 1/\sqrt{K})}(\mathbf{x})  \right]
              \left\{  e^{-T_k^{(N)}(\mathbf{X}_i)}  \left\{ O\left(\frac{1}{\sqrt{K}}\right) + O\left(\frac{1}{K}\right) \right\} + O_{\mathbf{X}_i}\left(\frac{1}{\sqrt{K}} \right)  \right\}
         \nonumber\\
        &=& \sum_{i=1}^N P\left(B(\mathbf{X}_i, 1/\sqrt{K})\right)
             \left\{  e^{-T_k^{(N)}(\mathbf{X}_i)}  \left\{ O\left(\frac{1}{\sqrt{K}}\right) + O\left(\frac{1}{K}\right) \right\} + O_{\mathbf{X}_i}\left(\frac{1}{\sqrt{K}}\right) \right\} \nonumber\\
        &=&  \sum_{i=1}^N P\left(B(\mathbf{X}_i, 1/\sqrt{K})\right) \cdot O_{\mathbf{X}_i}\left(\frac{1}{\sqrt{K}}\right) \nonumber\\
        &\le&  \sum_{i=1}^N  p_{max} \cdot  O_{\mathbf{X}_i}\left(\frac{1}{\left(\sqrt{K} \right)^{d}}\right) \cdot O_{\mathbf{X}_i}\left(\frac{1}{\sqrt{K}}\right)
            \label{proof_proposition_early_stopping_upper_bound_local_sum_k_N_using_lemma_Lemma_aprox_neighbor_of_x} \\
        &=&  \sum_{i=1}^N O_{\mathbf{X}_i} \left( K^{-\frac{d+1}{2}}\right). \nonumber
    \end{eqnarray}
    Here, we obtain (\ref{proof_proposition_early_stopping_upper_bound_local_sum_k_N_using_lemma_Lemma_aprox_neighbor_of_x}) by using Lemma \ref{Lemma_aprox_neighbor_of_x}.

    Considering the expectation $E_{\mathbf{X}_P^{(N)}} [\cdot]$ for the both sides of the above equation, we have
    \begin{eqnarray}
        &&E_{\mathbf{X}_P^{(N)}} \left[ E_{P}\left[ \sum_{i=1}^N  id_{B(\mathbf{X}_i, 1/\sqrt{K})}(\mathbf{x}) \left|   e^{ - T_k^{(N)}(\mathbf{x})}  -  e^{-T_k^{(N)}(\mathbf{X}_i)}   \right| \right]  \right] \nonumber\\
        &=&  E_{\mathbf{X}_P^{(N)}} \left[\sum_{i=1}^N O_{\mathbf{X}_i} \left( K^{-\frac{d+1}{2}}\right)\right] \nonumber \\
        &=&  \sum_{i=1}^N O \left( K^{-\frac{d+1}{2}}\right) \nonumber \\
        &=&  N \cdot  O\left( K^{-\frac{d+1}{2}}\right). \label{proof_proposition_early_stopping_upper_bound_local_sum_k_N_contin}
    \end{eqnarray}

    In addition, from Lemma \ref{lemma_e_T_K_upper_bound} and \ref{lemma_e_T_aster_upper_bound}, it holds that, for $\mathbf{x} \in id_{B(\mathbf{X}_i, 1/\sqrt{K})}$,
    \begin{equation}
        \left|e^{- T_k(\mathbf{x})} - e^{-T_k(\mathbf{X}_i)} \right|
        =  e^{-T_k(\mathbf{X}_i)}  \left\{ O\left(\frac{1}{\sqrt{K}}\right) + O\left(\frac{1}{K}\right) \right\} + O_{\mathbf{X}_i}\left(\frac{1}{K}\right), \nonumber
    \end{equation}
    and
    \begin{equation}
        \left|e^{- T_*(\mathbf{x})} - e^{-T_*(\mathbf{X}_i)} \right|
        =  e^{-T_*(\mathbf{X}_i)}  \left\{ O\left(\frac{1}{\sqrt{K}}\right) + O\left(\frac{1}{K}\right) \right\} + O_{\mathbf{X}_i}\left(\frac{1}{K}\right).  \nonumber
    \end{equation}

    In the similar manner to obtain (\ref{proof_proposition_early_stopping_upper_bound_local_sum_k_N_contin}), it holds that, for $\mathbf{x} \in id_{B(\mathbf{X}_i, 1/\sqrt{K})}$,
    we have
    \begin{equation}
        E_{\mathbf{X}_P^{(N)}} \left[
            E_{P}\left[ \sum_{i=1}^N  id_{B(\mathbf{X}_i, 1/\sqrt{K})}(\mathbf{x}) \left| e^{- T_k(\mathbf{x})}  -  e^{-T_k(\mathbf{X}_i)} \right| \right]  \right]
          = N \cdot  O \left( K^{-\frac{d+1}{2}}\right), \label{proof_proposition_early_stopping_upper_bound_local_sum_k_true}
    \end{equation}
    and
    \begin{equation}
         E_{\mathbf{X}_P^{(N)}} \left[E_{P}\left[ \sum_{i=1}^N  id_{B(\mathbf{X}_i, 1/\sqrt{K})}(\mathbf{x}) \left| e^{- T_*(\mathbf{x})} - e^{-T_*(\mathbf{X}_i)} \right|\right]\right]
          =  N \cdot  O \left( K^{-\frac{d+1}{2}}\right).  \label{proof_proposition_early_stopping_upper_bound_local_sum_true}
    \end{equation}
    Here, we obtain the upper bounds of the first, third, and fourth terms in (\ref{Eq_proof_proposition_early_stopping_local_i_divided_2}).

    We now have the upper bounds of the second term in (\ref{Eq_proof_proposition_early_stopping_local_i_divided_2}).

    First, we obtain
    \begin{eqnarray}
        && E_{P}\left[ \sum_{i=1}^N  id_{B(\mathbf{X}_i, 1/\sqrt{K})}(\mathbf{x}) \left| e^{-T_k^{(N)}(\mathbf{X}_i)} -  e^{-T_*(\mathbf{X}_i)}   \right| \right] \nonumber\\
        &=& \sum_{i=1}^N  E_{P}\left[  id_{B(\mathbf{X}_i, 1/\sqrt{K})}(\mathbf{x})  \right]  \left| e^{-T_k^{(N)}(\mathbf{X}_i)} -  e^{-T_*(\mathbf{X}_i)} \right| \nonumber\\
        &=& \sum_{i=1}^N  P\left(B(\mathbf{X}_i, 1/\sqrt{K} \right) \left| e^{-T_k^{(N)}(\mathbf{X}_i)} -  e^{-T_*(\mathbf{X}_i)} \right|. \label{Eq_proof_proposition_early_stopping_upper_bnd_T_K_N}
    \end{eqnarray}
    Then, from Lemma \ref{Lemma_upper_bd_neighbor_of_x}, we have
    \begin{eqnarray}
        &&  \sum_{i=1}^N P\left(B(\mathbf{X}_i, 1/\sqrt{K} \right)\left| e^{-T_k^{(N)}(\mathbf{X}_i)} -  e^{-T_*(\mathbf{X}_i)} \right| \nonumber\\
        &=&  C \, \sum_{i=1}^N p\left(\mathbf{X}_i\right)\cdot  O\left(\frac{1}{\left(\sqrt{K} \right)^{d}}\right)   \left| e^{-T_k^{(N)}(\mathbf{X}_i)} -  e^{-T_*(\mathbf{X}_i)} \right|.
        \label{Eq_proof_proposition_early_stopping_T_K_n_conv_0}
    \end{eqnarray}
    Next, note that,
    \begin{equation}
       \sum_{i=1}^N p\left(\mathbf{X}_i \right) \cdot \left| e^{-T_k^{(N)}(\mathbf{X}_i)} -  e^{-T_*(\mathbf{X}_i)} \right| = N \cdot E_{\nu}\left|\hat{Q}_k^{(N)} - Q^{(N)} \right|. \label{Eq_proof_proposition_early_stopping_T_K_n_conv_1}
    \end{equation}
    Here, $\nu$ is the countable measure on $\left\{\mathbf{X}_1, \mathbf{X}_2, \ldots, \mathbf{X}_N\right\}$ defined as (\ref{Eq_proof_proposition_representation_of_density_ratios_between_emprical_distributions_countable_measure_df}).

    In addition, since
    \begin{equation}
        \hat{L}_{\alpha}^{(N)}(Q, P;T) = L_{\alpha}(\hat{Q}^{(N)}, \hat{P}^{(N)};T) \nonumber\\ \label{Eq_proof_proposition_early_stopping_T_K_L_L_empi}
    \end{equation}
    holds, we obtain, from Propsition \ref{proposition_variational_upper_bound_by_alpha_div} and Assumption E1,
    \begin{eqnarray}
        \sqrt{\frac{\alpha}{2}} \cdot E_{\nu}\Big| \hat{Q}_K^{(N)} - Q^{(N)} \Big|
         &\le& \sqrt{L_{\alpha}(Q^{(N)}, P^{(N)}; T_K^{(N)}) -  L_{\alpha}(Q^{(N)}, P^{(N)}; T_*)} \nonumber \\
         &=& \sqrt{\hat{L}_{\alpha}^{(N)}(Q, P; T_K^{(N)}) -  \hat{L}_{\alpha}^{(N)}(Q, P; T_*)} \nonumber \\
         &=& O\left( \frac{1}{\sqrt{K}} \right). \label{Eq_proof_proposition_early_stopping_T_K_n_conv_2}
    \end{eqnarray}
    Finally, from (\ref{Eq_proof_proposition_early_stopping_upper_bnd_T_K_N}),
    (\ref{Eq_proof_proposition_early_stopping_T_K_n_conv_0}), (\ref{Eq_proof_proposition_early_stopping_T_K_n_conv_1}), and (\ref{Eq_proof_proposition_early_stopping_T_K_n_conv_2}), we have
    \begin{eqnarray}
        && E_{P}\left[ \sum_{i=1}^N  id_{B(\mathbf{X}_i, 1/\sqrt{K})}(\mathbf{x}) \left| e^{-T_k^{(N)}(\mathbf{X}_i)} -  e^{-T_*(\mathbf{X}_i)}   \right| \right] \nonumber\\
        &\le&  O\left(\frac{1}{\left(\sqrt{K} \right)^{d}}\right) \sum_{i=1}^N p\left(\mathbf{X}_i\right) \left| e^{-T_k^{(N)}(\mathbf{X}_i)} -  e^{-T_*(\mathbf{X}_i)} \right|. \nonumber\\
        &\le&  O\left(\frac{1}{\left(\sqrt{K} \right)^{d}}\right) \cdot  N \cdot E_{\nu}\left|\hat{Q}_k^{(N)} - Q \right|\nonumber\\
        &\le&  O\left(\frac{1}{\left(\sqrt{K} \right)^{d}}\right) \cdot  N \cdot O\left( \frac{1}{\sqrt{K}} \right) \nonumber\\
        &=& N \cdot O\left(K^{- \frac{d+1}{2}} \right).
    \end{eqnarray}
    Considering the expectation $E_{\mathbf{X}_P^{(N)}} [\cdot]$ for the both sides of the above equation, we have
    \begin{eqnarray}
        &&E_{\mathbf{X}_P^{(N)}} \left[ E_{P}\left[ \sum_{i=1}^N  id_{B(\mathbf{X}_i, 1/\sqrt{K})}(\mathbf{x}) \left| e^{-T_k^{(N)}(\mathbf{X}_i)} -  e^{-T_*(\mathbf{X}_i)}   \right| \right] \right] \nonumber\\
        &=&  E_{\mathbf{X}_P^{(N)}} \left[N \cdot  O\left( K^{-\frac{d+1}{2}}\right)\right] \nonumber \\
        &=&  N \cdot  O\left( K^{-\frac{d+1}{2}}\right). \label{proof_proposition_early_stopping_upper_bound_local_sum_t_k_N_each}
    \end{eqnarray}

    Summarizing (\ref{Eq_proof_proposition_early_stopping_local_i_divided_2}),
    (\ref{proof_proposition_early_stopping_upper_bound_local_sum_k_N_contin}),
    (\ref{proof_proposition_early_stopping_upper_bound_local_sum_k_true}),
    (\ref{proof_proposition_early_stopping_upper_bound_local_sum_true}),
    and (\ref{proof_proposition_early_stopping_upper_bound_local_sum_t_k_N_each}), we obtain
   \begin{eqnarray}
        && E_{\mathbf{X}_P^{(N)}} \left[ E_{P}\left[ id_{\Delta } \left| e^{ - T_k^{(N)}}  - e^{-T_k} \right|  \right]   \right] \nonumber\\
        &\le& E_{\mathbf{X}_P^{(N)}} \left[ E_{P}\left[ \sum_{i=1}^N  id_{B(\mathbf{X}_i, 1/\sqrt{K})}(\mathbf{x}) \left|   e^{ - T_k^{(N)}(\mathbf{x})}  -  e^{-T_k^{(N)}(\mathbf{X}_i)}   \right| \right]\right] \nonumber\\
        && \qquad \qquad + E_{\mathbf{X}_P^{(N)}} \left[\ E_{P}\left[ \sum_{i=1}^N  id_{B(\mathbf{X}_i, 1/\sqrt{K})}(\mathbf{x}) \left| e^{-T_k^{(N)}(\mathbf{X}_i)} -  e^{-T_*(\mathbf{X}_i)}   \right| \right] \right]\nonumber\\
        && \qquad \qquad + E_{\mathbf{X}_P^{(N)}} \left[\ E_{P}\left[\sum_{i=1}^N  id_{B(\mathbf{X}_i, 1/\sqrt{K})}(\mathbf{x}) \left| e^{- T_*(\mathbf{X}_i)} -  e^{-T_*(\mathbf{x})} \right| \right] \right] \nonumber\\
        && \qquad \qquad + E_{\mathbf{X}_P^{(N)}} \left[ E_{P}\left[ \sum_{i=1}^N  id_{B(\mathbf{X}_i, 1/\sqrt{K})} (\mathbf{x})
        \left| e^{- T_*(\mathbf{x})} - e^{-T_k(\mathbf{x})}  \right| \right] \right]  \nonumber\\
        &=&  N \cdot O \left(K^{- \frac{d+1}{2}} \right) +  N \cdot O \left(K^{- \frac{d+1}{2}} \right)
             + N \cdot O \left(K^{- \frac{d+1}{2}} \right) +  N \cdot O \left(K^{- \frac{d+1}{2}} \right) \nonumber\\
        &=&  N \cdot O \left(K^{- \frac{d+1}{2}} \right).  \label{Eq_proof_proposition_early_stopping_upper_bound_of_first_term_upper_bound}
    \end{eqnarray}
    Here, we see the upper bound of the first term in (\ref{Eq_proof_proposition_early_stopping_neghboor_divided}).

    Next, we obtain the upper bound of the second and third term in (\ref{Eq_proof_proposition_early_stopping_neghboor_divided}).

    First,  we obtain the upper bound of the second term in (\ref{Eq_proof_proposition_early_stopping_neghboor_divided}).
    Now, we have
    \begin{eqnarray}
        && E_{\mathbf{X}_P^{(N)}} \left[E_{P}\left[ id_{\Omega \setminus \Delta } \left| e^{-T_k} \right|  \right] \right] \nonumber\\
        &=&  E_{\mathbf{X}_P^{(N)}} \left[ E_{P}\left[ id_{\Omega \setminus \bigcup_{i=1}^N \Delta_i } (\mathbf{x}) \cdot \left| e^{ - T_k^{(N)}} \right|  \right]  \right] \nonumber\\
        &=& E_{\mathbf{X}_P^{(N)}} \left[ E_{P}\left[ \left\{1 - id_{\bigcup_{i=1}^N \Delta_i }(\mathbf{x}) \right\} \cdot e^{ - T_k^{(N)}} \right]  \right] \nonumber\\
        &=& E_{\mathbf{X}_P^{(N)}} \left[  1 -  E_{P}\left[\sum_{i=1}^N id_{B(\mathbf{X}_i, 1/\sqrt{K})} (\mathbf{x}) \cdot  e^{ - T_k^{(N)}}  \right]  \right] \nonumber\\
        &=& E_{\mathbf{X}_P^{(N)}} \left[  1 -  \sum_{i=1}^N E_{P}\left[ id_{B(\mathbf{X}_i, 1/\sqrt{K})} (\mathbf{x}) \cdot  e^{ - T_k^{(N)}}  \right]  \right].
        \label{Eq_proof_proposition_early_stopping_for_second_term_upper_bound_3}
    \end{eqnarray}
     Then, from Lemma \ref{lemma_e_T_K_N_upper_bound} and Lemma \ref{Lemma_aprox_neighbor_of_x}, we have
     \begin{eqnarray}
         && E_{P} \left[id_{B(\mathbf{X}_i, 1/\sqrt{K})}(\mathbf{x}) \cdot e^{ - T_k^{(N)}(\mathbf{x})} \right]\nonumber\\
         &=& E_{P} \left[ id_{B\left(\mathbf{X}_i, 1/\sqrt{K}\right)} (\mathbf{x})
         \cdot \left\{ e^{-T_k^{(N)}(\mathbf{X}_i)} + e^{-T_k^{(N)}(\mathbf{X}_i)}
         \cdot \left\{ O \left( \frac{1}{\sqrt{K}}\right) + O \left( \frac{1}{K} \right) \right\} +  O_{\mathbf{x}} \left(\frac{1}{\sqrt{K}}\right) \right\}\right] \nonumber\\
          &=& E_{P} \left[
          id_{ B(\mathbf{X}_i, 1/\sqrt{K})}(\mathbf{x}) \right] \cdot \left\{ e^{-T_k^{(N)}(\mathbf{X}_i)} + e^{-T_k^{(N)}(\mathbf{X}_i)}
          \cdot \left\{ O \left( \frac{1}{\sqrt{K}}\right) + O \left( \frac{1}{K} \right) \right\} +  O_{\mathbf{x}} \left(\frac{1}{\sqrt{K}}\right) \right\}  \nonumber\\
          &=& P\left( B(\mathbf{X}_i, 1/\sqrt{K}) \right) \cdot \left\{ e^{-T_k^{(N)}(\mathbf{X}_i)} + e^{-T_k^{(N)}(\mathbf{X}_i)}
          \cdot \left\{ O \left( \frac{1}{\sqrt{K}}\right) + O \left( \frac{1}{K} \right) \right\} +  O_{\mathbf{x}} \left(\frac{1}{\sqrt{K}}\right) \right\} \nonumber \\
          &\ge&  p_{min} \cdot   O_{\mathbf{X}_i}\left(\frac{1}{\left(\sqrt{K} \right)^{d}}\right)  \cdot \left\{ e^{-T_k^{(N)}(\mathbf{X}_i)} + e^{-T_k^{(N)}(\mathbf{X}_i)}
          \cdot \left\{ O \left( \frac{1}{\sqrt{K}}\right) + O \left( \frac{1}{K} \right) \right\} +  O_{\mathbf{x}} \left(\frac{1}{\sqrt{K}}\right) \right\} \nonumber \\
          &=&  O_{\mathbf{X}_i}\left(K^{- \frac{d}{2}}\right) + O_{\mathbf{X}_i}\left(K^{- \frac{d+1}{2}}\right). \label{Eq_proof_proposition_early_stopping_for_second_term_upper_bound_2}
     \end{eqnarray}

    From (\ref{Eq_proof_proposition_early_stopping_for_second_term_upper_bound_3}) and (\ref{Eq_proof_proposition_early_stopping_for_second_term_upper_bound_2}), we see
    \begin{eqnarray}
        &&E_{\mathbf{X}_P^{(N)}} \left[E_{P}\left[ id_{\Omega \setminus \Delta } \left| e^{-T_k^{(N)}} \right|  \right] \right]      \nonumber\\
        &=& E_{\mathbf{X}_P^{(N)}} \left[  1 -  \sum_{i=1}^N E_{P}\left[ id_{B(\mathbf{X}_i, 1/\sqrt{K})} (\mathbf{x}) \cdot  e^{ - T_k^{(N)}}  \right]  \right]  \nonumber\\
        &\le& E_{\mathbf{X}_P^{(N)}} \left[  1 -  \sum_{i=1}^N O_{\mathbf{X}_i}\left(K^{- \frac{d}{2}}\right) + O_{\mathbf{X}_i}\left(K^{- \frac{d+1}{2}}\right) \right]  \nonumber\\
        &=&   1 -  N\cdot \left\{ O\left(K^{- \frac{d}{2}}\right) + O\left(K^{- \frac{d+1}{2}}\right)   \right\} \nonumber\\
        &=&   1 -  N\cdot O\left(K^{- \frac{d}{2}}\right) \label{Eq_proof_proposition_early_stopping_upper_bound_of_second_term}
    \end{eqnarray}

    In the similar manner to obtain (\ref{Eq_proof_proposition_early_stopping_upper_bound_of_second_term}),
    we obtain the upper bound of the third term in (\ref{Eq_proof_proposition_early_stopping_neghboor_divided}):
    \begin{equation}
       E_{\mathbf{X}_P^{(N)}} \left[E_{P}\left[ id_{\Omega \setminus \Delta } \left| e^{-T_k} \right|  \right] \right] = 1 -  N \cdot O\left(K^{- \frac{d}{2}}\right).
       \label{Eq_proof_proposition_early_stopping_upper_bound_of_third_term}
    \end{equation}

    Summarizing (\ref{Eq_proof_proposition_early_stopping_neghboor_divided}), (\ref{Eq_proof_proposition_early_stopping_upper_bound_of_first_term_upper_bound}),  (\ref{Eq_proof_proposition_early_stopping_upper_bound_of_second_term}) and (\ref{Eq_proof_proposition_early_stopping_upper_bound_of_third_term}), we have
    \begin{eqnarray}
        && E_{\mathbf{X}_P^{(N)}} \left[E_{\mu}\left|  \hat{Q}^{(N)}_{K} - Q_K \right| \right] \nonumber\\
        &\le& N \cdot O \left(K^{- \frac{d+1}{2}} \right) +  \left\{1 -  N \cdot O\left(K^{- \frac{d}{2}}\right)  \right\}
         +  \left\{1 -  N \cdot O\left(K^{- \frac{d}{2}}\right)  \right\} \nonumber\\
        &=& 2 - N\cdot O\left(K^{- \frac{d}{2}}\right). \label{Eq_proof_proposition_early_stopping_final_upper_1}
    \end{eqnarray}

    For the upper bound of the second term in (\ref{Eq_proof_proposition_early_stopping_L1_divided}), from Propsition \ref{proposition_variational_upper_bound_by_alpha_div} we obtain
    \begin{equation}
      \sqrt{\frac{\alpha}{2}} E_{\mu}\Big| Q_K - Q \Big|  \le  \sqrt{L_{\alpha}(Q, P; T_K) -  L_{\alpha}(Q, P;  T_*)}.
    \end{equation}
    Thus, under Assumption E2, we see
    \begin{equation}
      E_{\mu}\Big| Q_K - Q \Big|  \le  \frac{C_0'}{\sqrt{K}},
    \end{equation}
    where $C_0'=\sqrt{(2\cdot C_0)/ \alpha}$.

    Considering the expectation $E_{\mathbf{X}_P^{(N)}} [\cdot]$ for the both sides of the above equation, we have
    \begin{equation}
        E_{\mathbf{X}_P^{(N)}} \left[ E_{\mu}\Big| Q_K - Q \Big|\right] = O\left( K^{-\frac{1}{2}}\right).
        \label{Eq_proof_proposition_early_stopping_final_upper_2}
    \end{equation}

    Finally, (\ref{Eq_proof_proposition_early_stopping_final_upper_0}), (\ref{Eq_proof_proposition_early_stopping_final_upper_1}), and (\ref{Eq_proof_proposition_early_stopping_final_upper_2}),
    we have
    \begin{eqnarray}
           E_{\mathbf{X}_P^{(N)}}[W_1(Q, \hat{Q}^{(N)}_{K_0})] &\le& E_{\mathbf{X}_P^{(N)}} \left[ \mathrm{diam}(\Omega) \cdot E_{\mu}\left|\hat{Q}^{(N)}_{K} - Q\right|  \right] \nonumber \\
           &=& E_{\mathbf{X}_P^{(N)}} \left[E_{\mu}\left|\hat{Q}^{(N)}_{K} - Q\right|  \right]  \nonumber \\
           &\le& E_{\mathbf{X}_P^{(N)}} \left[E_{\mu}\Big| \hat{Q}^{(N)}_{K} - Q_K \Big| \right] + E_{\mathbf{X}_P^{(N)}} \Big[ E_{\mu}\left|Q_K - Q\right| \Big] \nonumber\\
           &=&  2 -  N \cdot O\left(K^{- \frac{d}{2}}\right)  +  O\left( K^{-\frac{1}{2}}\right).  \nonumber\\
    \end{eqnarray}

    From this, for sufficiently large $K > 0$, we see
    \begin{equation}
        E_{\mathbf{X}_P^{(N)}}[W_1(Q, \hat{Q}^{(N)}_{K_0})] \le 2 - N \cdot K^{- \frac{d}{2}} +  K^{-\frac{1}{2}}. \nonumber
    \end{equation}
    Here, we show (\ref{Eq_proposition_early_stopping_statement1}).

    This completes the proof.
\end{proof}

\begin{proof}[proof of Corollary \ref{corollary_early_stopping_1}]
For \ref{Eq_proposition_early_stopping_statement1}, substituting $K_0$ for $K$, and $N=K_0^\frac{d+\delta}{2}$,
we have
\begin{eqnarray}
    E_{\mathbf{X}_P^{(N)}}[W_1(Q, \hat{Q}^{(N)}_{K_0})] &\le& 2 -  K_0^\frac{d+\delta}{2} \cdot  K_0^{- \frac{d}{2}} +  K_0^{-\frac{1}{2}} \nonumber\\
    &=& 2 - K_0^\frac{\delta}{2} + K_0^{-\frac{1}{2}}. \label{Eq_proposition_early_stopping_statement2}
\end{eqnarray}

This completes the proof.
\end{proof}

\begin{proof}[proof of Corollary \ref{corollary_early_stopping_2}]
For the setting of the proposition, we have $K_0^{\frac{\delta'}{2}} = N^{\frac{\delta'}{d+\delta'}} = 2$.
Thus, for \ref{Eq_proposition_early_stopping_statement2}, we see
\begin{eqnarray}
    E_{\mathbf{X}_P^{(N)}}[W_1(Q, \hat{Q}^{(N)}_{K_0})] &\le& 2 -  K_0^\frac{d+\delta}{2} \cdot  K_0^{- \frac{d}{2}} +  K_0^{-\frac{1}{2}} \nonumber\\
    &=& 2 - 2 + K_0^{-\frac{1}{2}} \nonumber \\
    &=&  K_0^{-\frac{1}{2}}. \nonumber
\end{eqnarray}

This completes the proof.
\end{proof}

\newpage
\section{Neumerical Experiments}\label{section_neumerical_experiments}
In this section, we report the results of numerical experiments conducted in this study.

\subsection{Experiments on convergence for different values of $\alpha$}\label{subsection_neumerical_experiments_convegence_various_alphas}
In this section, we report the results of the numerical experiments related to the discussion in Section \ref{Section_Estimation_of_Balancing_Weights}:
the results of the numerical experiments on the convergence of learning for different values of $\alpha$ are presented.

\paragraph{Experimental Setup.}
For $\alpha=-3, -2, -1, 0.2, 0.5, 0,8, 2.0, 3.0$, and $4.0$, we generated training and test dataset, and then trained an NGB model with the training dataset while estimating the $\alpha$ divergence at each learning step with the test dataset.
One hundred numerical simulations were performed for each $\alpha$.
As a result of the experiment, the median of the estimated value and ranges between the 45th and 55th percentile quartiles and between the 5th and 95th percentile quartiles at each learning step are reported.

\paragraph{Synthetic Data.}
We generated synthetic data of size 5000 from 5-dimantional normal distribution $\{X_1, X_2, \ldots, X_5\}$ such that $E[X_i]=0$, $\mathrm{Var}[X_i] = 1$ and $E[X_i\cdot X_j]=0.8$ ($i \neq j$),
for each of the training and test datasets.

\paragraph{Estimating the $\alpha$ divergence.}
The $\alpha$ divergence was estimated in the following way
\begin{eqnarray}
    \hat{D}_{\alpha}(Q||P)(t) &=& \frac{1}{\alpha \cdot(1-\alpha)} - \frac{1}{\alpha} \hat{E}_Q\left[e^{\alpha \cdot T_{\theta_t}(\mathbf{x}^{te})}\right] -
    \frac{1}{1- \alpha}  \hat{E}_P\left[e^{(\alpha - 1) \cdot T_{\theta_t}(\mathbf{x}^{te})}\right] \label{Eq_subsection_neumerical_experiments_convegence_various_alphas_loss_0}\\
    &=& \frac{1}{\alpha \cdot(1-\alpha)} - \mathcal{L}_{\alpha}(\theta_t), \label{Eq_subsection_neumerical_experiments_convegence_various_alphas_loss_1}
\end{eqnarray}
where $T_{\theta_t}$ is a model at learning step $t$ in Algorithm \ref{algo_train} and $\mathbf{x}^{te}$ denotes the test dataset.
Note that, decreasing of the estimated divergence $\hat{D}_{\alpha}(Q||P)(t)$ in (\ref{Eq_subsection_neumerical_experiments_convegence_various_alphas_loss_0}) implies increasing of the loss $\mathcal{L}_{\alpha}(\theta_t)$ in (\ref{Eq_subsection_neumerical_experiments_convegence_various_alphas_loss_1}).

\paragraph{Implementation and Training Details.}
We used a neural network which has 3 hidden layers of 100 units in each layer.
The Adam algorithm in PyTorch was used.
For the hyperparameters in the training, the learning rate was 0.001, BathSize was 2500, and the number of epochs was 500.
A NVDIA Tesla K80 GPU was used. It took approximately four hours to conduct all simulations for each value of $\alpha$.

\begin{figure}
    \centering
    \begin{tabular}{cc}
        \begin{minipage}[t]{0.55\linewidth}
            \includegraphics[keepaspectratio, scale=0.29]{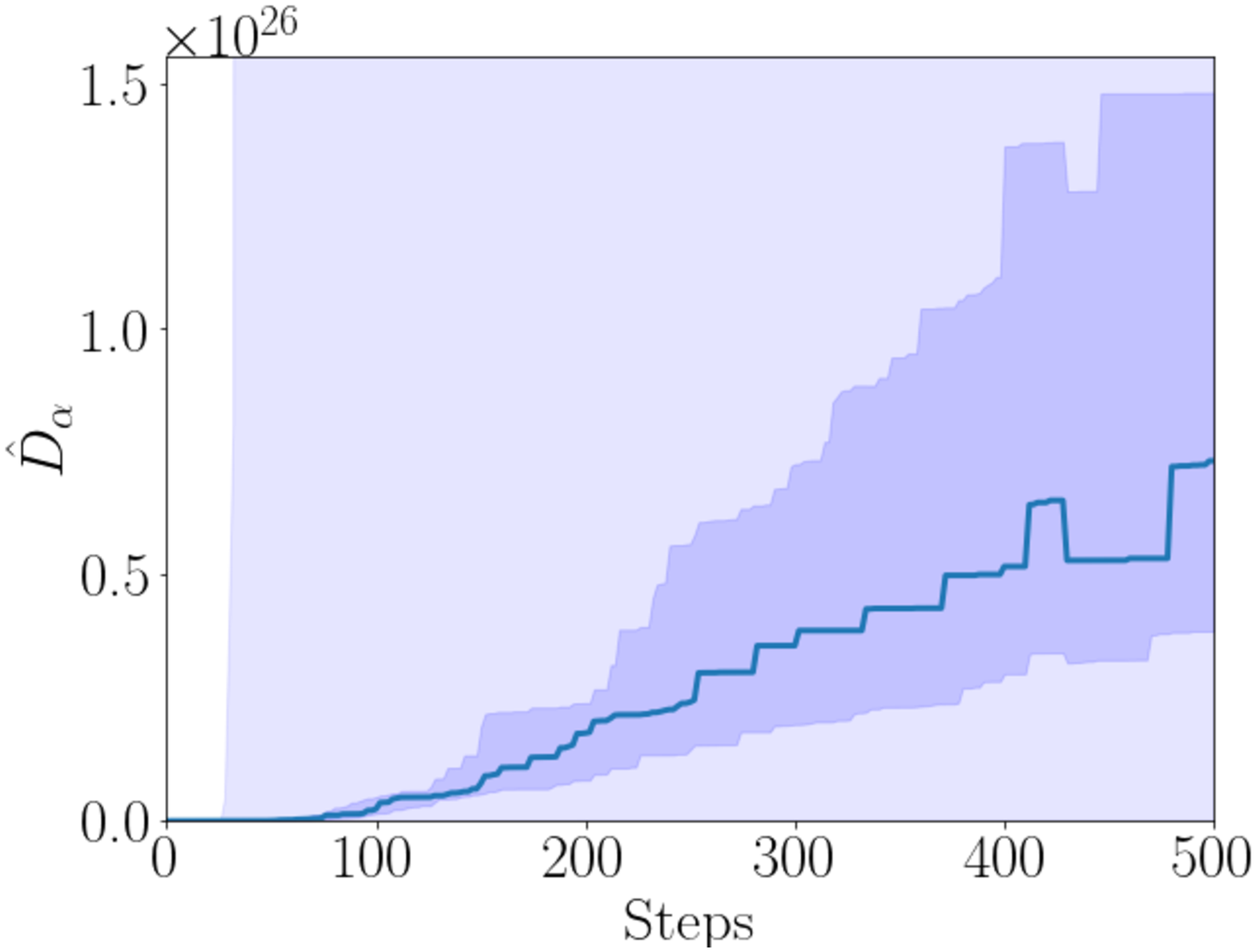}
            \subcaption{$\alpha=-3$}
        \end{minipage} &
        \begin{minipage}[t]{0.55\linewidth}
            \includegraphics[keepaspectratio, scale=0.29]{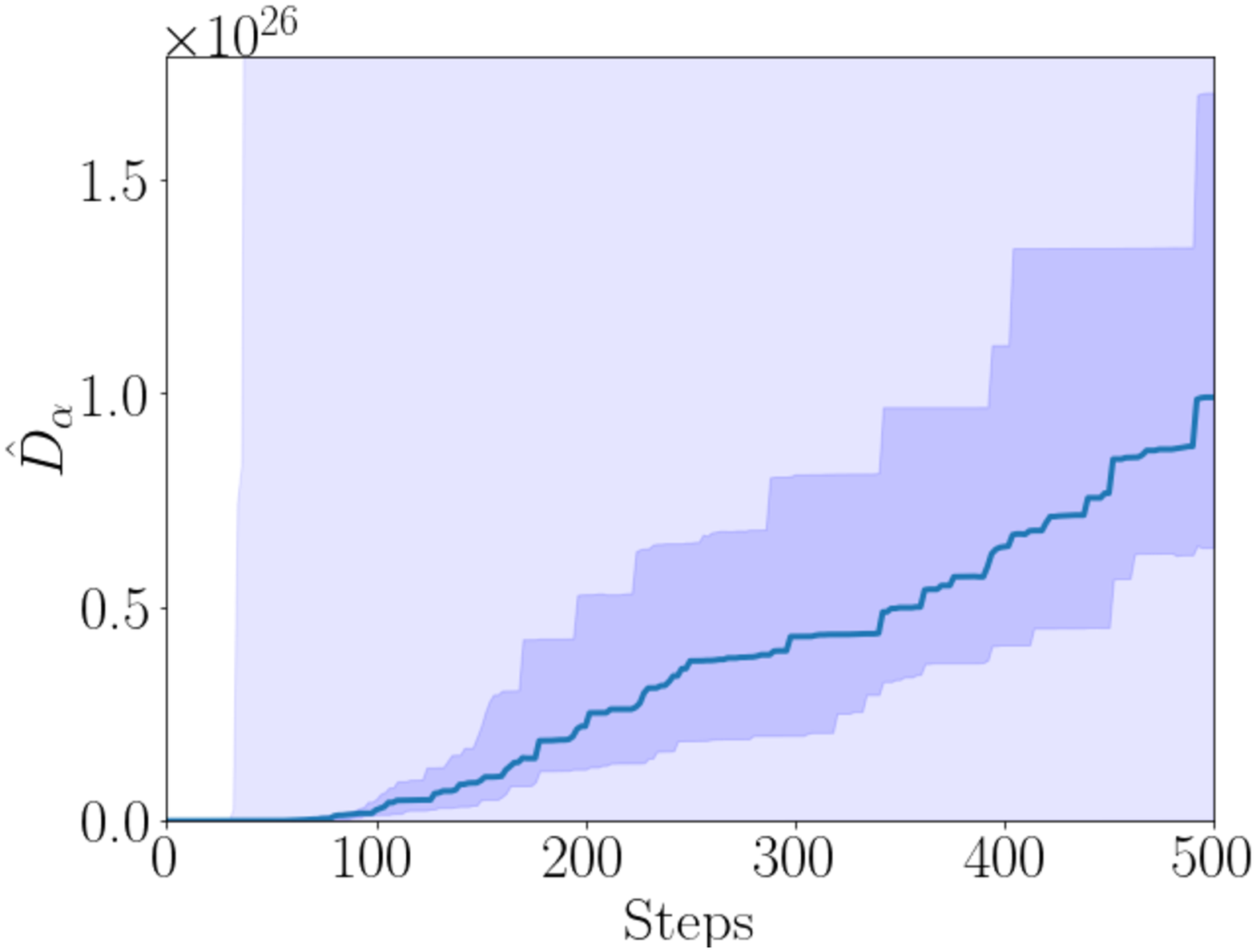}
            \subcaption{$\alpha=-2$}
        \end{minipage} \\
        \begin{minipage}[t]{0.55\hsize}
            \includegraphics[keepaspectratio, scale=0.29]{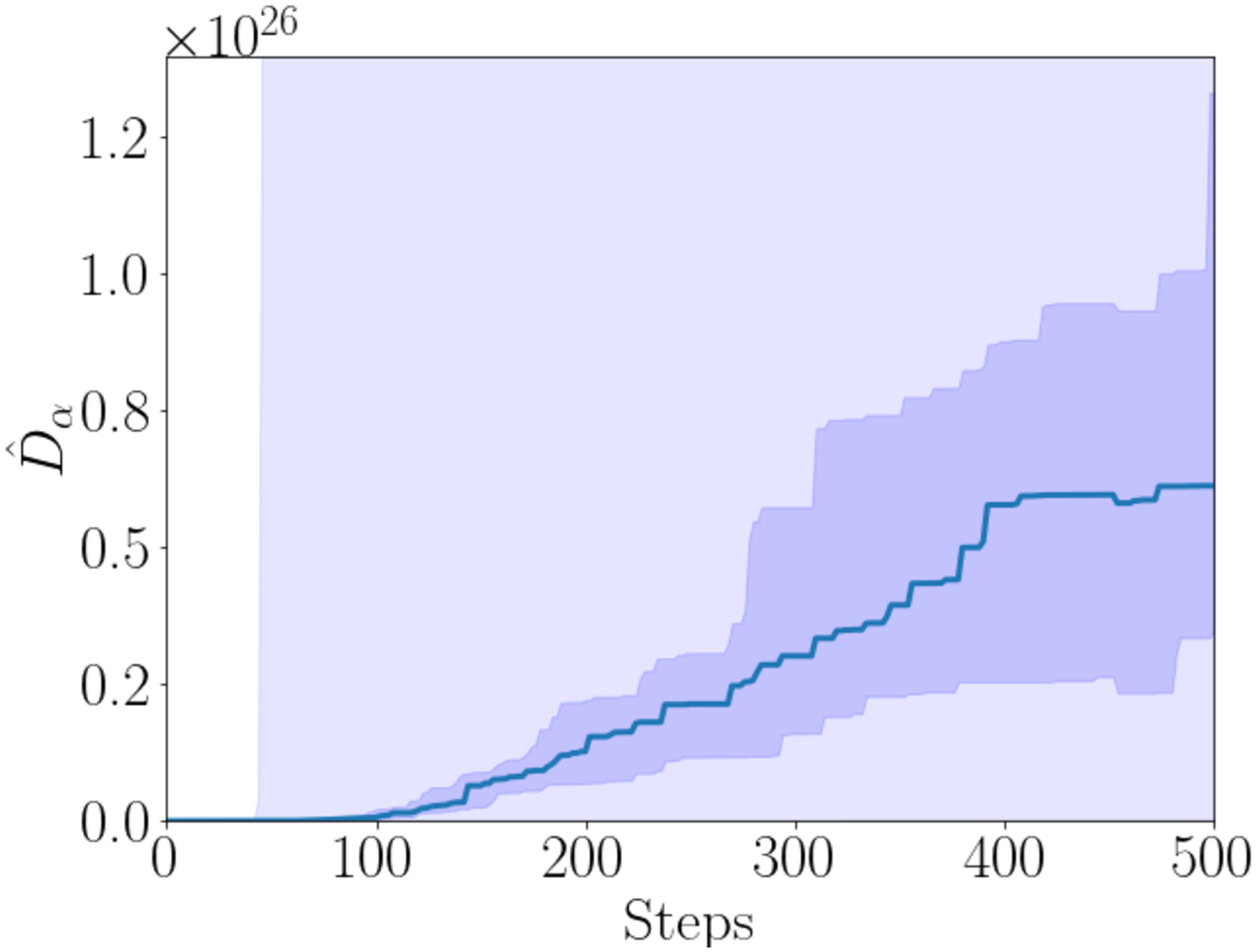}
            \subcaption{$\alpha=-1$}
        \end{minipage}&
        \begin{minipage}[t]{0.55\hsize}
            \includegraphics[keepaspectratio, scale=0.29]{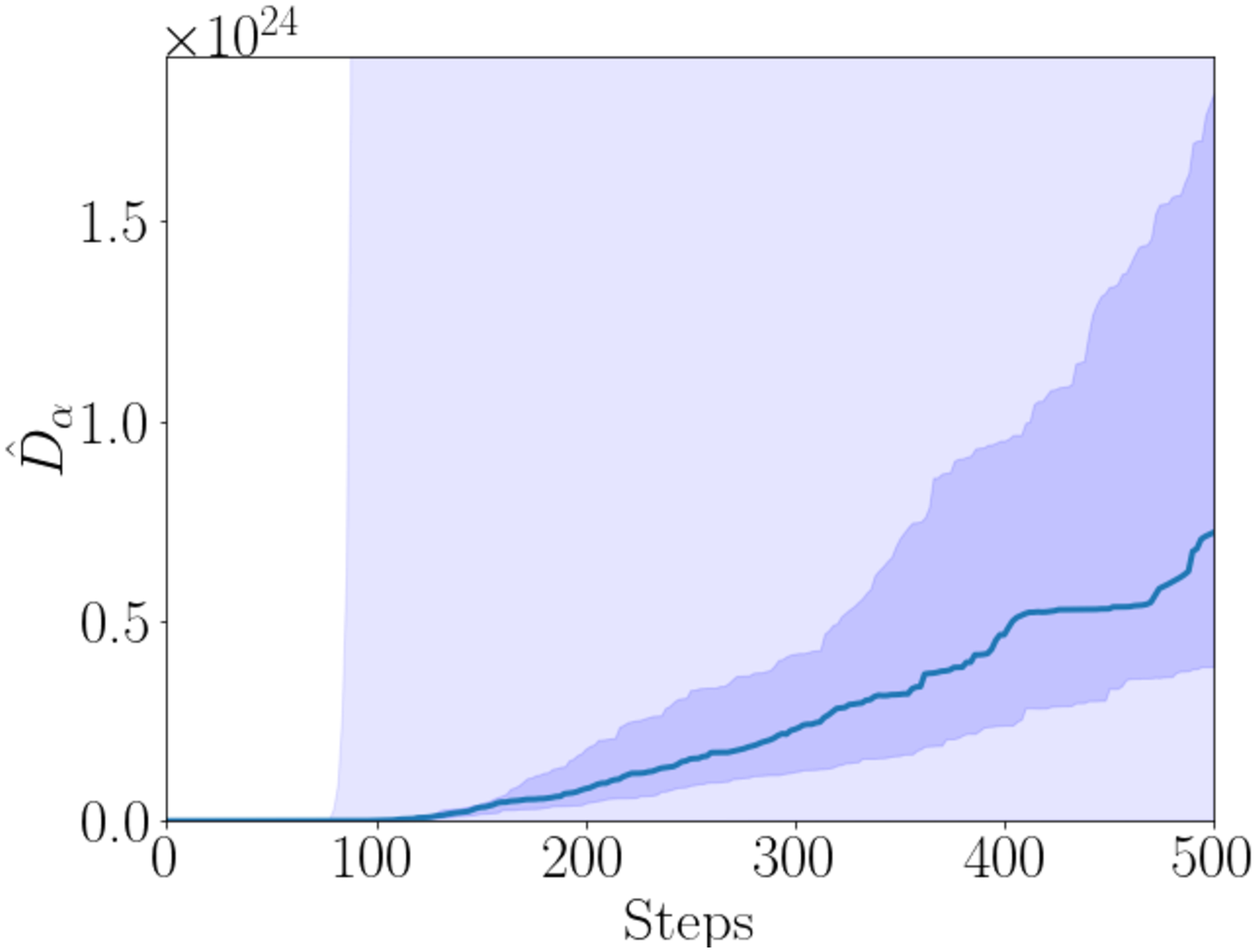}
            \subcaption{$\alpha=2.0$}
        \end{minipage} \\
        \begin{minipage}[t]{0.55\hsize}
            \includegraphics[keepaspectratio, scale=0.29]{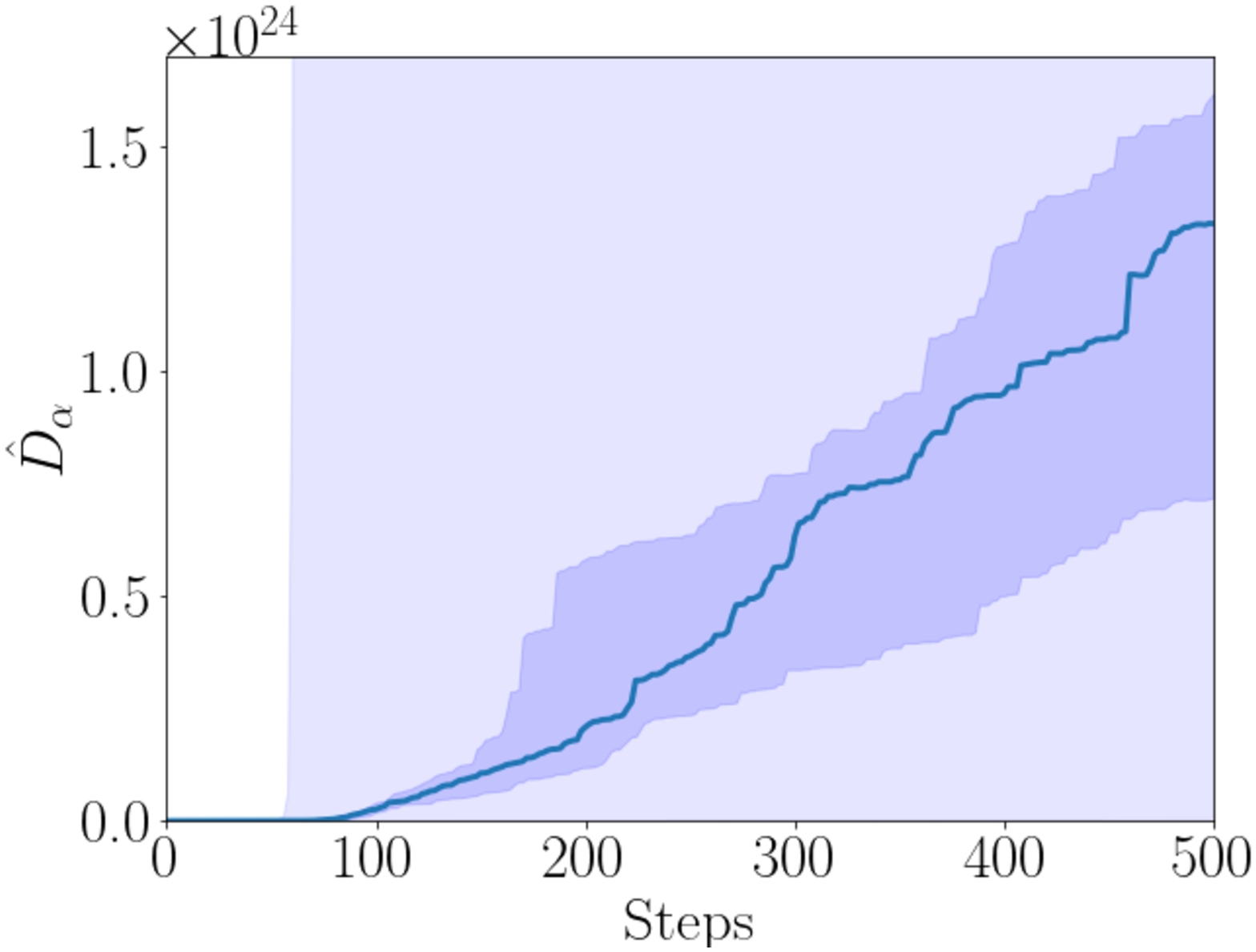}
            \subcaption{$\alpha=3.0$}
        \end{minipage} &
        \begin{minipage}[t]{0.55\hsize}
            \includegraphics[keepaspectratio, scale=0.29]{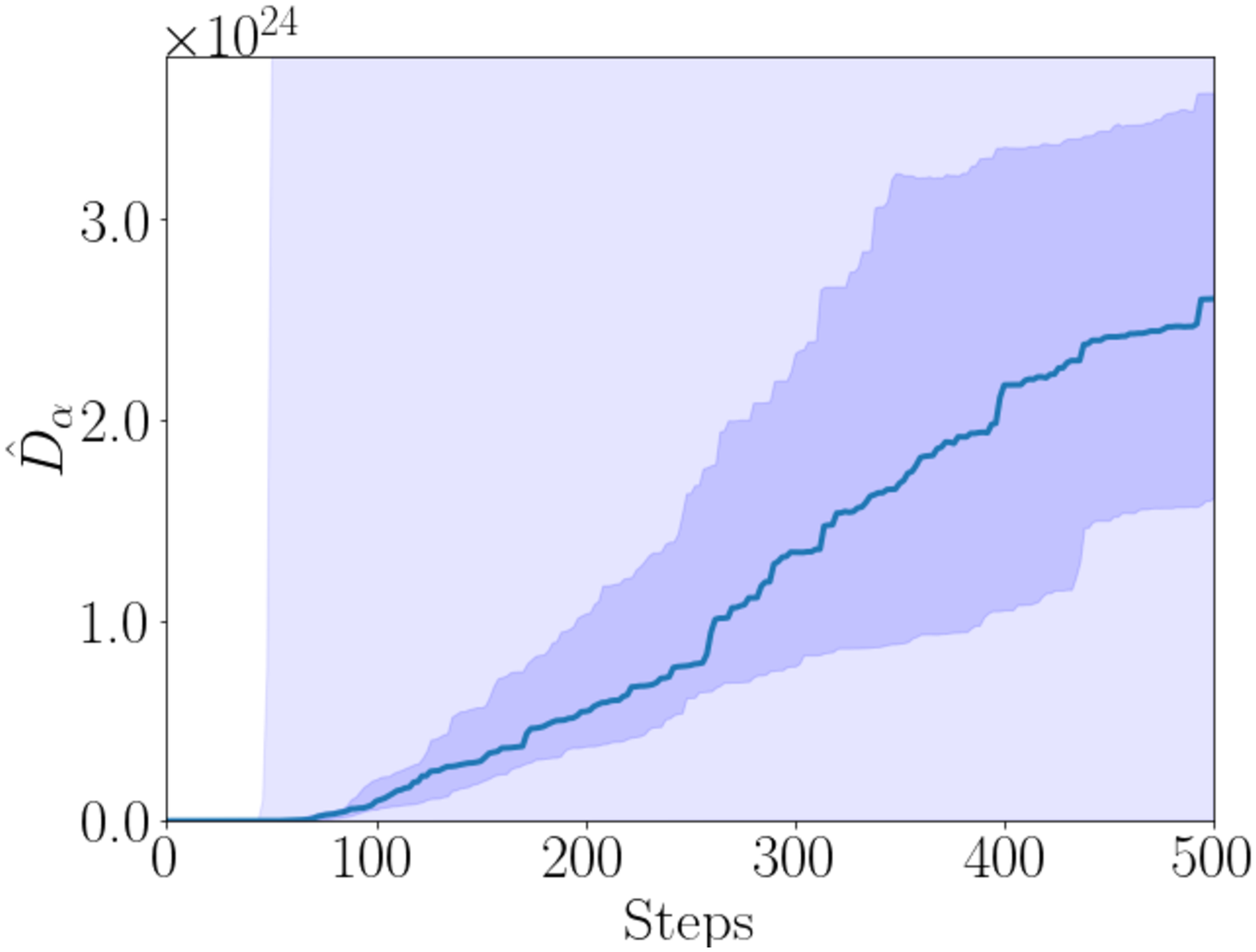}
            \subcaption{$\alpha=4.0$}
        \end{minipage}
    \end{tabular}
    \caption{Results of estimating the $\alpha$ divergence for $\alpha=-3, -2, -1, 2, 3$ and $4$, over the number of learning steps during the optimization.
        The $y$-axis of each graph represents the estimated value of the $\alpha$ divergence, and the $x$-axis of each graph represents the learning step.
        The solid blue line shows the median of the estimates of the $\alpha$ divergence.
        The dark blue area shows the ranges of the estimates between the 45th and 55th percentiles, and the light blue area shows the range of the estimates between the 5th and 95th percentile quartiles.}
    \label{Fig_convergence_fof_alpha_from_m3_to_p3}
\end{figure}

\begin{figure}
    \subcaptionbox*{}{}
    \setcounter{subfigure}{6}
    \centering
    \begin{tabular}{cc}
        \begin{minipage}[t]{0.55\hsize}
            \includegraphics[keepaspectratio, scale=0.29]{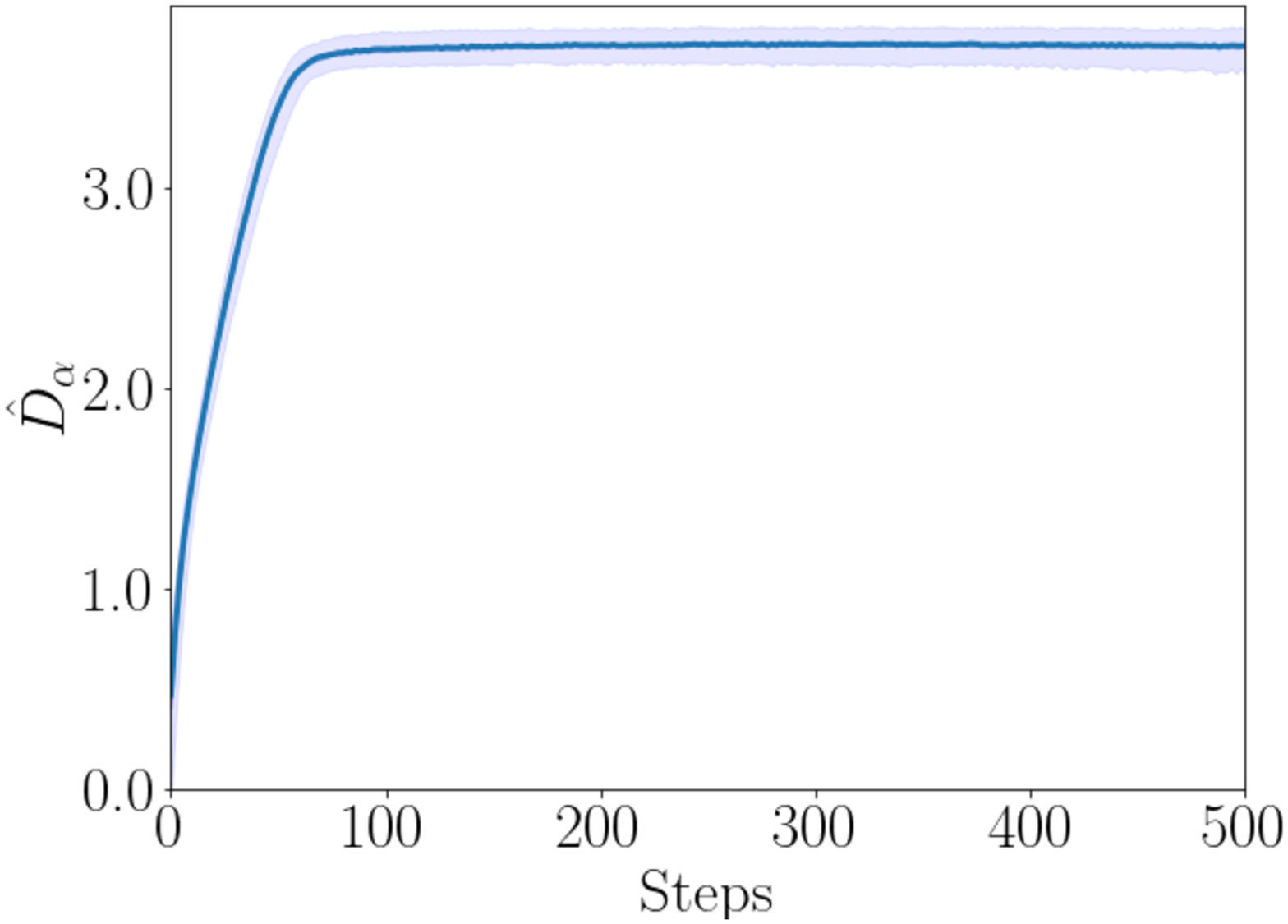}
            \subcaption{$\alpha=0.2$}
        \end{minipage} &
        \begin{minipage}[t]{0.55\hsize}
            \includegraphics[keepaspectratio, scale=0.29]{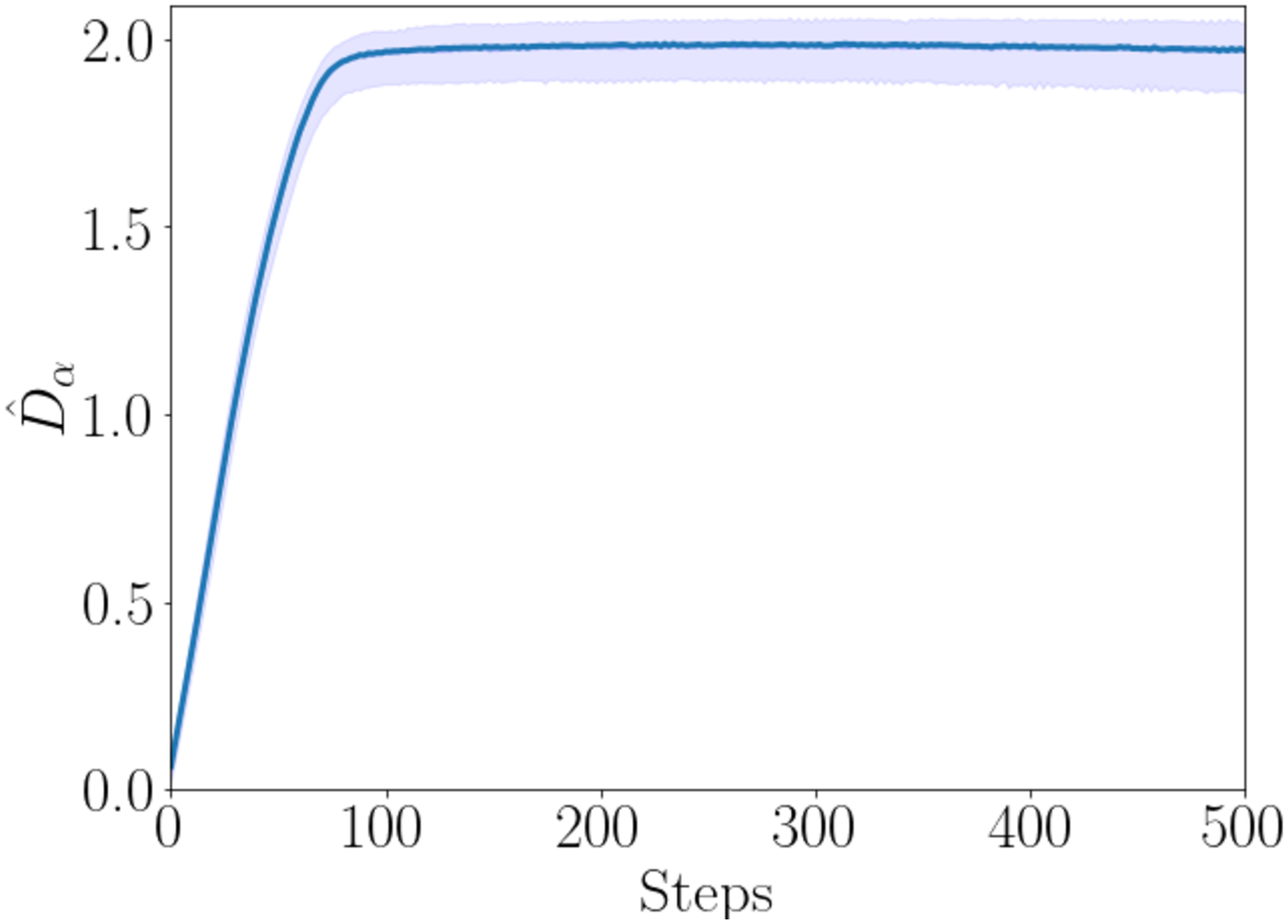}
            \subcaption{$\alpha=0.5$}
        \end{minipage} \\
        \begin{minipage}[t]{0.55\hsize}
            \includegraphics[keepaspectratio, scale=0.29]{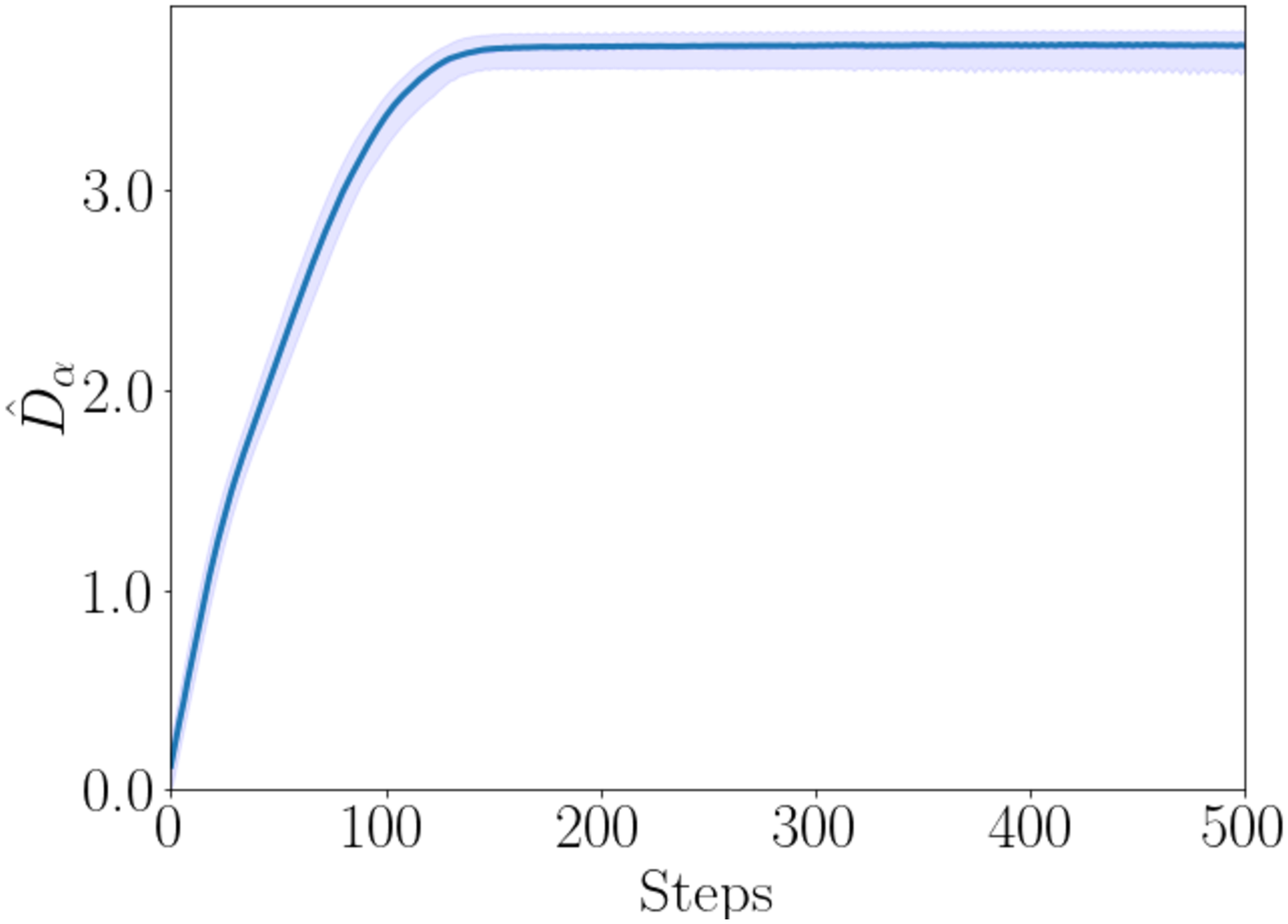}
            \subcaption{$\alpha=0.8$}
        \end{minipage}
    \end{tabular}
    \caption{Results of estimating the $\alpha$ divergence for $\alpha=0.2, 0.5$ and $0.8$, over the number of learning steps during the optimization.
        The $y$-axis of each graph represents the estimated value of the $\alpha$ divergence, and the $x$-axis of each graph represents the learning step.
        The solid blue line shows the median of the estimates of the $\alpha$ divergence.
        The dark blue area shows the ranges of the estimates between the 45th and 55th percentiles, and the light blue area shows the range of the estimates between the 5th and 95th percentile quartiles.}
    \label{Fig_convergence_fof_alpha_from_02_to_08}
\end{figure}

\paragraph{Results.}
Figure \ref{Fig_convergence_fof_alpha_from_m3_to_p3} and \ref{Fig_convergence_fof_alpha_from_02_to_08} show the results of estimating the $\alpha$ divergence over the number of learning steps during the optimization. Figure \ref{Fig_convergence_fof_alpha_from_m3_to_p3} is for $\alpha=-3, -2, -1, 2, 3$ and $4$, and Figure \ref{Fig_convergence_fof_alpha_from_02_to_08} is for $\alpha=0.2, 0.5$, and $0.8$.
The $y$-axis of each graph represents the estimated value of the $\alpha$ divergence, and the $x$-axis of each graph represents the learning step.
The solid blue line shows the median of the estimates of the $\alpha$ divergence.
The dark blue area shows the ranges of the estimates between the 45th and 55th percentiles, and the light blue area shows the range of the estimates between the 5th and 95th percentile quartiles.

As shown in Figure \ref{Fig_convergence_fof_alpha_from_m3_to_p3}, the estimates of the $\alpha$ divergence diverged.
This corresponds to a negative divergence of the loss function $\mathcal{L}_{\alpha}(\theta_t)$ in (\ref{Eq_subsection_neumerical_experiments_convegence_various_alphas_loss_1}), and then implies that $E_{Q}[e^{T_{\theta_t}}] \rightarrow 0$ for $\alpha > 1$, and $E_{Q}[e^{T_{\theta_t}}] \rightarrow \infty$ for $\alpha < 0$ in (\ref{Eq_subsection_neumerical_experiments_convegence_various_alphas_loss_0}).
The discussion in Section  \ref{Section_Estimation_of_Balancing_Weights} suggests that $E[\nabla_{\theta} \mathcal{L}_{\alpha}(\theta)] \rightarrow  \vec{0}$.
That is, the gradients of the neural networks in this case vanished for $\alpha=-3, -2, -1, 2, 3$ and $4$.
However, as shown in Figure \ref{Fig_convergence_fof_alpha_from_02_to_08}, the estimates of the $\alpha$ divergence converge stably for $\alpha=0.2, 0.5$, and $0.8$.

\newpage
\subsection{Experiments to confirm the relationship between dimensions of dataset and steps in training}\label{subsection_neumerical_experiments_convegence_dims_data}
In this section, we report the results of numerical experiments related to the discussion in Section \ref{Section_techniques_estimating_NGD_balancing_weights}:
the results of numerical experiments to confirm the relationship between dimensions of dataset and steps in training are presented.

\paragraph{Experimental Setup.}
We generated training and test datasets of dimensions $d=2$, $3$, $4$, $5$, $6$, and $7$, and then trained an NGB model with the training dataset while estimating the $\alpha$ divergence at each learning step with the test dataset.
One hundred numerical simulations were performed for each dimension $d$.
As a result of the experiment, the median of the estimated value and ranges between the 5th and 95th percentile quartiles at each learning step are reported.

\paragraph{Synthetic Data.}
For each $d=2$, $3$, $4$, $5$, $6$, and $7$,
we generated the training and test datasets of size 5000 from $d$-dimantional normal distribution $\{X_1, X_2, \ldots, X_d\}$, such that $E[X_i]=0$, $\mathrm{Var}[X_i] = 1$ and $E[X_i\cdot X_j]=0.8$ ($i \neq j$).

\paragraph{Estimating the $\alpha$ divergence.}
The $\alpha$ divergence was estimated in the following way:
\begin{eqnarray}
    \hat{D}_{\alpha}(Q||P)(t) &=& \frac{1}{\alpha \cdot(1-\alpha)} - \frac{1}{\alpha} \hat{E}_Q\left[e^{\alpha \cdot T_{\theta_t}(\mathbf{x}^{te})}\right] -
    \frac{1}{1- \alpha}  \hat{E}_P\left[e^{(\alpha - 1) \cdot T_{\theta_t}(\mathbf{x}^{te})}\right] \label{Eq_subsection_neumerical_experiments_convegence_dims_data_loss_0} \\
    &=& \frac{1}{\alpha \cdot(1-\alpha)} - \mathcal{L}_{\alpha}(\theta_t), \label{Eq_subsection_neumerical_experiments_convegence_dims_data_loss_1}
\end{eqnarray}
where $T_{\theta_t}$ is a model at learning step $t$ in Algorithm \ref{algo_train} and $\mathbf{x}^{te}$ denotes the test dataset.
Note that, decreasing of the estimated divergence $\hat{D}_{\alpha}(Q||P)(t)$ in (\ref{Eq_subsection_neumerical_experiments_convegence_dims_data_loss_0}) implies increasing of the loss $\mathcal{L}_{\alpha}(\theta_t)$ in (\ref{Eq_subsection_neumerical_experiments_convegence_dims_data_loss_1}).

\paragraph{Implementation and Training Details.}
We used a neural network which has 3 hidden layers of 100 units in each layer.
The Adam algorithm in PyTorch was used.
For the hyperparameters in the training, the learning rate was 0.001, BathSize was 2500, and the number of epochs was 500.
A NVDIA Tesla K80 GPU was used. It took approximately four hours to conduct all simulations for each $d$.

\begin{table*}[t]
    \caption{The early stop step (“$N^{2/d}$”) and the median of the steps at which the estimated divergence reaches its maximum (“$\operatorname{median}(K_{\operatorname{max}})$”), for each dimension $d=2$, $3$, $4$, $5$, $6$, and $7$.}
    \label{Table_for_arg_max_steps_Section_7}
    \centering
    \begin{tabular}{lccccccc}
        \toprule            
        & $d=2$ & $d=3$ & $d=4$ & $d=5$ & $d=6$ & $d=7$ &\\
        \midrule              
        $N^{2/d}$             & 5000 & 292 & 71 & 30 & 17 & 11  \\
        $\operatorname{median}(K_{\operatorname{max}})$     & 130 & 112 & 130 & 136 & 50 & 50  \\
        \bottomrule          
    \end{tabular}
\end{table*}

\begin{figure}
    \centering
    \begin{tabular}{cc}
        \begin{minipage}[t]{0.55\hsize}
            \includegraphics[keepaspectratio, scale=0.29]{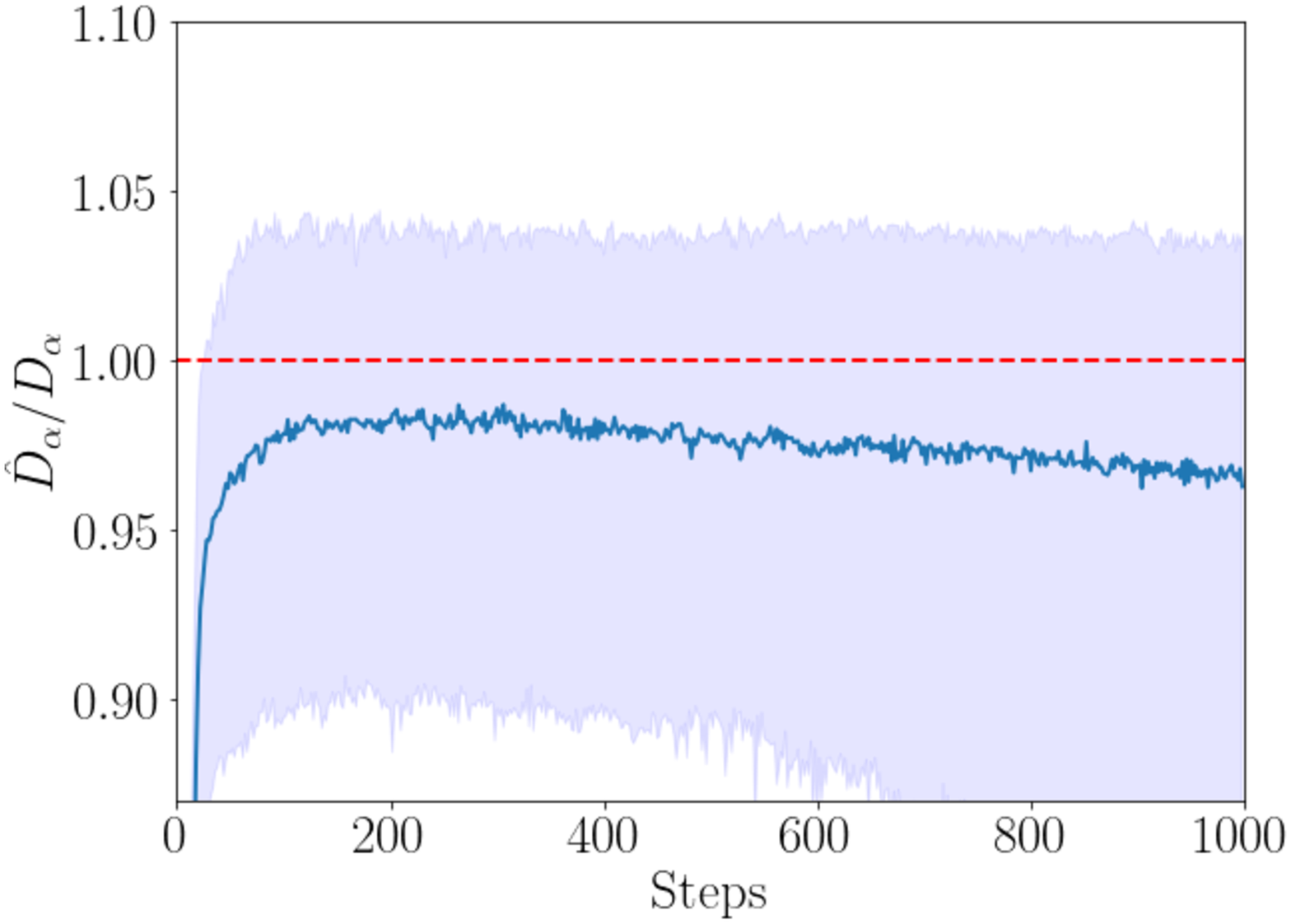}
            \subcaption{$d = 2$}
        \end{minipage} &
        \begin{minipage}[t]{0.55\hsize}
            \includegraphics[keepaspectratio, scale=0.29]{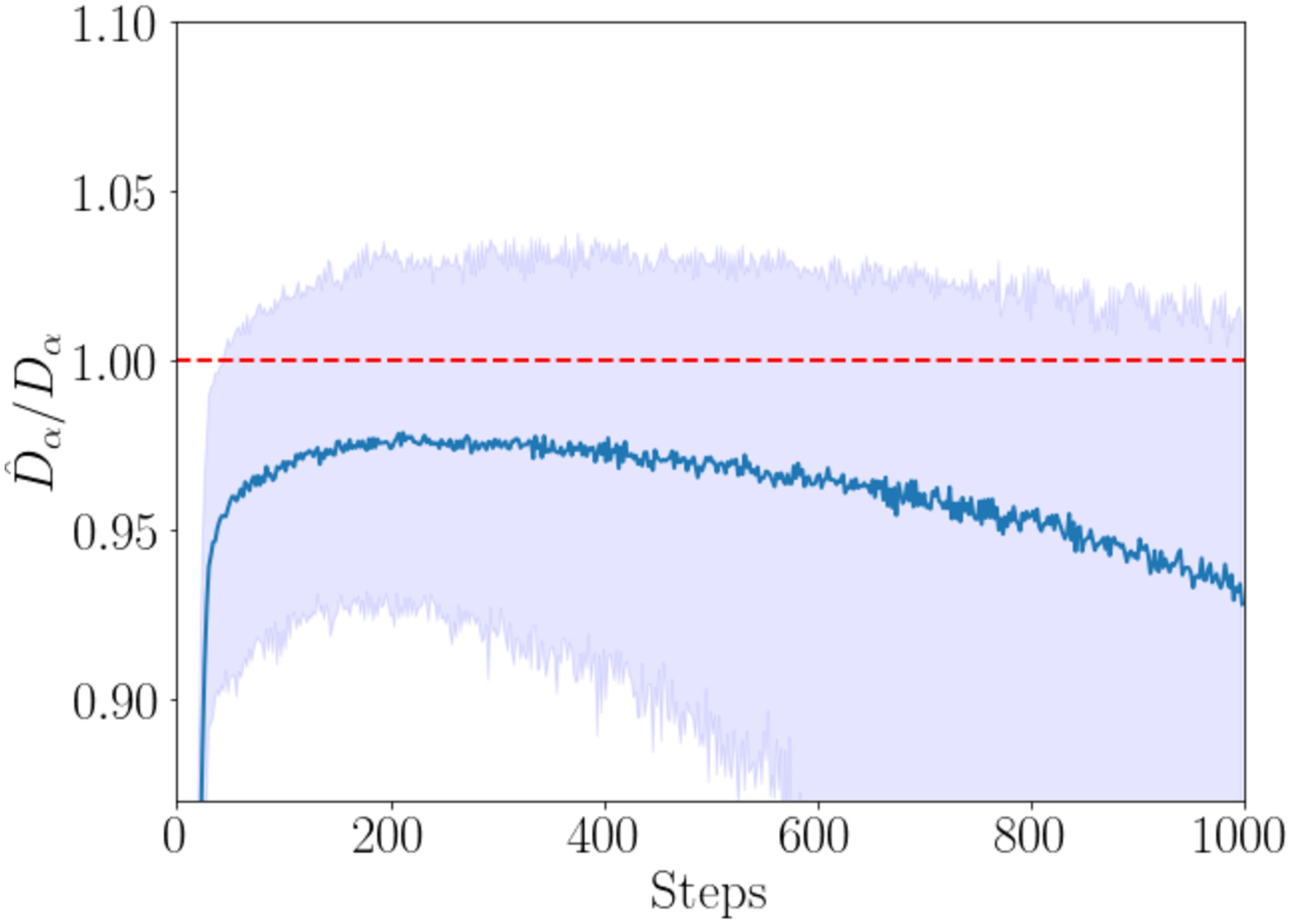}
            \subcaption{$d = 3$}
        \end{minipage} \\
        \begin{minipage}[t]{0.55\hsize}
            \includegraphics[keepaspectratio, scale=0.29]{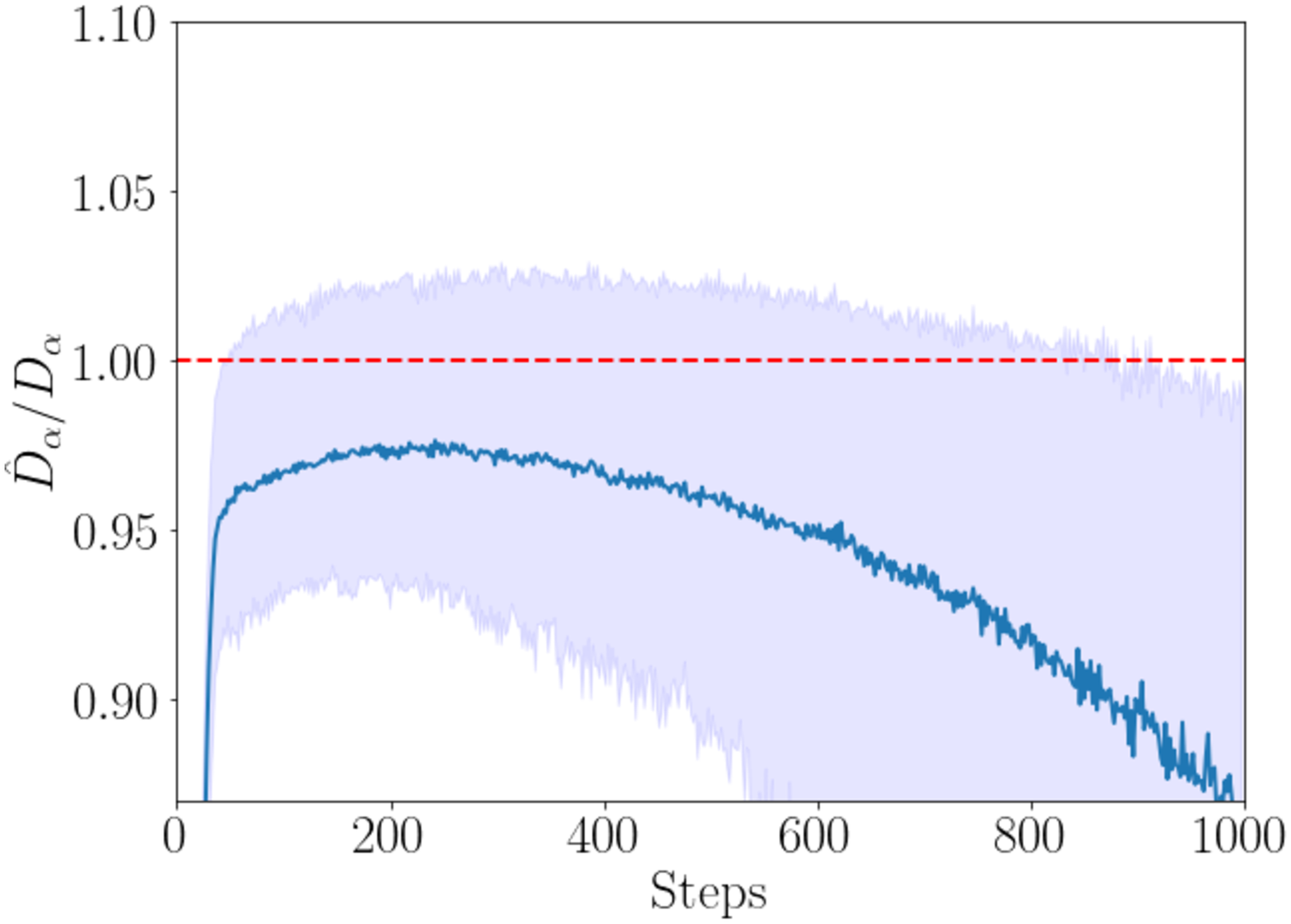}
            \subcaption{$d = 4$}
        \end{minipage} &
        \begin{minipage}[t]{0.55\hsize}
            \includegraphics[keepaspectratio, scale=0.29]{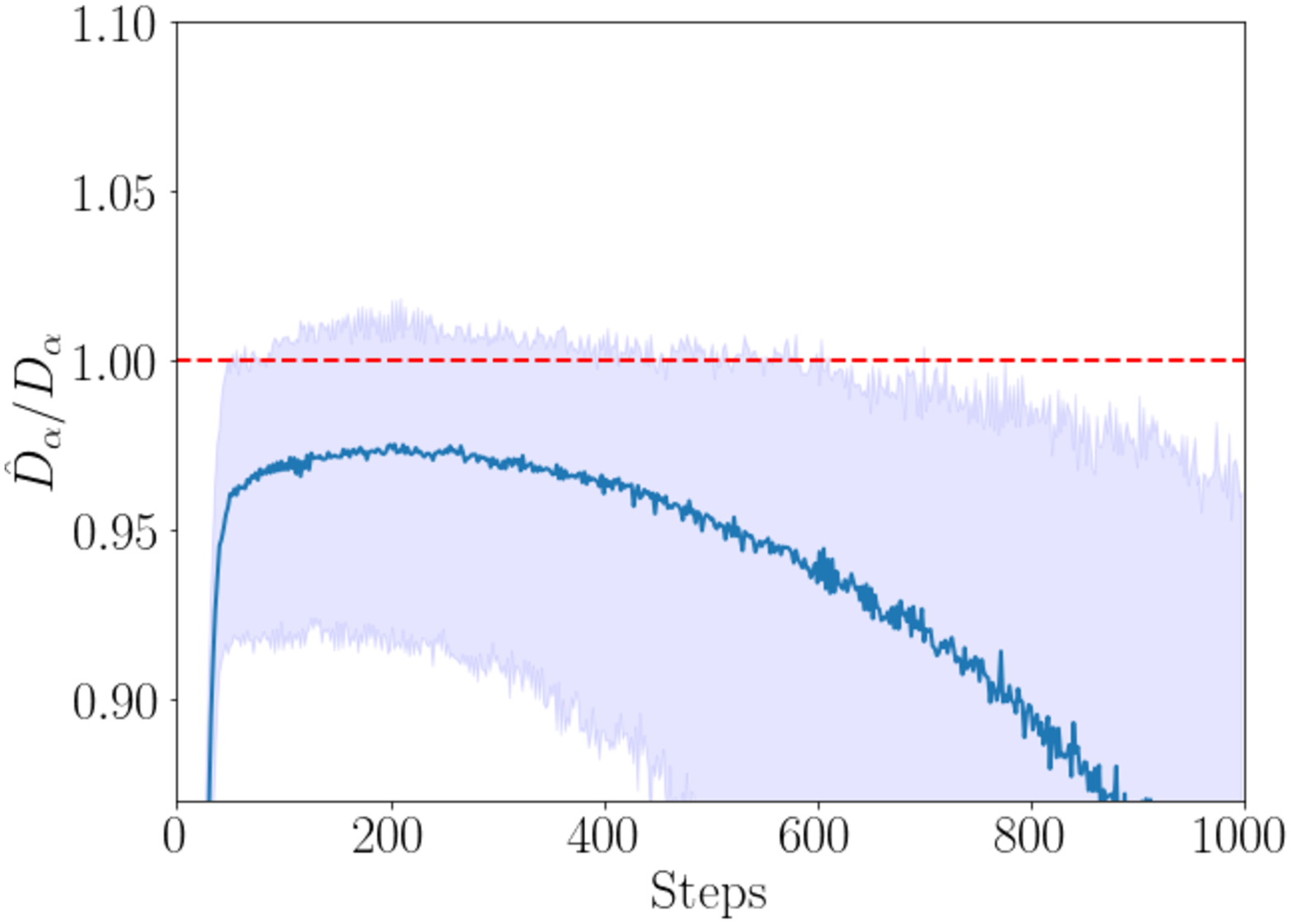}
            \subcaption{$d = 5$}
        \end{minipage} \\
        \begin{minipage}[t]{0.55\hsize}
            \includegraphics[keepaspectratio, scale=0.29]{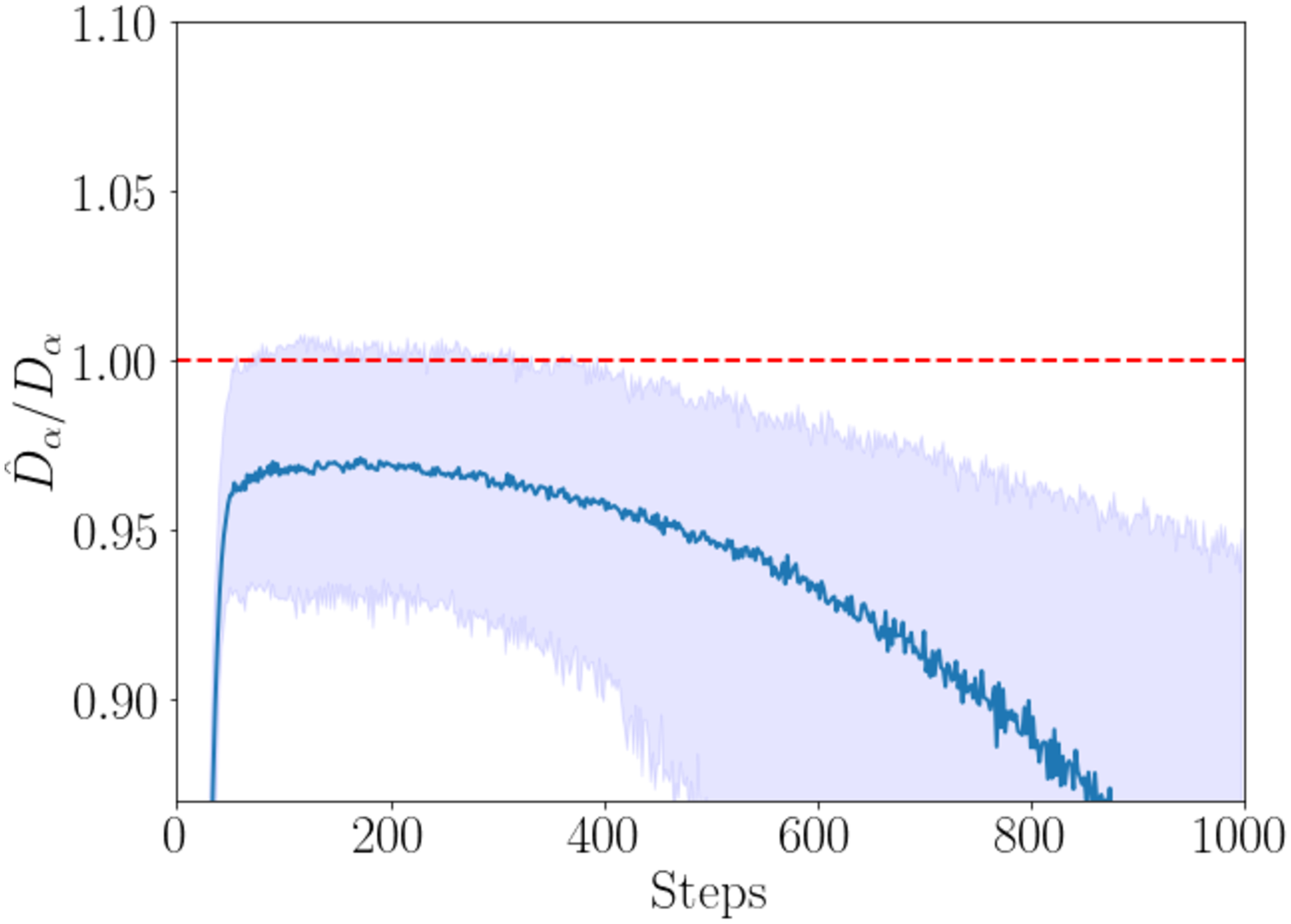}
            \subcaption{$d = 6$}
        \end{minipage} &
        \begin{minipage}[t]{0.55\hsize}
            \includegraphics[keepaspectratio, scale=0.29]{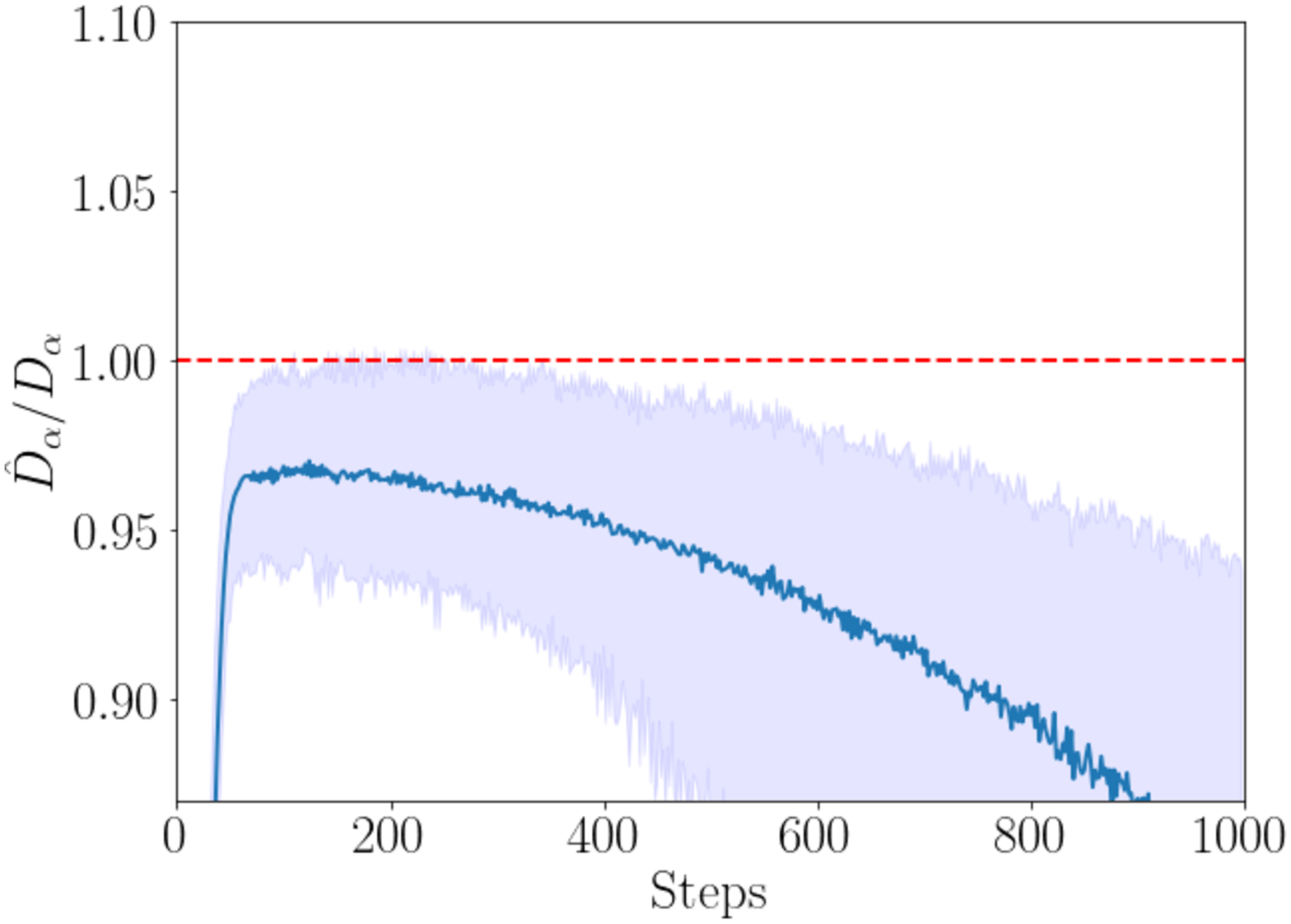}
            \subcaption{$d = 7$}
        \end{minipage}
    \end{tabular}
    \caption{Results of estimating the $\alpha$ divergence for $\alpha=-3, -2, -1, 2, 3$ and $4$, over the number of learning steps during the optimization.
        The $y$-axis of each graph represents the estimated value of the $\alpha$ divergence divided by the true value of the divergence, and the $x$-axis of each graph represents the learning step.
        The dashed red line indicates Y=1, which corresponds to the theoretical value of the estimate for each $d$.
        The solid blue line shows the median of the estimates of the $\alpha$ divergence.
        The dark blue area shows the ranges of the estimates between the 45th and 55th percentiles, and the light blue area shows the range of the estimates between the 5th and 95th percentile quartiles.}
    \label{Fig_memorization}
\end{figure}

\paragraph{Results.}
Let $K_{\operatorname{max}}$ denote the step at which the estimated divergence reaches its maximum:
\begin{equation}
    K_{\operatorname{max}} =  \underset{t} {\operatorname{argmax}} \hat{D}_{\alpha}(Q||P)(t).
\end{equation}
Table \ref{Table_for_Experiment_Section_8} lists $N^{2/d}$, the early stop step obtained from (\ref{Eq_the_early_stop_def}), and the median of $K_{\operatorname{max}}$, for each dimension $d=2$, $3$, $4$, $5$, $6$, and $7$.
In Figure \ref{Fig_memorization}, we show the results of estimating the $\alpha$ divergence over the number of learning steps during the optimization.
Since the value of the $\alpha$ divergence changes as the dimension of the dataset changes, we divided by the the estimated value of the divergence by the true value of the divergence to normalize the results of each dimension.
The $y$-axis of each graph represents the estimated value of the $\alpha$ divergence divided by the true value of the divergence, and the $x$-axis of each graph represents the learning step.
The solid blue line shows the median of the estimates of the $\alpha$ divergence.
The light blue area shows the range of the estimates between the 5th and 95th percentile quartiles.
The dashed red line indicates Y=1, which corresponds to the theoretical value of the estimate for each $d$.

As shown in Table \ref{Table_for_arg_max_steps_Section_7}, the steps from the early stop method and those at which the estimates decreased were approximately consistent, except in the case of $d=2$.
However, the estimates of the data of the low dimensions, particularly $d=2$, decreased earlier than the early stop method suggests.
This may be because $C$ in (\ref{Eq_the_early_stop_def}) for the data of low dimensions can be small because the neural network learns quickly when the dimensions of the data are low.
However, Figure \ref{Fig_memorization} shows that the estimates of the divergence decreased slowly when the dimensions of the data are low, and they decreased more quickly when the dimensions of the data were higher.
These results suggest that the curse of dimensionality of balancing is easier to observe when dimensions of data are higher.

\newpage
\subsection{Experiments for esitimating causal effects of joint and multidimensional interventions with different sample sizes}\label{subsection_neumerical_experiments_causal_effect}
In this section, we report the results of numerical experiments related to the discussion in Section \ref{Section_Sample_Size_Requirements}:
the results of numerical experiments for esitimating causal effects of joint and multidimensional interventions with different sample sizes are presented.

\paragraph{Experimental Setup.}
The following two experiments were conducted, in which synthetic data of size $N=\allowbreak 1000,\allowbreak 10000$, and $100000$ were generated using the method developed by \citeauthor{vegetabile2021nonparametric}(\citeyear{vegetabile2021nonparametric}).
\begin{itemize}
    \item Experiment 1. An experiment on estimating the causal effect of a single intervention, especially for continuous intervention, $E_{\widetilde{P}_1}[Y|A, \mathbf{X}]$ where
    $\widetilde{P}_1=P(Y, \overline{do}(A), \mathbf{X})$.
    \item Experiment 2. An experiment to estimate the causal effect of a mixture of both arbitrary discrete and continuous interventions,
        $E_{\widetilde{P}_2}[Y|A, \mathbf{X}]$ where
        $\widetilde{P}_2=P(Y, \overline{do}(A), \overline{do}(\mathbf{X}_1),  \overline{do}(X_2), \overline{do}(\mathbf{X}_3)]$.
\end{itemize}

\paragraph{Experimental Details.}
Experiments 1 and 2 were conducted using the following steps.

Step 1: We created training dataset of size $N=1000,\allowbreak 10000$, and $100000$, and test dataset with size $N=1000$.
The training dataset were generated using the method developed by \citeauthor{vegetabile2021nonparametric}(\citeyear{vegetabile2021nonparametric}).
The test dataset were generated from the following distribution:
\begin{itemize}
\item Experiment 1. $\widetilde{P}_1 = P(Y, \overline{do}(A), \mathbf{X})$,
\item Experiment 2. $\widetilde{P}_2 = P(Y, \overline{do}(A), \overline{do}(\mathbf{X}_1), \overline{do}(X_2), \overline{do}(\mathbf{X}_3))$,
\end{itemize}
where $P$ denotes the distribution of the training dataset.
To create the test dataset, we shuffled the dataset generated from the same distribution as the training dataset.

Step 2: The balancing weights were estimated for each experiment.
We estimated $\mathit{BW}(A,\mathbf{X}:T_{\theta_{*}})$ for Experiment 1,
and $\mathit{BW}(A, \mathbf{X}_1, X_2, \mathbf{X}_3:T_{\theta_{*}})$ for Experiment 2.

Step 3: We created models for each experiment using the linear regression (LR) or the gradient boosting tree (GBT) algorithm with our weights from the previous step.
The hyperparameters were tuned to create models of GBT.

Step 4: We estimate the average causal effects $E_{\widetilde{P}_1}[\allowbreak Y \allowbreak|A,\allowbreak \mathbf{X}\allowbreak]$ and $E_{\widetilde{P}_2}[\allowbreak Y \allowbreak|A,\allowbreak \mathbf{X}\allowbreak]$
using the predictions of the models from Step 2 with the test dataset.
Finally, we report the mean squared error (RMSE) between the true and estimated values.

\paragraph{Baseline Method.}
The main baseline method used in our experiments is entropy balancing \cite{tubbicke2022entropy}.
We compared our method with the method for balancing $\mathbf{X}$ with $A$
for each of the moments from 1 to 4.
For Experiment 1, both our method and the baseline method estimated the same target:
$E_{\widetilde{P}_1}[Y|A, \mathbf{X}]$ where $\widetilde{P}_1(Y, A, \mathbf{X}) = P(Y, \overline{do}(A), \mathbf{X})$.
However, no existing method can fully deal with the target of Experiment 2:
$E_{\widetilde{P}_2}[Y|A, \mathbf{X}]$ where $\widetilde{P}_2=P(Y, \overline{do}(A), \overline{do}(\mathbf{X}_1),  \overline{do}(X_2), \overline{do}(\mathbf{X}_3)]$.
Therefore, the same entropy balancing as in Experiment 1 was used in Experiment 2.
This may be an unfair comparison to the baseline method.
In addition, we included a “naïve” estimation, using algorithms with no sample weights, as a baseline.
For the calculation of entropy balancing weights, WeightIt library in R was used. \footnote{https://cran.r-project.org/web/packages/WeightIt/index.html}.

\paragraph{Training Data Set.}
Specifically, we used the following steps to generate the dataset.
First, $\mathbf{W} = (W_1, W_2, W_3, W_4, W_5)$ were generated independently,
such that
$W_1 \sim \mathcal{N}(-0.5, 1)$,
$W_2 \sim \mathcal{N}(1, 1)$,
$W_3 \sim \mathcal{N}(0, 1)$,
$W_4 \sim \mathcal{N}(1, 1)$,
and $\mathcal{X}_{W_5} =\{0, 1, 2\}$ with
$P(W_5 = 0) = 0.70$ and $P(W_5 = 1) = P(W_5 = 2) = 0.15$.
Second, $A$ and $Y$ were generated as follows:
\begin{eqnarray}
    A &\sim& \mathscr{X}^2(df=3,\, \mu_{A}(W_1, W_2, W_4, W_5)), \nonumber \\
    Y &=& \frac{1}{50}\left[ \left( - 0.15 A^2 + A(W_1^2+ W_2^2) -15 \right) \right. \nonumber \\
    && \quad +\left((W_1 + 3)^2 + 2(W_2-25)^2 + W_3  \right)  \nonumber \\
    && \quad \left. - C + \varepsilon \right],  \label{Eq_Y}
\end{eqnarray}
where $\mu_{A}(W_1, W_2, W_4, W_5) = 5|W_1|+6|W_2|+|W_4| + a$, and
$a = 0$ if $W_5=0$, and $a = 1$ if $W_5=1$, and $a = 5$ if $W_5=2$,
and $C = E[(W_1+3)^2]+ 2E[(W_2 - 25)^2] + E[ W_3]$, and $\varepsilon \sim \mathcal{N}(0, 1)$.
Here, $\mathscr{X}^2(df=n, \mu)$ is the noncentral $\chi^2$ distribution with n degrees of freedom
and a noncentral parameter $\mu$.
Finally, we create new variables $\mathbf{X} = (\mathbf{X}_1, X_2, \mathbf{X}_3)$,
as observed values of $\mathbf{W}$ using the following transformation:
\begin{eqnarray}
    \mathbf{X}_1 &=& (X_{(1,1)}, X_{(1,2)}, X_{(1,3)}),   \label{trans_dataX1} \\
    &\text{where}&  X_{(1,1)} = \exp\left( W_1/2 \right), \nonumber \\
    && X_{(1,2)}= W_2/(1 + \exp(W_1))+ 10,    \nonumber \\
    && X_{(1,3)}= W_1 W_3/25 + 0.6, \nonumber \\
    X_2  &=& (W_4 - 1)^2, \label{trans_dataX2} \\
    \mathbf{X_3} &=& \begin{cases}
        (1, 0)  \quad \text{if  $W_5=0$},  \\
        (0, 1)  \quad \text{if  $W_5=1$},  \\
        (0, 0)  \quad \text{if  $W_5=2$}. \label{trans_dataX3}
    \end{cases}
\end{eqnarray}

\paragraph{Test Data Set.}
We first generated dataset from the same distribution as the training dataset.
Second, the dataset were shuffled by the index, with the following divided parts treated as a single piece of data:
for Experiment 1, $A$ and $\mathbf{X}$ were shuffled by the index, and for Experiment 2,
each of $A$, $\mathbf{X}_1$, $X_2$ and  $\mathbf{X}_3$ were shuffled by the index.
Third, using the inverse transformation of Eq. (\ref{trans_dataX1})-(\ref{trans_dataX3}),
we calculated $(W_1, W_2, W_3, W_4, W_5)$ from $\mathbf{X_1}=(X_{(1,1)}, X_{(1,2)}, X_{(1,3)})$,
$X_2$, and $\mathbf{X_3}$ of the shuffled dataset:
\begin{eqnarray}
    W_1 &=& 2 \log X_{(1,1)}, \quad  W_2 = X_{(1,2)} \cdot (1 + X_{(1,1)}^2),\nonumber \\
    W_3 &=& \frac{25(X_{(1,3)} - 0.6)}{2\log X_{(1,1)}}, \quad W_4 = \sqrt{X_2} + 1, \nonumber \\
    W_5 &=& \begin{cases}
        0 \quad \text{if  $\mathbf{X_3}=(1, 0)$},  \\
        1 \quad \text{if  $\mathbf{X_3}=(0, 1)$},  \\
        2 \quad \text{if  $\mathbf{X_3}=(0, 0)$}.
    \end{cases} \nonumber
\end{eqnarray}
Finally, the true values of $Y$ for causal effects
were calculated using the terms in  Eq. (\ref{Eq_Y}) without the term $\varepsilon$.

\paragraph{Implementation and Training Details.}
\underline{$N=1000$:}
For experiments with the dataset of size $N=1000$, we used a neural network which has 10 hidden layers of 100 units in each layer.
$\alpha=0.5$ was used to estimate the divergence.
The Adam algorithm in PyTorch was used.
For the hyperparameters in the training, the learning rate was 0.0001, BathSize was 1000, and the number of epochs was 70.
A NVDIA Tesla K80 GPU was used. It took approximately 40 min to conduct all the simulations for each experiment.

\underline{$N=10000$:}
For experiments with the dataset of size $N=10000$,
We used a neural network which has 10 hidden layers of 100 units in each layer.
$\alpha=0.5$ was used to estimate the divergence.
The Adam algorithm in PyTorch was used.
For the hyperparameters in the training, the learning rate was 0.0001, BathSize was 2500, and the number of epochs was 200.
A NVDIA Tesla K80 GPU was used. It took approximately 7 h to conduct all the simulations for each experiment.

\underline{$N=100000$:}
For experiments with the dataset of size $N=100000$,
We used a neural network which has 10 hidden layers of 100 units in each layer.
$\alpha=0.5$ was used to estimate the divergence.
The Adam algorithm in PyTorch was used.
For the hyperparameters in the training, the learning rate was 0.0001, BathSize was 2500, and the number of epochs was 200.
A NVDIA Tesla K80 GPU was used. It took approximately 78 h to conduct all the simulations for each experiment.

\begin{table*}[t]
    \caption{Average RMSE for estimation in Experiments 1 and 2 for dataset of size $N=1000$, $10000$, and $100000$.
        For entropy balancing, the number to the right side of the method name,
        “($\mathit{m}$),” denotes the number of moments that are balanced.
        The results from 100 simulations are in the form of “mean (std. err.)” .}
    \label{Table_for_Experiment_Section_8}
    \centering
    \subcaption{$N=1000$}
    \label{Table_for_Experiment_Section_8_for_N_1000}
    \begin{tabular}{lcccc}
        \toprule            
        & \multicolumn{2}{c}{Experiment 1} & \multicolumn{2}{c}{Experiment 2} \\
        \textbf{Method}       & \textbf{LR} & \textbf{GBT} 	& \textbf{LR} & \textbf{GBT} \\
        \midrule             
        Unweighted            & 1.347(0.039) & 0.739(0.066)     & 1.347(0.033)  & 0.741(0.068) \\
        Entropy Balancing(1)  & 1.303(0.056) & 0.724(0.058)     & 1.303(0.052) & 0.726(0.060) \\
        Entropy Balancing(2)  & 1.206(0.029) & \textbf{0.693(0.056)}     & 1.206(0.026) & \textbf{0.698(0.055)} \\
        Entropy Balancing(3)  & \textbf{1.201(0.026)} & 0.690(0.054)     & \textbf{1.201(0.024)} & \textbf{0.698(0.061)} \\
        Entropy Balancing(4)  & 1.203(0.027) & 0.699(0.057)     & 1.203(0.025) & 0.699(0.061) \\
        NBW                   & 1.347(0.039) & 0.745(0.065)     & 1.347(0.034) & 0.738(0.063) \\
        \bottomrule          
    \end{tabular}
    \centering
    \subcaption{$N=10000$}
    \label{Table_for_Experiment_Section_8_for_N_10000}
    \begin{tabular}{lcccc}
        \toprule            
        & \multicolumn{2}{c}{Experiment 1} & \multicolumn{2}{c}{Experiment 2} \\
        \textbf{Method} & \textbf{LR} & \textbf{GBT} 	& \textbf{LR} & \textbf{GBT} \\
        \midrule             
        Unweighted            & 1.342(0.030) & 0.489(0.035)     & 1.342(0.026) & 0.489(0.039) \\
        Entropy Balancing(1)  & 1.295(0.033) & 0.486(0.026)     & 1.295(0.030) & 0.487(0.035) \\
        Entropy Balancing(2)  & 1.194(0.025) & 0.466(0.036)     & 1.194(0.025) & 0.468(0.041) \\
        Entropy Balancing(3)  & \textbf{1.187(0.025)} & 0.459(0.032)     & \textbf{1.187(0.024)} & 0.457(0.036) \\
        Entropy Balancing(4)  & 1.189(0.024) & \textbf{0.457(0.035)}     & 1.189(0.023) & \textbf{0.452(0.034)} \\
        NBW                   & 1.274(0.038) & 0.488(0.035)     & 1.273(0.031) & 0.485(0.032) \\
        \bottomrule          
    \end{tabular}
    \centering
    \subcaption{$N=100000$}
    \label{Table_for_Experiment_Section_8_for_N_100000}
    \begin{tabular}{lcccc}
        \toprule            
        & \multicolumn{2}{c}{Experiment 1} & \multicolumn{2}{c}{Experiment 2} \\
        \textbf{Method} 	& \textbf{LR} & \textbf{GBT} 	& \textbf{LR} & \textbf{GBT} \\
        \midrule             
        Unweighted            & 1.342(0.027) & 0.457(0.048)     & 1.342(0.023) & 0.459(0.044) \\
        Entropy Balancing(1)  & 1.299(0.029) & 0.453(0.037)     & 1.298(0.027) & 0.455(0.036) \\
        Entropy Balancing(2)  & 1.195(0.025) & 0.391(0.034)     & 1.194(0.023) & 0.386(0.039) \\
        Entropy Balancing(3)  & \textbf{1.186(0.024)} & 0.361(0.025)     & \textbf{1.186(0.023)} & 0.360(0.023) \\
        Entropy Balancing(4)  & 1.188(0.024) & \textbf{0.353(0.022)}     & 1.187(0.023) & \textbf{0.356(0.020)} \\
        NBW                   & 1.239(0.095) & 0.376(0.033)     & 1.252(0.080) & 0.388(0.030) \\
        \bottomrule          
    \end{tabular}
\end{table*}

\newpage
\paragraph{Results.}
We report the average and standard errors of  the root mean squared error (RMSE) between the estimated and true values of the average causal effects for synthetic data of size $N=1000$, $10000$, and $100000$.
Table \ref{Table_for_Experiment_Section_8} lists the results of Experiments 1 and 2 for each $N$.
Each result is in the form of “mean (std. err.)” from 100 simulations.

As shown in all the results, the results of NBW were less accurate than those of the entropy-balancing method.
Moreover, the results for $N=1000$ shows that NBW were less accurate than the unweighted estimation.
However, as seen in all results for $N=100000$, the accuracy of NBW was superior to that of the unweighted estimation, which was close to the accuracy of the entropy-balancing method.
These results imply that the sample size requirements of the proposed method are larger than those of the entropy balancing method.

\newpage
\section{Back-Propagation Algorithm using Neural Balancing Weights} \label{Subsection_backporp_algo}
We show a back-propagation algorithm using NBW for MSE loss in Algorithm \ref{algo_cbp}.
The MSE loss here is calculated by the mean of the element wise product of both the original squared errors and the balancing weights.
\begin{figure}
    \begin{algorithm}[H]
        \caption{Back-Propagation Algorithm using Neural Balancing Weights}\label{algo_cbp}
        \begin{algorithmic}[1]
            \Require
            Data $(y, \mathbf{x}_1, \mathbf{x}_2, \ldots, \mathbf{x}_n, \mathbf{z}) = \{(y^i, \mathbf{x}_1^i, \mathbf{x}_2^i, \ldots, \mathbf{x}_n^i, \mathbf{z}^i)|i=1,2,\ldots,N\}$\\
            \hspace{0.27cm} A Neural Balancing Weight Model $T$
            \Ensure A Neural Network Model $f_{\phi}$ for Estimating $E_{\widetilde{P}}[Y|\mathbf{X},\mathbf{Z}]$
            \Repeat
            \State $\hat{y} \leftarrow f_{\phi}(\mathbf{x}_1, \mathbf{x}_2, \ldots, \mathbf{x}_n, \mathbf{z})$ \hspace{5.35cm}  // Forward Propagation
            \State $\mathit{BW}(\mathbf{x}_1, \mathbf{x}_2, \ldots, \mathbf{x}_n, \mathbf{z})
            \leftarrow  \frac{e^{-T(\mathbf{x}_1, \mathbf{x}_2, \ldots, \mathbf{x}_n, \mathbf{z})}}{\mathit{MEAN}(e^{-T(\mathbf{x}_1, \mathbf{x}_2, \ldots, \mathbf{x}_n, \mathbf{z})})}$
            \State $\mathit{Err}_{\phi} \leftarrow y - \hat{y}$  \hspace{7cm}  // Obtaining Errors for $f_{\phi}$
            \State $\mathcal{L}({\phi}) \leftarrow \mathit{MEAN}(\mathit{Err}_{\phi}\otimes \mathit{Err}_{\phi} \otimes \mathit{BW}(\mathbf{x}_1, \mathbf{x}_2, \ldots, \mathbf{x}_n, \mathbf{z}))$
            \hspace{1cm} // Calculating  Loss $\mathcal{L}({\phi})$
            \State $\phi \leftarrow  \phi  -  \nabla \mathcal{L}({\phi})$
            \Until{convergence}
        \end{algorithmic}
    \end{algorithm}
\end{figure}

\end{document}